\documentclass[10pt，journal,compsoc]{IEEEtran}
\usepackage{algorithmic}
\usepackage{algorithm}
\usepackage{array}
\usepackage{textcomp}
\usepackage{stfloats}
\usepackage{url}
\usepackage{verbatim}
\usepackage{graphicx}
\usepackage{cite}
\usepackage{booktabs}
\usepackage{multirow}
\usepackage{amsmath}
\usepackage{amssymb}
\usepackage{mathtools}
\usepackage{amsthm}
\usepackage{bm}
\usepackage{caption}
\usepackage{stmaryrd}
\usepackage{subcaption}
\usepackage{microtype}
\usepackage[colorlinks=true]{hyperref}
\usepackage[nameinlink,noabbrev]{cleveref}
\hyphenation{op-tical net-works semi-conduc-tor IEEE-Xplore}
% updated with editorial comments 8/9/2021

%%%%%%%%%%%%%%%%%%%%%%%%%%%%%%%%
% THEOREMS
%%%%%%%%%%%%%%%%%%%%%%%%%%%%%%%%
\theoremstyle{plain}
\newtheorem{theorem}{Theorem}[section]

\newtheorem{lemma}[theorem]{Lemma}
\newtheorem{corollary}[theorem]{Corollary}
\theoremstyle{definition}
\newtheorem{definition}[theorem]{Definition}
\newtheorem{assumption}[theorem]{Assumption}
\theoremstyle{remark}
\newtheorem{remark}[theorem]{Remark}

\DeclareMathOperator*{\argmin}{arg\,min}

% -------------------------允许算法跨页-------------
\makeatletter
\newenvironment{breakablealgorithm}
  {% \begin{breakablealgorithm}
   \begin{center}
     \refstepcounter{algorithm}% New algorithm
     \hrule height.8pt depth0pt \kern2pt% \@fs@pre for \@fs@ruled
     \renewcommand{\caption}[2][\relax]{% Make a new \caption
       {\raggedright\textbf{\ALG@name~\thealgorithm} ##2\par}%
       \ifx\relax##1\relax % #1 is \relax
         \addcontentsline{loa}{algorithm}{\protect\numberline{\thealgorithm}##2}%
       \else % #1 is not \relax
         \addcontentsline{loa}{algorithm}{\protect\numberline{\thealgorithm}##1}%
       \fi
       \kern2pt\hrule\kern2pt
     }
  }{% \end{breakablealgorithm}
     \kern2pt\hrule\relax% \@fs@post for \@fs@ruled
   \end{center}
  }
\makeatother

% Todonotes is useful during development; simply uncomment the next line
%    and comment out the line below the next line to turn off comments
%\usepackage[disable,textsize=tiny]{todonotes}
\usepackage[textsize=tiny]{todonotes}

\begin{document}

\title{Controllable Concept Bottleneck Models}

\author{Hongbin Lin*,
        Chenyang Ren*,
        Juangui Xu*,
        Zhengyu Hu,
        Cheng-Long Wang,
        Yao Shu,
        Hui Xiong,~\IEEEmembership{Fellow,~IEEE,}
        Jingfeng Zhang,~\IEEEmembership{Member,~IEEE,}
        Di Wang,
        and~Lijie Hu$^{\dagger}$%
\IEEEcompsocitemizethanks{
\IEEEcompsocthanksitem * H. Lin, C. Ren, and J. Xu contributed equally to this work.
\IEEEcompsocthanksitem $\dagger$ L. Hu is the corresponding author.
\IEEEcompsocthanksitem Manuscript received [Month] [Day], [Year]; revised [Month] [Day], [Year].
}%
}

% The paper headers
\markboth{Journal of \LaTeX\ Class Files,~Vol.~14, No.~8, August~2021}%
{Shell \MakeLowercase{\textit{et al.}}: A Sample Article Using IEEEtran.cls for IEEE Journals}

\IEEEpubid{0000--0000/00\$00.00~\copyright~2021 IEEE}
% Remember, if you use this you must call \IEEEpubidadjcol in the second
% column for its text to clear the IEEEpubid mark.

\maketitle

\begin{abstract}
Concept Bottleneck Models (CBMs) have garnered much attention for their ability to elucidate the prediction process through a human-understandable concept layer. However, most previous studies focused on static scenarios where the data and concepts are assumed to be fixed and clean. In real-world applications, deployed models require continuous maintenance: we often need to remove erroneous or sensitive data (unlearning), correct mislabeled concepts, or incorporate newly acquired samples (incremental learning) to adapt to evolving environments. Thus, deriving efficient editable CBMs without retraining from scratch remains a significant challenge, particularly in large-scale applications. To address these challenges, we propose Controllable Concept Bottleneck Models (CCBMs). Specifically, CCBMs support three granularities of model editing: concept-label-level, concept-level, and data-level, the latter of which encompasses both \emph{data removal} and \emph{data addition}. CCBMs enjoy mathematically rigorous closed-form approximations derived from influence functions that obviate the need for retraining. Experimental results demonstrate the efficiency and adaptability of our CCBMs, affirming their practical value in enabling dynamic and trustworthy CBMs.
\end{abstract}

\begin{IEEEkeywords}
Concept Bottleneck Models, Explainable Artificial Intelligence, Model Editing, Influence Functions, Machine Unlearning, Incremental Learning.
\end{IEEEkeywords}

\section{Introduction}
 
\IEEEPARstart{T}{he} rapid proliferation of deep learning across high-stakes domains, including healthcare~\cite{ahmad2018interpretable,yu2018artificial} and finance~\cite{cao2022ai}, has been accompanied by growing concerns regarding the opaque nature of these systems. While large language models~\cite{zhao2023survey,yang2024moral,yang2024human,xu2023llm,yang2024dialectical} and multimodal architectures~\cite{yin2023survey,ali2024prompt,cheng2024multi} achieve remarkable performance, their complex, non-linear decision-making processes often remain impenetrable to end-users. To bridge this trust gap, Explainable Artificial Intelligence (XAI)~\cite{das2020opportunities,hu2023seat,hu2023improving} has become a focal point of research. Among various XAI paradigms, Concept Bottleneck Models (CBMs)~\cite{koh2020concept} stand out by explicitly structuring the prediction process: inputs are first mapped to human-understandable concepts, which then serve as the sole basis for the final classification. This architectural constraint ensures that the model's reasoning is transparent and aligned with human semantic knowledge.
 
\begin{figure}[ht]
\centering
\vspace{-6pt}
\includegraphics[width=\linewidth]{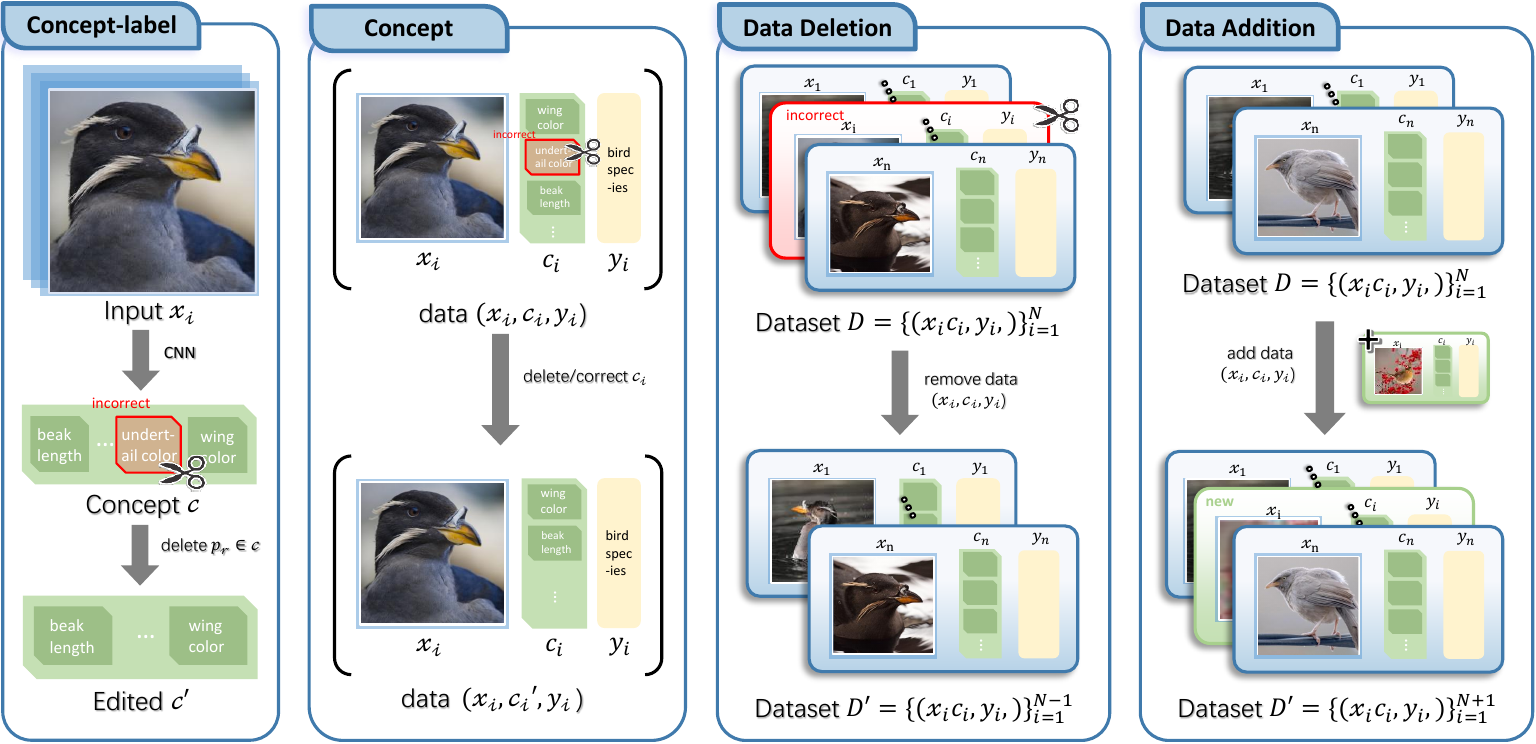}
\vspace{-20pt}
\caption{An illustration of Controllable Concept Bottleneck Models with four settings. \label{fig:intro}}
\vspace{-10pt}
\end{figure}
 
Despite their interpretability advantages, the practical deployment of CBMs faces significant hurdles. Prior literature has predominantly focused on two aspects: mitigating the high cost of concept annotations through semi-supervised or unsupervised learning~\cite{oikarinen2023label,yuksekgonul2022post,lai2023faithful}, and narrowing the performance gap between CBMs and standard black-box models~\cite{NEURIPS2023_555479a2, yuksekgonul2022post, NEURIPS2022_867c0682}. However, these works typically treat the CBM as a static entity once trained. This assumption contradicts the dynamic reality of real-world applications, where models must continuously adapt to correct errors, evolve knowledge bases, and respond to fluctuating data governance policies.
 
In this work, we argue that \textbf{controllability}—the ability to modify model behavior post-deployment without expensive retraining precisely—is as critical as interpretability. We identify three distinct granularities where such control is required, necessitating a transition from static CBMs to dynamic frameworks (see Figure~\ref{fig:intro}):
 
\begin{itemize}
 \item \emph{Concept-label-level Controllability:} 
 Human annotations are prone to error. In scenarios where expert feedback identifies specific mislabeled concepts in the training data, stakeholders need the ability to rectify these label-specific errors to correct the model's logic, while preserving the valid information contained in the rest of the sample.
 
 \item \emph{Concept-level Controllability:} 
 The definition of "valid knowledge" evolves over time. Medical research may uncover new risk factors (e.g., specific comorbidities for lung cancer) that need to be \emph{added} to an existing model's concept set. Conversely, previously relied-upon concepts might be deemed spurious or discriminatory (e.g., demographic biases in pandemic prediction~\cite{rozenfeld2020model}), necessitating their immediate \emph{removal}. A controllable CBM must support such ontology evolution in real-time.
 
\emph{Data-level Controllability:}
 We further categorize data-level control into two bidirectional streams:

 \item \emph{Data Unlearning (Removal):} To comply with privacy regulations (e.g., GDPR) or to discard poisoned data samples, the model must be capable of completely erasing the influence of specific data points from its parameters.
 \item \emph{Data Addition (Incremental Learning):} Conversely, as fresh data becomes available, the model should seamlessly integrate this new information to enhance robustness or address under-representation, avoiding the downtime associated with full model retraining.

\end{itemize}
 
Addressing these needs via naive retraining is computationally prohibitive, especially for large-scale backbones. To overcome this bottleneck, we propose Controllable Concept Bottleneck Models (CCBMs). Unlike traditional approaches, CCBMs enable efficient, mathematically grounded updates to the model parameters in response to the aforementioned changes. We derive closed-form approximations based on influence functions~\cite{cook2000detection,cook1980characterizations}, extending their application to the composite architecture of CBMs—a domain where their utility has been largely unexplored.
 
Our contributions to the field are fourfold:
\begin{itemize}
 \item We conceptualize the problem of \emph{Post-Hoc Controllability} in CBMs, defining a comprehensive taxonomy that spans concept-label correction, concept set evolution, and bidirectional data flow (addition and removal).
 \item We introduce the CCBM framework, which leverages influence functions to provide precise, low-cost model updates, effectively bypassing the need for retraining from scratch.
 \item We propose an accelerated algorithm integrating Eigenvalue-corrected Kronecker-Factored Approximate Curvature (EK-FAC) to scale our approach to high-dimensional parameter spaces.
 \item Extensive experiments validate that CCBMs achieve a superior trade-off between utility retention and computational efficiency, demonstrating their potential as a foundational framework for sustainable and trustworthy AI.
\end{itemize}
\section{Related Work}
\label{sec:related work}

\noindent{\bf Concept Bottleneck Models.}
CBM~\cite{koh2020concept} stands out as an innovative deep-learning approach for image classification and visual reasoning. It introduces a concept bottleneck layer into deep neural networks, enhancing model generalization and interpretability by learning specific concepts. However, CBM faces two primary challenges: its performance often lags behind that of original models lacking the concept bottleneck layer, attributed to incomplete information extraction from the original data to bottleneck features. Additionally, CBM relies on laborious dataset annotation. Researchers have explored solutions to these challenges. \cite{chauhan2023interactive} extend CBM into interactive prediction settings, introducing an interaction policy to determine which concepts to label, thereby improving final predictions. \cite{oikarinen2023label} address CBM limitations and propose a novel framework called Label-free CBM. This innovative approach enables the transformation of any neural network into an interpretable CBM without requiring labeled concept data, all while maintaining high accuracy. Post-hoc Concept Bottleneck models \cite{yuksekgonul2022post} can be applied to various neural networks without compromising model performance, preserving interpretability advantages. CBMs work on the image field also includes the works of \cite{havasi2022addressing},\cite{kim2023probabilistic},\cite{keser2023interpretable},\cite{sawada2022concept} and \cite{sheth2023auxiliary}. Despite many works on CBMs, we are the first to investigate the interactive influence between concepts through influence functions. Our research endeavors to bridge this gap by utilizing influence functions in CBMs, thereby deciphering the interaction of concept models and providing an adaptive solution to concept editing. For more related work, please refer to Appendix \ref{app:sec:more_related}.

\section{Preliminaries}
\label{sec:preliminary}

\noindent{\bf Concept Bottleneck Models.} In this paper, we consider the original CBM, and we adopt the notations used by \cite{koh2020concept}. We consider a classification task with a concept set denoted as $\{p_1, \cdots, p_k\}$ with each $p_i$ being a concept given by experts or LLMs, and a training dataset represented as $\mathcal{D} = \{ z_i\}_{i=1}^{n}$, where $z_i = (x_i, y_i, c_i)$. Here, for $i\in [n]$, $x_i\in \mathbb{R}^{d_i}$ represents the input feature vector, $y_i\in \mathbb{R}^{d_o}$ denotes the label (with $d_o$ corresponding to the number of classes) and ${c}_i=(c_i^1, \cdots, c_i^k) \in \mathbb{R}^k$ represents the concept vector. In this context, $c_i^j$ represents the label of the concept $p_j$ of the $i$-th data. In CBMs, our goal is to learn two representations: one called concept predictor that transforms the input space into the concept space, denoted as $g: \mathbb{R}^d_i \to \mathbb{R}^k$, and the other called label predictor which maps the concept space to the prediction space, denoted as $f: \mathbb{R}^k \to \mathbb{R}^{d_o}$. Usually, here the map $f$ is linear. For each training sample $z_i = (x_i, y_i, {c}_i)$, we consider two empirical loss functions: concept predictor $\hat{g}$ and label predictor $\hat{f}$:
% are from the input space to concept space and from concept space to output space, respectively:

\begin{equation}\label{eq:def_g}
\begin{split}
    \hat{g} &= \argmin_g \sum_{i=1}^n\sum_{j=1}^k g^j(x_i)^\top \log({c^j_i}),
\end{split}
\end{equation}

where $g^j(*)$ is the predicted $j$-th concept. For brevity, write the loss function as $L_{C}(g(x_i), c_i) = \sum_{j=1}^k L_{C}^j(g(x_i), c_i)$ for data $(x_i, c_i)$. Once we obtain the concept predictor $\hat{g}$, the label predictor is defined as:

\begin{align}
\label{eq:hatf}
\hat{f} = \argmin_{f} \sum_{i=1}^n L_Y\big(f(\hat{g}(x_i)), y_i\big),
\end{align}

where \( L_Y \) represents the cross-entropy loss, similar to \eqref{eq:def_g}.
CBMs enforce dual precision in predicting interpretable concept vectors \( \hat{c} = \hat{g}(x) \) (matching concept \( {c} \)) and final outputs \( \hat{y} = \hat{f}(\hat{c}) \) (matching label \( y \)), ensuring transparent reasoning through explicit concept mediation. Furthermore, in this paper, we focus primarily on the scenarios in which the label predictor ff is a linear transformation, motivated by their interpretability advantages in tracing concept-to-label relationships. For details on the symbols used, please refer to the notation table in Appendix~\ref{tab:notation}.

\noindent {\bf Influence Function.} 
The influence function measures the dependence of an estimator on the value of individual point in the sample. Consider a neural network $\hat{\theta}=\arg\min_\theta \sum_{i=1}^n \ell(z_i;\theta)$ with loss function $\ell$ and dataset $D=\{z_i\}_{i=1}^n$. If we remove $z_m$ from the training dataset, the parameters become $\hat{\theta}_{-z_m} = \arg\min_\theta \sum_{i\neq m} \ell(z_i; \theta)$. The influence function provides an efficient model approximation by defining a series of $\epsilon$-parameterized models as $\hat{\theta}_{\epsilon, -z_m} ={\operatorname{argmin}} \sum_{i=1}^{n} \ell(z_{i} ; \theta) + \epsilon \ell(z_m ; \theta).$ 
By performing a first-order Taylor expansion on the gradient of the objective function corresponding to the \(\argmin\) process, the influence function is defined as:

\begin{equation*}
    \mathcal{I}_{\hat{\theta}} \left( z_m \right)\triangleq \left.\frac{\mathrm{d} {\hat{\theta}_{\epsilon,-z_m}} }{\mathrm{d}\epsilon}\right|_{\epsilon = 0} = -H^{-1}_{\hat{\theta}} \cdot \nabla_{\theta} \ell (z_m; {\hat{\theta}}),
\end{equation*}

where \(H^{-1}_{\hat{\theta}} = \nabla^2_{\theta} \sum_{i=1}^n \ell(z_i; \hat{\theta})\) is the Hessian matrix. When the loss function $\ell$ is twice-differentiable and strongly convex in $\theta$, the Hessian $H_{{\hat{\theta}}}$ is positive definite and thus the influence function is well-defined. For non-convex loss functions, \cite{bartlett1953approximate} proposed replacing the Hessian \(H_{\hat{\theta}}\) with \(\hat{H} = G_{\hat{\theta}} + \delta I\), where \(G_{\hat{\theta}}\) is the Fisher information matrix defined as \( \sum_{i=1}^n \nabla_{\theta} \ell(z_i; \theta) ^{\mathrm{T}}\nabla_{\theta} \ell(z_i; \theta)\), and \(\delta \) is the damping term used to ensure the positive definiteness of \(\hat{H}\). We can employ the Eigenvalue-corrected Kronecker-Factored Approximate Curvature (EK-FAC) method to further accelerate the computation. See Appendix \ref{sec:further} for additional details. 

\section{Controllable Concept Bottleneck Models}
\label{sec:method}

In this section, we introduce our CCBMs for the three settings mentioned in the introduction, leveraging the influence function. Specifically, at the concept-label level, we calculate the influence of a set of data samples' individual concept labels; at the concept level, we calculate the influence of multiple concepts; and at the data level, we calculate the influence of multiple samples. Figure~\ref{fig:workflow} illustrates the workflows of our CCBMs in the corresponding settings.

\begin{figure*}[ht]
\centering
\vspace{-6pt}
\includegraphics[width=\linewidth]{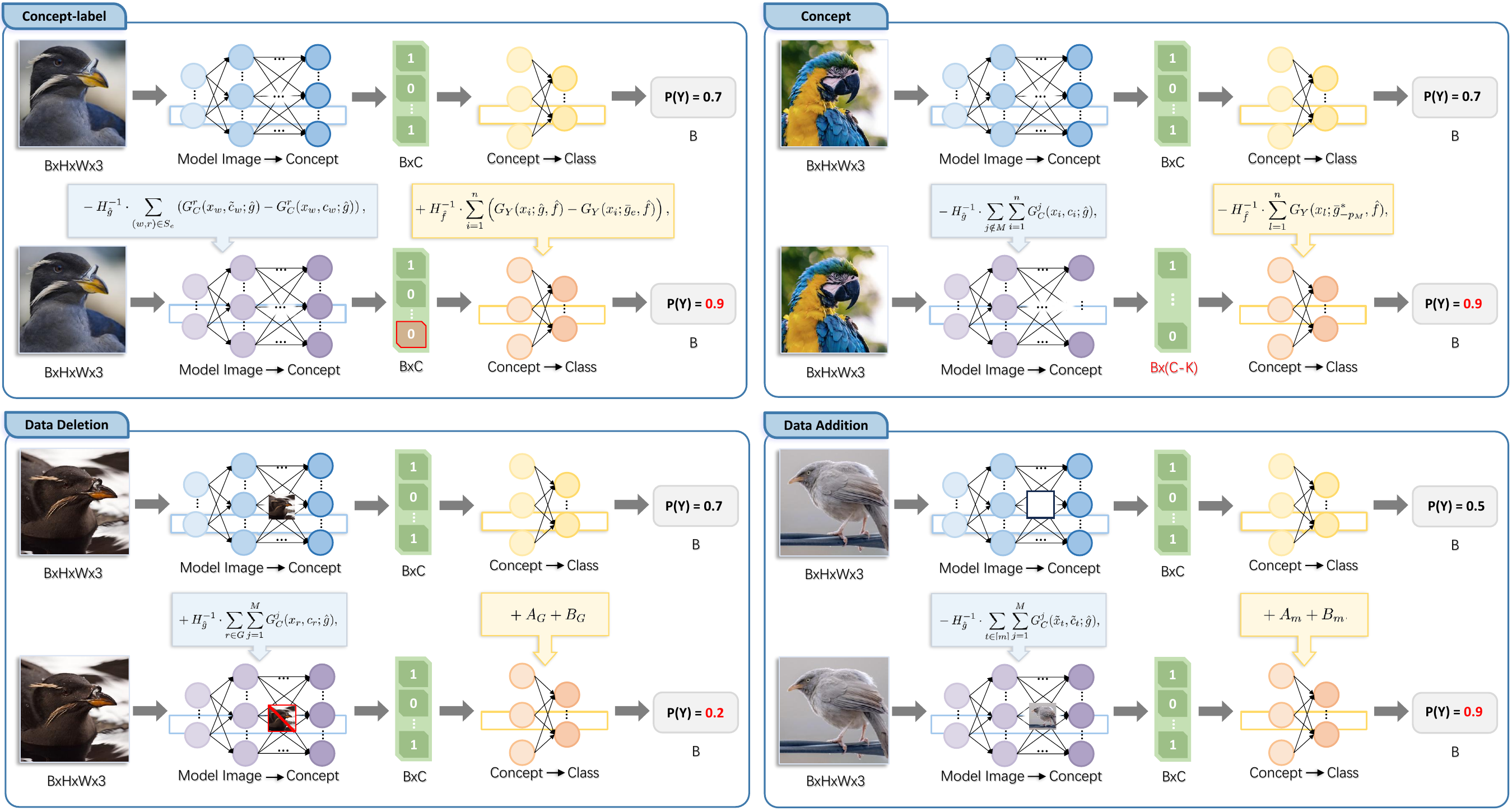}
\vspace{-20pt}
\caption{The workflows of Controllable Concept Bottleneck Models with four settings. \label{fig:workflow}}
\vspace{-10pt}
\end{figure*}

% \vspace{-7pt}
\subsection{Concept Label-level Controllable CBM}
% \vspace{-7pt}

In many cases, certain data samples contain erroneous annotations for specific concepts, yet their other information remains valuable. This is particularly relevant in domains such as medical imaging, where acquiring data is often costly and time-consuming. In such scenarios, it is common to correct the erroneous concept annotations rather than removing the entire data from the dataset. Estimating the retrained model parameter is crucial in this context. We refer to this scenario as the concept label-level Controllable CBM.
%we choose to correct such erroneous concepts instead of removing the whole data point from the dataset due to the expensive annotation for some medical data. We denote this concept of single concept label data level influence as a concept label-level influence.

Mathematically, we have a set of erroneous data $D_e$ and its associated index set $S_e\subseteq [n]\times [k]$ such that for each $(w, r)\in S_e$, $(x_w, y_w, {c}_w)\in D_e$ with $c_w^r$ is mislabeled and $\tilde{c}_w^r$ is corrected concept label. Our goal is to estimate the retrained CBM. The retrained concept predictor and label predictor are represented as follows:

\begin{equation}
    \begin{split}
        \hat{g}_{e} = \argmin_{g}  & \sum_{(i, j) \notin S_e} L^j_{C}\left(g(x_i),{c}_i\right) \\
       +&\sum_{(i, j) \in S_e} L^j_{C}\left(g(x_i), \tilde{c}_i \right), \label{concept-label:g}
    \end{split}
\end{equation}
\vspace{-5pt}
\begin{equation}
     \label{concept-label:f}
    \hat{f}_{e} = \argmin_{f} \sum_{i=1}^n L_{Y} \left(f\left(\hat{g}_{e}\left(x_i\right)\right), y_i\right).
\end{equation}
% \begin{align}
%     &   \hat{g}_{e} = \argmin_{g}  \sum_{(i, j) \notin S_e} L_{C}\left(g^j(x_i),c_i^j\right) +\sum_{(i, j) \in S_e} L_{C}\left(g^j(x_i), \tilde{c}_i^j \right) , \label{concept-label:g} \\
%  & \label{concept-label:f}
%     \hat{f}_{e} = \argmin_{f} \sum_{i=1}^n L_{Y} \left(f\left(\hat{g}_{e}\left(x_i\right)\right), y_i\right). 
% \end{align}

For simple neural networks, we can use the influence function approach directly to estimate the retrained model. However, for CBM architecture, if we intervene with the true concepts, the concept predictor $\hat{g}$ fluctuates to $\hat{g}_e$ accordingly. Observe that the input data of the label predictor comes from the output of the concept predictor, which is also subject to change. Therefore, we need to adopt a two-stage editing approach. Here we consider the influence function for \eqref{concept-label:g} and \eqref{concept-label:f} separately. We first edit the concept predictor from $\hat{g}$ to $\bar{g}_{e}$, and then edit from $\hat{f}$ to $\bar{f}_e$ based on our approximated concept predictor. To begin, we provide the following definitions:

\begin{definition}
Define the gradient of the \(j\)-th concept predictor and the label predictor for the \(i\)-th data point \(x_i\) as:
\begin{align*}
    &G^j_C(x_i,{c}_i;{g}) \triangleq \nabla_{{g}}L^j_C\left({g}(x_i),  {c}_i\right),\\
     &G_Y(x_i;{g},f) \triangleq \nabla_{{f}}L_Y\left(f({g}(x_i)),y_i\right).
\end{align*}
\end{definition}

\begin{theorem}\label{th:concept-label:g}
The retrained concept predictor $\hat{g}_{e}$ defined by (\ref{concept-label:g}) can be approximated by $\bar{g}_{e}$, defined by: 
\begin{equation*}
     \hat{g} -H^{-1}_{\hat{g}}  \cdot \sum_{(w,r)\in S_e} \left(  G^r_C(x_w,\tilde{c}_w;\hat{g}) - G^r_C(x_w,  {c}_w;\hat{g}) \right),
\end{equation*}
where $H_{\hat{g}} = \nabla_{\hat{g}}  \sum_{i,j}  G^j_C(x_i,{c}_i;\hat{g})$ is the Hessian matrix of the loss function with respect to $\hat{g}$.
\end{theorem}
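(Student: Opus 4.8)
The plan is to realize $\bar{g}_e$ as the first-order term of a one-parameter family of minimizers interpolating between the original objective (at $\epsilon=0$) and the corrected objective (at $\epsilon=1$), exactly mirroring the influence-function construction recalled in the preliminaries. Concretely, I would introduce
\begin{equation*}
\hat{g}_\epsilon = \argmin_g \sum_{i,j} L^j_C(g(x_i), c_i) + \epsilon \sum_{(w,r)\in S_e} \left( L^r_C(g(x_w), \tilde{c}_w) - L^r_C(g(x_w), c_w) \right),
\end{equation*}
and observe that $\hat{g}_0 = \hat{g}$, while $\hat{g}_1$ coincides with the retrained predictor $\hat{g}_e$ of \eqref{concept-label:g}: at $\epsilon=1$ each mislabeled term $L^r_C(\cdot,c_w)$ cancels and is replaced by its corrected counterpart $L^r_C(\cdot,\tilde{c}_w)$, recovering precisely the objective that defines $\hat{g}_e$.

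Next I would write the stationarity (first-order optimality) condition satisfied by $\hat{g}_\epsilon$,
\begin{equation*}
\sum_{i,j} G^j_C(x_i, c_i; \hat{g}_\epsilon) + \epsilon \sum_{(w,r)\in S_e} \left( G^r_C(x_w, \tilde{c}_w; \hat{g}_\epsilon) - G^r_C(x_w, c_w; \hat{g}_\epsilon) \right) = 0,
\end{equation*}
and differentiate it implicitly in $\epsilon$. Evaluating at $\epsilon=0$, the chain rule turns the $\epsilon$-derivative of the first sum into $H_{\hat{g}}\,\tfrac{d\hat{g}_\epsilon}{d\epsilon}\big|_{\epsilon=0}$ (which is exactly the Hessian of the statement), the product rule makes the $\epsilon$-multiplied derivative of the bracket vanish, and the bracket itself contributes its value at $\hat{g}$. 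Solving the resulting linear system gives
\begin{equation*}
\left.\frac{d\hat{g}_\epsilon}{d\epsilon}\right|_{\epsilon=0} = -H^{-1}_{\hat{g}} \sum_{(w,r)\in S_e} \left( G^r_C(x_w, \tilde{c}_w; \hat{g}) - G^r_C(x_w, c_w; \hat{g}) \right).
\end{equation*}
A first-order Taylor expansion of $\epsilon \mapsto \hat{g}_\epsilon$ around $\epsilon=0$ evaluated at the endpoint $\epsilon=1$ then yields $\hat{g}_e \approx \hat{g} + \tfrac{d\hat{g}_\epsilon}{d\epsilon}\big|_{\epsilon=0}$, which is exactly $\bar{g}_e$.

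The main obstacle is justifying the two analytic steps the influence-function heuristic glosses over. First, implicit differentiation presupposes that $\epsilon \mapsto \hat{g}_\epsilon$ is well-defined and differentiable near $\epsilon=0$; this follows from the implicit function theorem applied to the stationarity condition, provided $H_{\hat{g}}$ is invertible. Since the concept loss need not be convex (as noted in the preliminaries), I would state positive definiteness as the operative assumption and, if strong convexity is unavailable, replace $H_{\hat{g}}$ by the damped Fisher surrogate $\hat{H} = G_{\hat{g}} + \delta I$ to guarantee invertibility. Second, the passage from the exact derivative at $\epsilon=0$ to the value at $\epsilon=1$ is only a first-order approximation, with remainder of order $O(\|\sum_{(w,r)\in S_e}(G^r_C(x_w,\tilde{c}_w;\hat{g}) - G^r_C(x_w,c_w;\hat{g}))\|^2)$ governed by the second derivative of $\hat{g}_\epsilon$; this is precisely why the statement asserts that $\hat{g}_e$ can be \emph{approximated} by $\bar{g}_e$ rather than claiming equality, and I would keep the approximate nature explicit rather than attempt a sharp error bound.
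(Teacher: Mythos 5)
Your proposal follows essentially the same route as the paper's proof: you introduce the identical $\epsilon$-weighted family $\hat{g}_{\epsilon}$ interpolating from $\hat{g}$ to $\hat{g}_e$, differentiate its stationarity condition at $\epsilon=0$ (the paper does the equivalent first-order Taylor expansion of that condition at $\hat{g}$), and take the unit step $\epsilon=1$ to obtain $\bar{g}_e$. Your added remarks on the implicit function theorem, Hessian invertibility via the damped Fisher surrogate, and the first-order remainder make explicit points the paper only gestures at (through its EK-FAC damping discussion), but the argument itself matches.
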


\begin{theorem}
The retrained label predictor $\hat{f}_{e}$ defined by \eqref{concept-label:f} can be approximated by $\bar{f}_{e}$, defined by: 
\begin{align*}
       \hat{f} + H^{-1}_{\hat{f}} \cdot  \sum_{i=1}^n \left(G_Y(x_i;\hat{g},\hat{f})   -G_Y(x_i;\bar{g}_e,\hat{f})  \right),
   \end{align*}
where $H_{\hat{f}} = \nabla_{\hat{f}}\sum_{i=1}^n   G_Y(x_i;\hat{g},\hat{f})$ is the Hessian matrix, and $\bar{g}_{e}$ is given in Theorem \ref{th:concept-label:g}.
\end{theorem}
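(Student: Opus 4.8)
The plan is to reuse the influence-function machinery behind Theorem~\ref{th:concept-label:g}, but with a different perturbation: here the object being perturbed is not a corrected concept label but the \emph{shift in the concept predictor itself}, from $\hat{g}$ to its edited approximation $\bar{g}_e$. Since the label-predictor objective $\sum_i L_Y(f(g(x_i)), y_i)$ depends on the concept predictor only through the transformed inputs $g(x_i)$, moving $\hat{g}\to\bar{g}_e$ is equivalent to replacing each loss term $L_Y(f(\hat{g}(x_i)), y_i)$ by $L_Y(f(\bar{g}_e(x_i)), y_i)$. I would therefore introduce a scalar $\epsilon$ and define the interpolated estimator
\begin{equation*}
  f_\epsilon = \argmin_f \sum_{i=1}^n \Big[ L_Y\big(f(\hat{g}(x_i)), y_i\big) + \epsilon \big( L_Y(f(\bar{g}_e(x_i)), y_i) - L_Y(f(\hat{g}(x_i)), y_i) \big) \Big],
\end{equation*}
so that $f_0 = \hat{f}$ by \eqref{eq:hatf} while $f_1$ minimizes $\sum_i L_Y(f(\bar{g}_e(x_i)), y_i)$, i.e. the retrained objective \eqref{concept-label:f} with $\bar{g}_e$ substituted for $\hat{g}_e$.

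Next I would write the stationarity condition for $f_\epsilon$, namely $\sum_i G_Y(x_i;\hat{g},f_\epsilon) + \epsilon \sum_i (G_Y(x_i;\bar{g}_e,f_\epsilon) - G_Y(x_i;\hat{g},f_\epsilon)) = 0$, and differentiate it implicitly in $\epsilon$. Evaluating at $\epsilon=0$, the $\epsilon$-derivative of the base gradient contributes the Hessian $H_{\hat{f}} = \nabla_f \sum_i G_Y(x_i;\hat{g},\hat{f})$ acting on $df_\epsilon/d\epsilon$, while the explicit $\epsilon$-factor contributes $\sum_i (G_Y(x_i;\bar{g}_e,\hat{f}) - G_Y(x_i;\hat{g},\hat{f}))$; the remaining second-order terms carry a factor of $\epsilon$ and vanish. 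Solving gives
\begin{equation*}
  \frac{d f_\epsilon}{d\epsilon}\Big|_{\epsilon=0} = H_{\hat{f}}^{-1} \sum_{i=1}^n \big( G_Y(x_i;\hat{g},\hat{f}) - G_Y(x_i;\bar{g}_e,\hat{f}) \big),
\end{equation*}
and a first-order Taylor expansion $f_1 \approx f_0 + (df_\epsilon/d\epsilon)|_{\epsilon=0}$ reproduces the stated $\bar{f}_e$ verbatim. (At exact optimality $\sum_i G_Y(x_i;\hat{g},\hat{f}) = 0$, so this term could be dropped; retaining it, as in the statement, keeps the estimate robust when $\hat{f}$ is only an approximate minimizer.) Invertibility of $H_{\hat{f}}$ is immediate, since $f$ is linear and $L_Y$ is cross-entropy, making the objective convex and, with the damping $\hat{H} = G_{\hat{f}} + \delta I$ from the preliminaries, $H_{\hat{f}}$ positive definite.

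The main obstacle is not the algebra but the control of error, which compounds in two stages. First, the influence estimate is a first-order expansion applied at $\epsilon=1$, so it is accurate only when each perturbation $L_Y(f(\bar{g}_e(x_i)), y_i) - L_Y(f(\hat{g}(x_i)), y_i)$ is small; this is precisely the regime in which the edited concept outputs $\bar{g}_e(x_i)$ stay close to $\hat{g}(x_i)$, i.e. when the concept-label correction is local. Second, the derivation already substitutes $\bar{g}_e$ for the true retrained $\hat{g}_e$, so the approximation error of Theorem~\ref{th:concept-label:g} propagates into every gradient $G_Y(x_i;\bar{g}_e,\hat{f})$. A fully rigorous error bound would have to combine the label-predictor Taylor remainder with the concept-predictor error from Theorem~\ref{th:concept-label:g}, scaled by the Lipschitz constant of $G_Y$ in its concept-input argument; for the closed-form approximation asserted in the statement, the implicit-differentiation argument above is what is required.
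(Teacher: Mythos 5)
Your proposal is correct and follows essentially the same route as the paper's proof: an $\epsilon$-weighted perturbation of the label-predictor objective, stationarity plus a first-order Taylor expansion at $\hat{f}$, and a unit Newton step, ending with $\bar{g}_e$ in place of the true retrained $\hat{g}_e$. The only cosmetic differences are that the paper perturbs one data point $(x_{i_c}, y_{i_c}, c_{i_c})$ at a time and then sums the per-point influences over $i_c = 1, \dots, n$, whereas you perturb all $n$ terms jointly with a single $\epsilon$, and the paper performs the derivation with $\hat{g}_e$ and substitutes its approximation $\bar{g}_e$ only in the final step — both orderings yield the identical closed form.
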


\noindent {\bf Difference from Test-Time Intervention.}
The ability to intervene in CBMs allows human users to interact with the model during the prediction process. For example, a medical expert can directly replace an erroneously predicted concept value \(\hat{c}\) and observe its impact on the final prediction \(\hat{y}\).
However, the underlying flaws in the concept predictor remain unaddressed, meaning similar errors may persist when applied to new test data. In contrast, under the Controllable CBM framework, not only can test-time interventions be performed, but the concept predictor of the CBM can also be further refined based on test data that repeatedly produces errors. 
Our CCBM method incorporates the corrected test data into the training dataset without requiring full retraining. This approach extends the rectification process from the data level to the model level.
%For example, if a doctor needs to remove some mislabeled concepts or introduce new ones, this poses a challenge as modifying one concept disrupts the entire training dataset.

\subsection{Concept-level Controllable CBM}
In this case, a set of concepts is removed due to incorrect attribution or spurious concepts, termed concept-level edit. \footnote{For convenience, in this paper, we only consider concept removal; our method can directly extend to concept insertion.}Specifically, for the concept set, denote the erroneous concept index set as $M\subset [k]$, we aim to delete these concept labels in all training samples. We aim to investigate the impact of updating the concept set within the training data on the model's predictions.  
It is notable that compared to the above concept label case,  the dimension of output (input) of the retrained concept predictor (label predictor) will change. If we delete $t$ concepts from the dataset, then ${g}$ becomes ${g}^{\prime}:\mathbb{R}^{d_i}\rightarrow\mathbb{R}^{k-t}$ and ${f}$ becomes ${f}^{\prime}:\mathbb{R}^{k-t}\rightarrow\mathbb{R}^{d_o}$.
More specifically, if we retrain the CBM with the revised dataset, the corresponding concept predictor becomes:

\begin{equation}\label{concept-level:g}
\hat{g}_{-p_M} = \argmin_{g'} \sum_{j\notin M}\sum_{i=1}^n L^j_{C}({g'}(x_i),c_i). 
\end{equation}

The variation of the parameters in dimension renders the application of influence function-based editing challenging for the concept predictor. This is because the influence function implements the editorial predictor by approximate parameter change from the original base after $\epsilon$-weighting the corresponding loss for a given sample, and thus, it is unable to deal with changes in parameter dimensions.

To overcome the challenge, our strategy is to develop some transformations that need to be performed on $\hat{g}_{-p_M}$ to align its dimension with $\hat{g}$ so that we can apply the influence function to edit the CBM. We achieve this by mapping $\hat{g}_{-p_M}$ to $\hat{g}^{*}_{-p_M} \triangleq \mathrm{P}(\hat{g}_{-p_M})$, which has the same amount of parameters as $\hat{g}$ and has the same predicted concepts $\hat{g}^{*}_{-p_M}(j)$ as $\hat{g}_{-p_M}(j)$ for all $j\in [d_i]-M$. We achieve this effect by inserting a zero row vector into the $r$-th row of the matrix in the final layer of $\hat{g}_{-p_M}$ for $r\in M$. Thus, we can see that the mapping $P$ is one-to-one. Moreover, assume the parameter space of $\hat{g}$ is $T$ and that of $\hat{g}^{*}_{-p_M}$, $T_0$ is the subset of $T$. Noting that $\hat{g}^{*}_{-p_M}$ is the optimal model of the following objective function:

\begin{equation}\label{concept-level:g^*}
\hat{g}^{*}_{-p_M} = \argmin_{g^{\prime} \in T_0} \sum_{j \notin M} \sum_{i=1}^{n} L^j_{C} ( g'(x_i), c_i), 
\end{equation}

i.e., it is the optimal model of the concept predictor loss on the remaining concepts under the constraint $T_0$.  Now we can apply the influence function to edit $\hat{g}$ to approximate $\hat{g}^{*}_{-p_M}$ with the restriction on the value of 0 for rows indexed by $M$ with the last layer of the neural network, denoted as $\bar{g}^*_{-p_M}$. After that, we remove from $\bar{g}^*_{-p_M}$ the parameters initially inserted to fill in the dimensional difference, which always equals 0 because of the restriction we applied in the editing stage, thus approximating the true edited concept predictor $\hat{g}_{-p_M}$. We now detail the editing process from $\hat{g}$ to $\hat{g}^{*}_{-p_M}$ using the following theorem. 

\begin{theorem}\label{thm:4.3}
For the retrained concept predictor $\hat{g}_{-p_M}$ defined in \eqref{concept-level:g}, we map it to $\hat{g}^{*}_{-p_M}$ as \eqref{concept-level:g^*}. And we can edit the initial $\hat{g}$ to $\hat{g}^*_{-p_M}$, defined as:
\begin{equation}
    % \hat{g}^*_{-p_M} \approx 
    \bar{g}^*_{-p_M}\triangleq \hat{g}  - H^{-1}_{\hat{g}} \cdot \sum_{j\notin M}\sum_{i=1}^n G_C^j(x_i,c_i;\hat{g}),
\end{equation}
where $H_{\hat{g}} = \nabla_{g} \sum_{j\notin M} \sum_{i=1}^{n} G^j_{C} (x_i, c_i; \hat{g})$.
Then, by removing all zero rows inserted during the mapping phase, we can naturally approximate $\hat{g}_{-p_M}\approx  \mathrm{P}^{-1}(\hat{g}^{*}_{-p_M})$.
\end{theorem}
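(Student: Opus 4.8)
The plan is to read the passage from $\hat g$ to the constrained minimizer $\hat g^{*}_{-p_M}$ as a single Newton step on the stationarity condition of the reduced objective, mirroring the influence-function derivation of the Preliminaries but carried out inside the constrained parameter space $T_0$. Write $L_{\mathrm{red}}(g)\triangleq\sum_{j\notin M}\sum_{i=1}^n L_C^j(g(x_i),c_i)$ for the loss over the retained concepts. First I would record that, by \eqref{concept-level:g^*}, $\hat g^{*}_{-p_M}$ is a stationary point of $L_{\mathrm{red}}$ over $T_0$, i.e. $\nabla_g L_{\mathrm{red}}(\hat g^{*}_{-p_M})=\sum_{j\notin M}\sum_i G_C^j(x_i,c_i;\hat g^{*}_{-p_M})=0$ when the gradient is taken in the coordinates of $T_0$.

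Next I would Taylor-expand this stationarity condition about the base model $\hat g$,
\[
0=\nabla_g L_{\mathrm{red}}(\hat g^{*}_{-p_M})\approx \nabla_g L_{\mathrm{red}}(\hat g)+H_{\hat g}\,(\hat g^{*}_{-p_M}-\hat g),
\]
with $H_{\hat g}=\nabla_g\sum_{j\notin M}\sum_i G_C^j(x_i,c_i;\hat g)$. Solving for the displacement yields $\hat g^{*}_{-p_M}-\hat g\approx -H_{\hat g}^{-1}\sum_{j\notin M}\sum_i G_C^j(x_i,c_i;\hat g)$, which is exactly the claimed $\bar g^{*}_{-p_M}$. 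The point that replaces the usual ``remove $z_m$'' reweighting is that $\nabla_g L_{\mathrm{red}}(\hat g)$ need \emph{not} vanish, since $\hat g$ minimizes the full loss rather than $L_{\mathrm{red}}$; this nonzero residual is what drives the edit. As a consistency check I would use the optimality of $\hat g$ for the full objective, $\sum_{j=1}^k\sum_i G_C^j(x_i,c_i;\hat g)=0$, to rewrite the update direction as $+\sum_{j\in M}\sum_i G_C^j(x_i,c_i;\hat g)$, recovering the intuitive reading that we are adding back the influence of the deleted concepts.

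The delicate point, which I expect to be the main obstacle, is the dimension mismatch and the attendant degeneracy of $H_{\hat g}$. On the full space $T$ the Hessian of $L_{\mathrm{red}}$ is singular: because $L_C^j$ for $j\notin M$ does not depend on the last-layer rows indexed by $M$, both $\nabla_g L_{\mathrm{red}}(\hat g)$ and $H_{\hat g}$ vanish identically in those coordinates, so $H_{\hat g}^{-1}$ is ill-defined on $T$. The remedy is to run the whole expansion inside $T_0$, where the $M$-rows are pinned to $0$ and thereby dropped from the variable set; there $H_{\hat g}$ is positive definite (via the damped Fisher surrogate of the Preliminaries) and hence invertible. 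I would make explicit the splitting $g=(\theta_{\mathrm{rest}},W_M)$, note that $L_{\mathrm{red}}$ is independent of $W_M$, and conclude that setting $W_M=0$ is free — this is precisely what legitimizes the map $\mathrm{P}$ and shows $\hat g^{*}_{-p_M}=\mathrm{P}(\hat g_{-p_M})$ solves \eqref{concept-level:g^*}.

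Finally I would close the loop on the reduction. Since the restriction keeps the $M$-rows of $\bar g^{*}_{-p_M}$ equal to $0$, applying $\mathrm{P}^{-1}$ (deleting those zero rows) produces a predictor $\mathbb{R}^{d_i}\to\mathbb{R}^{k-t}$ whose retained-concept outputs match those of $\bar g^{*}_{-p_M}$; and because $\mathrm{P}$ is a bijection onto $T_0$ with $\bar g^{*}_{-p_M}\approx\hat g^{*}_{-p_M}=\mathrm{P}(\hat g_{-p_M})$, we obtain $\mathrm{P}^{-1}(\bar g^{*}_{-p_M})\approx\hat g_{-p_M}$, as asserted. The only genuine analytic content beyond bookkeeping is the single Taylor step; the real difficulty is conceptual, namely arranging $T_0$ so that the curvature operator is nonsingular and the edit respects the deleted dimensions.
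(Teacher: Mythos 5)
Your proposal is correct and takes essentially the same route as the paper's proof: the paper likewise obtains $\bar{g}^*_{-p_M}$ by Taylor-expanding the stationarity condition $\nabla_g \sum_{j\notin M}\sum_{i=1}^n L^j_C\bigl(\hat{g}^{*}_{-p_M}(x_i),c_i\bigr)=0$ of the reduced loss around $\hat{g}$ — formally packaged as an $\epsilon$-upweighting of the $M$-indexed concept losses evaluated at $\epsilon=1$, which is exactly your single Newton step with the non-vanishing residual gradient $\nabla_g L_{\mathrm{red}}(\hat{g})$ driving the edit — and then recovers $\hat{g}_{-p_M}\approx \mathrm{P}^{-1}(\bar{g}^{*}_{-p_M})$ by deleting the zero rows. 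Your explicit handling of the Hessian degeneracy in the $W_M$ coordinates (restricting to $T_0$, where the damped curvature is invertible, and noting that pinning $W_M=0$ is free since $L_{\mathrm{red}}$ is independent of those rows) is somewhat more careful than the paper, which imposes the zero-row restriction only informally, but it is the same argument.
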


For the second stage of training, assume we aim to remove concept $p_r$ for $r\in M$ and the new optimal model is $\hat{f}_{-p_M}$.
% defined by:
% \begin{equation}
% \hat{f}_{-p_M}=\argmin_{f^{\prime}\in\Theta^{\prime}} \sum_{i=1}^{n} L_{Y}=\argmin_{f'\in\Theta^{\prime}}\sum_{i=1}^{n} \ell(f^{\prime}(\hat{g}_{-p_M}(x_{i})), y_{i}).
% \end{equation}
We will encounter the same difficulty as in the first stage, i.e., the number of parameters of the label predictor will change. To address the issue, our key observation is that in the existing literature on CBMs, we always use linear transformation for the label predictor, meaning that the dimensions of the input with values of $0$ will have no contribution to the final prediction. 
To leverage this property, we fill the missing values in the input of the updated predictor with $0$, that is, replacing $\hat{g}_{-p_M}$ with $\hat{g}^*_{-p_M}$ and consider $\hat{f}_{p_M = 0}$ defined by

\begin{equation}\label{concept-level:f^*}
    \hat{f}_{p_M=0} = \argmin_{f} \sum_{i = 1}^n L_{Y} \left(f\left(\hat{g}^*_{-p_M}(x_i)\right), y_i\right).
\end{equation}

%If we set the rr-th concept to zero for r∈Mr\in M and train again, we obtain another optimal model ˆf∗pM=0\hat{f}_{p_M=0}^*. Compared with ˆf−pM\hat{f}_{-p_M}, ˆfpM=0\hat{f}_{p_M = 0} has the same input in the rr-th dimension, r∈[k]−Mr\in[k]-M, and has 00 as input in dimension index in MM.
In total, we have the following lemma:

\begin{lemma}\label{method:lm:1}
In the CBM, if the label predictor utilizes linear transformations of the form $\hat{f} \cdot c$ with input $c$, then, for each $r\in M$, we remove the $r$-th concept from c and denote the new input as $c^{\prime}$; set the $r$-th concept to $0$ and denote the new input as $c^0$. Then we have $\hat{f}_{-p_M} \cdot c^{\prime} = \hat{f}_{p_M=0} \cdot c^0$ for any input $c$. 
\end{lemma}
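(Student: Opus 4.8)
The plan is to exploit the linearity of the label predictor together with the zero-padding structure that relates $\hat{g}^{*}_{-p_M}$ to $\hat{g}_{-p_M}$, reducing the claimed identity to a statement about which columns of the two weight matrices are actually pinned down by their respective optimization problems. I would treat each linear label predictor as a matrix and write its action coordinate-wise, so that the contribution of each concept index decouples into a separate column of the matrix times the corresponding input coordinate. Since a predictor trained with a concept permanently set to $0$ never ``sees'' gradient signal through the annihilated coordinate, the expectation is that the two predictors must agree on all surviving columns, which is exactly what the stated output-equality encodes.

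Concretely, I would fix a generic concept vector $c\in\mathbb{R}^{k}$ and write, using that $c^0$ has zero entries precisely on $M$ while the reduced vector $c'$ consists of exactly the coordinates $c^j$ with $j\notin M$,
\[
\hat{f}_{p_M=0}\cdot c^0 \;=\; \sum_{j\notin M}(\hat{f}_{p_M=0})_{\cdot,j}\,c^j,
\qquad
\hat{f}_{-p_M}\cdot c' \;=\; \sum_{j\notin M}(\hat{f}_{-p_M})_{\cdot,j}\,c^j .
\]
This step reduces the lemma to proving the column equality $(\hat{f}_{p_M=0})_{\cdot,j}=(\hat{f}_{-p_M})_{\cdot,j}$ for every $j\notin M$. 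To obtain it I would compare the two defining objectives: because $\hat{g}^{*}_{-p_M}(x_i)$ is by construction $\hat{g}_{-p_M}(x_i)$ with zeros inserted at the rows indexed by $M$, for any candidate matrix the vector fed into $L_Y$ in \eqref{concept-level:f^*} equals $\sum_{j\notin M}(\cdot)_{\cdot,j}$ times the $j$-th surviving coordinate of $\hat{g}_{-p_M}(x_i)$, which is precisely the quantity appearing in the reduced problem that defines $\hat{f}_{-p_M}$. Hence the zero-padded objective depends on its matrix argument only through the columns outside $M$, and on those columns it is identical to the reduced objective, while the columns indexed by $M$ multiply zero inputs at every training point and are therefore entirely free.

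The one point to handle carefully is the possible nonuniqueness of the minimizer, since cross-entropy composed with a linear head is convex but not necessarily strictly convex. I would therefore phrase the conclusion in terms of minimizing sets rather than unique optima: a matrix minimizes \eqref{concept-level:f^*} if and only if its columns outside $M$ minimize the reduced objective defining $\hat{f}_{-p_M}$ (with the columns on $M$ arbitrary), so the surviving columns can be taken to coincide, and the free columns on $M$ never affect either output because they are always multiplied by zeros. This yields $(\hat{f}_{p_M=0})_{\cdot,j}=(\hat{f}_{-p_M})_{\cdot,j}$ for $j\notin M$ and hence $\hat{f}_{-p_M}\cdot c'=\hat{f}_{p_M=0}\cdot c^0$ for all $c$. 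I do not expect any analytic estimates to be needed: everything follows from linearity and the insertion of zero rows, so the ``hard part'' is purely the bookkeeping around nonunique minimizers rather than any genuine technical difficulty.
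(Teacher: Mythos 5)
Your proposal is correct and takes essentially the same route as the paper's proof: both rest on the identity $\theta \cdot c^0 = P(\theta)\cdot c'$ (your column-wise decomposition $\sum_{j\notin M}\theta_{\cdot,j}\,c^j$), which shows the zero-padded objective depends only on the columns outside $M$ and coincides there with the reduced objective, so the minimizers correspond and the columns indexed by $M$ are irrelevant because they always multiply zero inputs. Your explicit handling of non-unique minimizers via corresponding argmin sets is, if anything, tidier than the paper's appeal to performing the two trainings from ``the same initialization,'' but it is the same underlying argument.
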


Lemma \ref{method:lm:1} demonstrates that the retrained $\hat{f}_{-p_M}$ and $\hat{f}_{p_M=0}$, when given inputs $\hat{g}_{-p_M}(x)$ and $\hat{g}^*_{-p_M}(x)$ respectively, yield identical outputs. Consequently, we can utilize $\hat{f}_{p_M=0}$ as the editing target in place of $\hat{f}_{-p_M}$.

\begin{theorem}
For the revised retrained label predictor $\hat{f}_{p_M=0}$ defined by \eqref{concept-level:f^*}, we can edit the initial label predictor $\hat{f}$ to $\bar{f}_{p_M=0}$ by the following equation as a substitute for $\hat{f}_{p_M=0}$:
\begin{equation}
    \hat{f}_{p_M=0} \approx \bar{f}_{p_M=0} \triangleq \hat{f}-H_{\hat{f}}^{-1} \cdot    \sum_{l=1}^{n}G_Y(x_l;\bar{g}^*_{-p_M},\hat{f}), 
\end{equation}
where $H_{\hat{f}} = \nabla_{\hat{f}}\sum_{i=1}^n G_Y(x_l;\bar{g}^*_{-p_M},\hat{f})$ is the Hessian matrix. Deleting the $r$-th dimension of $\bar{f}_{p_M=0}$ for $r\in M$, then we can map it to $\bar{f}_{-p_M}$, which is the approximation of the final edited label predictor $\hat{f}_{-p_M}$ under concept level.
\end{theorem}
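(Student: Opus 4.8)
The plan is to apply the influence-function machinery of the Preliminaries directly to the modified label-predictor objective \eqref{concept-level:f^*}, treating the concept predictor $\bar{g}^*_{-p_M}$ (obtained in Theorem~\ref{thm:4.3}) as fixed. Since the label predictor $f$ is linear and the concept predictor has already been edited, the input distribution to $f$ is now determined entirely by $\bar{g}^*_{-p_M}$. The key conceptual move is that $\hat{f}_{p_M=0}$ is the minimizer of the \emph{same} loss $L_Y$ on all $n$ samples, but with inputs $\hat{g}^*_{-p_M}(x_i)$ in place of $\hat{g}(x_i)$. Thus the difference between $\hat{f}$ and $\hat{f}_{p_M=0}$ arises solely from replacing the concept-predictor feed, and I can view this as an $\epsilon$-perturbation of the label-predictor objective.

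Concretely, I would define the interpolated objective $\hat{f}_\epsilon = \argmin_f \sum_{i=1}^n \bigl[ L_Y(f(\hat{g}(x_i)), y_i) + \epsilon\bigl( L_Y(f(\bar{g}^*_{-p_M}(x_i)), y_i) - L_Y(f(\hat{g}(x_i)), y_i)\bigr)\bigr]$, so that $\epsilon=0$ recovers $\hat{f}$ and $\epsilon=1$ recovers (approximately) $\hat{f}_{p_M=0}$. Setting the gradient to zero and performing a first-order Taylor expansion about $\hat{f}$, exactly as in the influence-function derivation in the Preliminaries, yields $\tfrac{d\hat{f}_\epsilon}{d\epsilon}\big|_{\epsilon=0} = -H_{\hat{f}}^{-1}\cdot \sum_{i=1}^n \bigl(G_Y(x_i;\bar{g}^*_{-p_M},\hat{f}) - G_Y(x_i;\hat{g},\hat{f})\bigr)$. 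The first-order approximation $\hat{f}_{p_M=0}\approx \hat{f} + \tfrac{d\hat{f}_\epsilon}{d\epsilon}\big|_{\epsilon=0}$ then gives the stated formula, where the $G_Y(x_i;\hat{g},\hat{f})$ term vanishes because $\hat{f}$ is the minimizer of the original objective and hence $\sum_i G_Y(x_i;\hat{g},\hat{f}) = 0$ by the first-order optimality condition. This optimality cancellation is what collapses the two-term difference to the single summand $\sum_l G_Y(x_l;\bar{g}^*_{-p_M},\hat{f})$ appearing in the theorem.

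The final dimensional reduction step is justified by Lemma~\ref{method:lm:1}: since the label predictor is linear and $\bar{g}^*_{-p_M}$ places zeros in the rows indexed by $M$, the columns of $\bar{f}_{p_M=0}$ corresponding to those indices multiply zero inputs and therefore never affect any output. Deleting the $r$-th dimension of $\bar{f}_{p_M=0}$ for $r\in M$ thus produces $\bar{f}_{-p_M}$ acting on the reduced concept vector $c'$, and by the identity $\hat{f}_{-p_M}\cdot c' = \hat{f}_{p_M=0}\cdot c^0$ established in the lemma, this reduced predictor approximates the genuinely retrained $\hat{f}_{-p_M}$.

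The main obstacle I anticipate is justifying that $\epsilon=1$ legitimately corresponds to $\hat{f}_{p_M=0}$ and that the first-order Taylor expansion remains accurate over the full unit step in $\epsilon$, rather than only infinitesimally. Unlike the standard single-point removal where $\epsilon$ is genuinely small, here the perturbation modifies every term in the sum, so the approximation error depends on how far $\bar{g}^*_{-p_M}$ deviates from $\hat{g}$ and on the curvature of $L_Y$. I would address this by invoking the same strong-convexity (or damped-Hessian $\hat{H} = G_{\hat{f}} + \delta I$) assumption used throughout the Preliminaries, which controls the higher-order remainder and guarantees $H_{\hat{f}}$ is invertible; the linearity of $f$ further ensures $L_Y$ is well-behaved as a function of $f$, keeping the expansion tight.
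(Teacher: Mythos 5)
Your proposal is correct and follows essentially the same route as the paper's proof: a first-order Taylor/Newton step at $\hat{f}$ for the label-predictor objective with inputs swapped from $\hat{g}(x_i)$ to the edited concept predictor, the cancellation of the $\sum_{i} G_Y(x_i;\hat{g},\hat{f})$ term via the first-order optimality of $\hat{f}$, and Lemma~\ref{method:lm:1} to justify deleting the zero-input dimensions indexed by $M$. The only cosmetic difference is bookkeeping — the paper perturbs one sample's concept input at a time (its intermediate $\hat{f}_{ir}$) and then sums over all samples, whereas you apply a single $\epsilon$-interpolation to all $n$ terms simultaneously — and both yield the identical first-order update.
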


\subsection{Data-level Controllable CBM}
In this section, we address the Controllable CBM at the data level. We identify two distinct but equally critical operational requirements in the lifecycle of a deployed model: \textit{Data Unlearning} (removing specific samples) and \textit{Data Addition} (incorporating new samples). We derive specialized approximations for each scenario.

\subsubsection{Data Unlearning (Removal)}
In this scenario, the primary objective is to completely eliminate the influence of specific data samples on the trained CBM. This requirement typically stems from privacy compliance (e.g., "Right to be Forgotten"), or the necessity to cleanse the model of poisoned or mislabeled training examples.

Mathematically, let $\mathcal{D}$ be the original dataset. We identify a subset of samples to be removed, denoted as $G \subset \mathcal{D}$, containing elements $z_r = (x_r, y_r, c_r)$. The goal is to approximate the model parameters as if they were trained on the sanitized dataset $\mathcal{D} \setminus G$.
The retrained concept predictor is defined as:
\begin{equation}\label{eq:unlearn:g}
    \hat{g}_{-z_G} = \argmin_{g}\sum_{j=1}^k\sum_{z_i \in \mathcal{D} \setminus G} L^j_{C}(g(x_i), c_i).
\end{equation}
Since retraining is computationally expensive, we employ influence functions to estimate the parameter change caused by removing the loss gradients of $G$.

\begin{theorem}\label{thm:unlearn:g}
Given the subset $G$ to be removed, the sanitized concept predictor $\hat{g}_{-z_G}$ defined in \eqref{eq:unlearn:g} can be approximated by:
\begin{equation}
\hat{g}_{-z_G} \approx \bar{g}_{-z_G} \triangleq \hat{g} + H^{-1}_{\hat{g}} \cdot \sum_{z_r \in G} \sum_{j=1}^M \nabla_g L^j_C(x_r, c_r; \hat{g}),
\end{equation}
where $H_{\hat{g}}$ is the Hessian of the total concept loss. The addition sign reflects the inverse operation of removing the negative influence of the gradients in $G$.
\end{theorem}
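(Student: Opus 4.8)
The plan is to treat this as a direct instance of the influence-function derivation recalled in the Preliminaries, now applied to the composite concept loss, with the ``$+H^{-1}$'' sign arising from extrapolating the deformation to $\epsilon=-1$. First I would introduce the $\epsilon$-weighted family of minimizers
\begin{equation*}
\hat{g}_\epsilon \triangleq \argmin_g \sum_{j=1}^k\sum_{i=1}^n L^j_{C}(g(x_i), c_i) + \epsilon \sum_{z_r \in G}\sum_{j=1}^k L^j_{C}(g(x_r), c_r),
\end{equation*}
and record the two boundary cases. At $\epsilon=0$ the family reduces to $\hat{g}$, while at $\epsilon=-1$ the $G$-terms exactly cancel the corresponding terms in the full sum, so $\hat{g}_{-1}$ coincides with the sanitized minimizer $\hat{g}_{-z_G}$ of \eqref{eq:unlearn:g}. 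This reparametrization converts the discrete removal of $G$ into a smooth one-parameter deformation on which the influence machinery operates.

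Next I would write the first-order stationarity condition satisfied by $\hat{g}_\epsilon$ throughout a neighborhood of $\epsilon=0$,
\begin{equation*}
\sum_{j=1}^k\sum_{i=1}^n G^j_C(x_i, c_i; \hat{g}_\epsilon) + \epsilon \sum_{z_r\in G}\sum_{j=1}^k G^j_C(x_r, c_r; \hat{g}_\epsilon) = 0,
\end{equation*}
and differentiate this identity implicitly in $\epsilon$. Evaluating at $\epsilon=0$ with $\hat{g}_0=\hat{g}$, the chain rule yields $H_{\hat{g}}\,\tfrac{\mathrm{d}\hat{g}_\epsilon}{\mathrm{d}\epsilon}\big|_{0} + \sum_{z_r\in G}\sum_{j=1}^k G^j_C(x_r, c_r; \hat{g}) = 0$, where $H_{\hat{g}}$ is precisely the Hessian of the total concept loss named in the statement. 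Solving gives $\tfrac{\mathrm{d}\hat{g}_\epsilon}{\mathrm{d}\epsilon}\big|_{0} = -H^{-1}_{\hat{g}}\sum_{z_r\in G}\sum_{j=1}^k G^j_C(x_r, c_r; \hat{g})$. The final step is a first-order Taylor expansion of $\epsilon\mapsto\hat{g}_\epsilon$ about $\epsilon=0$, evaluated at the removal point $\epsilon=-1$:
\begin{equation*}
\hat{g}_{-z_G} = \hat{g}_{-1} \approx \hat{g} + (-1)\cdot\Big(-H^{-1}_{\hat{g}}\sum_{z_r\in G}\sum_{j=1}^k G^j_C(x_r, c_r; \hat{g})\Big) = \hat{g} + H^{-1}_{\hat{g}}\sum_{z_r \in G}\sum_{j=1}^k G^j_C(x_r, c_r; \hat{g}),
\end{equation*}
which is exactly $\bar{g}_{-z_G}$; the ``inverse operation of removing the negative influence'' remark is accounted for by the $\epsilon=-1$ substitution flipping the sign.

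I expect the main obstacle to lie in justifying the steps the formal derivation glosses over. Two points need care. First, implicit differentiation requires the map $\epsilon\mapsto\hat{g}_\epsilon$ to be well-defined and $C^1$ near $\epsilon=0$, which the implicit function theorem delivers only if $H_{\hat{g}}$ is invertible; this fails in general for a non-convex concept predictor, so I would invoke the damped Fisher surrogate $\hat{H}=G_{\hat{g}}+\delta I$ from the Preliminaries to secure positive definiteness and hence a valid local inverse. Second, truncating the expansion at first order is exact only up to $O(\epsilon^2)$, yet we evaluate at the non-infinitesimal $\epsilon=-1$, so the conclusion is genuinely an approximation (hence the $\approx$) rather than an equality; making the derivation honest therefore amounts to treating the higher-order remainder as negligible, and any quantitative error control would be the real analytical sticking point.
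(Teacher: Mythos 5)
Your proposal is correct and takes essentially the same route as the paper's proof: an $\epsilon$-weighted family of minimizers, implicit differentiation of the first-order stationarity condition at $\epsilon=0$ (where the full-data gradient vanishes), and a one-step Newton extrapolation to the removal endpoint, yielding the $+H^{-1}_{\hat{g}}$ correction. The only cosmetic difference is that you weight the $G$-terms by $+\epsilon$ and extrapolate to $\epsilon=-1$, whereas the paper weights them by $-\epsilon$ and extrapolates to $\epsilon=1$; your remarks on damping the Hessian for invertibility and on the non-infinitesimal extrapolation error likewise mirror the paper's EK-FAC discussion and its appendix error bounds.
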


Subsequently, the label predictor must define its decision boundary based on the updated concept representations. The retrained label predictor corresponds to:
\begin{equation}\label{eq:unlearn:f}
    \hat{f}_{-z_G} = \argmin_{f}\sum_{z_i \in \mathcal{D} \setminus G} L_{Y}\left(f(\hat{g}_{-z_G}(x_i)), y_i\right).
\end{equation}
Direct estimation is intractable due to the concurrent shift in both the training set size and the input features (updated concepts). We employ the two-stage estimation strategy. First, we define an intermediate predictor $\Tilde{f}_{-z_G}$ that accounts only for the data removal while fixing the concept predictor:
\begin{equation}\label{eq:unlearn:tilde-f}
    \Tilde{f}_{-z_G} = \argmin_{f} \sum_{z_i \in \mathcal{D} \setminus G} L_{Y}(f(\hat{g}(x_i)), y_i).
\end{equation}

\begin{theorem}\label{thm:unlearn:f}
The parameter change for the label predictor in the unlearning setting is derived as follows.
First, the intermediate predictor is approximated by removing the gradient influence of $G$:
\begin{equation*}
\begin{split}
    \Tilde{f}_{-z_G} & \approx \bar{f}^*_{-z_G} \\
    & \triangleq \hat{f} + H^{-1}_{\hat{f}} \sum_{z_r \in G} \nabla_f L_Y(x_r; \hat{g}, \hat{f}) \\
    & \triangleq \hat{f} + A_{unlearn}.
\end{split}
\end{equation*}
Second, to account for the concept drift from $\hat{g}$ to $\bar{g}_{-z_G}$, we compute the correction term $B_{unlearn}$:
\begin{equation*}
\begin{split}
    B_{unlearn} = -H^{-1}_{\bar{f}^*} \sum_{z_i \in \mathcal{D} \setminus G} \bigg( & \nabla_f L_Y(x_i; \bar{g}_{-z_G}, \bar{f}^*) \\
    & - \nabla_f L_Y(x_i; \hat{g}, \bar{f}^*) \bigg),
\end{split}
\end{equation*}
where $\bar{f}^*$ denotes $\bar{f}^*_{-z_G}$. The final unlearned label predictor is $\bar{f}_{-z_G} = \hat{f} + A_{unlearn} + B_{unlearn}$.
\end{theorem}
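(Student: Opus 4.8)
The plan is to prove the two claimed approximations separately, following the two-stage decomposition: first the pure data-removal update $A_{unlearn}$ computed with the concept predictor frozen at $\hat g$, and then the concept-drift correction $B_{unlearn}$ that transports the intermediate solution along the change $\hat g \to \bar g_{-z_G}$ supplied by Theorem~\ref{thm:unlearn:g}. The final estimate is the composition $\bar f_{-z_G} = \hat f + A_{unlearn} + B_{unlearn}$.

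For the first claim I would treat the intermediate predictor $\tilde f_{-z_G}$ in \eqref{eq:unlearn:tilde-f} as a direct instance of the influence function of the Preliminaries, since the concept predictor is held fixed and only the data summation range changes. Introduce the weighted family
\[
f_\epsilon = \argmin_f \sum_{z_i \in \mathcal{D}} L_Y(x_i;\hat g,f) + \epsilon \sum_{z_r \in G} L_Y(x_r;\hat g,f),
\]
so that $f_0 = \hat f$ and $f_{-1} = \tilde f_{-z_G}$, because setting $\epsilon=-1$ exactly deletes the loss of every $z_r \in G$. Differentiating the stationarity condition at $\epsilon = 0$ and inverting $H_{\hat f}$ yields $\mathrm{d} f_\epsilon/\mathrm{d}\epsilon|_0 = -H^{-1}_{\hat f}\sum_{z_r \in G}\nabla_f L_Y(x_r;\hat g,\hat f)$, and the first-order expansion at $\epsilon=-1$ gives $\tilde f_{-z_G} \approx \hat f + H^{-1}_{\hat f}\sum_{z_r \in G}\nabla_f L_Y(x_r;\hat g,\hat f) = \hat f + A_{unlearn}$. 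The set removal is handled by the usual additivity of first-order influence across the points of $G$.

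For the second claim — the delicate step — I would model the concept transition not by perturbing $g$ directly (it enters nonlinearly through $L_Y(f(g(x_i)),y_i)$) but by interpolating the loss. Define
\[
f_\epsilon = \argmin_f \sum_{z_i \in \mathcal{D}\setminus G}\Big[ L_Y(x_i;\hat g,f) + \epsilon\big(L_Y(x_i;\bar g_{-z_G},f) - L_Y(x_i;\hat g,f)\big)\Big],
\]
so $f_0 = \tilde f_{-z_G}$ (the stage-one minimizer under the old concepts, approximated by $\bar f^*$) and $f_1$ is the target predictor trained with the drifted concepts $\bar g_{-z_G}$. Implicit differentiation of the stationarity condition at $\epsilon=0$ gives
\[
\frac{\mathrm{d} f_\epsilon}{\mathrm{d}\epsilon}\Big|_0 = -H^{-1}_{\bar f^*}\sum_{z_i \in \mathcal{D}\setminus G}\big(\nabla_f L_Y(x_i;\bar g_{-z_G},\bar f^*) - \nabla_f L_Y(x_i;\hat g,\bar f^*)\big),
\]
where the gradient difference arises automatically because the $\epsilon$-term is precisely the difference of the two per-sample losses. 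A first-order expansion at $\epsilon=1$ then yields $\bar f_{-z_G} \approx \bar f^* + B_{unlearn} = \hat f + A_{unlearn} + B_{unlearn}$, which is the final claim; the Hessian may be evaluated at either $\hat g$ or $\bar g_{-z_G}$, the two agreeing to leading order. An equivalent Newton-step view is to start at $\bar f^*$, take one step toward stationarity of the drifted-concept loss, and use $\sum_{z_i}\nabla_f L_Y(x_i;\hat g,\bar f^*)\approx 0$ to rewrite the step's gradient as the same gradient difference.

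The main obstacle is the bookkeeping of this second stage: one must recognize that changing the concept predictor is cleanly captured as a \emph{gradient shift} rather than a data reweighting, and that $\bar f^*$ should serve as the anchor whose approximate stationarity under the old concepts turns the raw Newton gradient into the displayed difference. A secondary point to flag is that $\bar g_{-z_G}$ is itself only the first-order approximation from Theorem~\ref{thm:unlearn:g}, so the composition $A_{unlearn}+B_{unlearn}$ is valid only to first order — the higher-order cross terms coupling the data-removal and concept-drift perturbations are discarded, exactly as in the analogous two-stage label-predictor result for the concept-label setting.
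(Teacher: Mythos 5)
Your proposal is correct and takes essentially the same route as the paper's proof: the identical two-stage decomposition via the intermediate predictor $\tilde f_{-z_G}$ (an $\epsilon$-weighted influence argument with implicit differentiation and a Newton step yielding $A_{unlearn}$), followed by a concept-drift correction anchored at $\bar f^*$ with the gradient difference and the Hessian $H_{\bar f^*}$ evaluated under the old concepts, and the same final substitutions of $\bar g_{-z_G}$ for $\hat g_{-z_G}$ and $\bar f^*$ for $\tilde f_{-z_G}$. The only cosmetic difference is that the paper obtains the correction term by swapping the concept predictor at one data point $i_r$ at a time and then summing over $\mathcal{D}\setminus G$, whereas you interpolate all per-sample losses simultaneously in a single homotopy; the two computations agree at first order and produce the same $B_{unlearn}$.
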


\subsubsection{Data Addition (Incremental Adaptation)}
Different from unlearning, real-world systems must often adapt to continuous data streams. In this \textit{Data Addition} scenario, the CBM is expanded by assimilating a new batch of samples, denoted as $\mathcal{S}_{new} = \{(\tilde{x}_k, \tilde{y}_k, \tilde{c}_k)\}_{k=1}^m$, where $m \ll n$.
Unlike removal, which seeks to revert knowledge, addition seeks to \textit{augment} the model's capability without the prohibitive cost of reprocessing the entire historical dataset. This creates an \textit{Incremental Learning} problem.

The target retrained concept predictor, incorporating both historical and new data, is defined as:
\begin{equation}\label{eq:add:defg}
    \hat{g}_{+\mathcal{S}_{new}} = \argmin_{g} \left[ \sum_{i=1}^n L_C(g(x_i), c_i) + \sum_{k=1}^m L_C(g(\tilde{x}_k), \tilde{c}_k) \right].
\end{equation}
Instead of viewing this as a retraining task, we formulate it as a Newton-step update, where the new data provides a gradient descent direction to correct the existing parameters.

\begin{theorem}\label{thm:add:g}
When augmenting the dataset with $\mathcal{S}_{new}$, the updated concept predictor $\hat{g}_{+\mathcal{S}_{new}}$ can be efficiently approximated by:
\begin{equation}
\hat{g}_{+\mathcal{S}_{new}} \approx \bar{g}_{+\mathcal{S}_{new}} \triangleq \hat{g} - H^{-1}_{\hat{g}} \cdot \sum_{z_k \in \mathcal{S}_{new}} \sum_{j=1}^M \nabla_g L^j_C(\tilde{x}_k, \tilde{c}_k; \hat{g}).
\end{equation}
Here, the subtraction sign indicates that we are moving parameters in the descent direction of the new data's gradients to minimize the augmented loss.
\end{theorem}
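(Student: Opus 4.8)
The plan is to treat $\hat{g}_{+\mathcal{S}_{new}}$ as the endpoint of a one-parameter family of minimizers and to linearize that family at its base point, exactly as in the definition of the influence function given in Section~\ref{sec:preliminary}. Concretely, I would introduce for $\epsilon \in [0,1]$ the perturbed estimator
\begin{equation*}
    \hat{g}_\epsilon \triangleq \argmin_{g} \; \sum_{i=1}^n L_C\big(g(x_i), c_i\big) + \epsilon \sum_{k=1}^m L_C\big(g(\tilde{x}_k), \tilde{c}_k\big),
\end{equation*}
which continuously upweights the new batch $\mathcal{S}_{new}$ from zero to full inclusion. By construction $\hat{g}_0 = \hat{g}$ (the original predictor of \eqref{eq:def_g}) and $\hat{g}_1 = \hat{g}_{+\mathcal{S}_{new}}$ (the target of \eqref{eq:add:defg}). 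A first-order Taylor expansion in $\epsilon$ about $\epsilon = 0$, evaluated at $\epsilon = 1$, then yields $\hat{g}_{+\mathcal{S}_{new}} \approx \hat{g} + \frac{d\hat{g}_\epsilon}{d\epsilon}\big|_{\epsilon=0}$, so the whole problem reduces to computing this single derivative.

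To obtain the derivative I would differentiate the first-order optimality (stationarity) condition for $\hat{g}_\epsilon$ implicitly in $\epsilon$. Writing this condition as
\begin{equation*}
    \sum_{i=1}^n \nabla_g L_C\big(\hat{g}_\epsilon(x_i), c_i\big) + \epsilon \sum_{k=1}^m \nabla_g L_C\big(\hat{g}_\epsilon(\tilde{x}_k), \tilde{c}_k\big) = 0,
\end{equation*}
and differentiating with respect to $\epsilon$, the chain rule produces a Hessian factor multiplying $d\hat{g}_\epsilon/d\epsilon$ plus the bare gradient of the new-data loss. Setting $\epsilon = 0$ is the key simplification: the historical gradient $\sum_{i=1}^n \nabla_g L_C(\hat{g}(x_i), c_i)$ vanishes because $\hat{g}$ already minimizes the historical loss, and the $\epsilon$-weighted Hessian of the new batch drops out, leaving only $H_{\hat{g}} = \sum_{i=1}^n \nabla_g^2 L_C(\hat{g}(x_i), c_i)$ as the curvature term. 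Rearranging gives
\begin{equation*}
    \frac{d\hat{g}_\epsilon}{d\epsilon}\Big|_{\epsilon=0} = - H_{\hat{g}}^{-1} \sum_{k=1}^m \nabla_g L_C\big(\hat{g}(\tilde{x}_k), \tilde{c}_k\big) = - H_{\hat{g}}^{-1} \sum_{z_k \in \mathcal{S}_{new}} \sum_{j} \nabla_g L_C^j(\tilde{x}_k, \tilde{c}_k; \hat{g}),
\end{equation*}
where the last equality uses the per-concept decomposition $L_C = \sum_j L_C^j$ from Section~\ref{sec:preliminary}. Substituting into the Taylor step reproduces $\bar{g}_{+\mathcal{S}_{new}}$ exactly; the subtraction sign arises naturally because the new samples enter with positive weight, which is the mirror image of the removal case in Theorem~\ref{thm:unlearn:g}, where samples are downweighted and the sign flips to $+$.

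The main obstacle is not the algebra but justifying the extrapolation of a derivative computed at $\epsilon = 0$ out to $\epsilon = 1$, together with the invertibility of $H_{\hat{g}}$. The first-order expansion discards an $O(\epsilon^2)$ remainder whose size is governed by the magnitude of the new-batch contribution relative to the historical curvature; this is precisely where the standing assumption $m \ll n$ is used, since it keeps the perturbation small enough that $\hat{g}_1$ stays in the regime where the tangent approximation is accurate. Invertibility of $H_{\hat{g}}$ requires (local) strong convexity of the concept loss at $\hat{g}$; when this fails in the non-convex setting, I would follow the preliminaries and replace $H_{\hat{g}}$ by the damped Fisher surrogate $\hat{H} = G_{\hat{g}} + \delta I$, which restores positive definiteness and leaves the derivation otherwise unchanged.
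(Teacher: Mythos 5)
Your proposal is correct and follows essentially the same route as the paper's proof of this theorem: the paper likewise introduces the $\epsilon$-upweighted minimizer $\hat{g}_{\epsilon}$, differentiates its first-order optimality condition implicitly at $\epsilon=0$ (where the historical gradient vanishes and only $H_{\hat{g}}$ survives), and then performs a one-step Newton extrapolation to $\epsilon=1$ to obtain $\bar{g}_{+\mathcal{S}_{new}}$. Your added remarks on the $O(\epsilon^2)$ remainder under $m \ll n$ and on replacing $H_{\hat{g}}$ by the damped Fisher surrogate are consistent with the paper's theoretical-bound and EK-FAC discussions, and if anything state the extrapolation step more carefully than the paper does.
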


\vspace{0.5em}
\noindent \textbf{Remark (Incremental Learning Perspective).}
It is worth noting that Theorem \ref{thm:add:g} effectively performs a second-order online update. By utilizing the Hessian $H_{\hat{g}}$ computed from the original data (which can be pre-computed and stored), we project the gradients of the new samples $\mathcal{S}_{new}$ onto the curvature of the existing manifold. This approach allows the CBM to "absorb" new knowledge while maintaining stability with respect to previous concepts.

Similarly, the label predictor must adjust to define boundaries for the augmented dataset. The objective is:
\begin{equation}\label{eq:add:f}
\begin{split}
    \hat{f}_{+\mathcal{S}_{new}} = \argmin_{f} \bigg[ & \sum_{i=1}^n L_{Y}(f(\hat{g}_{+\mathcal{S}_{new}}(x_i)), y_i) \\
    & + \sum_{k=1}^m L_{Y}(f(\hat{g}_{+\mathcal{S}_{new}}(\tilde{x}_k)), \tilde{y}_k) \bigg].
\end{split}
\end{equation}
We define the intermediate predictor $\Tilde{f}_{+\mathcal{S}_{new}}$ that assimilates new data under the original concept mapping $\hat{g}$.

\begin{theorem}\label{thm:add:f}
The adaptation of the label predictor involves two distinct shifts.
First, we estimate the impact of the new samples on the decision boundary:
\begin{equation*}
\Tilde{f}_{+\mathcal{S}_{new}} - \hat{f} \approx - H^{-1}_{\hat{f}} \sum_{z_k \in \mathcal{S}_{new}} \nabla_f L_Y(\tilde{x}_k; \hat{g}, \hat{f}) \triangleq A_{add}.
\end{equation*}
Let $\bar{f}^*_{+\mathcal{S}_{new}} = \hat{f} + A_{add}$.
Second, we correct for the shift in concept representations caused by the new data. The correction term $B_{add}$ is derived as:
\begin{equation*}
\begin{split}
    B_{add} = -H^{-1}_{\bar{f}^*} \sum_{z \in \mathcal{D} \cup \mathcal{S}_{new}} \bigg( & \nabla_f L_Y(z; \bar{g}_{+\mathcal{S}_{new}}, \bar{f}^*) \\
    & - \nabla_f L_Y(z; \hat{g}, \bar{f}^*) \bigg),
\end{split}
\end{equation*}
where $H_{\bar{f}^*}$ is the Hessian with respect to the intermediate predictor. The final updated label predictor is $\bar{f}_{+\mathcal{S}_{new}} = \hat{f} + A_{add} + B_{add}$.
\end{theorem}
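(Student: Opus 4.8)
The plan is to establish the two increments $A_{add}$ and $B_{add}$ separately and then argue that they combine additively, exactly mirroring the two-stage decomposition used for the removal case in Theorem~\ref{thm:unlearn:f}; the only structural differences are the direction of the perturbation and the sign conventions.

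\emph{First increment ($A_{add}$).} I would freeze the concept predictor at $\hat{g}$ and regard the intermediate target $\Tilde{f}_{+\mathcal{S}_{new}}$ as the minimizer of the base label loss $\sum_{i=1}^n L_Y(f(\hat{g}(x_i)),y_i)$ augmented by the new batch. Introduce the $\epsilon$-weighted family $f_\epsilon = \argmin_f \sum_{i=1}^n L_Y(f(\hat{g}(x_i)),y_i) + \epsilon \sum_{k=1}^m L_Y(f(\hat{g}(\tilde{x}_k)),\tilde{y}_k)$, so that $f_0 = \hat{f}$ and $f_1 = \Tilde{f}_{+\mathcal{S}_{new}}$. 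Differentiating the stationarity condition $\nabla_f[\,\cdot\,]=0$ with respect to $\epsilon$ and evaluating at $\epsilon=0$ gives $\left.\tfrac{d f_\epsilon}{d\epsilon}\right|_{0} = -H_{\hat{f}}^{-1}\sum_{k=1}^m \nabla_f L_Y(\tilde{x}_k;\hat{g},\hat{f})$, where $H_{\hat{f}}$ is the Hessian of the base loss over $\mathcal{D}$. A first-order Taylor expansion evaluated at $\epsilon=1$ then yields $\Tilde{f}_{+\mathcal{S}_{new}} - \hat{f}\approx A_{add}$, the claimed expression. This is the standard influence-function step applied to data insertion rather than deletion.

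\emph{Second increment ($B_{add}$).} Next I would account for the concept drift $\hat{g}\to\bar{g}_{+\mathcal{S}_{new}}$ supplied by Theorem~\ref{thm:add:g}. Writing $\bar{f}^*:=\bar{f}^*_{+\mathcal{S}_{new}}=\hat{f}+A_{add}$ and $\Delta f := \hat{f}_{+\mathcal{S}_{new}} - \bar{f}^*$, I would Taylor-expand the true stationarity condition of the target \eqref{eq:add:f}, namely $\sum_{z\in\mathcal{D}\cup\mathcal{S}_{new}}\nabla_f L_Y(z;\bar{g}_{+\mathcal{S}_{new}},\hat{f}_{+\mathcal{S}_{new}})=0$, to first order in $\Delta f$ around $\bar{f}^*$. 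This gives $\Delta f \approx -H_{\bar{f}^*}^{-1}\sum_{z}\nabla_f L_Y(z;\bar{g}_{+\mathcal{S}_{new}},\bar{f}^*)$. Since $\bar{f}^*$ approximately solves the intermediate problem under the \emph{old} concept map, the residual $\sum_{z}\nabla_f L_Y(z;\hat{g},\bar{f}^*)$ is negligible; subtracting this (approximately zero) quantity inside the sum converts the expression into the gradient-difference form of $B_{add}$. Adding $\Delta f = B_{add}$ to $\bar{f}^* = \hat{f}+A_{add}$ produces the stated estimator $\bar{f}_{+\mathcal{S}_{new}} = \hat{f}+A_{add}+B_{add}$.

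The main obstacle is controlling the composition of the two first-order approximations. The second stage linearizes around $\bar{f}^*$, which is itself only an influence-function approximation of $\Tilde{f}_{+\mathcal{S}_{new}}$, so the residual $\sum_{z}\nabla_f L_Y(z;\hat{g},\bar{f}^*)$ is not exactly zero but of the order of the squared perturbation. Justifying that it may be dropped, and that evaluating the Hessian at $\bar{f}^*$ rather than at the true optimum does not affect the leading-order term, requires the twice-differentiability and positive-definiteness (damped-Hessian) assumptions from the Preliminaries, together with the smallness of both the new batch ($m\ll n$) and the induced concept drift. Under these conditions the cross-terms are higher order, the two corrections act on disjoint sources of change (added samples versus shifted inputs), and they therefore superpose linearly, exactly as the theorem asserts.
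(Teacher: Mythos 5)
Your proposal is correct, and its first stage coincides with the paper's own proof essentially verbatim: the paper likewise introduces the $\epsilon$-weighted family interpolating between $\hat{f}$ and the intermediate optimum $\Tilde{f}_{+\mathcal{S}_{new}}$, differentiates the stationarity condition at $\epsilon=0$, and reads off $A_{add}$. The second stage, however, is mechanically different. The paper never linearizes the full first-order condition of \eqref{eq:add:f} at $\bar{f}^*$; instead, for each new sample $\tilde{x}_{s_t}$ it builds an intermediary objective in which only that single sample's loss is swapped from $\hat{g}$ to the updated concept map, up-weights the swap by $\epsilon$, Taylor-expands at the \emph{true} intermediate optimum $\Tilde{f}_{+\mathcal{S}_{new}}$ (where the base gradient vanishes exactly, so no residual ever has to be discarded), iterates over the new samples, sums, and only afterwards substitutes $\bar{f}^*$ for $\Tilde{f}_{+\mathcal{S}_{new}}$ and $H_{\bar{f}^*}$ for $H_{\Tilde{f}}$ as a computability surrogate. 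Your route—a single Newton step on the exact stationarity condition at $\bar{f}^*$, followed by subtracting the approximately-zero residual $\sum_{z}\nabla_f L_Y(z;\hat{g},\bar{f}^*)$—arrives at the same gradient-difference expression, but it trades the paper's exact cancellation for an explicit second-order residual argument, which you correctly identify as the delicate point; under the damping and $m\ll n$ assumptions this is a legitimate leading-order justification. Your version also has a concrete advantage worth noting: because you expand the full objective, your correction naturally ranges over all $z\in\mathcal{D}\cup\mathcal{S}_{new}$, exactly as the theorem statement requires, whereas the paper's appendix derivation swaps the concept predictor only inside the new-sample losses and thus produces a $B$ summing over $\mathcal{S}_{new}$ alone—leaving the concept drift on the historical samples $x_1,\dots,x_n$ unaccounted for in the written proof even though the target \eqref{eq:add:f} evaluates \emph{all} samples under $\hat{g}_{+\mathcal{S}_{new}}$. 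In that respect your one-shot linearization is not merely equivalent but slightly cleaner and more faithful to the stated result.
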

% \begin{equation}\label{method:approx:hat_f}
% \begin{split}
%  \hat{f}_{-z_G} - \Tilde{f}_{-z_G}\approx & -H^{-1}_{\bar{f}^*_{-z_G}} \cdot \left(\nabla_{\hat{f}} \sum_{i\in [n]-G}L_{Y_i} \left(\bar{f}^*_{-z_G}, \bar{g}_{-z_G}\right) -\nabla_{\hat{f}} \sum_{i\in [n]-G} L_{Y_i} \left( \bar{f}^*_{-z_G}, \hat{g}\right)\right)
%  \triangleq B_G, 
% \end{split}
% \end{equation}
%\approx & -H^{-1}_{\Tilde{f}_{-z_G}} \cdot \nabla_{\hat{f}} \sum_{i\in [n]-G}L_{Y_i} \left(\Tilde{f}_{-z_G}, \hat{g}_{-z_G}\right) + H^{-1}_{\Tilde{f}_{-z_G}} \cdot\nabla_{\hat{f}} \sum_{i\in [n]-G} L_{Y_i} \left(  \Tilde{f}_{-z_G}, \hat{g}\right)\\
%where Hˉf∗−zG=∇ˉf∑i∈[n]−GLYi(ˉf∗−zG,ˆg)H_{\bar{f}^*_{-z_G}} = \nabla_{\bar{f}}\sum_{i\in[n]-G}L_{Y_i}\left(  \bar{f}^*_{-z_G}, \hat{g}\right) is the Hessian matrix of the loss function on the intermediate dataset concerning \Tildef−zG\Tilde{f}_{-z_G}. 
%Observing that the computational complexity of this matrix is quite high. And we will estimate it using an optimization method in the theorem below. 
%Besides, we denote this approximation as BGB_G. 

\subsubsection{Acceleration via EK-FAC.} As mentioned in Section \ref{sec:preliminary}, the loss function in CBMs is non-convex, meaning the Hessian matrices in all our theorems may not be well-defined. To address this, we adopt the EK-FAC approach, where the Hessian is approximated as \(\hat{H}_{\theta} = G_{\theta} + \delta I\). Here, \(G_{\theta}\) represents the Fisher information matrix of the model \(\theta\), and \(\delta\) is a small damping term introduced to ensure positive definiteness. For details on applying EK-FAC to CBMs, see Appendix \ref{sec:FAC_CBM}. Additionally, refer to Algorithms \ref{alg:4}-\ref{alg:6} in the Appendix for the EK-FAC-based algorithms corresponding to our three levels (where the Data Level includes additions and removals), with their original (Hessian-based) versions provided in Algorithms \ref{alg:1}-\ref{alg:3}, respectively.

\noindent{\bf Theoretical Bounds.} We provide error bounds for the concept predictor between retraining and CCBM across all three levels; see Appendix \ref{app:bound_cc}, \ref{app:bound_c} and \ref{app:bound_d_the} for details.  We show that under certain scenarios, the approximation error becomes tolerable theoretically when leveraging some damping term $\delta$ regularized in the Hessian matrix.  %However, the label predictor's input depends on the concept predictor, introducing challenges such as input dependency and error propagation. Given its low retraining cost and the ease of experimental verification, we focus our theoretical analysis on the concept predictor.

\section{Experiments}
\label{sec:exp}

In this section, we present a comprehensive evaluation of the proposed Controllable Concept Bottleneck Models (CCBMs). We assess the framework from three primary dimensions: utility preservation, editing efficiency, and interpretability. Due to space constraints, additional experimental details and supplementary results are provided in Appendix \ref{sec:appendix:exp}.

\subsection{Experimental Settings}

\noindent\textbf{Datasets.} We evaluate our method on four diverse benchmarks: \textit{X-ray Grading (OAI)}~\cite{nevitt2006osteoarthritis}, \textit{Bird Identification (CUB)}~\cite{wah2011caltech}, \textit{Large-scale CelebFaces Attributes (CelebA)}~\cite{liu2015deep}, and \textit{Derm7pt}~\cite{combalia2019dermoscopy}.

\begin{itemize}
    \item \textbf{OAI:} A multi-center observational study of knee osteoarthritis comprising 36,369 data points. We configure $n=10$ concepts characterizing crucial osteoarthritis indicators (e.g., joint space narrowing, osteophytes). Following the protocol in~\cite{koh2020concept}, we preprocess the data to align with standard CBM setups.
    
    \item \textbf{CUB:} This dataset consists of 11,788 bird images across 200 classes, annotated with 112 binary attributes (concepts). We follow the processing setting in~\cite{koh2020concept}. Specifically, we aggregate instance-level concept annotations into class-level concepts via majority voting: for instance, if more than 50\% of samples in a class (e.g., crows) exhibit a feature (e.g., black wings), that concept is assigned to the entire class.
    
    \item \textbf{CelebA:} A large-scale face attributes dataset with 202,599 images. Each image is annotated with 40 binary attributes. Following~\cite{NEURIPS2022_867c0682}, we designate 8 attributes as classification targets (labels) and the remaining 32 attributes as the bottleneck concepts.
    
    \item \textbf{Derm7pt:} A dermoscopic benchmark designed for melanoma diagnosis, containing 1,011 images.  It is annotated with 7 clinically defined dermoscopic criteria (e.g., pigment network, streaks, blue-whitish veil) serving as concepts, and a binary diagnostic label (melanoma vs. non-melanoma). Following~\cite{wang2024concept}, We utilize the official data split and standard normalization procedures.
\end{itemize}

\noindent\textbf{Evaluation Metrics.} We utilize two primary metrics:
\begin{enumerate}
    \item \textit{F1 Score:} Measures the model utility by balancing precision and recall.
    \item \textit{Runtime (RT):} Measured in minutes, evaluating the computational time required to complete the model update.
\end{enumerate}

\noindent\textbf{Implementation Details.} Experiments were conducted on an Intel Xeon CPU and an NVIDIA RTX 3090 GPU.   
For \textit{concept-label-level} editing, we randomly selected 3\% of data points and flipped one concept label for each, repeated 10 times.
For \textit{concept-level} editing, we randomly removed one concept for OAI and ten concepts for CUB, repeating the process with five different seeds.
For \textit{data-level} editing, we evaluate both unlearning and addition scenarios:
(1) For \textit{removal}, 3\% of the training data was randomly deleted from the full dataset, repeated 10 times.
(2) For \textit{addition}, we initially held out a random 10\% subset of the training data during the first training phase and subsequently added these samples back to update the model.
To test resilience in \textit{multiple sample editing} (Figure \ref{fig:hyperpara}), we removed concepts (from 2 to 20) and deleted data points (ratios from 1\% to 10\%) sequentially.

\begin{table*}[ht]
    \centering
    \caption{Performance comparison of different methods on the three datasets.}
    \vspace{-6pt}
    \label{tab:results_app}
    \resizebox{0.95\linewidth}{!}{
        \begin{tabular}{llcccccccc}
        \toprule
        \multirow{2}{*}{\textbf{Edit Level}} & \multirow{2}{*}{\textbf{Method}} & \multicolumn{2}{c}{\textbf{OAI}} & \multicolumn{2}{c}{\textbf{CUB}} & \multicolumn{2}{c}{\textbf{CelebA}} & \multicolumn{2}{c}{\textbf{Derm7pt}}\\
        \cmidrule(r){3-4} \cmidrule(r){5-6} \cmidrule(r){7-8} \cmidrule(r){9-10}
        & & \textbf{F1 score} & \textbf{RT (minute)} & \textbf{F1 score} & \textbf{RT (minute)} & \textbf{F1 score} & \textbf{RT (minute)} & \textbf{F1 score} & \textbf{RT (minute)} \\
        \midrule
        \multirow{3}{*}{Concept Label} & Retrain & 0.8825$\pm$0.0054 & 297.77 & 0.7971$\pm$0.0066 & 85.56 & 0.3827$\pm$0.0272	& 304.71 & 0.7826$\pm$0.0059 & 33.79 \\
        & CBM-IF(Ours) & 0.8650$\pm$0.0030 & 4.58 & 0.7710$\pm$0.0033 & 1.28 & 0.3597$\pm$0.0128 & 5.50 & 0.7689$\pm$0.0060 & 0.44 \\
        & CCBM(Ours) & \textbf{0.8809$\pm$0.0033} & \textbf{2.28} & \textbf{0.7918$\pm$0.0042} & \textbf{0.63} & \textbf{0.3831$\pm$0.0326} & \textbf{2.40} & 0.7813$\pm$0.0054 & 0.23 \\
        \midrule
        \multirow{3}{*}{Concept} & Retrain & 0.8448$\pm$0.0191 & 258.84 & 0.7811$\pm$0.0047 & 87.21 & 0.3776$\pm$0.0350 & 355.85& 0.7697$\pm$0.0065 & 36.11 \\
        & CBM-IF(Ours) & 0.8248$\pm$0.0065 & 4.87 & 0.7580$\pm$0.0059 & 1.44 & 0.3642$\pm$0.0195 &5.44 & 0.7421$\pm$0.0064 & 0.38 \\
        & CCBM(Ours) & \textbf{0.8411$\pm$0.0081} & \textbf{2.29} & \textbf{0.7794$\pm$0.0056} & \textbf{0.51} & \textbf{0.3768$\pm$0.0278} & \textbf{2.40} & 0.7609$\pm$0.0068 & 0.30 \\
        \midrule
        \multirow{3}{*}{Data Removal} & Retrain & 0.8811$\pm$0.0065 & 319.37 & 0.7838$\pm$0.0051 & 86.20 & 0.3797$\pm$0.0375 &  325.62 & 0.7753$\pm$0.0053 &  35.47 \\
        & CBM-IF(Ours) & 0.8477$\pm$0.0039 & 4.99 & 0.7625$\pm$0.0022 & 1.41 & 0.3546$\pm$0.0163 & 5.89 & 0.7648$\pm$0.0035 & 0.56 \\
        & CCBM(Ours) & \textbf{0.8799$\pm$0.0038} & \textbf{2.42} & \textbf{0.7851$\pm$0.0079} & \textbf{0.60} & \textbf{0.3754$\pm$0.0337} & \textbf{2.41} & 0.7728$\pm$0.0072 & 0.37 \\
        \midrule
        \multirow{3}{*}{Data Addition} & Retrain & 0.8801$\pm$0.0058 & 323.83 & 0.7987$\pm$0.0071 & 87.93 & 0.3830$\pm$0.0285 &  331.42 & 0.7850$\pm$0.0068 &  36.39 \\
        & CBM-IF(Ours) & 0.8479$\pm$0.0039 & 5.03 & 0.7635$\pm$0.0022 & 1.44 & 0.3558$\pm$0.0163 & 5.95 & 0.7581$\pm$0.0036 & 0.60 \\
        & CCBM(Ours) & \textbf{0.8806$\pm$0.0030} & \textbf{2.51} & \textbf{0.7967$\pm$0.0080} & \textbf{0.63} & \textbf{0.3817$\pm$0.0340} & \textbf{2.47} & 0.7831$\pm$0.0078 & 0.43 \\
        \bottomrule
    \end{tabular}}
    \vspace{-12pt}
\end{table*}

\subsection{Evaluation of Utility and Editing Efficiency}

\noindent\textbf{Comparison with Retraining.}
Our main results, summarized in Table \ref{tab:results_app}, highlight the superiority of CCBMs over traditional retraining and the baseline CBM-IF method. The most significant advantage lies in computational efficiency. For instance, on the OAI dataset, CCBMs reduce the update time from 297.77 minutes (Retrain) to just 2.36 minutes—a speedup of over $100\times$—while maintaining an F1 score (0.8808) highly comparable to retraining (0.8825). This trend is consistent across the CUB dataset, where runtime drops from 85.56 minutes to 0.65 minutes with negligible performance loss (0.7971 vs. 0.7963).
Compared to the standard influence function baseline (CBM-IF), CCBMs (empowered by EK-FAC) achieve not only faster updates but also consistently higher F1 scores. This improvement suggests that the EK-FAC approximation effectively stabilizes the Hessian inversion, which is often ill-conditioned in non-convex settings, thereby providing a more accurate update direction.

\begin{figure*}[ht]
\centering
\begin{tabular}{cccc}
\includegraphics[width=0.22\linewidth]{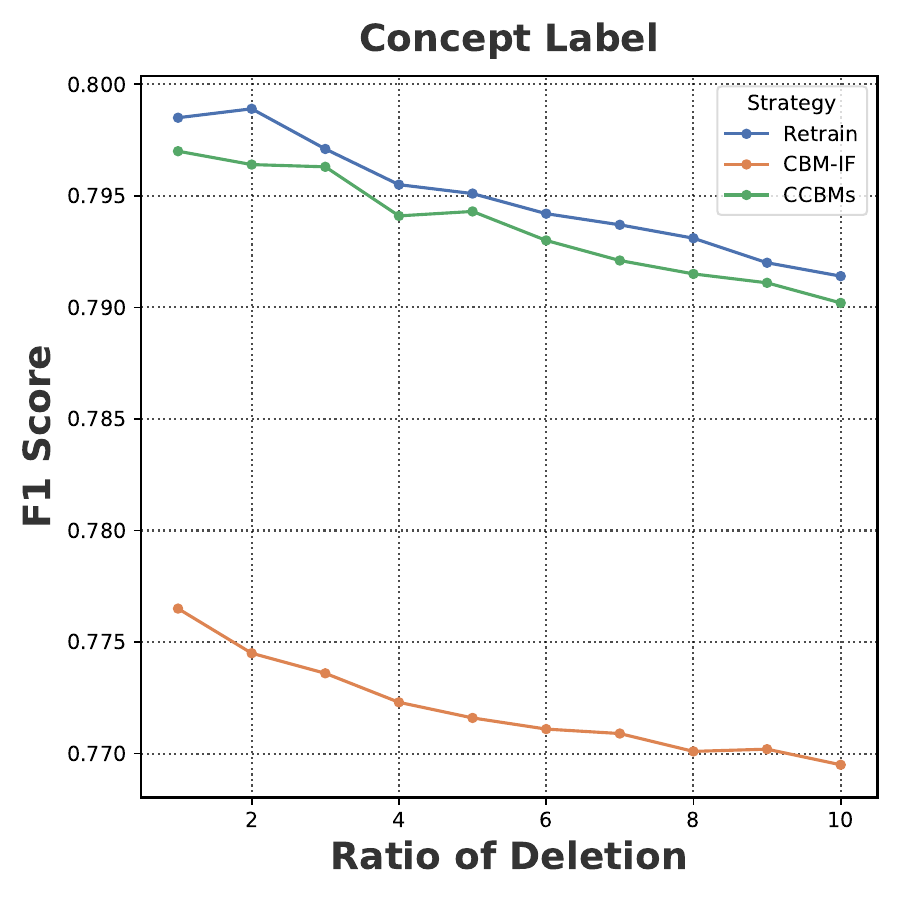} & 
\includegraphics[width=0.22\linewidth]{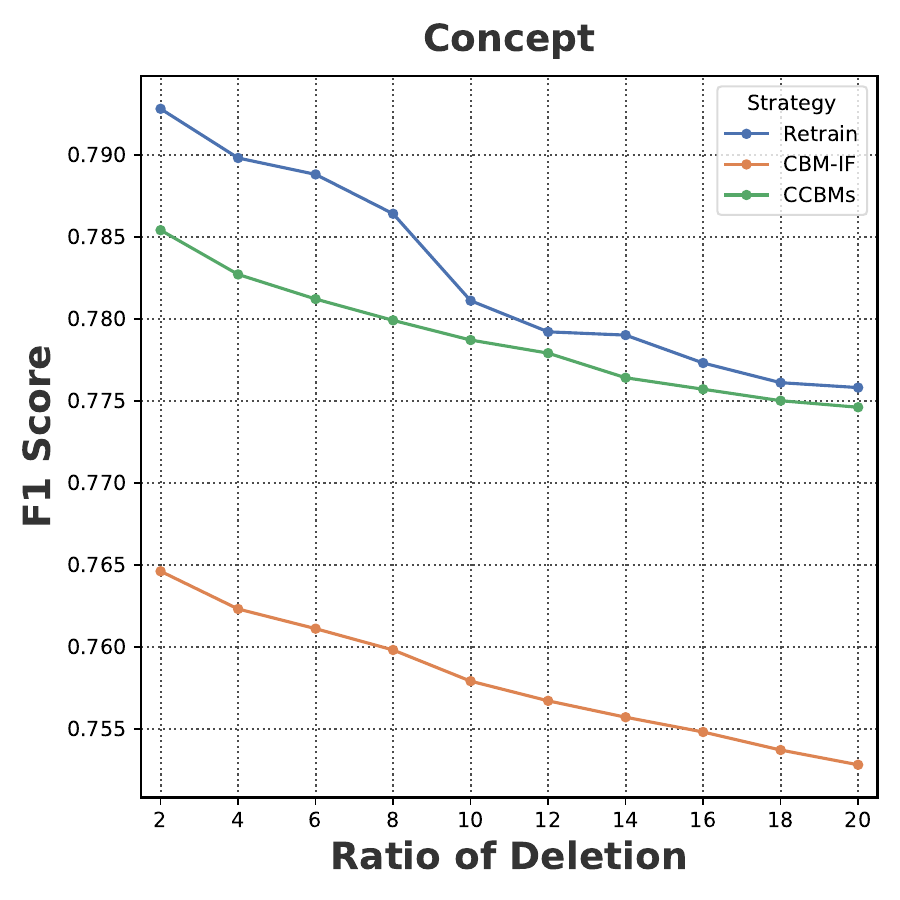} & 
\includegraphics[width=0.22\linewidth]{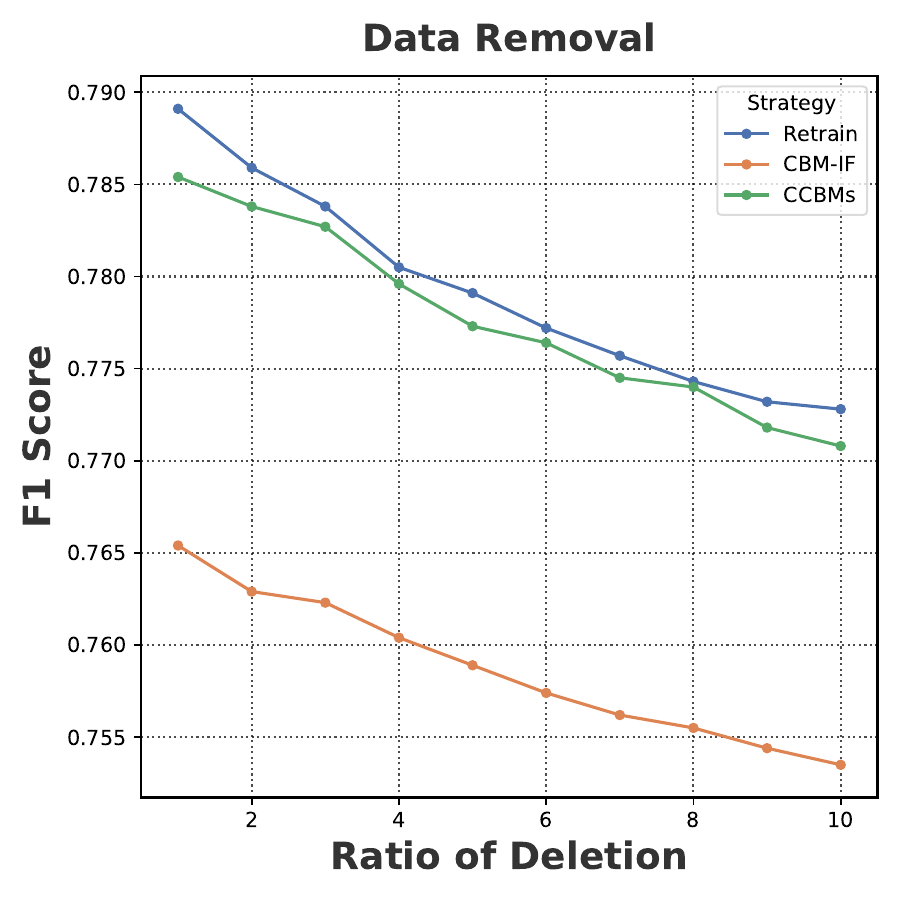} &
\includegraphics[width=0.22\linewidth]{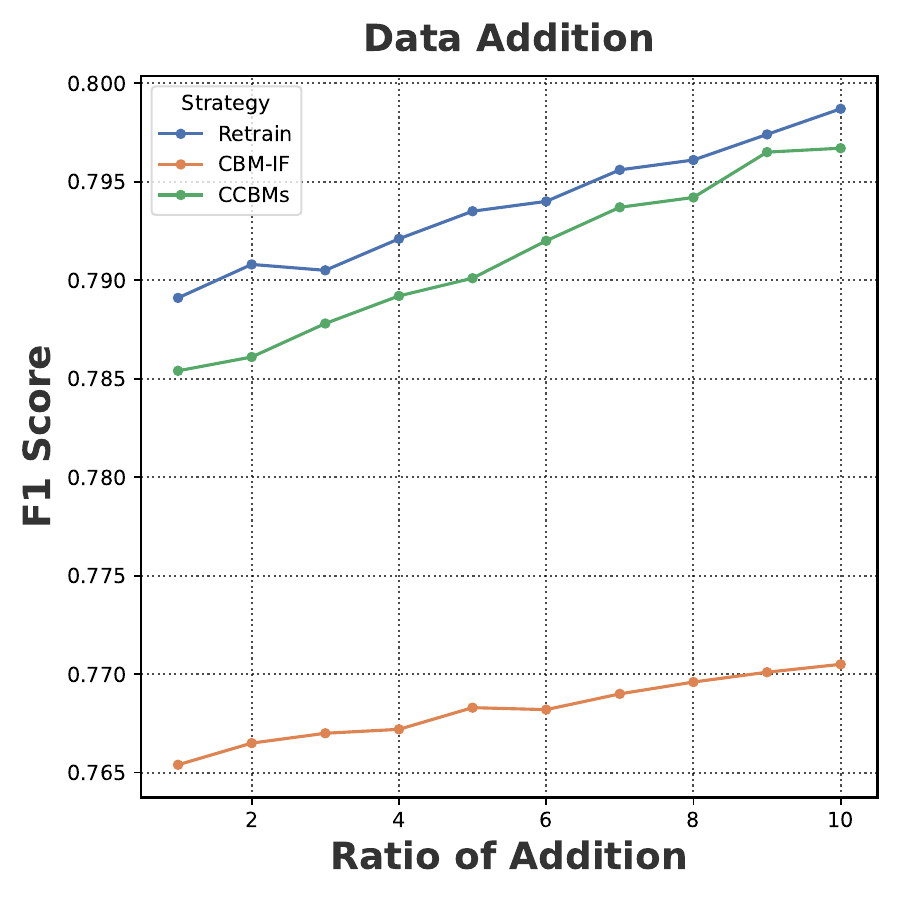}
\end{tabular}
\caption{Impact of edition ratio on three settings on CUB dataset. \label{fig:hyperpara}}
\end{figure*}

\noindent\textbf{Editing Multiple Samples.}
To assess the robustness of CCBMs under extensive modifications, we conducted stress tests by editing varying ratios of samples and concepts. As illustrated in Figure \ref{fig:hyperpara}, CCBMs demonstrate remarkable stability. 
In the Data-level and Concept-label-level settings, the performance degradation of CCBM relative to Retraining is minimal (gap $< 0.0025$) even when editing up to 10\% of the data.
At the Concept-level, while there is a slight divergence as the number of removed concepts increases, CCBM consistently outperforms CBM-IF. 
Considering the substantial time savings (more than $3\times$ faster than retraining in these batched scenarios), CCBM represents an optimal trade-off between editing speed and model utility.

\subsection{Results on Interpretability}

\noindent\textbf{Quantifying Concept Importance via Influence Functions.}
A core advantage of CBMs is their semantic transparency. CCBMs extend this by providing a rigorous mathematical framework to quantify the *contribution* of each concept to the final prediction. We validated this capability through a "Concept Ablation" experiment on the CUB dataset. We utilized the proposed influence functions to rank all concepts by their impact and performed two contrasting tests: removing the top-10 \textit{most influential} concepts versus the bottom-10 \textit{least influential} concepts.

Figure \ref{fig:positive} and Figure \ref{fig:negative} (in Fig \ref{fig:2}) illustrate the F1 score trajectories. We observe three critical findings:
\begin{enumerate}
    \item \textbf{discriminative Power:} Removing the \textit{most influential} concepts (Figure \ref{fig:positive}) causes a sharp, monotonic degradation in model performance. The F1 score drops by over 0.025 (from $\approx 0.780$ to $<0.755$) after removing just 10 concepts. This confirms that the influence function correctly identifies the "load-bearing" semantic features that drive the model's decision-making.
    \item \textbf{Noise Identification:} Conversely, removing the \textit{least influential} concepts (Figure \ref{fig:negative}) results in a negligible performance shift (change $< 0.005$). The flat trajectory indicates that CCBM effectively isolates redundant or non-informative concepts, preventing the model from over-reliance on spurious features.
    \item \textbf{Alignment with Ground Truth:} Most importantly, the performance degradation curve of CCBM (dark blue bars) closely mirrors that of Retraining (light blue bars). The divergence between the two methods is consistently below $0.005$. This high fidelity demonstrates that the influence-based approximation accurately captures the true causal effect of concepts on the loss landscape, validating CCBM as a reliable tool for explaining model behavior.
\end{enumerate}

\begin{figure*}[!t]
\centering
\subfloat[Results on the 1-10 most influential concepts]{\includegraphics[width=2.5in]{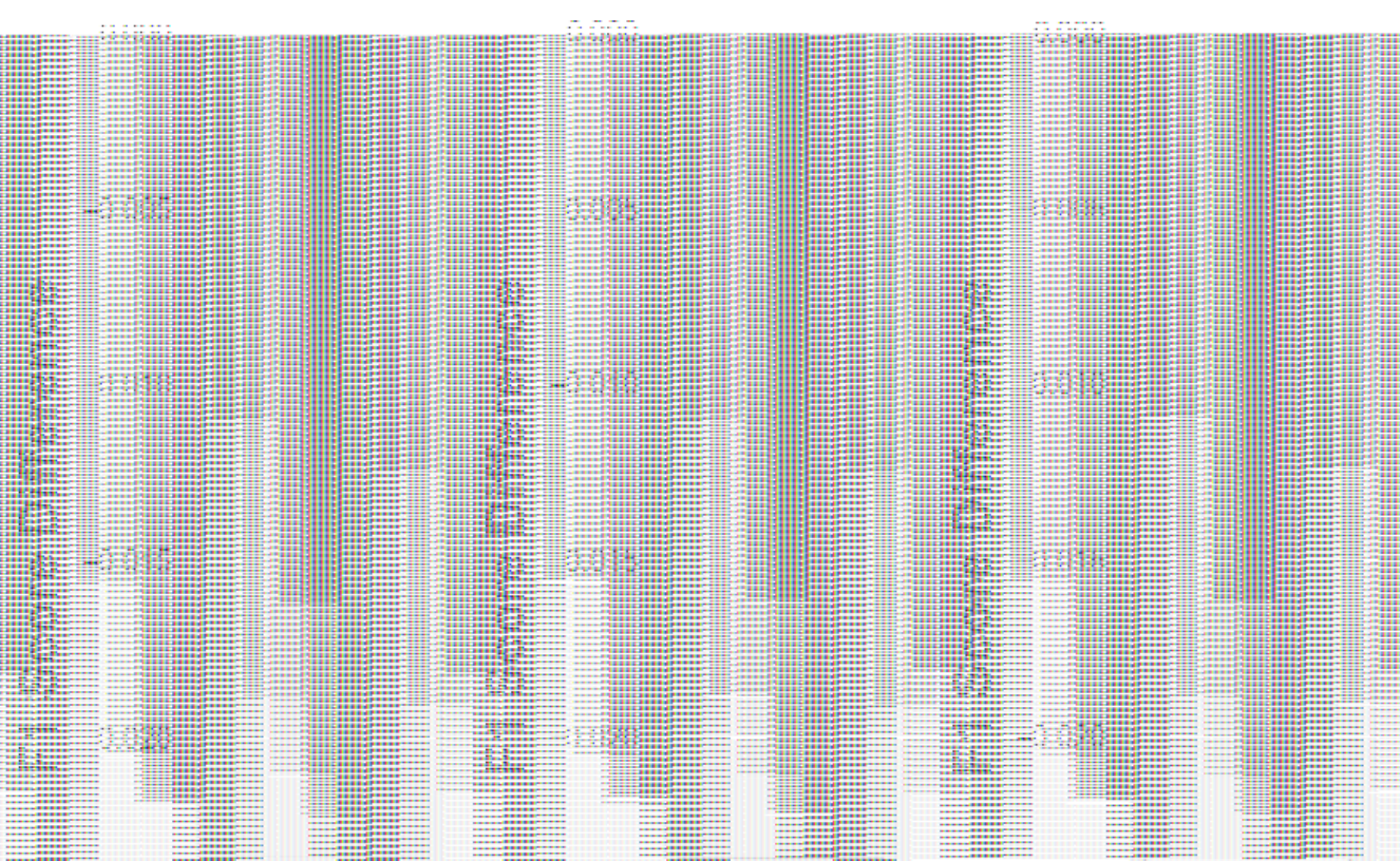}%
\label{fig:positive}}
\hfil
\subfloat[Results on the 1-10 least influential concepts]{\includegraphics[width=2.5in]{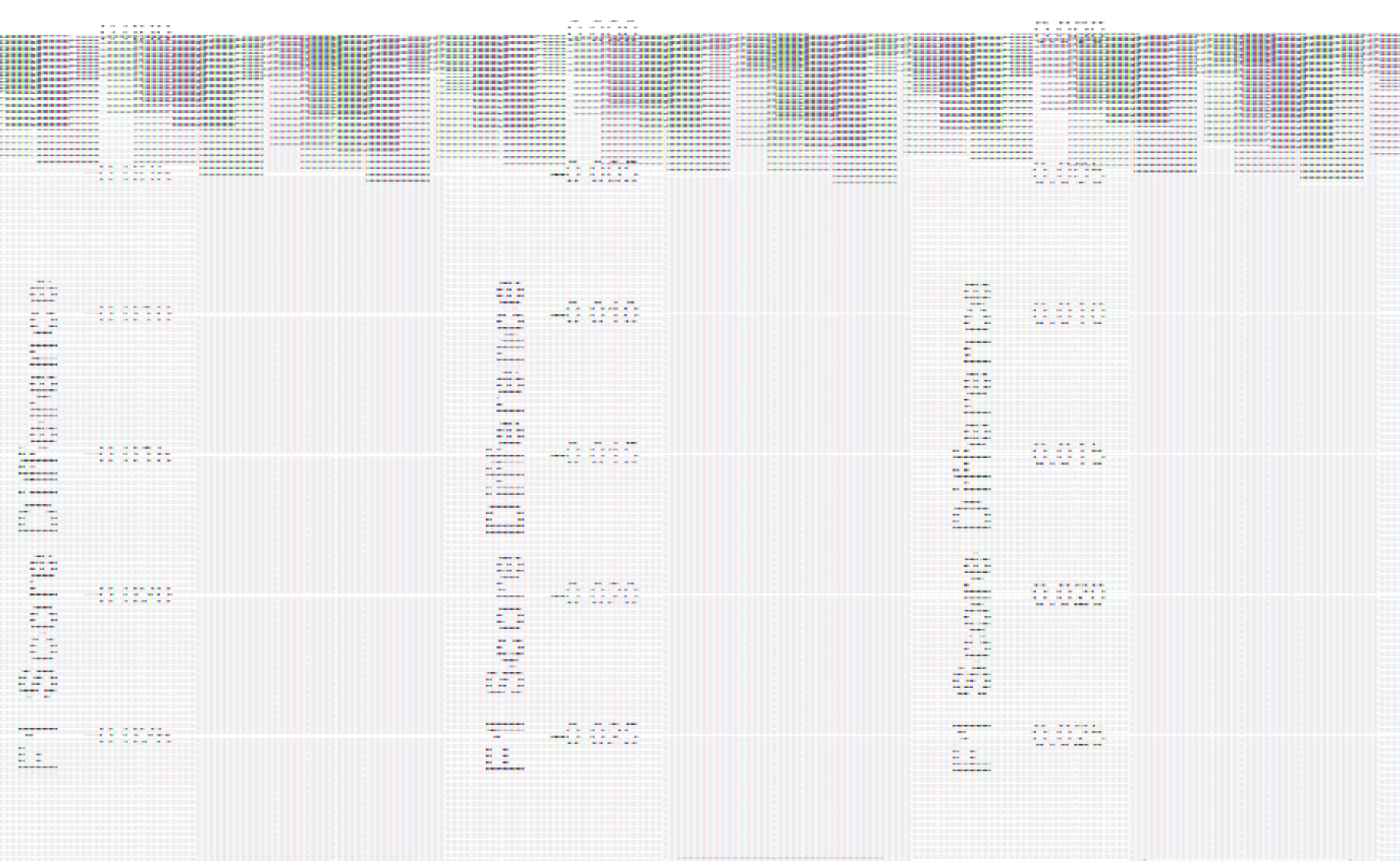}%
\label{fig:negative}}
\caption{F1 score difference after removing most and least influential concepts given by CCBM.}
\label{fig_sim}
\end{figure*}
\begin{figure*}[!t]
\centering
\subfloat[RMIA Score Before Editing]{\includegraphics[width=2.5in]{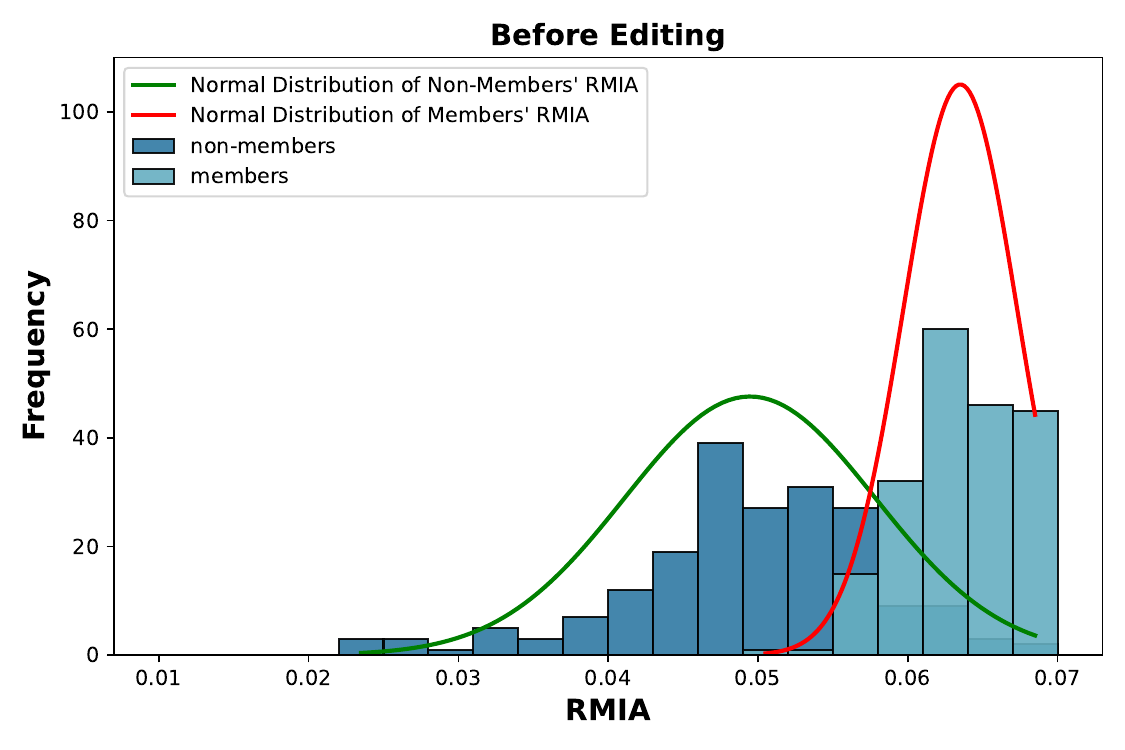}%
\label{fig:rmia-1}}
\hfil
\subfloat[RMIA Score After Editing]{\includegraphics[width=2.5in]{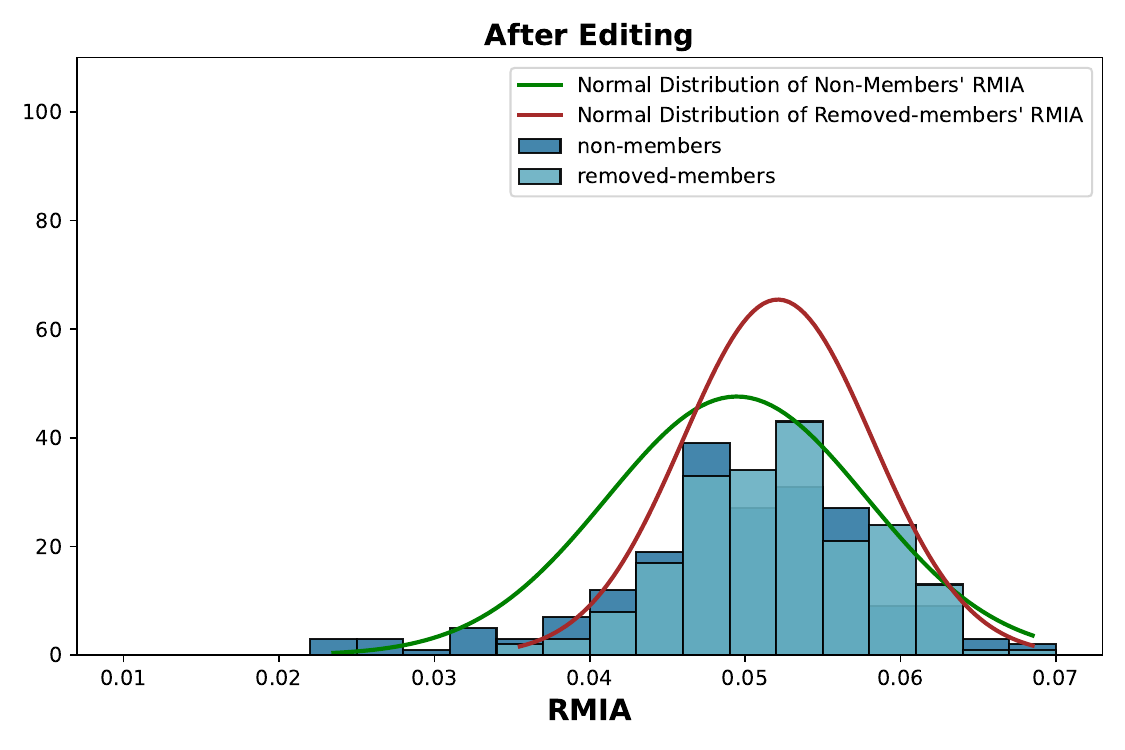}%
\label{fig:rmia-2}}
\caption{RMIA scores of data before and after removal.}
\label{fig_sim}
\end{figure*}

\noindent\textbf{CCBMs can erase data influence.}
To verify the effectiveness of data-level controllability (specifically unlearning), we employed Membership Inference Attacks (MIA). In a privacy-compliant unlearning scenario, a removed training sample should become indistinguishable from a sample the model has never seen (a non-member). 

To rigorously quantify this, we utilize the \textbf{RMIA (Removed Membership Inference Attack) Score}. The RMIA score approximates the likelihood ratio of a sample belonging to the training set versus the non-member distribution, defined as:
\begin{equation}
\small
\begin{aligned}
LR_\theta(x, z) \approx \frac{\text{Pr}(f_\theta(x)|\mathcal{N}(\mu_{x,\bar{z}}(x), \sigma^2_{x,\bar{z}}(x)))}{\text{Pr}(f_\theta(x)|\mathcal{N}(\mu_{\bar{x},z}(x), \sigma^2_{\bar{x},z}(x)))} \\
\times \frac{\text{Pr}(f_\theta(z)|\mathcal{N}(\mu_{x,\bar{z}}(z), \sigma^2_{x,\bar{z}}(z)))}{\text{Pr}(f_\theta(z)|\mathcal{N}(\mu_{\bar{x},z}(z), \sigma^2_{\bar{x},z}(z)))},
\end{aligned}
\end{equation}
where $f_\theta(x)$ denotes the model logits, and $\mathcal{N}(\cdot)$ represents the Gaussian distribution fitted to member/non-member hypotheses~\cite{zarifzadeh2024low}. Higher RMIA scores indicate a high probability of being a training member.

We computed the RMIA scores for 200 training members and 200 non-members. The frequency distributions are plotted in Figure \ref{fig:rmia-1} (Before Editing) and Figure \ref{fig:rmia-2} (After Editing).
\begin{itemize}
    \item \textbf{Before Editing (Figure \ref{fig:rmia-1}):} There is a distinct separation between the distributions. The members (blue bars, red fitted curve) exhibit significantly higher RMIA scores (Mean $\mu=0.0635$) compared to non-members (green curve, $\mu=0.0495$). This separation represents the "privacy leak" or the data footprint stored in the model.
    \item \textbf{After Editing (Figure \ref{fig:rmia-2}):} After applying CCBM to unlearn the member samples, the distribution of the removed members shifts markedly to the left. The new mean RMIA score decreases to $\mu=0.0521$, converging towards the non-member baseline. Visually, the red curve (removed members) now significantly overlaps with the green curve (non-members).
\end{itemize}
This statistical shift confirms that CCBM successfully "sanitizes" the model. The removed samples no longer trigger high-confidence membership signals, effectively mitigating privacy risks and satisfying the unlearning requirement.
%\warn{mu {members}:0.063505, mu {nonmembers}:0.049465, mu {removedmembers}:0.052105}

% \noindent{\bf Visualization.} Since CBM is an explainable model, we aim to evaluate the interpretability of our CCBM (compared to the retraining). We will present some visualization results for the concept-level edit. Figure~???\ref{fig:vertical_images} presents the top 10 most influential concepts and their corresponding predicted concept labels obtained by our CCBM and the retrain method after randomly deleting concepts for the CUB dataset. (Detailed explanation can be found in Appendix ???\ref{app:explain}.)
% Our CCBM can provide explanations for which concepts are crucial and how they assist the prediction.
% Specifically, among the top 10 most important concepts in the ground truth (retraining), CCBM can accurately recognize 9 within them. 
% For instance, we correctly identify "has\_upperparts\_color::orange", "has\_upper\_tail\_color::red", and "has\_breast\_color::black" as some of the most important concepts when predicting categories. Additional visualization results under data level and concept-label level on OAI and CUB datasets are included in Appendix \ref{app:visual}.
% \input{figs/fig4}

\subsection{Improvement via Harmful Data Removal}

To rigorously evaluate the capability of CCBMs in restoring model performance degraded by data corruption, we conducted additional experiments on the CUB dataset using a "Noise Injection and Removal" protocol. Specifically, we synthetically introduced noise into the training set at three distinct granularities:
\begin{itemize}
    \item \textbf{Concept Level:} We randomly selected 10\% of the concept attributes and flipped their binary values for a subset of the training data.
    \item \textbf{Data Level:} We randomly selected 10\% of the training samples and corrupted their ground-truth classification labels ($y$).
    \item \textbf{Concept-Label Level:} We introduced fine-grained noise by randomly flipping 10\% of the individual concept annotations ($c_{ij}$) across the entire dataset.
\end{itemize}

\begin{figure}[h]
    \centering
    \begin{tabular}{c}
        \includegraphics[width=0.95\linewidth]{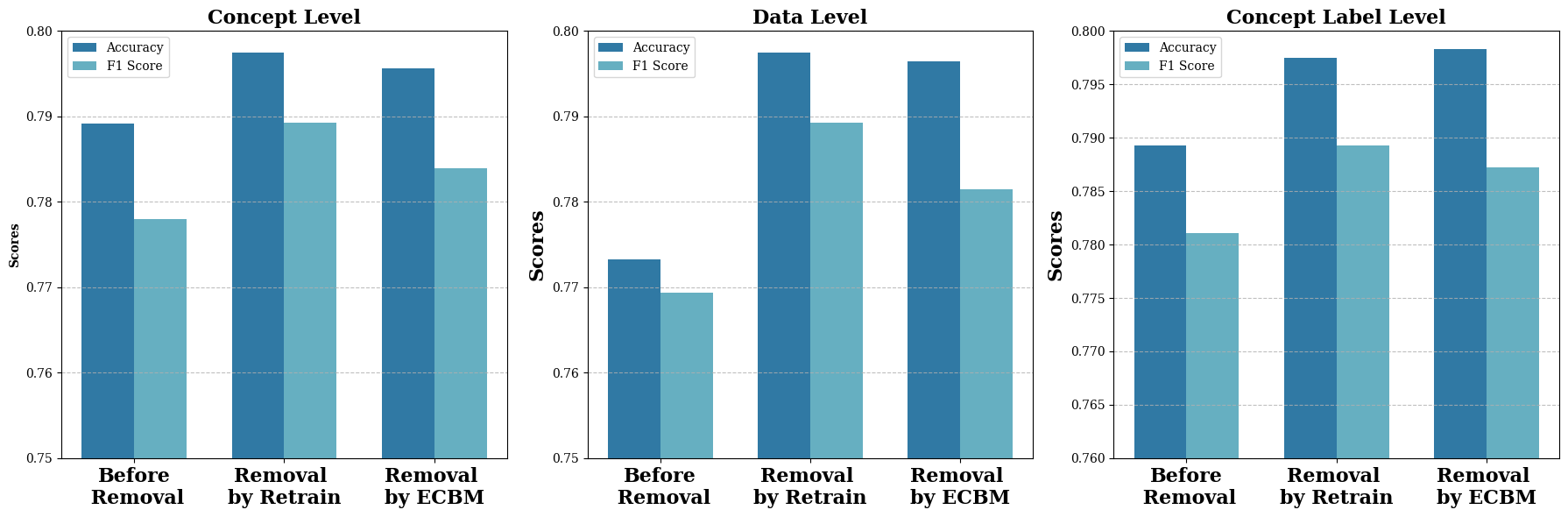} 
    \end{tabular}
    \caption{Model performance after the removal of harmful data.}
    \label{fig:harmful_removal}
\end{figure}

Following the noise injection, we trained a baseline CBM on the corrupted dataset ("Before Removal"). Subsequently, we identified the noisy components and removed them using two methods: (1) \textbf{Retraining} from scratch on the cleaned dataset (serving as the Ground Truth), and (2) \textbf{CCBM}, which edits the trained noisy model to unlearn the harmful components directly.

The results are visualized in Figure \ref{fig:harmful_removal}. We present a detailed analysis of the findings below:

\noindent\textbf{Significant Performance Recovery.} 
As observed in the "Before Removal" bars, the injection of noise noticeably degrades model utility. This impact is most pronounced at the \textit{Data Level}, where label corruption drops the Accuracy to approximately 0.773. However, after applying CCBM to remove these harmful samples, the model performance exhibits a sharp recovery, with Accuracy rebounding to $\approx$ 0.796 and F1 Score improving from $\approx$ 0.769 to $\approx$ 0.782. Similar restorative trends are observed at the Concept and Concept-Label levels, confirming that CCBM effectively neutralizes the negative impact of erroneous data.

\noindent\textbf{High Fidelity to Retraining.} 
A critical criterion for editable models is their ability to approximate the gold-standard Retraining outcome. Figure \ref{fig:harmful_removal} demonstrates that CCBM achieves this with high fidelity. Across all three settings, the performance gap between "Removal by CCBM" and "Removal by Retrain" is marginal. For instance, at the Concept-Label level, the F1 Score difference is less than 0.003. While there is a slight performance drop compared to complete retraining (attributed to the approximation error of the influence function), CCBM retains over 99\% of the recovery gain. This suggests that CCBM serves as a robust and efficient alternative to retraining for data sanitation tasks, allowing for rapid correction of dataset errors without the computational overhead of training from scratch.

\subsection{Periodic Editing Performance}

% Concept Level
\begin{figure}[!ht]
\centering
\begin{subfigure}[b]{0.4\textwidth}
    \includegraphics[width=\linewidth]{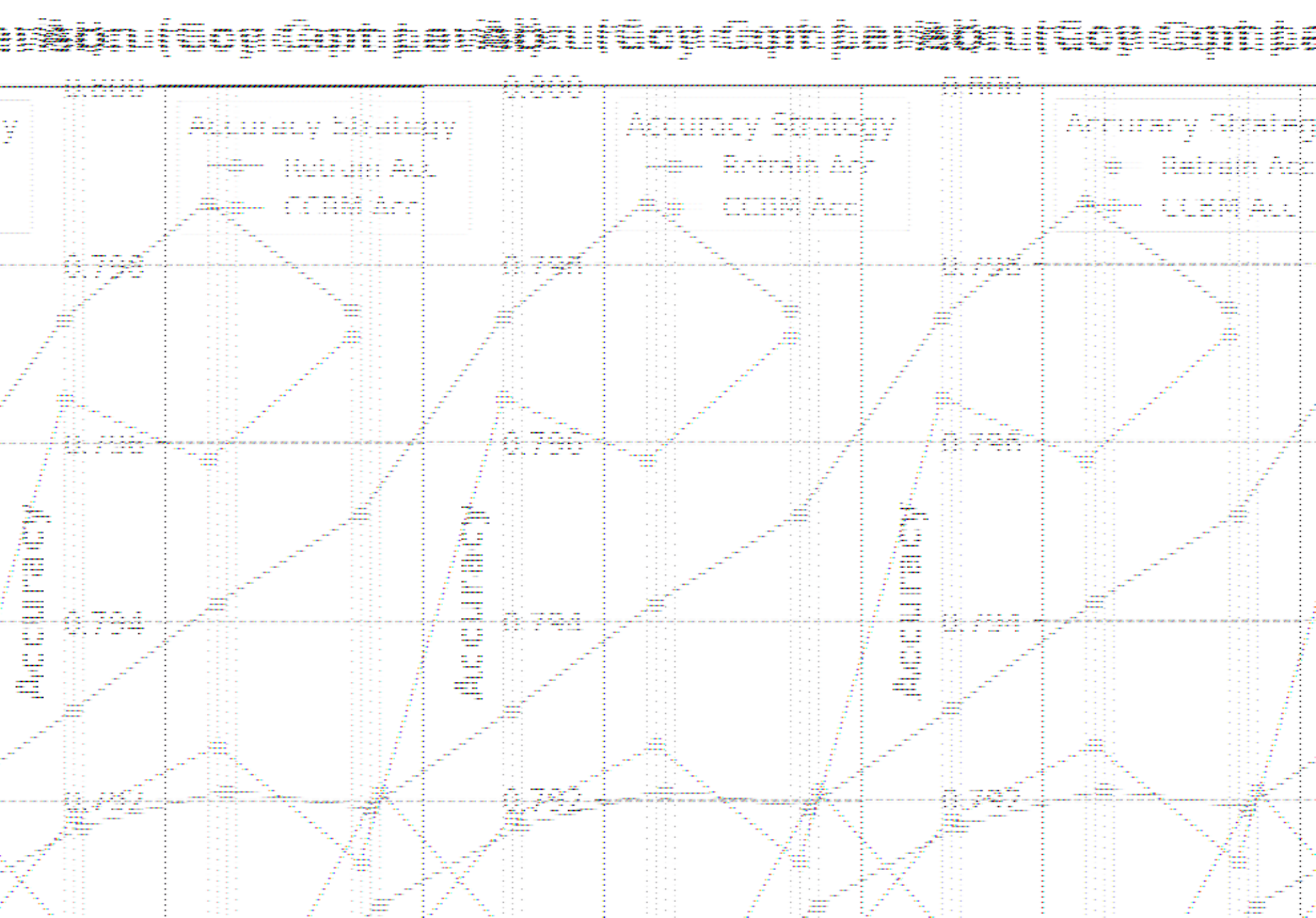}
    \caption{The accuracy of the edited model compared with retrained.}
    \label{fig:pe_con_acc}
\end{subfigure}
\hfill
\begin{subfigure}[b]{0.4\textwidth}
    \includegraphics[width=\linewidth]{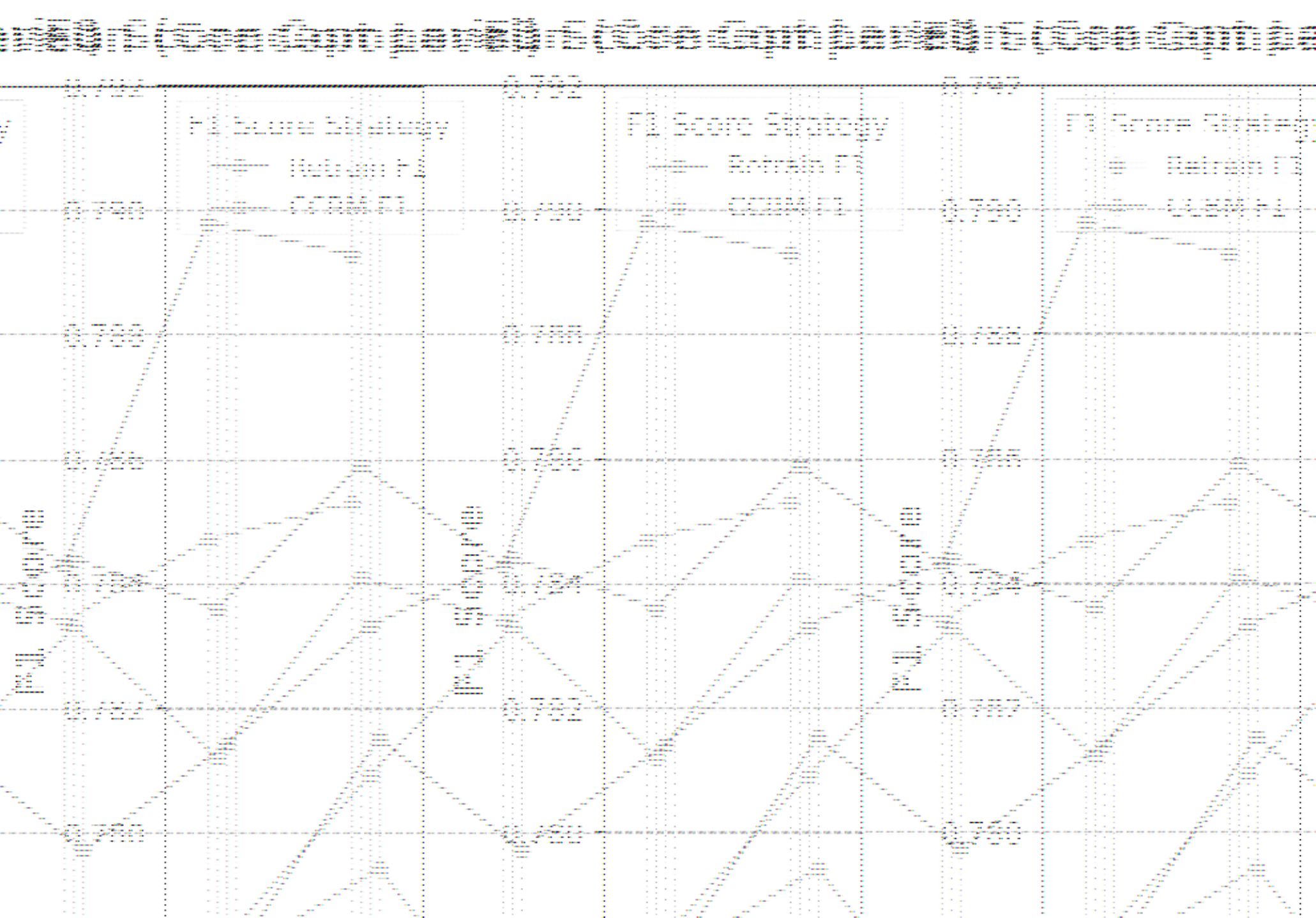}
    \caption{The F1 score of the edited model compared with retrained.}
    \label{fig:pe_con_f1}
\end{subfigure}
\vspace{-4pt}
\caption{Accuracy and F1 score difference of the edited model compared with retrained at concept level.}
\label{fig:pe_con}
\vspace{-8pt}
\end{figure}

% Data Level
\begin{figure}[!ht]
\centering
\begin{subfigure}[b]{0.4\textwidth}
    \includegraphics[width=\linewidth]{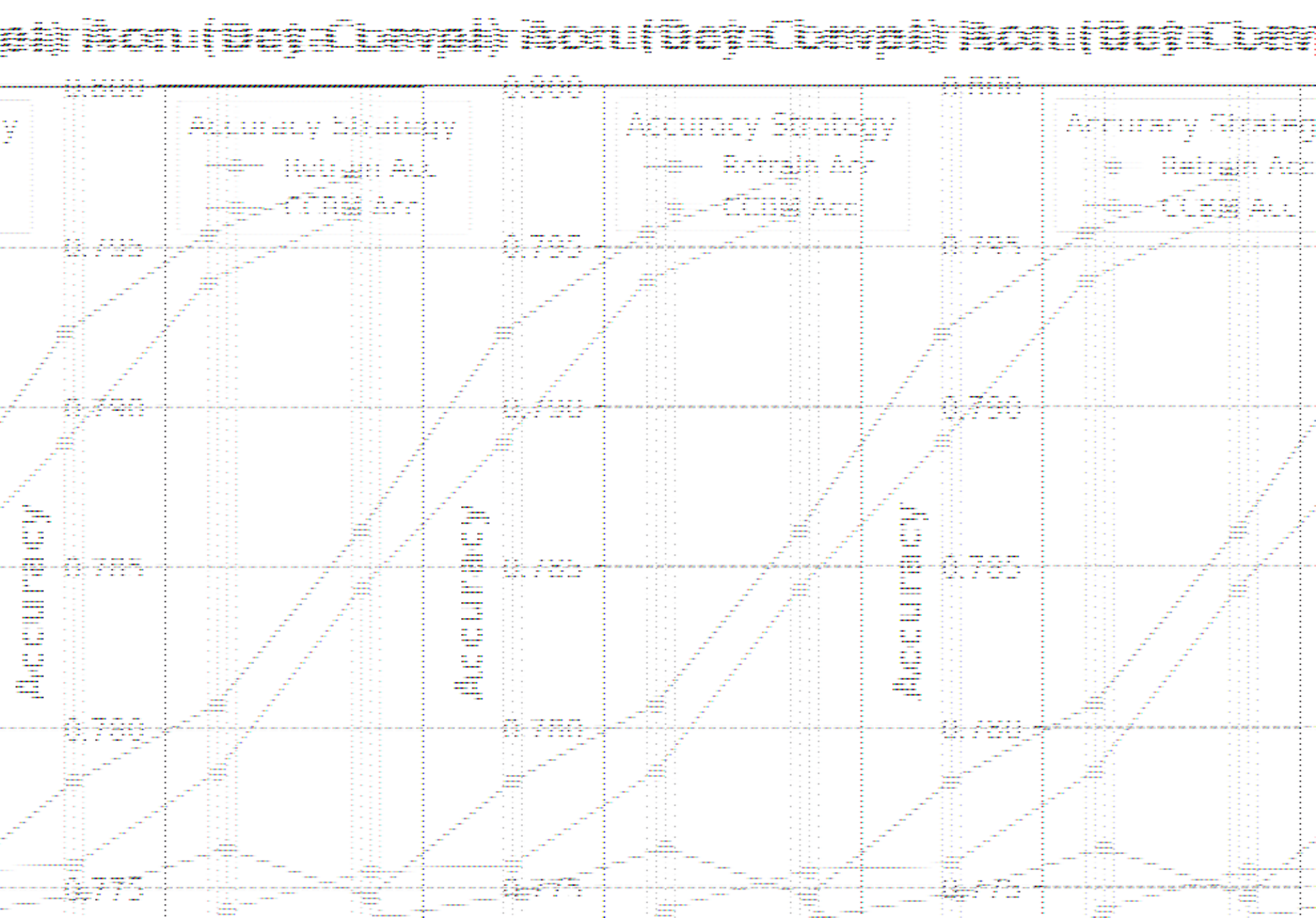}
    \caption{The accuracy of the edited model compared with retrained.}
    \label{fig:pe_da_acc}
\end{subfigure}
\hfill
\begin{subfigure}[b]{0.4\textwidth}
    \includegraphics[width=\linewidth]{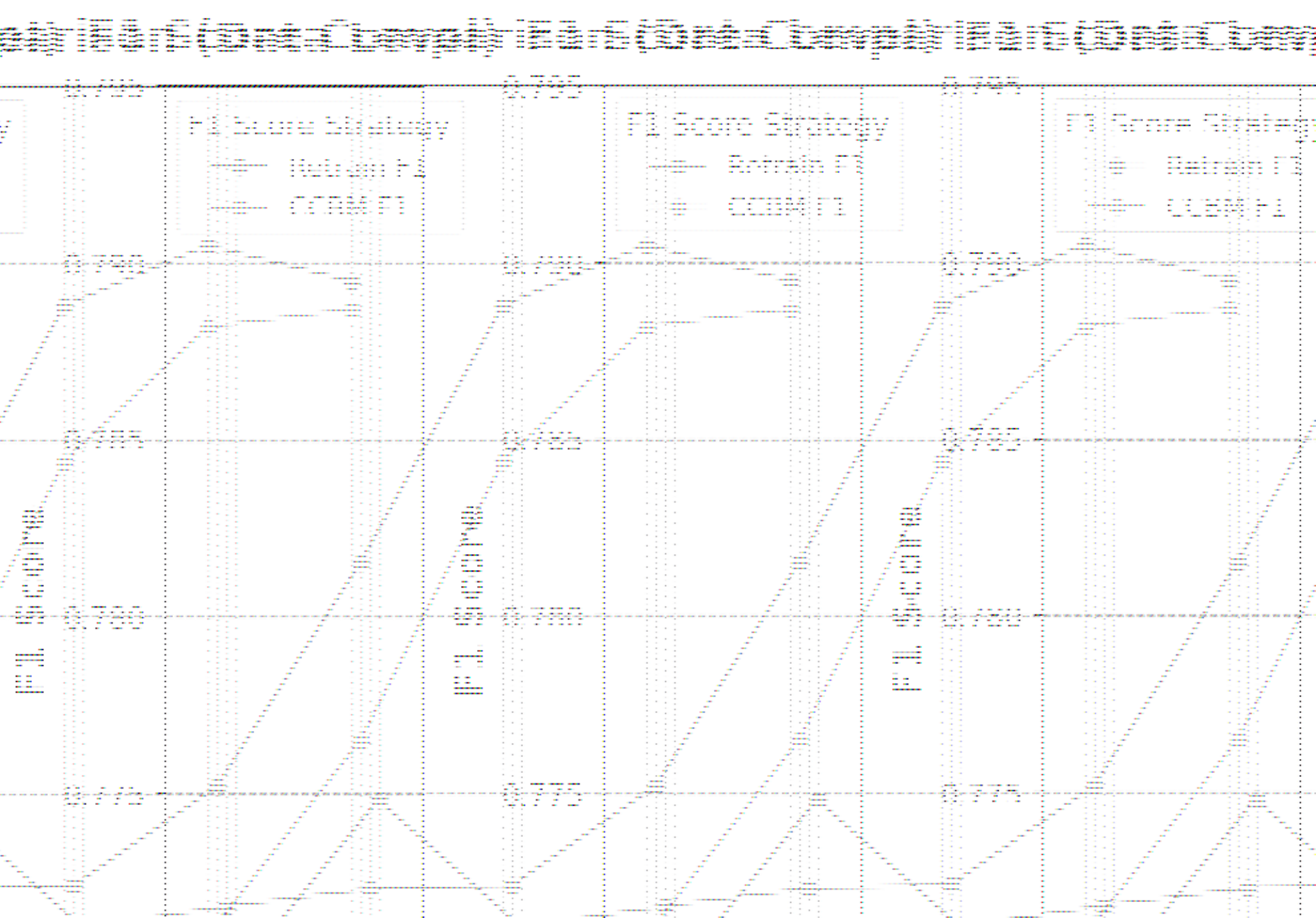}
    \caption{The F1 score of the edited model compared with retrained.}
    \label{fig:pe_da_f1}
\end{subfigure}
\vspace{-4pt}
\caption{Accuracy and F1 score difference of the edited model compared with retrained at data level.}
\label{fig:pe_da}
\vspace{-8pt}
\end{figure}

% Concept-Label Level
\begin{figure}[!ht]
\centering
\begin{subfigure}[b]{0.4\textwidth}
    \includegraphics[width=\linewidth]{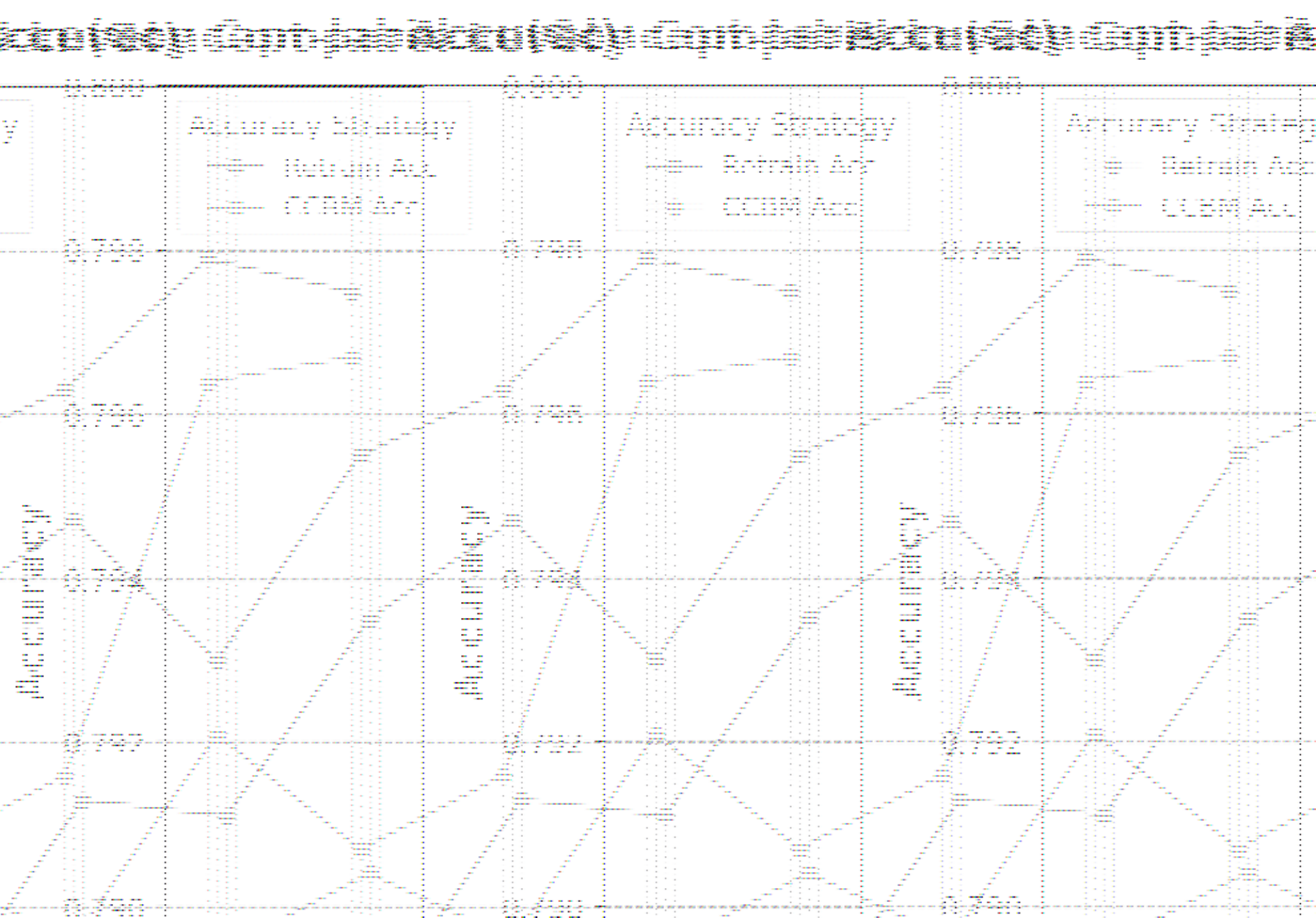}
    \caption{The accuracy of the edited model compared with retrained.}
    \label{fig:pe_con_la_acc}
\end{subfigure}
\hfill
\begin{subfigure}[b]{0.4\textwidth}
    \includegraphics[width=\linewidth]{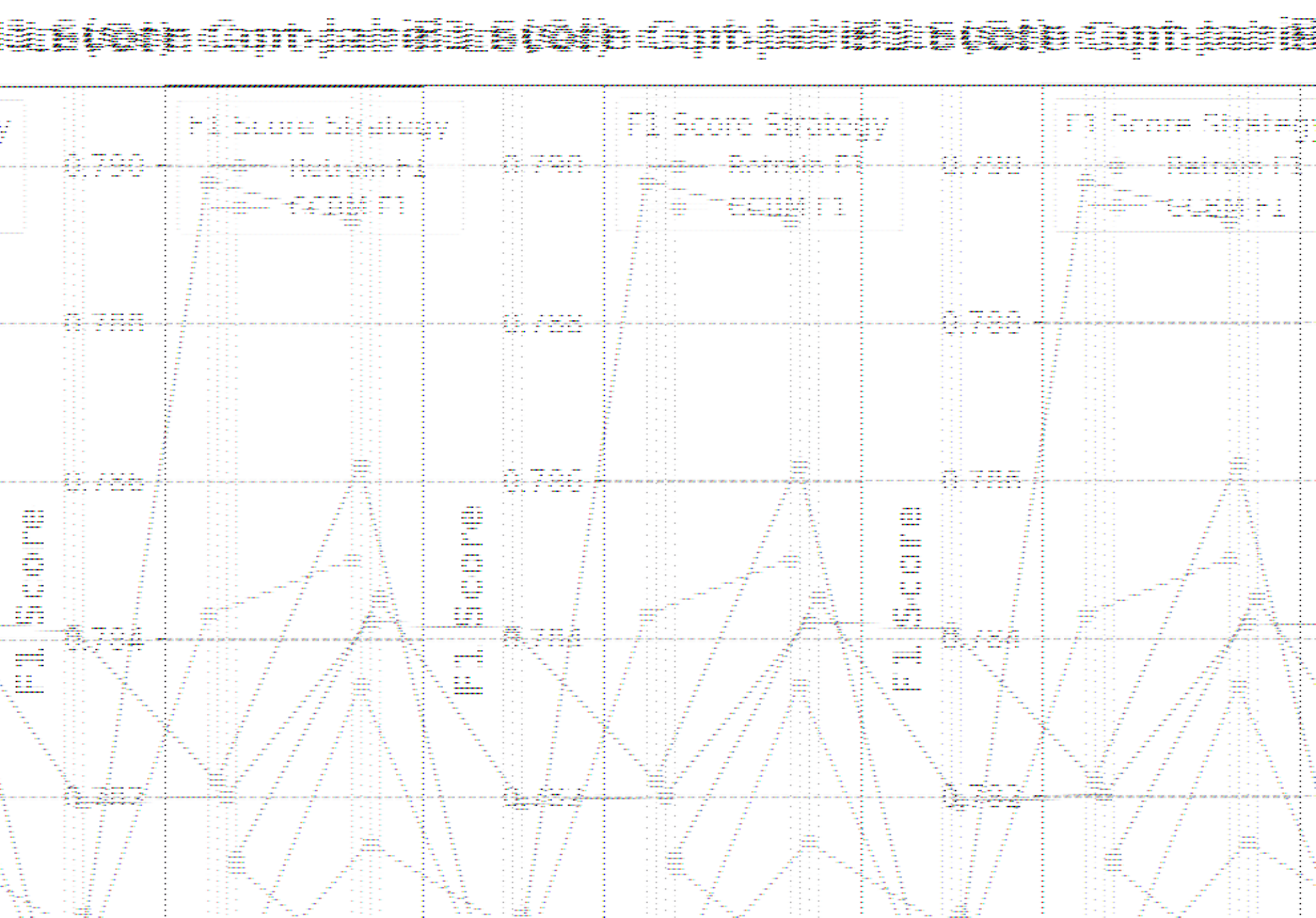}
    \caption{The F1 score of the edited model compared with retrained.}
    \label{fig:pe_con_la_f1}
\end{subfigure}
\vspace{-4pt}
\caption{Accuracy and F1 score difference of the edited model compared with retrained at concept-label level.}
\label{fig:pe_con_la}
\vspace{-8pt}
\end{figure}
In real-world lifecycle management, models are not edited just once; they undergo continuous, periodic updates. A critical challenge in such scenarios is \textit{error accumulation}: since influence functions provide a first-order approximation of parameter changes, repeated application of these approximations could theoretically lead to a divergence from the optimal model parameters over time.

To rigorously evaluate the stability of CCBM under sequential editing, we designed a "Multi-Stage Restoration" experiment. 
\textbf{Setup:} We initially injected 10\% noise into the CUB dataset across the three levels (Concept, Data, and Concept-Label), identical to the settings in the previous section.
\textbf{Protocol:} Instead of removing all noise at once, we performed a sequential cleanup process consisting of 10 rounds. in each round, we identified and removed/corrected 1\% of the noisy data (or concepts). Crucially, for the CCBM method, the update at round $t$ was applied directly to the approximate model resulting from round $t-1$, testing the method's resilience to cumulative approximation errors.

The trajectories of Accuracy and F1 Score across these 10 rounds are plotted in Figure~\ref{fig:pe_con} (Concept Level), Figure~\ref{fig:pe_da} (Data Level), and Figure~\ref{fig:pe_con_la} (Concept-Label Level). We observe the following key trends:

\noindent\textbf{Consistent Monotonic Restoration.} 
As we progress from 10\% noise (left side of x-axis) to 0\% noise (right side), both Retraining (Blue) and CCBM (Orange) exhibit a consistent upward trajectory in performance. For instance, at the Data Level (Figure \ref{fig:pe_da}), the Accuracy improves from $\sim$0.773 to $\sim$0.796. This confirms that CCBM effectively translates the removal of harmful components into tangible performance gains at every step of the sequence.

\noindent\textbf{Robustness to Error Accumulation.} 
The most significant finding is the stability of the CCBM trajectory. Despite being updated sequentially 10 times, the CCBM performance curve remains parallel to the Retraining curve without significant divergence. 
At the \textit{Data Level} (Figure \ref{fig:pe_da}), the CCBM curve tightly hugs the Retraining curve, with the final Accuracy gap being negligible ($<0.002$).
At the \textit{Concept Level} (Figure \ref{fig:pe_con}), while there is a systematic offset due to the hardness of the concept removal problem, the gap remains constant throughout the 10 stages. The CCBM model does not collapse or exhibit erratic behavior.

This empirical evidence suggests that the Hessian-based approximation in CCBM is sufficiently robust to support long-term, periodic maintenance of the model without requiring frequent full retraining resets.

\section{Conclusion}
\label{sec:conclusion}

In this paper, we presented **Controllable Concept Bottleneck Models (CCBMs)**, a unified framework designed to address the critical limitation of staticity in traditional interpretable models. Recognizing that real-world deployment requires continuous adaptation, we formalized the problem of post-hoc model controllability across three distinct granularities: \textit{concept-label-level} for correcting annotation errors, \textit{concept-level} for evolving semantic ontologies, and \textit{data-level} for managing the training data lifecycle. Notably, our framework supports bidirectional data operations, enabling both the rigorous \textit{unlearning} of sensitive or harmful samples and the efficient \textit{addition} of new data for incremental learning.

Methodologically, we derived mathematically rigorous closed-form approximations based on influence functions, allowing for precise parameter updates without the prohibitive cost of retraining from scratch. To ensure scalability for high-dimensional models, we further integrated Eigenvalue-corrected Kronecker-Factored Approximate Curvature (EK-FAC) to accelerate Hessian computations. 

Extensive experiments on multiple benchmarks demonstrated that CCBMs achieve a superior trade-off between utility and efficiency. Our method delivers model updates that are theoretically aligned with ground-truth retraining while reducing computational time by orders of magnitude. Furthermore, we empirically validated the privacy benefits of our approach through membership inference attacks, confirming its capability to effectively erase data footprints. We believe CCBMs pave the way for more sustainable, adaptive, and trustworthy Explainable AI systems capable of evolving alongside human knowledge and dynamic data environments.
\section{Impact Statement}
This research enhances methodologies for data attribution in contrastive learning. While we acknowledge the importance of evaluating societal impacts, our analysis suggests that there are no immediate ethical risks requiring specific mitigation measures beyond standard practices in machine learning.

\bibliography{references}

@article{yin2023survey,
  title={A survey on multimodal large language models},
  author={Yin, Shukang and Fu, Chaoyou and Zhao, Sirui and Li, Ke and Sun, Xing and Xu, Tong and Chen, Enhong},
  journal={arXiv preprint arXiv:2306.13549},
  year={2023}
}

@article{zhao2023survey,
  title={A survey of large language models},
  author={Zhao, Wayne Xin and Zhou, Kun and Li, Junyi and Tang, Tianyi and Wang, Xiaolei and Hou, Yupeng and Min, Yingqian and Zhang, Beichen and Zhang, Junjie and Dong, Zican and others},
  journal={arXiv preprint arXiv:2303.18223},
  year={2023}
}

@inproceedings{lai2023faithful,
  title={Faithful Vision-Language Interpretation via Concept Bottleneck Models},
  author={Lai, Songning and Hu, Lijie and Wang, Junxiao and Berti-Equille, Laure and Wang, Di},
  booktitle={The Twelfth International Conference on Learning Representations},
  year={2023}
}

@inproceedings{ahmad2018interpretable,
  title={Interpretable machine learning in healthcare},
  author={Ahmad, Muhammad Aurangzeb and Eckert, Carly and Teredesai, Ankur},
  booktitle={Proceedings of the 2018 ACM international conference on bioinformatics, computational biology, and health informatics},
  pages={559--560},
  year={2018}
}

@article{cao2022ai,
  title={Ai in finance: challenges, techniques, and opportunities},
  author={Cao, Longbing},
  journal={ACM Computing Surveys (CSUR)},
  volume={55},
  number={3},
  pages={1--38},
  year={2022},
  publisher={ACM New York, NY}
}

@article{nevitt2006osteoarthritis,
  title={The osteoarthritis initiative},
  author={Nevitt, Michael and Felson, David and Lester, Gayle},
  journal={Protocol for the cohort study},
  volume={1},
  pages={2},
  year={2006}
}

@article{yu2018artificial,
  title={Artificial intelligence in healthcare},
  author={Yu, Kun-Hsing and Beam, Andrew L and Kohane, Isaac S},
  journal={Nature biomedical engineering},
  volume={2},
  number={10},
  pages={719--731},
  year={2018},
  publisher={Nature Publishing Group UK London}
}

@article{das2020opportunities,
  title={Opportunities and challenges in explainable artificial intelligence (xai): A survey},
  author={Das, Arun and Rad, Paul},
  journal={arXiv preprint arXiv:2006.11371},
  year={2020}
}

@inproceedings{hu2023seat,
  title={SEAT: stable and explainable attention},
  author={Hu, Lijie and Liu, Yixin and Liu, Ninghao and Huai, Mengdi and Sun, Lichao and Wang, Di},
  booktitle={Proceedings of the AAAI Conference on Artificial Intelligence},
  volume={37(11)},
  pages={12907--12915},
  year={2023}
}

@inproceedings{koh2020concept,
  title={Concept bottleneck models},
  author={Koh, Pang Wei and Nguyen, Thao and Tang, Yew Siang and Mussmann, Stephen and Pierson, Emma and Kim, Been and Liang, Percy},
  booktitle={International conference on machine learning},
  pages={5338--5348},
  year={2020},
  organization={PMLR}
}

@inproceedings{koh2017understanding,
  title={Understanding black-box predictions via influence functions},
  author={Koh, Pang Wei and Liang, Percy},
  booktitle={International conference on machine learning},
  pages={1885--1894},
  year={2017},
  organization={PMLR}
}

@inproceedings{chauhan2023interactive,
  title={Interactive concept bottleneck models},
  author={Chauhan, Kushal and Tiwari, Rishabh and Freyberg, Jan and Shenoy, Pradeep and Dvijotham, Krishnamurthy},
  booktitle={Proceedings of the AAAI Conference on Artificial Intelligence},
  volume={37(5)},
  pages={5948--5955},
  year={2023}
}

@inproceedings{oikarinen2023label,
  title={Label-free Concept Bottleneck Models},
  author={Oikarinen, Tuomas and Das, Subhro and Nguyen, Lam and Weng, Lily},
  booktitle={International Conference on Learning Representations},
  year={2023}
}

@inproceedings{yuksekgonul2022post,
  title={Post-hoc Concept Bottleneck Models},
  author={Yuksekgonul, Mert and Wang, Maggie and Zou, James},
  booktitle={The Eleventh International Conference on Learning Representations},
  year={2023}
}

@article{havasi2022addressing,
  title={Addressing leakage in concept bottleneck models},
  author={Havasi, Marton and Parbhoo, Sonali and Doshi-Velez, Finale},
  journal={Advances in Neural Information Processing Systems},
  volume={35},
  pages={23386--23397},
  year={2022}
}

@article{kim2023probabilistic,
  title={Probabilistic Concept Bottleneck Models},
  author={Kim, Eunji and Jung, Dahuin and Park, Sangha and Kim, Siwon and Yoon, Sungroh},
  journal={arXiv preprint arXiv:2306.01574},
  year={2023}
}

@inproceedings{keser2023interpretable,
  title={Interpretable Model-Agnostic Plausibility Verification for 2D Object Detectors Using Domain-Invariant Concept Bottleneck Models},
  author={Keser, Mert and Schwalbe, Gesina and Nowzad, Azarm and Knoll, Alois},
  booktitle={Proceedings of the IEEE/CVF Conference on Computer Vision and Pattern Recognition},
  pages={3890--3899},
  year={2023}
}

@article{sawada2022concept,
  title={Concept bottleneck model with additional unsupervised concepts},
  author={Sawada, Yoshihide and Nakamura, Keigo},
  journal={IEEE Access},
  volume={10},
  pages={41758--41765},
  year={2022},
  publisher={IEEE}
}

@inproceedings{sheth2023auxiliary,
  title={Auxiliary Losses for Learning Generalizable Concept-based Models},
  author={Sheth, Ivaxi and Kahou, Samira Ebrahimi},
  booktitle={Thirty-seventh Conference on Neural Information Processing Systems},
  year={2023}
}

@article{wah2011caltech,
  title={The caltech-ucsd birds-200-2011 dataset},
  author={Wah, Catherine and Branson, Steve and Welinder, Peter and Perona, Pietro and Belongie, Serge},
  year={2011},
  journal={California Institute of Technology}
}

@article{cook2000detection,
  title={Detection of influential observation in linear regression},
  author={Cook, R Dennis},
  journal={Technometrics},
  volume={42},
  number={1},
  pages={65--68},
  year={2000},
  publisher={Taylor \& Francis}
}

@article{cook1980characterizations,
  title={Characterizations of an empirical influence function for detecting influential cases in regression},
  author={Cook, R Dennis and Weisberg, Sanford},
  journal={Technometrics},
  volume={22},
  number={4},
  pages={495--508},
  year={1980},
  publisher={Taylor \& Francis}
}

@article{han2020explaining,
  title={Explaining black box predictions and unveiling data artifacts through influence functions},
  author={Han, Xiaochuang and Wallace, Byron C and Tsvetkov, Yulia},
  journal={arXiv preprint arXiv:2005.06676},
  year={2020}
}

@inproceedings{guo2021fastif,
  title={FastIF: Scalable Influence Functions for Efficient Model Interpretation and Debugging},
  author={Guo, Han and Rajani, Nazneen and Hase, Peter and Bansal, Mohit and Xiong, Caiming},
  booktitle={Proceedings of the 2021 Conference on Empirical Methods in Natural Language Processing},
  pages={10333--10350},
  year={2021}
}

@inproceedings{kwon2023datainf,
  title={DataInf: Efficiently Estimating Data Influence in LoRA-tuned LLMs and Diffusion Models},
  author={Kwon, Yongchan and Wu, Eric and Wu, Kevin and Zou, James},
  booktitle={The Twelfth International Conference on Learning Representations},
  year={2023}
}

@inproceedings{wang2019repairing,
  title={Repairing without retraining: Avoiding disparate impact with counterfactual distributions},
  author={Wang, Hao and Ustun, Berk and Calmon, Flavio},
  booktitle={International Conference on Machine Learning},
  pages={6618--6627},
  year={2019},
  organization={PMLR}
}

@article{warnecke2021machine,
  title={Machine unlearning of features and labels},
  author={Warnecke, Alexander and Pirch, Lukas and Wressnegger, Christian and Rieck, Konrad},
  journal={Network and Distributed System Security (NDSS) Symposium},
  year={2023}
}

@inproceedings{brunet2019understanding,
  title={Understanding the origins of bias in word embeddings},
  author={Brunet, Marc-Etienne and Alkalay-Houlihan, Colleen and Anderson, Ashton and Zemel, Richard},
  booktitle={International conference on machine learning},
  pages={803--811},
  year={2019},
  organization={PMLR}
}

@article{chen2020multi,
  title={Multi-stage influence function},
  author={Chen, Hongge and Si, Si and Li, Yang and Chelba, Ciprian and Kumar, Sanjiv and Boning, Duane and Hsieh, Cho-Jui},
  journal={Advances in Neural Information Processing Systems},
  volume={33},
  pages={12732--12742},
  year={2020}
}

@inproceedings{basu2021influence,
  title={Influence Functions in Deep Learning Are Fragile},
  author={Basu, S and Pope, P and Feizi, S},
  booktitle={International Conference on Learning Representations (ICLR)},
  year={2021}
}

@inproceedings{golatkar2021mixed,
  title={Mixed-privacy forgetting in deep networks},
  author={Golatkar, Aditya and Achille, Alessandro and Ravichandran, Avinash and Polito, Marzia and Soatto, Stefano},
  booktitle={Proceedings of the IEEE/CVF conference on computer vision and pattern recognition},
  pages={792--801},
  year={2021}
}

@inproceedings{golatkar2020eternal,
  title={Eternal sunshine of the spotless net: Selective forgetting in deep networks},
  author={Golatkar, Aditya and Achille, Alessandro and Soatto, Stefano},
  booktitle={Proceedings of the IEEE/CVF Conference on Computer Vision and Pattern Recognition},
  pages={9304--9312},
  year={2020}
}

@article{liu2024certified,
  title={Certified minimax unlearning with generalization rates and deletion capacity},
  author={Liu, Jiaqi and Lou, Jian and Qin, Zhan and Ren, Kui},
  journal={Advances in Neural Information Processing Systems},
  volume={36},
  year={2024}
}

@article{agarwal2017second,
  title={Second-order stochastic optimization for machine learning in linear time},
  author={Agarwal, Naman and Bullins, Brian and Hazan, Elad},
  journal={Journal of Machine Learning Research},
  volume={18},
  number={116},
  pages={1--40},
  year={2017}
}

@article{george2018fast,
  title={Fast approximate natural gradient descent in a kronecker factored eigenbasis},
  author={George, Thomas and Laurent, C{\'e}sar and Bouthillier, Xavier and Ballas, Nicolas and Vincent, Pascal},
  journal={Advances in Neural Information Processing Systems},
  volume={31},
  year={2018}
}

@article{rozenfeld2020model,
  title={A model of disparities: risk factors associated with COVID-19 infection},
  author={Rozenfeld, Yelena and Beam, Jennifer and Maier, Haley and Haggerson, Whitney and Boudreau, Karen and Carlson, Jamie and Medows, Rhonda},
  journal={International journal for equity in health},
  volume={19},
  number={1},
  pages={126},
  year={2020},
  publisher={Springer}
}

@article{bartlett1953approximate,
  title={Approximate confidence intervals},
  author={Bartlett, MS},
  journal={Biometrika},
  volume={40},
  number={1/2},
  pages={12--19},
  year={1953},
  publisher={JSTOR}
}

@article{yang2024moral,
  title={MoRAL: MoE Augmented LoRA for LLMs' Lifelong Learning},
  author={Yang, Shu and Ali, Muhammad Asif and Wang, Cheng-Long and Hu, Lijie and Wang, Di},
  journal={arXiv preprint arXiv:2402.11260},
  year={2024}
}

@article{yang2024human,
  title={Human-AI Interactions in the Communication Era: Autophagy Makes Large Models Achieving Local Optima},
  author={Yang, Shu and Hu, Lijie and Yu, Lu and Ali, Muhammad Asif and Wang, Di},
  journal={arXiv preprint arXiv:2402.11271},
  year={2024}
}

@article{xu2023llm,
  title={An llm can fool itself: A prompt-based adversarial attack},
  author={Xu, Xilie and Kong, Keyi and Liu, Ning and Cui, Lizhen and Wang, Di and Zhang, Jingfeng and Kankanhalli, Mohan},
  journal={arXiv preprint arXiv:2310.13345},
  year={2023}
}

@article{ali2024prompt,
  title={PROMPT-SAW: Leveraging Relation-Aware Graphs for Textual Prompt Compression},
  author={Ali, Muhammad Asif and Li, Zhengping and Yang, Shu and Cheng, Keyuan and Cao, Yang and Huang, Tianhao and Hu, Lijie and Yu, Lu and Wang, Di},
  journal={arXiv preprint arXiv:2404.00489},
  year={2024}
}

@article{yang2024dialectical,
  title={Dialectical Alignment: Resolving the Tension of 3H and Security Threats of LLMs},
  author={Yang, Shu and Su, Jiayuan and Jiang, Han and Li, Mengdi and Cheng, Keyuan and Ali, Muhammad Asif and Hu, Lijie and Wang, Di},
  journal={arXiv preprint arXiv:2404.00486},
  year={2024}
}

@inproceedings{NEURIPS2022_867c0682,
 author = {Espinosa Zarlenga, Mateo and Barbiero, Pietro and Ciravegna, Gabriele and Marra, Giuseppe and Giannini, Francesco and Diligenti, Michelangelo and Shams, Zohreh and Precioso, Frederic and Melacci, Stefano and Weller, Adrian and Li\'{o}, Pietro and Jamnik, Mateja},
 booktitle = {Advances in Neural Information Processing Systems},
 title = {Concept Embedding Models: Beyond the Accuracy-Explainability Trade-Off},
 volume = {35},
 year = {2022}
}

@inproceedings{NEURIPS2023_555479a2,
 author = {Sheth, Ivaxi and Ebrahimi Kahou, Samira},
 booktitle = {Advances in Neural Information Processing Systems},
 editor = {A. Oh and T. Naumann and A. Globerson and K. Saenko and M. Hardt and S. Levine},
 pages = {26966--26990},
 title = {Auxiliary Losses for Learning Generalizable Concept-based Models},
 volume = {36},
 year = {2023}
}

@inproceedings{liu2015deep,
  title={Deep learning face attributes in the wild},
  author={Liu, Ziwei and Luo, Ping and Wang, Xiaogang and Tang, Xiaoou},
  booktitle={Proceedings of the IEEE international conference on computer vision},
  pages={3730--3738},
  year={2015}
}

@article{hu2023improving,
  title={Improving Faithfulness for Vision Transformers},
  author={Hu, Lijie and Liu, Yixin and Liu, Ninghao and Huai, Mengdi and Sun, Lichao and Wang, Di},
  journal={arXiv preprint arXiv:2311.17983},
  year={2023}
}

@article{cheng2024multi,
  title={Multi-hop Question Answering under Temporal Knowledge Editing},
  author={Cheng, Keyuan and Lin, Gang and Fei, Haoyang and Yu, Lu and Ali, Muhammad Asif and Hu, Lijie and Wang, Di and others},
  journal={arXiv preprint arXiv:2404.00492},
  year={2024}
}

@inproceedings{zarifzadeh2024low,
  title={Low-Cost High-Power Membership Inference Attacks},
  author={Zarifzadeh, Sajjad and Liu, Philippe and Shokri, Reza},
  booktitle={Forty-first International Conference on Machine Learning},
  year={2024}
}

@inproceedings{bourtoule2021machine,
  title={Machine unlearning},
  author={Bourtoule, Lucas and Chandrasekaran, Varun and Choquette-Choo, Christopher A and Jia, Hengrui and Travers, Adelin and Zhang, Baiwu and Lie, David and Papernot, Nicolas},
  booktitle={2021 IEEE Symposium on Security and Privacy (SP)},
  pages={141--159},
  year={2021},
  organization={IEEE}
}

@inproceedings{brophy2021machine,
  title={Machine unlearning for random forests},
  author={Brophy, Jonathan and Lowd, Daniel},
  booktitle={International Conference on Machine Learning},
  pages={1092--1104},
  year={2021},
  organization={PMLR}
}

@inproceedings{cao2015towards,
  title={Towards making systems forget with machine unlearning},
  author={Cao, Yinzhi and Yang, Junfeng},
  booktitle={2015 IEEE symposium on security and privacy},
  pages={463--480},
  year={2015},
  organization={IEEE}
}

@inproceedings{chen2022recommendation,
  title={Recommendation unlearning},
  author={Chen, Chong and Sun, Fei and Zhang, Min and Ding, Bolin},
  booktitle={Proceedings of the ACM Web Conference 2022},
  pages={2768--2777},
  year={2022}
}

@inproceedings{chen2022graph,
  title={Graph unlearning},
  author={Chen, Min and Zhang, Zhikun and Wang, Tianhao and Backes, Michael and Humbert, Mathias and Zhang, Yang},
  booktitle={Proceedings of the 2022 ACM SIGSAC conference on computer and communications security},
  pages={499--513},
  year={2022}
}

@article{chowdhury2024towards,
  title={Towards Scalable Exact Machine Unlearning Using Parameter-Efficient Fine-Tuning},
  author={Chowdhury, Somnath Basu Roy and Choromanski, Krzysztof and Sehanobish, Arijit and Dubey, Avinava and Chaturvedi, Snigdha},
  journal={arXiv preprint arXiv:2406.16257},
  year={2024}
}

@article{sekhari2021remember,
  title={Remember what you want to forget: Algorithms for machine unlearning},
  author={Sekhari, Ayush and Acharya, Jayadev and Kamath, Gautam and Suresh, Ananda Theertha},
  journal={Advances in Neural Information Processing Systems},
  volume={34},
  pages={18075--18086},
  year={2021}
}

@article{guo2019certified,
  title={Certified data removal from machine learning models},
  author={Guo, Chuan and Goldstein, Tom and Hannun, Awni and Van Der Maaten, Laurens},
  journal={arXiv preprint arXiv:1911.03030},
  year={2019}
}

@inproceedings{izzo2021approximate,
  title={Approximate data deletion from machine learning models},
  author={Izzo, Zachary and Smart, Mary Anne and Chaudhuri, Kamalika and Zou, James},
  booktitle={International Conference on Artificial Intelligence and Statistics},
  pages={2008--2016},
  year={2021},
  organization={PMLR}
}

@inproceedings{mehta2022deep,
  title={Deep unlearning via randomized conditionally independent hessians},
  author={Mehta, Ronak and Pal, Sourav and Singh, Vikas and Ravi, Sathya N},
  booktitle={Proceedings of the IEEE/CVF Conference on Computer Vision and Pattern Recognition},
  pages={10422--10431},
  year={2022}
}

@inproceedings{golatkar2020forgetting,
  title={Forgetting outside the box: Scrubbing deep networks of information accessible from input-output observations},
  author={Golatkar, Aditya and Achille, Alessandro and Soatto, Stefano},
  booktitle={Computer Vision--ECCV 2020: 16th European Conference, Glasgow, UK, August 23--28, 2020, Proceedings, Part XXIX 16},
  pages={383--398},
  year={2020},
  organization={Springer}
}

@inproceedings{wu2020deltagrad,
  title={Deltagrad: Rapid retraining of machine learning models},
  author={Wu, Yinjun and Dobriban, Edgar and Davidson, Susan},
  booktitle={International Conference on Machine Learning},
  pages={10355--10366},
  year={2020},
  organization={PMLR}
}

@inproceedings{thudi2022unrolling,
  title={Unrolling sgd: Understanding factors influencing machine unlearning},
  author={Thudi, Anvith and Deza, Gabriel and Chandrasekaran, Varun and Papernot, Nicolas},
  booktitle={2022 IEEE 7th European Symposium on Security and Privacy (EuroS\&P)},
  pages={303--319},
  year={2022},
  organization={IEEE}
}

@inproceedings{wang2024concept,
author = {Daneshjou, Roxana and Yuksekgonul, Mert and Cai, Zhuo Ran and Novoa, Roberto and Zou, James},
title = {SkinCon: a skin disease dataset densely annotated by domain experts for fine-grained model debugging and analysis},
year = {2022},
isbn = {9781713871088},
publisher = {Curran Associates Inc.},
address = {Red Hook, NY, USA},
booktitle = {Proceedings of the 36th International Conference on Neural Information Processing Systems},
articleno = {1320},
numpages = {11},
location = {New Orleans, LA, USA},
series = {NIPS '22}
}

@article{combalia2019dermoscopy,
author = {Kawahara, Jeremy and Daneshvar, Sara and Argenziano, Giuseppe and Hamarneh, Ghassan},
doi = {10.1109/JBHI.2018.2824327},
issn = {2168-2194},
journal = {IEEE Journal of Biomedical and Health Informatics},
month = {mar},
number = {2},
pages = {538--546},
publisher = {IEEE},
title = {Seven-point checklist and skin lesion classification using multitask multimodal neural nets},
volume = {23},
year = {2019}
}
\bibliographystyle{IEEEtran}
\begin{IEEEbiography}[{\includegraphics[width=1in,height=1.25in,clip,keepaspectratio]{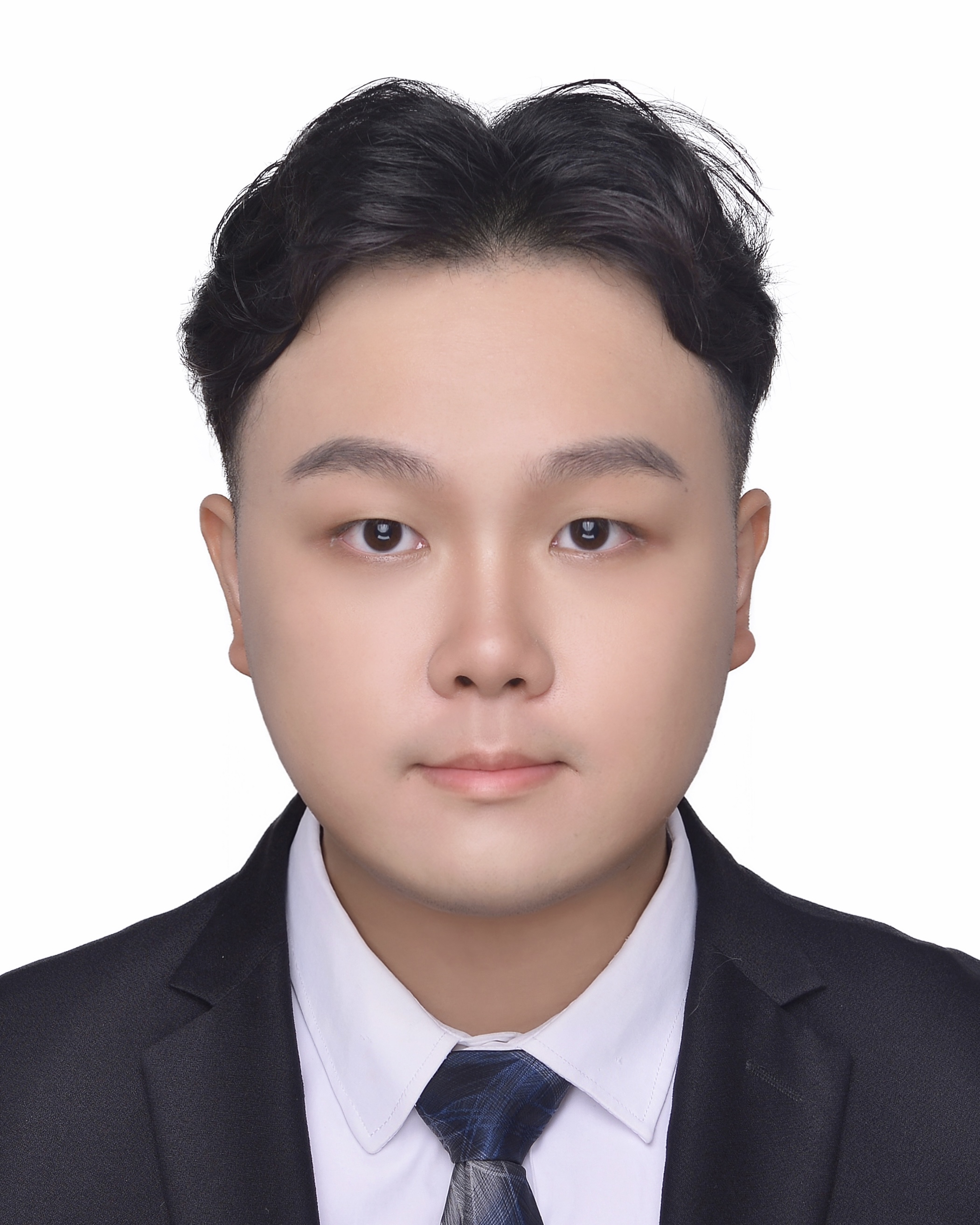}}]{Hongbin Lin} received the MPhil degree in Artificial Intelligence from The Hong Kong University of Science and Technology, Hong Kong, China in 2025. He is currently a research assistant at the Machine Learning Department at Mohamed
bin Zayed University of Artificial Intelligence (MBZUAI). His research interests include multimodal large language models and deep learning theory.
\end{IEEEbiography}

\begin{IEEEbiography}
[{\includegraphics[width=1in,height=1.25in,clip,keepaspectratio]{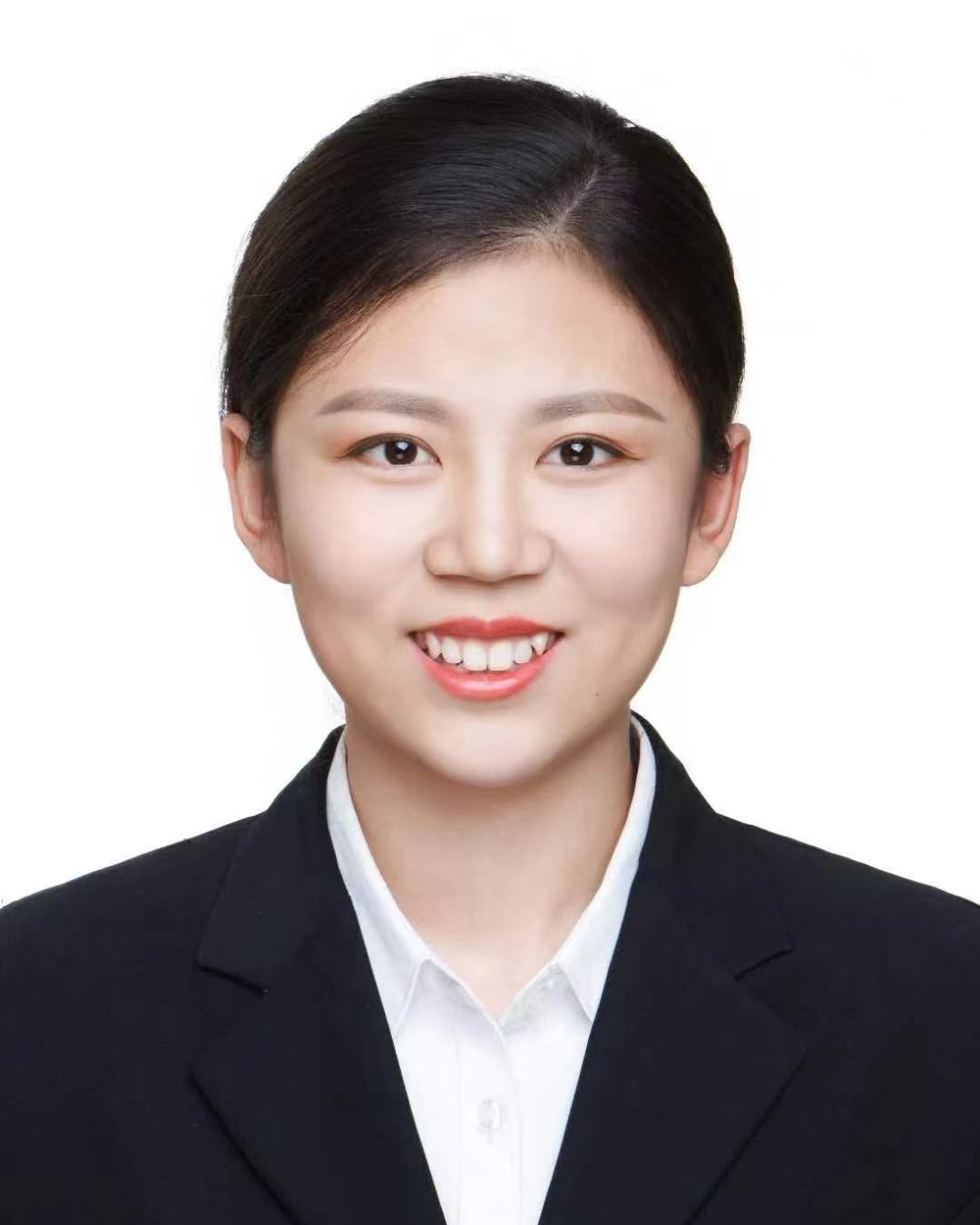}}]{Chenyang Ren} received the M.S. degree in Applied Mathematics from Shanghai Jiao Tong University, Shanghai, China in 2025. She is currently pursuing the Ph.D. degree with the Department of Mathematics at The Hong Kong University of Science and Technology, Hong Kong SAR, China. Her research interests lie in artificial intelligence for materials science, machine learning, and scientific computing.
\end{IEEEbiography}

\begin{IEEEbiography}[{\includegraphics[width=1in,height=1.25in,clip,keepaspectratio]{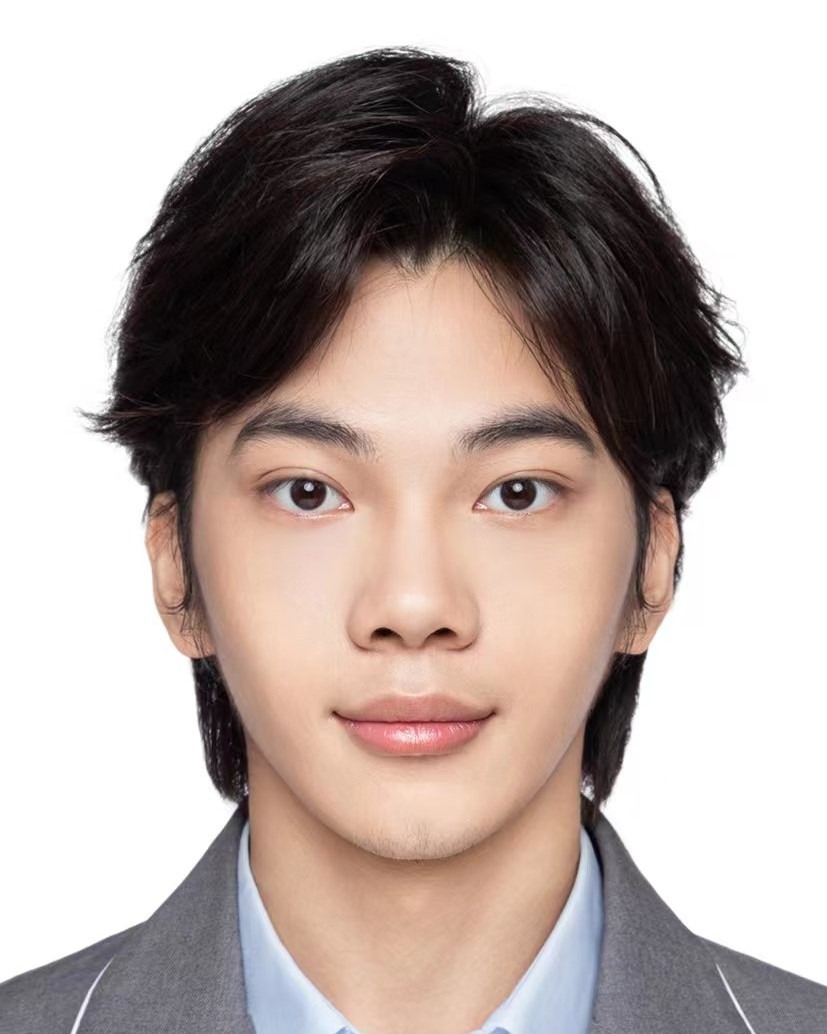}}]{Juangui Xu} received the B.Eng. degree in Cyber Space Security from Guangzhou University, China in 2024. He is currently purchasing the M.Sc. degree with the Department of Language Science and Technology, Saarland University, Germany. His research interests include Computer Vision and Large Language Models.
\end{IEEEbiography}

\begin{IEEEbiography}[{\includegraphics[width=1in,height=1.25in,clip,keepaspectratio]{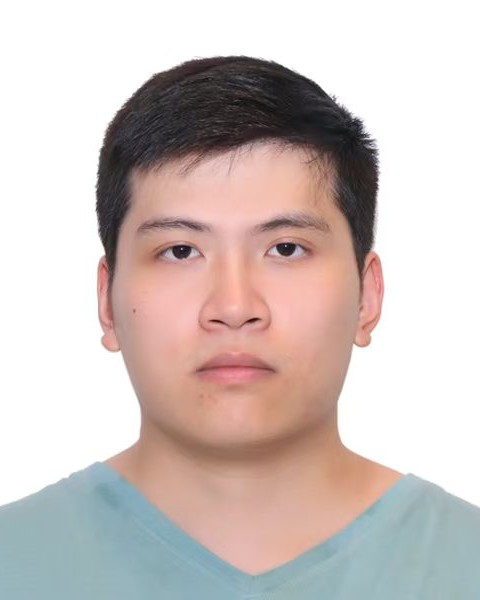}}]{Zhengyu Hu} is an M.Phil. student in Artificial Intelligence at the Hong Kong University of Science and Technology (Guangzhou), advised by Prof. Hui Xiong. His research spans machine learning and data mining, with a recent focus on large language models, preference evaluation, multi-agent systems, and graph learning. He has published over ten papers in top-tier conferences such as NeurIPS, ICML, KDD, EMNLP, ICCV, and WWW. He has also served as a reviewer for major conferences including KDD, NeurIPS, ICLR, ICML, CVPR, ACL, AAAI, AISTATS, WWW, NAACL, COLING, ECCV, and EMNLP, as well as for journals including DMLR and IEEE TNNLS.
\end{IEEEbiography}

\begin{IEEEbiography}[{\includegraphics[width=1in,height=1.25in,clip,keepaspectratio]{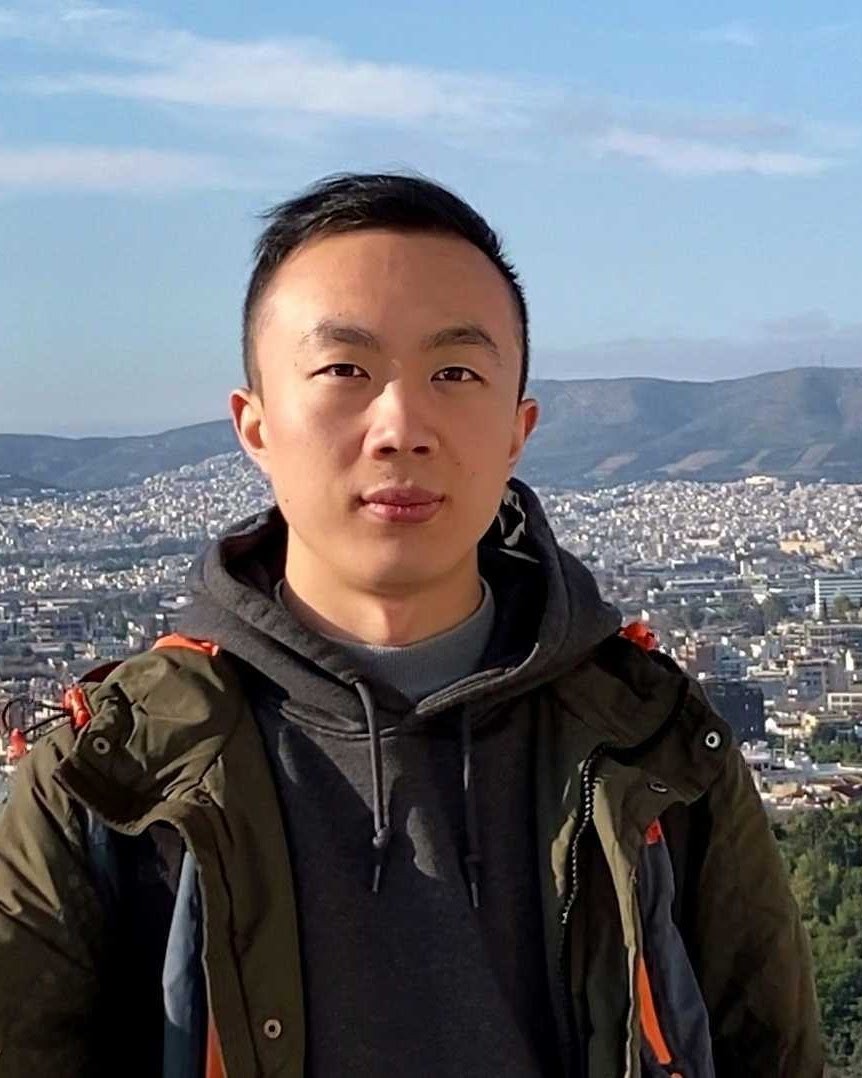}}]{Cheng-Long Wang} is a final-year Ph.D. student in Computer Science at KAUST, advised by Dr. Di Wang, and he has worked closely with Dr. Yinzhi Cao. His research focuses on machine unlearning, data/LLM memorization, and related privacy, copyright, and security challenges.
\end{IEEEbiography}

\begin{IEEEbiography}[{\includegraphics[width=1in,height=1.25in,clip,keepaspectratio]{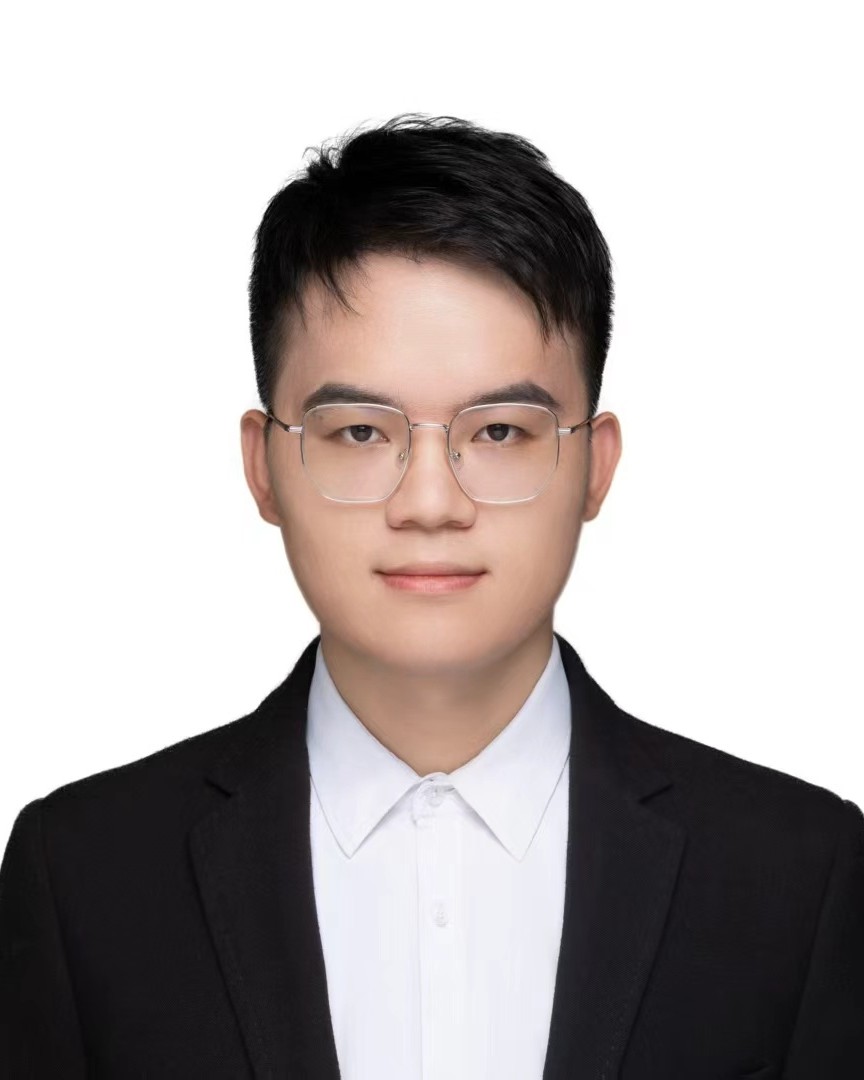}}]{Yao Shu} received the B.Eng. degree from Huazhong University of Science and Technology, China, in 2017, and the Ph.D. degree from the National University of Singapore, Singapore, in 2022. He is currently a Tenure-Track Assistant Professor in the Artificial Intelligence Thrust, The Hong Kong University of Science and Technology (Guangzhou), China. His research interests include deep learning theory and optimization algorithms.
\end{IEEEbiography}

\begin{IEEEbiography}[{\includegraphics[width=1in,height=1.25in,clip,keepaspectratio]{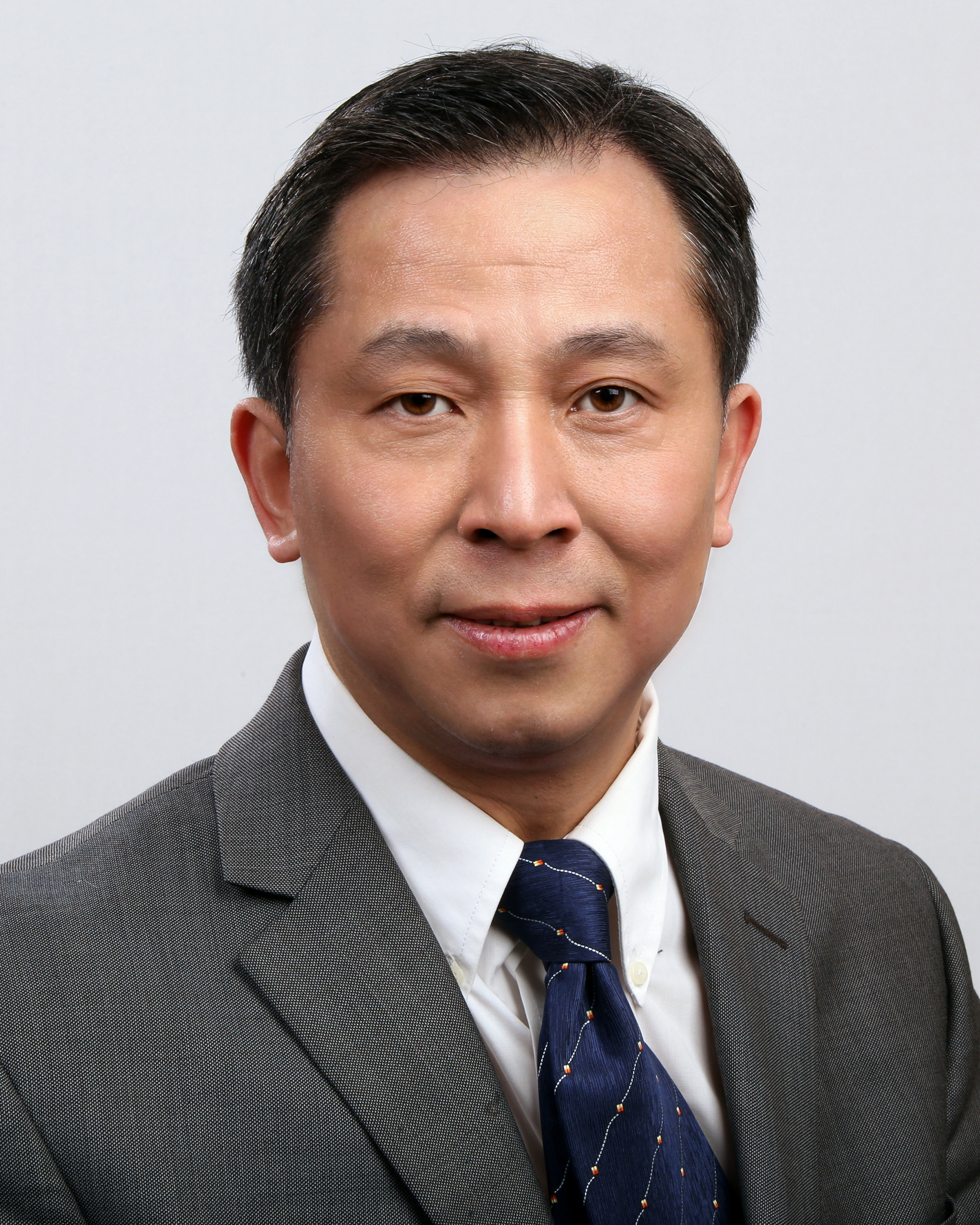}}]{Hui Xiong} (Fellow, IEEE) received the Ph.D. degree in computer science from the University of Minnesota. He is a chair professor, associate vice president (Knowledge Transfer), and the Thrust of Artificial Intelligence at The Hong Kong University of Science and Technology (Guangzhou), China. He is also with the Department of Computer Science and Engineering at The Hong Kong University of Science and Technology, Hong Kong SAR, China. His research interests span artificial intelligence, data mining, and mobile computing. He has served on numerous organization and program committees for conferences, including as program co-chair for the Industrial and Government Track of the 18th ACM SIGKDD International Conference on Knowledge Discovery and Data Mining (KDD), program co-chair for the IEEE 2013 International Conference on Data Mining (ICDM), general co-chair for the 2015 IEEE International Conference on Data Mining (ICDM), and program co-chair of the Research Track for the 2018 ACM SIGKDD International Conference on Knowledge Discovery and Data Mining. He received several awards, such as the 2021 AAAI Best Paper Award and the 2011 IEEE ICDM Best Research Paper Award. For his significant contributions to data mining and mobile computing, he was elected as a Fellow of AAAS in 2020.
\end{IEEEbiography}

\begin{IEEEbiography}[{\includegraphics[width=1in,height=1.25in,clip,keepaspectratio]{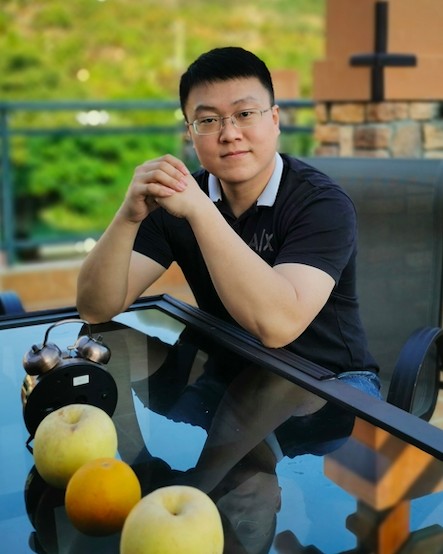}}]{Jingfeng Zhang} (Member, IEEE) received the bachelor’s degree in computer science from the Taishan College, Shandong University, Jinan, China, in 2016, and the Ph.D. degree in computer science from the National University of Singapore, Singapore, in 2020.,From 2021 to 2022, he was a Post-Doctoral Researcher with RIKEN Center for Advanced Intelligence Project (AIP) (RIKEN AIP), Chuo City, Japan, where he was a Research Scientist in 2023. He is currently a Lecturer and a Ph.D./Doctoral Accredited Supervisor with The University of Auckland and also a Visiting Scientist with RIKEN AIP.,Dr. Zhang received the JST Award, the JSPS Grants-in-Aid for Scientific Research (KAKENHI) for Early-Career Scientists, and the RIKEN Ohbu Award in 2022.
\end{IEEEbiography}

\begin{IEEEbiography}[{\includegraphics[width=1in,height=1.25in,clip,keepaspectratio]{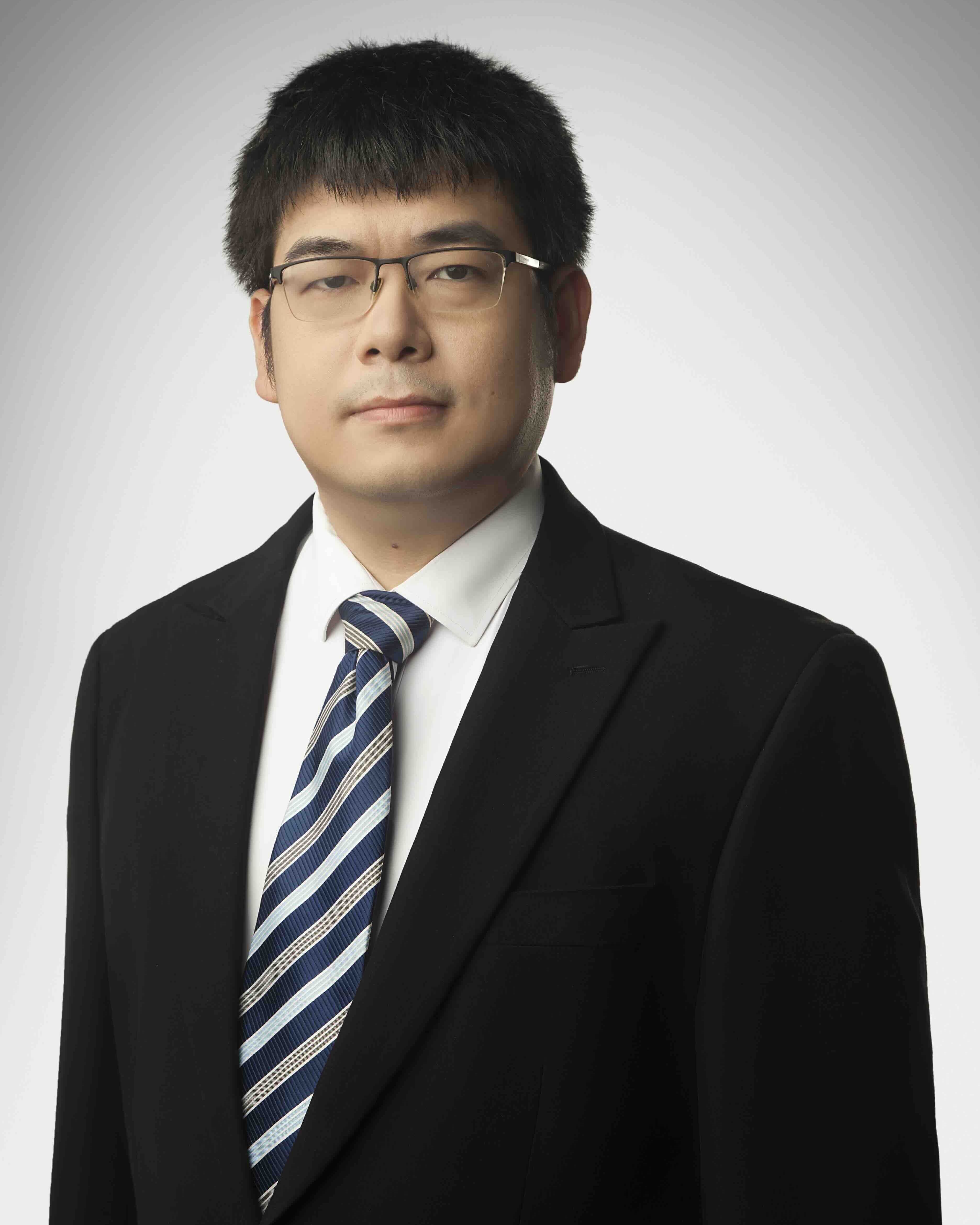}}]{Di Wang} is an Assistant Professor of Computer Science and an affiliated faculty of Statistics in the Division of Computer, Electrical and Mathematical Sciences and Engineering (CEMSE) at the King Abdullah University of Science and Technology (KAUST), start from Spring 2021. He is also the PI of Provable Responsible AI and Data Analytics (PRADA) Lab, and a member of Center of Excellence on Generative AI. Before that, he got his Ph.D degree in Computer Science at the State University of New York (SUNY) at Buffalo in 2020 under supervision of Dr. Jinhui Xu. Before my Ph.D study I took my Master degree in Mathematics at University of Western Ontario in 2015, and he received his Bachelor degree in Mathematics and Applied Mathematics at Shandong University in 2014.
\end{IEEEbiography}

\begin{IEEEbiography}[{\includegraphics[width=1in,height=1.25in,clip,keepaspectratio]{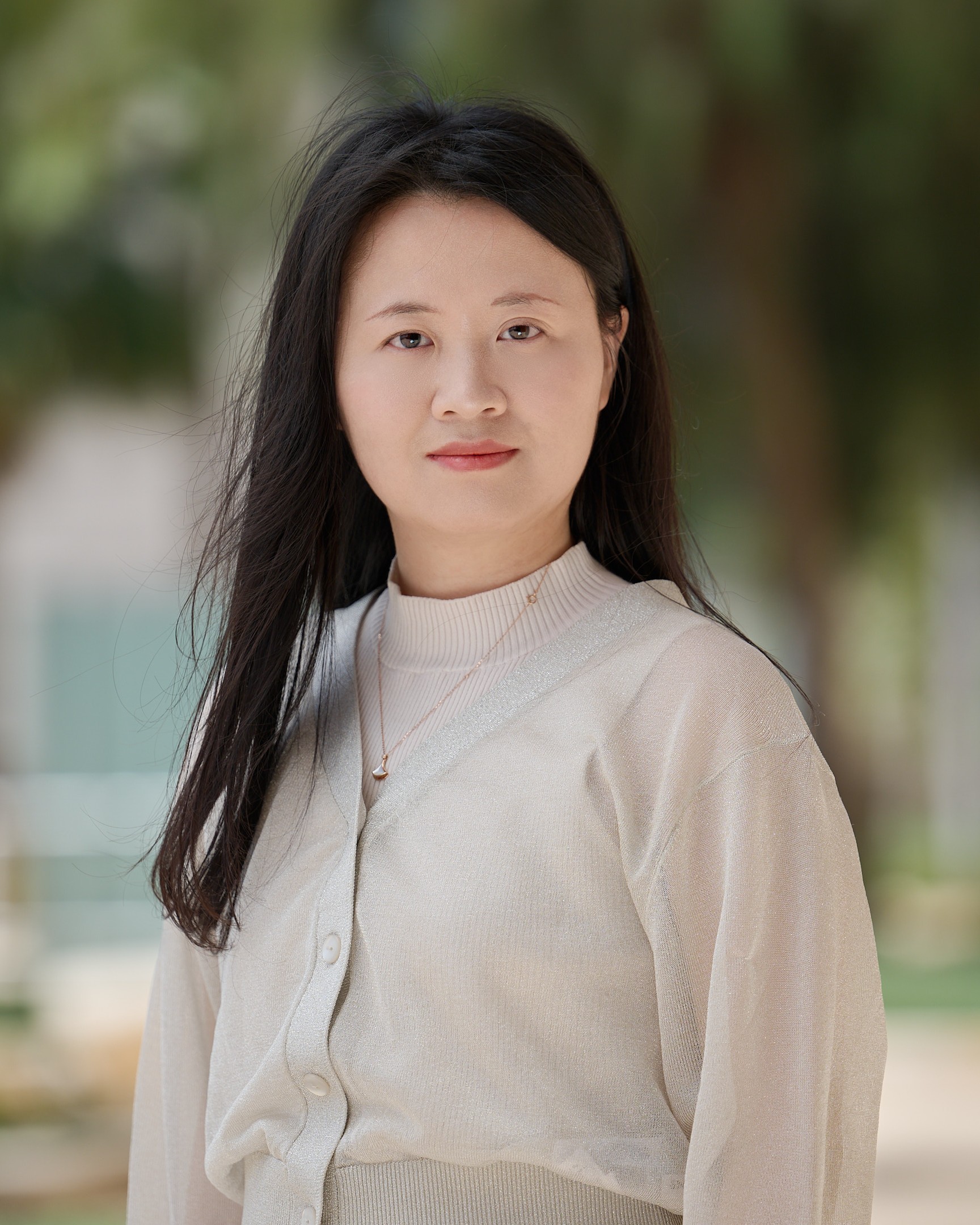}}]{Lijie Hu} is an Assistant Professor in the Machine Learning Department at Mohamed bin Zayed University of Artificial Intelligence (MBZUAI). She received her Ph.D. in Computer Science from King Abdullah University of Science and Technology (KAUST) in 2025, where she had the privilege of being advised by Prof. Di Wang, who leads the PRADA Lab (Provable Responsible AI and Data Analytics Lab). Prior to that, she obtained her Master’s degree in Mathematics from Renmin University of China.
\end{IEEEbiography}

\section{Influence Function}
Consider a neural network $\hat{\theta}=\arg\min_\theta \sum_{i=1}^n \ell(z_i, \theta)$ with loss function $L$ and dataset $D=\{z_i\}_{i=1}^n$. That is $\hat{\theta}$ minimize the empirical risk
\begin{equation*}
    R(\theta) = \sum_{i=1}^n L(z_i, \theta)
\end{equation*}
Assume $R$ is strongly convex in $\theta$. Then $\theta$ is uniquely defined. If we remove a point $z_m$ from the training dataset, the parameters become $\hat{\theta}_{-z_m} = \arg\min_\theta \sum_{i\neq m} L(z_i, \theta)$. Up-weighting $z_m$ by $\epsilon$ small enough, then the revised risk ${R(\theta)}^{'}= \frac{1}{n}\sum_{i=1}^{n} L(z_{i} ; \theta) + \epsilon L(z_m ; \theta)$ is still strongly convex.
Then the response function $\hat{\theta}_{\epsilon, -z_m} = {R(\theta)}^{'}$ is also uniquely defined. The parameter change is denoted as $\Delta_{\epsilon} = \hat{\theta}_{\epsilon, -z_m} - \hat{\theta}$.
Since $\hat{\theta}_{\epsilon, -z_m}$ is the minimizer of ${R(\theta)}^{'}$, we have the first-order optimization condition as
\begin{equation*}
    \nabla_{\hat{\theta}_{\epsilon, -z_m}} R(\theta) + \epsilon \cdot \nabla_{\hat{\theta}_{\epsilon, -z_m}} L(z_m, \hat{\theta}_{\epsilon, -z_m}) = 0
\end{equation*}
Since $\hat{\theta}_{\epsilon, -z_m} \rightarrow \hat{\theta} as \epsilon \rightarrow 0$, we perform a Taylor expansion of the right-hand side:
\begin{equation*}
 \left[ \nabla R(\hat{\theta}) + \epsilon \nabla L(z_m, \hat{\theta}) \right] + \left[ \nabla^2 R(\hat{\theta}) + \epsilon \nabla^2 L(z_m, \hat{\theta}) \right] \Delta_{\epsilon} \approx 0
\end{equation*}
Noting $\epsilon \nabla^2 L(z_m, \hat{\theta}) \Delta_{\epsilon}$ is $o(\|\Delta_{\epsilon}\|)$ term, which is smaller than other parts, we drop it in the following analysis. Then the Taylor expansion equation becomes
\begin{equation*}
 \left[ \nabla R(\hat{\theta}) + \epsilon \nabla L(z_m, \hat{\theta}) \right] +  \nabla^2 R(\hat{\theta}) \cdot \Delta_{\epsilon} \approx 0
\end{equation*}
Solving for $\Delta_{\epsilon}$, we obtain:
$$\Delta_{\epsilon} = - \left[ \nabla^2 R(\hat{\theta}) + \epsilon \nabla^2 L(z, \hat{\theta}) \right]^{-1} \left[ \nabla R(\hat{\theta}) + \epsilon \nabla L(z, \hat{\theta}) \right].$$
Remember $\theta$ minimizes $R$, then $\nabla R(\hat{\theta}) = 0$. Dropping $o(\epsilon)$ term, we have
$$\Delta_{\epsilon} = - \epsilon \nabla^2 R(\hat{\theta}) ^{-1}   \nabla L(z, \hat{\theta}).$$
\begin{equation*}
\left. \frac{d \hat{\theta}_{\epsilon, -z_m}}{d \epsilon} \right|_{\epsilon=0} =\left. \frac{d \Delta_{\epsilon}}{d \epsilon} \right|_{\epsilon=0} = -H_{\hat{\theta}}^{-1} \nabla L(z, \hat{\theta}) \equiv \mathcal{I}_{up, params}(z).
\end{equation*}
Besides, we can obtain the approximation of $\hat{\theta}_{-z_m}$ directly by $\hat{\theta}_{-z_m} \approx \hat{\theta}+\mathcal{I}_{up, params}(z)$.

\section{Acceleration for Influence Function} \label{sec:further}
%\paragraph{Inverse-Hessian-vector Product.} Although the influence function is based on a solid statistical framework, it can be computationally demanding in calculating inverse-Hessian-vector product (IHVP), i.e., H−1vH^{-1}v for some vector vv.

\paragraph{EK-FAC.}
EK-FAC method relies on two approximations to the Fisher information matrix, equivalent to $G_{\hat{\theta}}$ in our setting, which makes it feasible to compute the inverse of the matrix.

Firstly, assume that the derivatives of the weights in different layers are uncorrelated, which implies that $G_{\hat{\theta}}$ has a block-diagonal structure.
Suppose $\hat{g}_\theta$ can be denoted by $\hat{g}_{\theta}(x) = g_{\theta_L} \circ \cdots \circ g_{\theta_l} \circ \cdots \circ g_{\theta_1}(x)$ where $l \in [L]$. We fold the bias into the weights and vectorize the parameters in the $l$-th layer into a vector $\theta_{l}\in\mathbb{R}^{d_l}$, $d_l\in \mathbb{N}$ is the number of $l$-th layer parameters.  
Then $G_{\hat{\theta}}$ can be reaplcaed by $\left(G_1(\hat{\theta}), \cdots, G_L(\hat{\theta})\right)$, where $G_l(\hat{\theta}) \triangleq n^{-1}\sum_{i=1}^n\nabla_{\hat{\theta}           _l}\ell_i \nabla_{\theta_l}\ell_i^{\mathrm{T}}$. Denote $h_{l}$, $o_l$ as the output and pre-activated output of $l$-th layer. Then $G_l(\theta)$ can be approximated by
\begin{equation*}
\begin{split}
{G}_l(\theta) \approx \hat{G}_l(\theta) 
& \triangleq \frac{1}{n} \sum_{i=1}^{n} h_{l-1}\left(x_{i}\right) h_{l-1}\left(x_{i}\right)^{T} \\
& \otimes \frac{1}{n} \sum_{i=1}^{n} \nabla_{o_{l}} \ell_{i} \nabla_{o_{l}} \ell_{i}^{T} \triangleq {\Omega}_{l-1} \otimes {\Gamma_{l}}.
\end{split}
\end{equation*}
Furthermore, in order to accelerate transpose operation and introduce the damping term, perform eigenvalue decomposition of matrix ${\Omega}_{l-1}$ and ${\Gamma_{l}}$ and obtain the corresponding decomposition results as $Q_{\Omega} {\Lambda}_{{\Omega}} {Q}_{{\Omega}}^{\top}$ and ${Q}_{\Gamma}{\Lambda}_{\Gamma} {Q}_{\Gamma}^{\top}$. Then the inverse of $ \hat{H}_l(\theta)$ can be obtained by
\begin{equation*}
\begin{split}
\hat{H}_l(\theta)^{-1} 
&\approx \left(\hat{G}_{l}\left(\hat{g}\right)+\lambda_{l} I_{d_{l}}\right)^{-1} \\
&= \left(Q_{\Omega_{l-1}} \otimes Q_{\Gamma_{l}}\right)
   \left(\Lambda_{\Omega_{l-1}} \otimes \Lambda_{\Gamma_{l}}+\lambda_{l} I_{d_{l}}\right)^{-1} \\
&\quad \times \left(Q_{\Omega_{l-1}} \otimes Q_{\Gamma_{l}}\right)^{\mathrm{T}}.
\end{split}
\end{equation*}
Besides, \cite{george2018fast} proposed a new method that corrects the error in equation \ref{apa:ekfac:1} which sets the $i$-th diagonal element of $\Lambda_{\Omega_{l-1}} \otimes \Lambda_{\Gamma_{l}}$ as $\Lambda_{ii}^{*} = n^{-1} \sum_{j =1}^n \left( \left( Q_{\Omega_{l-1}}\otimes Q_{\Gamma_{l}}\right)\nabla_{\theta_l}\ell_j\right)^2_i.$

\subsection{EK-FAC for CBMs}\label{sec:FAC_CBM}
In our CBM model, the label predictor is a single linear layer, and Hessian computing costs are affordable. However, the concept predictor is based on Resnet-18, which has many parameters. Therefore, we perform EK-FAC for $\hat{g}$.
\begin{equation*}
    \hat{g} = \argmin_g \sum_{j=1}^{k}  L_{C_j} = \argmin_g \sum_{j=1}^k\sum_{i=1}^n L_{C}(g^j(x_i),c_i^j),
\end{equation*}
we define $H_{\hat{g}} = \nabla^2_{\hat{g}}\sum_{i,j} L_{C_j}(g(x_i), c_i)$ as the Hessian matrix of the loss function with respect to the parameters. 

To this end, consider the $l$-th layer of $\hat{g}$ which takes as input a layer of activations $\{a_{j,t}\}$ where $j\in\{1, 2, \ldots, J\}$ indexes the input map and $t \in \mathcal{T}$ indexes the spatial location which is typically a 2-D grid. This layer is parameterized by a set of weights $W = \left(w_{i, j, \delta}\right)$ and biases $b = \left(b_{i}\right)$, where $i \in\{1, \ldots, I\}$ indexes the output map, and $\delta \in \Delta$ indexes the spatial offset (from the center of the filter).

The convolution layer computes a set of pre-activations as 
\begin{equation*}
    [S_l]_{i,t} = s_{i, t}=\sum_{\delta \in \Delta} w_{i, j, \delta} a_{j, t+\delta}+b_{i}.
\end{equation*}
Denote the loss derivative with respect to $s_{i,t}$ as 
\begin{equation*}
    \mathcal{D}s_{i, t} = \frac{\partial \sum L_{C_j}}{\partial s_{i, t}},
\end{equation*}
which can be computed during backpropagation.

The activations are actually stored as $A_{l-1}$ of dimension $|\mathcal{T}|\times J$. Similarly, the weights are stored as an $I \times|\Delta| J$ array $W_l$. The straightforward implementation of convolution, though highly parallel in theory, suffers from poor memory access patterns. Instead, efficient implementations typically leverage what is known as the expansion operator $\llbracket \cdot \rrbracket$. For instance, 
$\llbracket A_{l-1} \rrbracket$ is a $|\mathcal{T}|\times J|\Delta|$ matrix, defined as
\begin{equation*}
    \llbracket {{A}}_{l-1} \rrbracket_{t, j|\Delta|+\delta}=\left[{{A}}_{l-1}\right]_{(t+\delta), j}=a_{j, t+\delta},
\end{equation*}

In order to fold the bias into the weights, we need to add a homogeneous coordinate (i.e. a column of all $1$’s) to the expanded activations $\llbracket A_{l-1} \rrbracket$ and denote this as $\llbracket A_{l-1} \rrbracket_{\mathrm{H}}$. Concatenating the bias vector to the weights matrix, then we have $\theta_l = (b_l, W_l)$.

Then, the approximation for $H_{\hat{g}}$ is given as:
\begin{align*}
G^{(l)}(\hat{g}) 
&= \mathbb{E}\left[\mathcal{D} w_{i, j, \delta} \mathcal{D} w_{i^{\prime}, j^{\prime}, \delta^{\prime}}\right] \\
&= \mathbb{E}\left[\left(\sum_{t \in \mathcal{T}} a_{j, t+\delta} \mathcal{D} s_{i, t}\right)
   \left(\sum_{t^{\prime} \in \mathcal{T}} a_{j^{\prime}, t^{\prime}+\delta^{\prime}} \mathcal{D} s_{i^{\prime}, t^{\prime}}\right)\right] \\
&\approx \mathbb{E}\left[\llbracket {A}_{l-1} \rrbracket_{\mathrm{H}}^{\top} \llbracket {A}_{l-1} \rrbracket_{\mathrm{H}}\right] \otimes \frac{1}{|\mathcal{T}|} \mathbb{E}\left[\mathcal{D} {S}_{l}^{\top} \mathcal{D} {S}_{l}\right] \\
&\triangleq \Omega_{l-1}\otimes \Gamma_{l}.
\end{align*}
Estimate the expectation using the mean of the training set, 
\begin{align*}
G^{(l)}(\hat{g}) 
&\approx \frac{1}{n} \sum_{i=1}^{n} \left( \llbracket {A}^i_{l-1} \rrbracket_{\mathrm{H}}^{\top} \llbracket {A}^i_{l-1} \rrbracket_{\mathrm{H}} \right) \otimes \frac{1}{n} \sum_{i=1}^{n} \left( \frac{1}{|\mathcal{T}|} \mathcal{D} {{S}^i_{l}}^{\top} \mathcal{D} {S}^i_{l} \right) \\
&\triangleq \hat{\Omega}_{l-1} \otimes \hat{\Gamma}_{l}.
\end{align*}
Furthermore, if the factors $\hat{\Omega}_{l-1}$ and $\hat{\Gamma}_{l}$ have eigen decomposition $Q_{\Omega} {\Lambda}_{{\Omega}} {Q}_{{\Omega}}^{\top}$ and ${Q}_{\Gamma}{\Lambda}_{\Gamma} {Q}_{\Gamma}^{\top}$, respectively, then the eigen decomposition of $\hat{\Omega}_{l-1}\otimes \hat{\Gamma_{l}}$ can be written as:  
\begin{align*}  
\hat{\Omega}_{l-1}\otimes \hat{\Gamma}_{l} & ={Q}_{  {\Omega}} {\Lambda}_{  {\Omega}} {Q}_{  {\Omega}}^{\top} \otimes {Q}_{  {\Gamma}} {\Lambda}_{  {\Gamma}} {Q}_{  {\Gamma}}^{\top} \\  
& =\left({Q}_{  {\Omega}} \otimes {Q}_{   {\Gamma}}\right)\left(  {\Lambda}_{   {\Omega}} \otimes  {\Lambda}_{   {\Gamma}}\right)\left( {Q}_{   {\Omega}} \otimes  {Q}_{   {\Gamma}}\right)^{\top}.  
\end{align*}  

Since subsequent inverse operations are required and the current approximation for $G^{(l)}(\hat{g})$ is PSD, we actually use a damped version as 
\begin{equation}\label{apa:ekfac:1}
\begin{split}
{\hat{G}^{l}}(\hat{g})^{-1} 
&= \left(G_{l}\left(\hat{g}\right) + \lambda_{l} I_{d_{l}}\right)^{-1} \\
&= \left(Q_{\Omega_{l-1}} \otimes Q_{\Gamma_{l}}\right) \cdot \left(\Lambda_{\Omega_{l-1}} \otimes \Lambda_{\Gamma_{l}} + \lambda_{l} I_{d_{l}}\right)^{-1} \\
&\quad \cdot \left(Q_{\Omega_{l-1}} \otimes Q_{\Gamma_{l}}\right)^{\mathsf{T}}.
\end{split}
\end{equation}

Besides, \cite{george2018fast} proposed a new method that corrects the error in equation \ref{apa:ekfac:1} which sets the $i$-th diagonal element of $\Lambda_{\Omega_{l-1}} \otimes \Lambda_{\Gamma_{l}}$ as
\begin{equation*}
    \Lambda_{ii}^{*} = n^{-1} \sum_{j =1}^n \left( \left( Q_{\Omega_{l-1}}\otimes Q_{\Gamma_{l}}\right)\nabla_{\theta_l}\ell_j\right)^2_i.
\end{equation*}

\section{Algorithm}
\begin{breakablealgorithm}
\caption{Concept-label-level CCBM \label{alg:1}}
\begin{algorithmic}[1]
\STATE {\bf Input:} Dataset $\mathcal{D} = \{ (x_i, y_i, c_i) \}_{i=1}^{n}$, original concept predictor $\hat{f}$, and label predictor $\hat{g}$, a set of erroneous data $D_e$ and its associated index set $S_e$.
\STATE For the index $(w, r)$ in $S_e$, correct ${c_w^r}$ to the right label ${c_w^r}^{\prime}$ for the $w$-th data $(x_w, y_w, c_w)$.
\STATE Compute the Hessian matrix of the loss function respect to $\hat{g}$:
$$H_{\hat{g}} = \nabla^2_{\hat{g}}  \sum_{i,j}  L_{C_j}(\hat{g}^j(x_i),c_i^j).$$
\STATE Update concept predictor $\Tilde{g}$:
\begin{equation*}
\begin{split}
\Tilde{g} 
&= \hat{g} - H^{-1}_{\hat{g}} \cdot \sum_{(w,r)\in S_e} \Bigl( \nabla_{\hat{g}} L_{C_r}\left(\hat{g}^r(x_w), {c_w^r}^{\prime}\right) \\
&\quad - \nabla_{\hat{g}} L_{C_r}\left(\hat{g}^r(x_w), c_w^r\right) \Bigr).
\end{split}
\end{equation*}
\STATE Compute the Hessian matrix of the loss function respect to $\hat{f}$:
$$H_{\hat{f}} = \nabla^2_{\hat{f}}\sum_{i=1}^n L_{Y_i}(\hat{f}, \hat{g}).$$
\STATE Update label predictor $\Tilde{f}$:
\begin{align*}
\Tilde{f} 
&= \hat{f} 
+ H^{-1}_{\hat{f}} \cdot \nabla_f \sum_{i=1}^n L_{Y} \left( \hat{f} \left( \hat{g} ( x_{i} ) \right), y_{i} \right) \\
&- H^{-1}_{\hat{f}} \cdot \nabla_f \sum_{l=1}^n \left( L_Y \left( \hat{f} \left( \Tilde{g}(x_l) \right), y_{l} \right) \right).
\end{align*}
\STATE {\bf Return:} $\Tilde{f}$, $\Tilde{g}$. 
\end{algorithmic}
\end{breakablealgorithm}

\begin{breakablealgorithm}
\caption{Concept-level CCBM \label{alg:2}}
\begin{algorithmic}[1]
    \STATE {\bf Input:} Dataset $\mathcal{D} = \{ (x_i, y_i, c_i) \}_{i=1}^{ n}$, original concept predictor $\hat{f}$, label predictor $\hat{g}$ and the to be removed concept index set $M$.
    \STATE For $r\in M$, set $p_r = 0$ for all the data $z\in \mathcal{D}$.
    \STATE Compute the Hessian matrix of the loss function respect to $\hat{g}$:
    $$H_{\hat{g}} = \nabla^2_{\hat{g}}  \sum_{j\notin M}\sum_{i=1}^n  L_{C_j}(\hat{g}^j(x_i),c_i^j).$$
    \STATE Update concept predictor $\Tilde{g}^*$:
    \begin{equation*}
        \Tilde{g}^* = \hat{g} - H^{-1}_{\hat{g}} \cdot \nabla_{\hat{g}} \sum_{j\notin M} \sum_{i=1}^{n} L_{C_j} ( \hat{g}^{j} (x_i), c_i^j).
    \end{equation*}
    \STATE Compute the Hessian matrix of the loss function respect to $\hat{f}$:
    $$H_{\hat{f}} = \nabla^2_{\hat{f}}  \sum_{i=1}^n  L_{Y}(\hat{f}(\hat{g}(x_i), y_i).$$
    \STATE Update label predictor $\Tilde{f}$:
    \begin{equation*}
    \Tilde{f} =  \hat{f}-H_{\hat{f}}^{-1} \cdot   \nabla_{\hat{f}} \sum_{l   =1}^{n} L_{Y} \left( \hat{f} \left( \Tilde{g}^*(x_l) \right)  , y_l \right).
    \end{equation*}
    \STATE Map $\Tilde{g}^*$ to $\Tilde{g}$ by removing the $r$-th row of the matrix in the final layer of $\Tilde{g}^*$ for $r\in M$.
    \STATE {\bf Return:}$\Tilde{f}$, $\Tilde{g}$.
\end{algorithmic}
\end{breakablealgorithm}

\begin{breakablealgorithm}
\caption{Data-level CCBM (Removal) \label{alg:3}}
\begin{algorithmic}[1]
    \STATE {\bf Input:} Dataset $\mathcal{D} = \{ (x_i, y_i, c_i) \}_{i=1}^{N}$, original concept predictor $\hat{f}$, label predictor $\hat{g}$, and the to be removed data index set $G$.
    \STATE For $r\in G$, remove the $r$-th data $(x_r, y_r, c_r)$ from $\mathcal{D}$ and define the new dataset as $\mathcal{S}$.
    \STATE Compute the Hessian matrix of the loss function with respect to $\hat{g}$:
    $$H_{\hat{g}} = \nabla^2_{\hat{g}}  \sum_{i,j}  L_{C_j}(\hat{g}^j(x_i),c_i^j).$$
    \STATE Update concept predictor $\Tilde{g}$:
    \begin{equation*}
        \Tilde{g} = \hat{g} + H^{-1}_{\hat{g}} \cdot  \sum_{r\in G}\nabla_{g} L_{C} (\hat{g}(x_r), c_r)
    \end{equation*}
    \STATE Update label predictor $\Tilde{f}$.
    Compute the Hessian matrix of the loss function with respect to $\hat{f}$:
$$H_{\hat{f}} = \nabla^2_{\hat{f}}  \sum_{i=1}^n  L_Y(\hat{f}(\hat{g}(x_i), y_i).$$
\STATE   Compute $A$ as:
    \begin{equation*}
           A =  H^{-1}_{\hat{f}} \cdot  \sum_{i\in [n]-G} \nabla_{\hat{f}} L_{Y}\left(\hat{f}(\hat{g}(x_i)), y_i\right)
    \end{equation*}

\STATE Obtain $\bar{f}$ as 
    \begin{equation*}
        \bar{f} = \hat{f} + A
    \end{equation*}
\STATE Compute the Hessian matrix of the loss function concerning $\bar{f}$:
$$H_{\bar{f}} = \nabla^2_{\bar{f}}  \sum_{i\in [n]-G}  L_Y(\bar{f}(\hat{g}(x_i)),y_i).$$
\STATE Compute $B$ as
\begin{equation*}
\begin{split}
B = & -H^{-1}_{\bar{f}} \cdot \sum_{i\in [n]-G} \nabla_{\hat{f}} \Bigl( 
        L_Y \left( \bar{f}(\Tilde{g}(x_i)), y_i \right) \\
    & - L_Y \left( \bar{f}(\hat{g}(x_i)), y_i \right) \Bigr).
\end{split}
\end{equation*}
\STATE Update the label predictor $\Tilde{f}$ as: $\Tilde{f} =  \hat{f} + A + B.$
    \STATE {\bf Return: $\Tilde{f}$, $\Tilde{g}$}.  
\end{algorithmic}
\end{breakablealgorithm}

\begin{breakablealgorithm}
\caption{Data-level CCBM (Addition) \label{alg:3}}
\begin{algorithmic}[1]
    \STATE {\bf Input:} Dataset $\mathcal{D} = \{ (x_i, y_i, c_i) \}_{i=1}^{N}$, original concept predictor $\hat{f}$, label predictor $\hat{g}$, and the data set to be added $\tilde{\mathcal{D}} = \{ (\tilde{x}_i, \tilde{y}_i, \tilde{c}_i) \}_{i=1}^{M}$.
    % \STATE For $r\in G$, remove the $r$-th data $(x_r, y_r, c_r)$ from $\mathcal{D}$ and define the new dataset as $\mathcal{S}$.
    \STATE Compute the Hessian matrix of the loss function with respect to $\hat{g}$:
    $$H_{\hat{g}} = \nabla^2_{\hat{g}}  \sum_{i,j}  L_{C_j}(\hat{g}^j(x_i),c_i^j).$$
    \STATE Update concept predictor $\Tilde{g}$:
    \begin{equation*}
        \Tilde{g} = \hat{g} - H^{-1}_{\hat{g}} \cdot  \sum_{i\in [M]}\nabla_{g} L_{C} (\hat{g}(\tilde{x}_i), \tilde{c}_i)
    \end{equation*}
    \STATE Update label predictor $\Tilde{f}$.
    Compute the Hessian matrix of the loss function with respect to $\hat{f}$:
$$H_{\hat{f}} = \nabla^2_{\hat{f}}  \sum_{i=1}^n  L_Y(\hat{f}(\hat{g}(x_i), y_i).$$
\STATE   Compute $A$ as:
\begin{equation*}
\begin{split}
A = & - H^{-1}_{\hat{f}} \cdot \Biggl( \sum_{i\in [N]} \nabla_{\hat{f}} L_{Y}\left(\hat{f}(\hat{g}(x_i)), y_i\right) \\
    & + \sum_{i\in [M]} \nabla_{\hat{f}} L_{Y}\left(\hat{f}(\hat{g}(\tilde{x}_i)), \tilde{y}_i\right) \Biggr)
\end{split}
\end{equation*}

\STATE Obtain $\bar{f}$ as 
    \begin{equation*}
        \bar{f} = \hat{f} + A
    \end{equation*}
\STATE Compute the Hessian matrix of the loss function concerning $\bar{f}$:
\begin{equation*}
\begin{split}
H_{\bar{f}} = \nabla^2_{\bar{f}} \Biggl( 
    & \sum_{i\in [N]} L_Y \left( \bar{f}(\hat{g}(x_i)), y_i \right) \\
    & + \sum_{i\in [M]} L_Y \left( \bar{f}(\hat{g}(\tilde{x}_i)), \tilde{y}_i \right) 
\Biggr).
\end{split}
\end{equation*}
\STATE Compute $B$ as
\begin{equation*}
\begin{split}
B = & -H^{-1}_{\bar{f}} \cdot \sum_{i \in [n]\setminus G} \nabla_{\hat{f}} \Bigl( 
        L_Y \bigl( \bar{f}(\Tilde{g}(x_i)), y_i \bigr) \\
    & - L_Y \bigl( \bar{f}(\hat{g}(x_i)), y_i \bigr) \Bigr).
\end{split}
\end{equation*}
\STATE Update the label predictor $\Tilde{f}$ as: $\Tilde{f} =  \hat{f} + A + B.$
    \STATE {\bf Return: $\Tilde{f}$, $\Tilde{g}$}.  
\end{algorithmic}
\end{breakablealgorithm}

\begin{breakablealgorithm}
\caption{EK-FAC for Concept Predictor $g$ \label{alg:ekg}}
\begin{algorithmic}[1]
    \STATE {\bf Input:} Dataset $\mathcal{D} = \{ (x_i, y_i, c_i) \}_{i=1}^{N}$, original concept predictor $\hat{g}$.
\FOR{the $l$-th convolution layer of $\hat{g}$:}     
\STATE Define the input activations $\{a_{j,t}\}$, weights $W = \left(w_{i, j, \delta}\right)$, and biases $b = \left(b_{i}\right)$ of this layer;
\STATE Obtain the expanded activations $\llbracket {{A}}_{l-1} \rrbracket$ as: 
$$    \llbracket {{A}}_{l-1} \rrbracket_{t, j|\Delta|+\delta}=\left[{{A}}_{l-1}\right]_{(t+\delta), j}=a_{j, t+\delta},$$
\STATE Compute the pre-activations:
$$[S_l]_{i,t} = s_{i, t}=\sum_{\delta \in \Delta} w_{i, j, \delta} a_{j, t+\delta}+b_{i}. $$
\STATE During the backpropagation process, obtain the $\mathcal{D}s_{i, t}$ as:
\begin{equation*}
    \mathcal{D}s_{i, t} = \frac{\partial \sum_{j=1}^k\sum_{i=1}^n L_{C_j}}{\partial s_{i, t}}
\end{equation*}
\STATE Compute $\hat{\Omega}_{l-1}$ and $\hat{\Gamma_{l}}$:
\begin{align*}
    \hat{\Omega}_{l-1} =& \frac{1}{n} \sum_{i=1}^{n}\left(\llbracket {A}^i_{l-1} \rrbracket_{\mathrm{H}}^{\top} \llbracket {A}^i_{l-1} \rrbracket_{\mathrm{H}}\right)\\
    \hat{\Gamma_{l}} =& \frac{1}{n} \sum_{i=1}^{n}\left(\frac{1}{|\mathcal{T}|}\mathcal{D} {{S}^i_{l}}^{\top} \mathcal{D} {S}^i_{l}\right)
\end{align*}
\STATE Perform eigenvalue decomposition of $\hat{\Omega}_{l-1}$ and $\hat{\Gamma}_{l}$, obtain $Q_{\Omega}, {\Lambda}_{{\Omega}}, {Q}_{\Gamma}, {\Lambda}_{\Gamma}$, which satisfies 
\begin{align*}
    \hat{\Omega}_{l-1} &= Q_{\Omega} {\Lambda}_{{\Omega}} {Q}_{{\Omega}}^{\top}\\
    \hat{\Gamma}_{l} &= {Q}_{\Gamma}{\Lambda}_{\Gamma} {Q}_{\Gamma}^{\top}
\end{align*}
\STATE Define a diagonal matrix 
$\Lambda$ and compute the diagonal element as
$$    \Lambda_{ii}^{*} = n^{-1} \sum_{j =1}^n \left( \left( Q_{\Omega_{l-1}}\otimes Q_{\Gamma_{l}}\right)\nabla_{\theta_l}L_{C_j}\right)^2_i.$$
\STATE Compute $\hat{H}_l^{-1}$ as 
\begin{equation*}
    \hat{H}_l^{-1} = \left(Q_{\Omega_{l-1}} \otimes Q_{\Gamma_{l}}\right)\left(\Lambda+\lambda_{l} I_{d_{l}}\right)^{-1}\left(Q_{\Omega_{l-1}} \otimes Q_{\Gamma_{l}}\right)^{\mathrm{T}}
\end{equation*}
\ENDFOR
\STATE Splice $H_l$ sequentially into large diagonal matrices
$$\hat{H}_{\hat{g}}^{-1} = \left(\begin{array}{ccc}
\hat{H}_1^{-1} & & \mathbf{0} \\
& \ddots & \\
\mathbf{0} & & \hat{H}_d^{-1}
\end{array}\right)$$
where $d$ is the number of the convolution layer of the concept predictor.
\STATE {\bf Return: the inverse Hessian matrix $\hat{H}_{\hat{g}}^{-1}$}.  
\end{algorithmic}
\end{breakablealgorithm}

\begin{breakablealgorithm}
\caption{EK-FAC for Label Predictor $f$ \label{alg:ekf}}
\begin{algorithmic}[1]
    \STATE {\bf Input:} Dataset $\mathcal{D} = \{ (x_i, y_i, c_i) \}_{i=1}^{N}$, original label predictor $\hat{f}$.
 \STATE Denote the pre-activated output of $\hat{f}$ as $f^{\prime}$, 
    Compute $A$ as 
    \begin{equation*}
        A = \frac{1}{n} \cdot \sum_{i=1}^n \hat{g}(x_i)\cdot\hat{g}(x_i)^{\mathrm{T}}  
    \end{equation*}
    \STATE Comput $B$ as:
    \begin{equation*}
        B= \frac{1}{n} \cdot \sum_{i=1}^n \nabla_{f^{\prime}}L_Y(\hat{f}\left(\hat{g}(x_i)\right), y_i)\cdot {\nabla_{f^{\prime}}L_Y(\hat{f}\left(\hat{g}(x_i)\right), y_i)}^{\mathrm{T}}
    \end{equation*}
    \STATE Perform eigenvalue decomposition of AA and BB, obtain $Q_{A}, {\Lambda}_{{A}}, {Q}_{B}, {\Lambda}_{B}$, which satisfies 
        \begin{align*}
           A &= Q_{A} {\Lambda}_{{A}} {Q}_{{A}}^{\top}\\
            B &= {Q}_{B}{\Lambda}_{B} {Q}_{B}^{\top}
        \end{align*}
\STATE Define a diagonal matrix 
$\Lambda$ and compute the diagonal element as
$$    \Lambda_{ii}^{*} = n^{-1} \sum_{j =1}^n \left( \left( Q_{A}\otimes Q_{B}\right)\nabla_{\hat{f}}L_{Y_j}\right)^2_i.$$
\STATE Compute $\hat{H}_{\hat{f}}^{-1}$ as 
\begin{equation*}
   \hat{H}_{\hat{f}}^{-1} = \left(Q_{A} \otimes Q_{B}\right)\left(\Lambda+\lambda I_{d}\right)^{-1}\left(Q_{A} \otimes Q_{B}\right)^{\mathrm{T}}
\end{equation*}
    \STATE {\bf Return: the inverse Hessian matrix $\hat{H}_{\hat{f}}^{-1}$}.  
\end{algorithmic}
\end{breakablealgorithm}

\begin{breakablealgorithm}
\caption{EK-FAC Concept-label-level CCBM \label{alg:4}}
\begin{algorithmic}[1]
\STATE {\bf Input:} Dataset $\mathcal{D} = \{ (x_i, y_i, c_i) \}_{i=1}^{N}$, original concept predictor $\hat{f}$, label predictor $\hat{g}$, and the to be removed data index set $G$, and damping parameter $\lambda$.
\STATE For $r\in G$, remove the $r$-th data $(x_r, y_r, c_r)$ from $\mathcal{D}$ and define the new dataset as $\mathcal{S}$.
\STATE {\bf Use EK-FAC method in algorithm \ref{alg:ekg} to accelerate iHVP problem for $\hat{g}$ and obtain the inverse Hessian matrix $\hat{H}_{\hat{g}}^{-1}$}
    \STATE Update concept predictor $\Tilde{g}$:
\begin{equation*}
\begin{split}
\tilde{g} 
&= \hat{g} - H^{-1}_{\hat{g}} \cdot \sum_{(w,r)\in S_e} \Bigl( \nabla_{\hat{g}} L_{C_r}\left(\hat{g}^r(x_w), {c_w^r}^{\prime}\right) \\
&\quad - \nabla_{\hat{g}} L_{C_r}\left(\hat{g}^r(x_w), c_w^r\right) \Bigr).
\end{split}
\end{equation*}
    \STATE {\bf Use EK-FAC method in algorithm \ref{alg:ekf} to accelerate iHVP problem for $\hat{f}$ and obtain $\hat{H}_{\hat{f}}^{-1}$}
    % \STATE Compute the Hessian matrix of the loss function with respect to ˆf\hat{f}:
    % \begin{equation*}
    %     H_{\hat{f}} = \nabla^2_{\hat{f}}\sum_{i=1}^n L_{Y_i}(\hat{f}, \hat{g})
    % \end{equation*}
    \STATE Update label predictor $\Tilde{f}$:
\begin{align*}
\Tilde{f} 
&= \hat{f} 
+ H^{-1}_{\hat{f}} \cdot \nabla_f \sum_{i=1}^n L_{Y} \left( \hat{f} \left( \hat{g} (x_{i}) \right), y_{i} \right) \\
&- H^{-1}_{\hat{f}} \cdot \nabla_f \sum_{l=1}^n \left( L_Y \left( \hat{f} \left( \tilde{g}(x_l) \right), y_{l} \right) \right).
\end{align*}
    \STATE {\bf Return: $\Tilde{f}$, $\Tilde{g}$}.  
\end{algorithmic}
\end{breakablealgorithm}

\begin{breakablealgorithm}
\caption{EK-FAC Concept-level CCBM \label{alg:5}}
\begin{algorithmic}[1]
\STATE {\bf Input:} Dataset $\mathcal{D} = \{ (x_i, y_i, c_i) \}_{i=1}^{ n}$, original concept predictor $\hat{f}$, label predictor $\hat{g}$ and the to be removed concept index set $M$, and damping parameter $\lambda$.
\STATE For $r\in M$, set $p_r = 0$ for all the data $z\in \mathcal{D}$.
\STATE {\bf Use EK-FAC method in algorithm \ref{alg:ekg} to accelerate iHVP problem for $\hat{g}$ and obtain the inverse Hessian matrix $\hat{H}_{\hat{g}}^{-1}$}
    \STATE Update concept predictor $\Tilde{g}$:
 \begin{equation*}
        \Tilde{g}^* = \hat{g} - H^{-1}_{\hat{g}} \cdot \nabla_{\hat{g}} \sum_{j\notin M} \sum_{i=1}^{n} L_{C_j} ( \hat{g}^{j} (x_i), c_i^j).
    \end{equation*}
   \STATE {\bf Use EK-FAC method in algorithm \ref{alg:ekf} to accelerate iHVP problem for $\hat{f}$ and obtain $\hat{H}_{\hat{f}}^{-1}$}
    \STATE Update label predictor $\Tilde{f}$:
    \begin{equation*}
    \Tilde{f} =  \hat{f}-H_{\hat{f}}^{-1} \cdot   \nabla_{\hat{f}} \sum_{l   =1}^{n} L_{Y} \left( \hat{f} \left( \Tilde{g}^*(x_l) \right)  , y_l \right).
    \end{equation*}
    % \begin{equation*}
    %         \Tilde{f} =  \hat{f}-H_{\hat{f}}^{-1} \cdot   \nabla_{\hat{f}} \sum_{l   =1}^{n} L_{Y_i} \left( \hat{f} \left( \hat{g} (x_l) - x_l^{\mathrm{T}} \cdot H^{-1}_{\hat{g}} \cdot \nabla_{\hat{g}} \sum_{j=1, j \neq r}^{k} \sum_{i=1}^{n} \ell ( \hat{g}^{j} (x_i), c^j_i) \right)  , y_l \right).
    % \end{equation*}
    \STATE Map $\Tilde{g}^*$ to $\Tilde{g}$ by removing the $r$-th row of the matrix in the final layer of $\Tilde{g}^*$ for $r\in M$.
    \STATE {\bf Return: $\Tilde{f}$, $\Tilde{g}$.}
\end{algorithmic}
\end{breakablealgorithm}

\begin{breakablealgorithm}
    \caption{EK-FAC Data-level CCBM \label{alg:6}}
    \begin{algorithmic}[1]
\STATE {\bf Input:} Dataset $\mathcal{D} = \{ (x_i, y_i, c_i) \}_{i=1}^{n}$, original concept predictor $\hat{f}$, and label predictor $\hat{g}$, a set of erroneous data $D_e$ and its associated index set $S_e$, and damping parameter $\lambda$.
\STATE For the index $(w, r)$ in $S_e$, correct ${c_w^r}$ to the right label ${c_w^r}^{\prime}$ for the $w$-th data $(x_w, y_w, c_w)$.
\STATE {\bf Use EK-FAC method in algorithm \ref{alg:ekg} to accelerate iHVP problem for $\hat{g}$ and obtain the inverse Hessian matrix $\hat{H}_{\hat{g}}^{-1}$}
\STATE Update concept predictor $\Tilde{g}$:
\begin{equation*}
\begin{split}
\tilde{g} 
&= \hat{g} - H^{-1}_{\hat{g}} \cdot \sum_{(w,r)\in S_e} \Bigl( \nabla_{\hat{g}} L_{C_r}\left(\hat{g}^r(x_w), {c_w^r}^{\prime}\right) \\
&\quad - \nabla_{\hat{g}} L_{C_r}\left(\hat{g}^r(x_w), c_w^r\right) \Bigr).
\end{split}
\end{equation*}
\STATE {\bf Use EK-FAC method in algorithm \ref{alg:ekf} to accelerate iHVP problem for $\hat{f}$ and obtain $H_{\hat{f}}^{-1}$}
Compute $A$ as:
\begin{equation*}
       A =  H^{-1}_{\hat{f}} \cdot  \sum_{i\in [n]-G} \nabla_{\hat{f}} L_{Y}\left(\hat{f}(\hat{g}(x_i)), y_i\right)
\end{equation*}

Obtain $\bar{f}$ as 
\begin{equation*}
    \bar{f} = \hat{f} + A
\end{equation*}
    \STATE {\bf Use EK-FAC method in algorithm \ref{alg:ekf} to accelerate iHVP problem for $\bar{f}$ and obtain ${H}_{\bar{f}}^{-1}$}
Compute $B^{\prime}$ as
\begin{equation*}
    B^{\prime} = -H^{-1}_{\bar{f}}\cdot \sum_{i\in [n]-G}\nabla_{\hat{f}}\left(    L_Y(\bar{f}(\Tilde{g}(x_i)), y_i) - L_Y(\bar{f}(\hat{g}(x_i)), y_i)\right)
\end{equation*}
Update the label predictor $\Tilde{f}$ as: $\Tilde{f} =  \hat{f} + A + B^{\prime}$.
\STATE {\bf Return: $\Tilde{f}$, $\Tilde{g}$}.  
\end{algorithmic}
\end{breakablealgorithm}
   \subsection{Theoretical Bound for the Influence Function}\label{app:bound_cc}
 Consider the dataset $\mathcal{D}=\{(x_i,c_i,y_i\}{i=1}^n$, the loss function of the concept predictor $g$ is defined as:
 \begin{align*}
L_\text{Total}(\mathcal{D};g)
=\sum_{i=1}^n L_{C}(g(x_i),{c}_i)+ \frac{\delta}{2}\cdot \|g\|^2\\
=
\sum_{i=1}^n \sum_{j=1}^k L_{C}^j(g(x_i),{c_i}) + \frac{\delta}{2}\cdot \|g\|^2\\
 =
\sum_{i=1}^n\sum_{j=1}^k g^j(x_i)^\top \log({c_i}^j)+ \frac{\delta}{2}\cdot \|g\|^2.
\end{align*}

Mathematically, we have a set of erroneous data $D_e$ and its associated index set $S_e\subseteq [n]\times [k]$ such that for each $(w, r)\in S_e$, we have $(x_w, y_w, {c}_w)\in D_e$ with $c_w^r$ is mislabeled and $\tilde{c}_w^r$ is corrected concept label. Thus, our goal is to estimate the retrained CBM. The retrained concept predictor and label predictor will be represented in the following manner. 
\begin{equation}\label{app:concept-label:g}
\begin{split}
\hat{g}_{e} = \argmin \Biggl[ 
    & \sum_{(i, j) \notin S_e} L^j_{C}\left(g(x_i), c_i\right) \\
    & + \sum_{(i, j) \in S_e} L^j_{C}\left(g(x_i), \tilde{c}_i \right) + \frac{\delta}{2} \cdot \|g\|^2 
\Biggr],
\end{split}
\end{equation}
Define the corrected dataset as $\mathcal{D}^*$. Then the loss function with the influence of erroneous data $D_e$ removed becomes
\begin{equation}\label{eq:approxiated_loss}
\begin{split}
L^-(\mathcal{D}^*; g) 
&= \sum_{(i, j) \notin S_e} L^j_{C}\left(g(x_i), c_i\right) \\
&\quad + \sum_{(i, j) \in S_e} L^j_{C}\left(g(x_i), \tilde{c}_i \right) \quad + \frac{\delta}{2} \cdot \|g\|^2.
\end{split}
\end{equation}

Assume $\hat{g} = \argmin  L_\text{Total}(\mathcal{D};g)$ is the original model parameter, and $\hat{g}_{e}(\mathcal{D}^*)$ is the minimizer of $L^-(\mathcal{D}^*; g)$, which is obtained from retraining. Denote $\bar{g}_{e}(\mathcal{D}^*)$ as the updated model with the influence of erroneous data $D_e$ removed and is obtained by the influence function method in theorem \ref{app:thm:concept-label:g}, which is an estimation for $\hat{g}_{e}(\mathcal{D}^*)$.

To simplify the problem, we concentrate on the removal of erroneous data $D_e$ and neglect the process of adding the corrected data back. Once we obtain the bound for $\hat{g}_{e}(\mathcal{D}^*) - \bar{g}_{e}(\mathcal{D}^*)$ under this circumstance, the bound for the case where the corrected data is added back can naturally be derived using a similar approach. For brevity, we use the same notations.

Then, the loss function $L^-(\mathcal{D}^*; g)$ becomes
\begin{equation}\label{eq:approxiated_loss}
    \begin{split}
       L^-(\mathcal{D}^*; g) = \sum_{(i, j) \notin S_e} L^j_{C}\left(g(x_i),c_i\right)+ \frac{\delta}{2}\cdot \|g\|^2\\
       =  L_\text{Total}(\mathcal{D};g) -\sum_{(i, j) \in S_e} L^j_{C}\left(g(x_i),c_i\right)  
    \end{split}
\end{equation}
And the definition of $\bar{g}_{e}(\mathcal{D}^*)$ becomes
\begin{equation}
     \hat{g} + H^{-1}_{\hat{g}}  \cdot \sum_{(w,r)\in S_e}    G^r_C(x_w,{c}_w;\hat{g}) 
\end{equation}
where $H_{\hat{g}} = \nabla^2_{\hat{g}}  \sum_{i,j}  L^j_{C}(\hat{g}(x_i),c_i) + \delta \cdot I$ is the Hessian matrix of the loss function with respect to $\hat{g}$. Here $\delta \cdot I$ is a small damping term for ensuring positive definiteness of the Hessian.
Introducing the damping term into the Hessian is essentially equivalent to adding a regularization term to the initial loss function. Consequently, $\delta$ can also be interpreted as the regularization strength.

In this part, we will study the error between the estimated influence given by the  theorem \ref{app:thm:concept-label:g} method and retraining. We use the parameter changes as the evaluation metric:
\begin{equation}\label{app:theorem_total}
    \left|\left(\bar{g}_{e} -\hat{g}\right) - \left( \hat{g}_e - \hat{g}\right)\right| =  \left|\bar{g}_{e} -  \hat{g}_e\right|
\end{equation}

\begin{assumption}The loss $L_{C}(x,c;g)$
\begin{equation*}
    L_{C}(x,c;g;j) =  L^j_C(g(x), c)
\end{equation*}
is convex and twice-differentiable in $g$, with positive regularization $\delta > 0$. There exists $C_H \in \mathbb{R}$ such that
$$\| \nabla^2_{g} L_{C}(x,c;g_1) - \nabla^2_{g} L_{C}(x,c;g_2)\|_{2} \leq C_H \| g_1 - g_2 \|_2$$
for all $(x, c) \in \mathcal{D}=\{(x_i,c_i)\}_{i=1}^n $, $j\in[k]$ and $g_1, g_2 \in \Gamma$. 
\end{assumption}

Then the function $L'(\mathcal{D}, S_e; g)$:
\begin{equation*}
    L'(\mathcal{D}, S_e; g) = \sum_{(i, j) \in S_e} L^j_{C}\left(g(x_i),c_i\right) = \sum_{(i, j) \in S_e} L_{C}(x_i,c_i;g;j)
\end{equation*}
is convex and twice-differentiable in $g$, with some positive regularization. Then we have
$$\| \nabla^2_{g}  L'(\mathcal{D}, S_e; g_1) - \nabla^2_{g}  L'(\mathcal{D}, S_e; g_2)\|_{2} \leq |S_e|\cdot C_H \| g_1 - g_2 \|_2$$
for  $g_1, g_2 \in \Gamma$.

\begin{corollary}
    \begin{equation*}
        \|\nabla^2_{g}  L^-(\mathcal{D}^*; g_1) - \nabla^2_{g}  L^-(\mathcal{D}^*; g_2)\|_2\leq  
     \left((nk+|S_e|)\cdot C_H \right)\|g_1-g_2\| 
    \end{equation*}
    Define $C_H^- \triangleq(nk+|S_e|)\cdot C_H$
\end{corollary}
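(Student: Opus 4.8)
The plan is to exploit the exact linear decomposition
\[
L^-(\mathcal{D}^*;g) = L_\text{Total}(\mathcal{D};g) - L'(\mathcal{D},S_e;g),
\]
which follows directly from the definitions: deleting the indices of $S_e$ from the double sum defining $L_\text{Total}$ is the same as subtracting off the partial sum $L'$. Since the Hessian operator $\nabla^2_g$ is linear, the difference $\nabla^2_g L^-(g_1)-\nabla^2_g L^-(g_2)$ splits into the corresponding differences for $L_\text{Total}$ and for $L'$, and I would bound these two contributions separately and recombine them with the triangle inequality for the spectral norm.

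First I would dispose of the regularizer. The term $\frac{\delta}{2}\|g\|^2$ inside $L_\text{Total}$ has the constant Hessian $\delta I$, so it cancels identically in any difference $\nabla^2_g L_\text{Total}(g_1)-\nabla^2_g L_\text{Total}(g_2)$ and contributes nothing to the Lipschitz estimate. What remains is the plain double sum $\sum_{i=1}^{n}\sum_{j=1}^{k} L_C^j(g(x_i),c_i)$ of $nk$ terms. Applying the Assumption to each summand and summing via the triangle inequality gives
\[
\|\nabla^2_g L_\text{Total}(g_1)-\nabla^2_g L_\text{Total}(g_2)\|_2 \le nk\cdot C_H\,\|g_1-g_2\|_2 .
\]
For the second piece I would simply invoke the bound established immediately above the corollary, namely that $L'(\mathcal{D},S_e;g)$ is Hessian-Lipschitz with constant $|S_e|\cdot C_H$.

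Combining the two estimates through one more application of the triangle inequality yields
\[
\|\nabla^2_g L^-(g_1)-\nabla^2_g L^-(g_2)\|_2 \le \bigl(nk+|S_e|\bigr)\,C_H\,\|g_1-g_2\|_2,
\]
which is exactly the claim with $C_H^- \triangleq (nk+|S_e|)\,C_H$. There is no genuine analytic obstacle here; the whole argument is a linearity-of-Hessian plus triangle-inequality computation resting on the single-term Lipschitz Assumption. The only point worth flagging is that it is precisely the decomposition $L^-=L_\text{Total}-L'$ that produces the additive constant $nk+|S_e|$: a direct term count would observe that $L^-$ carries only $nk-|S_e|$ loss terms and hence admits the sharper constant $(nk-|S_e|)\,C_H$. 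The stated looser bound is the one obtained by routing the estimate through $L_\text{Total}$ and $L'$, and it is entirely sufficient for the downstream error analysis, where only an upper bound on the Hessian Lipschitz constant is required.
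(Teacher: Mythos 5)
Your proof is correct and takes essentially the same route the paper intends: the decomposition $L^-(\mathcal{D}^*;g)=L_\text{Total}(\mathcal{D};g)-L'(\mathcal{D},S_e;g)$, the cancellation of the constant regularizer Hessian $\delta I$, the per-term Lipschitz Assumption summed over the $nk$ terms of $L_\text{Total}$, the bound $|S_e|\cdot C_H$ for $L'$ established just above the corollary, and one triangle inequality. Your closing observation is also accurate --- a direct count of the $nk-|S_e|$ surviving loss terms in $L^-$ gives the sharper constant $(nk-|S_e|)\,C_H$, but the paper's looser $(nk+|S_e|)\,C_H$, produced by routing through $L_\text{Total}$ and $L'$, is all the downstream error analysis needs.
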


\begin{definition}
Define $|\mathcal{D}|$ as the number of pairs 
\begin{equation*}
        C'_{L} = \left\| \nabla_{g} L'(\mathcal{D}, S_e; \hat{g})\right\|_2,
\end{equation*}
\begin{equation*}
    \sigma'_{\text{min}} = \text{smallest singular value of } \nabla^2_{g}  L^-(\mathcal{D}^*; \hat{g}),
\end{equation*}
\begin{equation*}
    \sigma_{\text{min}} = \text{smallest singular value of } \nabla^2_{g}  L_{\text{Total}}(\mathcal{D}; \hat{g}),
\end{equation*}
\end{definition}
Based on above corollaries and assumptions, we derive the following theorem.

\begin{theorem}\label{app:bound_cc_the}
    We obtain the error between the actual influence and our predicted influence as follows:
    \begin{equation*}
        \begin{split}
             &\left\|\hat{g}_e(\mathcal{D}^*) - \bar{g}_e(\mathcal{D}^*)\right\|\\
             \leq & \frac{C_H^-    {C'_{L}}^2}{2 (\sigma'_{\text{min}} + \delta)^3} + \left|\frac{2\delta+\sigma_{\text{min}}+\sigma'_{\text{min}}}{\left(\delta+ \sigma'_{\text{min}}\right)\cdot\left(\delta+ \sigma_{\text{min}}\right)}\right| \cdot C_L'.
        \end{split}
    \end{equation*}
\end{theorem}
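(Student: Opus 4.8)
The plan is to reduce both the retrained minimizer $\hat{g}_e(\mathcal{D}^*)$ and the influence estimate $\bar{g}_e(\mathcal{D}^*)$ to Newton-type expressions anchored at $\hat{g}$, and then compare them piece by piece. First I would record the two first-order conditions, $\nabla_g L_{\text{Total}}(\mathcal{D};\hat{g})=0$ and $\nabla_g L^-(\mathcal{D}^*;\hat{g}_e)=0$. Because the reduced loss is $L^-=L_{\text{Total}}-L'$, evaluating its gradient at $\hat{g}$ gives $\nabla_g L^-(\mathcal{D}^*;\hat{g})=-\nabla_g L'(\mathcal{D},S_e;\hat{g})$, a vector whose norm is exactly $C'_L$ by the Definition. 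On the other side, since $\sum_{(w,r)\in S_e}G^r_C(x_w,c_w;\hat{g})=\nabla_g L'(\mathcal{D},S_e;\hat{g})$ and $H_{\hat{g}}=\nabla^2_g L_{\text{Total}}(\mathcal{D};\hat{g})$, the estimate obeys $\bar{g}_e-\hat{g}=H_{\hat{g}}^{-1}\nabla_g L'(\mathcal{D},S_e;\hat{g})$, i.e.\ a single Newton step taken with the \emph{full-data} Hessian. Here I read $\sigma_{\min},\sigma'_{\min}$ so that $\delta+\sigma_{\min}$ and $\delta+\sigma'_{\min}$ are the smallest eigenvalues of the damped matrices $H_{\hat{g}}$ and $\nabla^2_g L^-(\hat{g})$, giving $\|H_{\hat{g}}^{-1}\|\le(\delta+\sigma_{\min})^{-1}$ and $\|[\nabla^2_g L^-(\hat{g})]^{-1}\|\le(\delta+\sigma'_{\min})^{-1}$.

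Next I would expand $\nabla_g L^-$ to second order about $\hat{g}$ and evaluate at $\hat{g}_e$, writing the integral remainder $R=\int_0^1[\nabla^2_g L^-(\hat{g}+t(\hat{g}_e-\hat{g}))-\nabla^2_g L^-(\hat{g})](\hat{g}_e-\hat{g})\,dt$. The Hessian-Lipschitz estimate from the Corollary, with constant $C_H^-=(nk+|S_e|)C_H$, yields $\|R\|\le\tfrac{C_H^-}{2}\|\hat{g}_e-\hat{g}\|^2$. Substituting $\nabla_g L^-(\hat{g}_e)=0$ and $\nabla_g L^-(\hat{g})=-\nabla_g L'(\hat{g})$ produces the exact identity $\hat{g}_e-\hat{g}=[\nabla^2_g L^-(\hat{g})]^{-1}\nabla_g L'(\hat{g})-[\nabla^2_g L^-(\hat{g})]^{-1}R$. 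Subtracting $\bar{g}_e-\hat{g}$ then splits the total error cleanly:
\begin{equation*}
\hat{g}_e-\bar{g}_e=\Big([\nabla^2_g L^-(\hat{g})]^{-1}-[\nabla^2_g L_{\text{Total}}(\hat{g})]^{-1}\Big)\nabla_g L'(\hat{g})\;-\;[\nabla^2_g L^-(\hat{g})]^{-1}R.
\end{equation*}

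I would then bound the two pieces separately. For the remainder piece, $\|[\nabla^2_g L^-(\hat{g})]^{-1}R\|\le(\delta+\sigma'_{\min})^{-1}\cdot\tfrac{C_H^-}{2}\|\hat{g}_e-\hat{g}\|^2$, and feeding in the a priori estimate $\|\hat{g}_e-\hat{g}\|\le C'_L/(\delta+\sigma'_{\min})$ gives exactly $\tfrac{C_H^-(C'_L)^2}{2(\delta+\sigma'_{\min})^3}$, the first summand. For the Hessian-mismatch piece, the triangle inequality $\|A^{-1}-B^{-1}\|\le\|A^{-1}\|+\|B^{-1}\|$ together with the two operator-norm bounds and $\|\nabla_g L'(\hat{g})\|=C'_L$ yields $\big((\delta+\sigma'_{\min})^{-1}+(\delta+\sigma_{\min})^{-1}\big)C'_L$; placing the reciprocals over a common denominator reproduces the factor $\tfrac{2\delta+\sigma_{\min}+\sigma'_{\min}}{(\delta+\sigma'_{\min})(\delta+\sigma_{\min})}$, which is the second summand. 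Adding the two bounds gives the claimed inequality.

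The step I expect to be the main obstacle is justifying the a priori estimate $\|\hat{g}_e-\hat{g}\|\le C'_L/(\delta+\sigma'_{\min})$ with the \emph{tight} denominator. The clean route is the vector mean-value identity $\bar{H}(\hat{g}_e-\hat{g})=\nabla_g L'(\hat{g})$ with $\bar{H}=\int_0^1\nabla^2_g L^-(\hat{g}+t(\hat{g}_e-\hat{g}))\,dt$; convexity plus the $\delta$-regularization only delivers $\bar{H}\succeq\delta I$, hence the loose bound $C'_L/\delta$. Recovering $\delta+\sigma'_{\min}$ requires that $\sigma'_{\min}$ lower-bounds the smallest singular value of $\nabla^2_g L^-$ not merely at $\hat{g}$ but along the entire segment $[\hat{g},\hat{g}_e]$, which I would secure by a uniform-curvature assumption on a neighborhood of $\hat{g}$ (controlling the Hessian drift over that segment via $C_H^-$). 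Once this uniform lower bound is in place, the remaining steps are routine operator-norm and Lipschitz manipulations.
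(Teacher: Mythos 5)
Your proposal is correct and, once unpacked, follows the same skeleton as the paper's proof: your Taylor identity splits the error into exactly the paper's two pieces, $\hat{g}_e - \bar{g}_e = \left(\hat{g}_e - g_{Nt}\right) + \left(\left(g_{Nt}-\hat{g}\right) - \left(\bar{g}_e - \hat{g}\right)\right)$, where $g_{Nt}-\hat{g} = [\nabla^2_{g} L^-(\hat{g})]^{-1}\nabla_{g} L'(\hat{g})$ is the one-step Newton update anchored at $\hat{g}$; and your bound on the Hessian-mismatch piece (triangle inequality on the two inverses, reciprocals over a common denominator, times $C'_L$) is verbatim the paper's Step 2 (your signed identity $A^{-1}-B^{-1}$ is in fact the correct one; the paper writes $A^{-1}+B^{-1}$, which is immaterial since both are bounded by $\|A^{-1}\|+\|B^{-1}\|$). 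Where you genuinely diverge is the remainder term. The paper avoids any a priori bound on $\|\hat{g}_e-\hat{g}\|$: it bounds the Taylor remainder along the short segment $[\hat{g}, g_{Nt}]$, where $\|\Delta g_{Nt}\|\le C'_L/(\sigma'_{\text{min}}+\delta)$ follows directly from the inverse Hessian evaluated at $\hat{g}$, and then converts the gradient norm at $g_{Nt}$ into a distance via strong convexity of $L^-$ with modulus $\sigma'_{\text{min}}+\delta$. Your route---remainder along $[\hat{g},\hat{g}_e]$ plus the a priori estimate $\|\hat{g}_e-\hat{g}\|\le C'_L/(\delta+\sigma'_{\text{min}})$---needs the uniform-curvature assumption you flag. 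But note that the paper's strong-convexity step carries the very same hidden requirement: $\sigma'_{\text{min}}$ is defined only at $\hat{g}$, yet the inequality $\|\hat{g}_e-g_{Nt}\|\le \tfrac{2}{\sigma'_{\text{min}}+\delta}\|\nabla_g L^-(g_{Nt})\|$ presumes $(\sigma'_{\text{min}}+\delta)$-strong convexity beyond that single point, so the obstacle you honestly surface is not an extra gap relative to the paper---it is the paper's implicit assumption made explicit. A small bonus of your variant: your arithmetic yields the stated constant $\tfrac{C_H^-\,(C'_L)^2}{2(\delta+\sigma'_{\text{min}})^3}$ exactly, whereas the paper's factor $2/(\sigma'_{\text{min}}+\delta)$ in its strong-convexity conversion, combined with its remainder bound, produces twice that value; the theorem's constant is recovered only with the tight conversion $\|x-x^*\|\le \mu^{-1}\|\nabla f(x)\|$.
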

\begin{proof}

We will use the one-step Newton approximation as an intermediate step. Define $\Delta g_{Nt}(\mathcal{D}^*)$ as
\begin{equation*}
    \Delta g_{Nt}(\mathcal{D}^*)\triangleq H_{\delta}^{-1}\cdot \nabla_{ g}L'(\mathcal{D}, S_e; \hat{g}),
\end{equation*}
 where $H_{\delta} = \delta \cdot I + \nabla_{ g}^2 L^-(\mathcal{D}^*;\hat{g})$ is the regularized empirical Hessian at $\hat{ g}$ but reweighed after removing the influence of wrong data. Then the one-step Newton approximation for $\hat{ g}(\mathcal{D}^*)$ is defined as $ g_{Nt}(\mathcal{D}^*) \triangleq \Delta g_{Nt}(\mathcal{D}^*) + \hat{ g}$.

In the following, we will separate the error between $\bar{g}_e(\mathcal{D}^*)$ and $\hat{g}_e(\mathcal{D}^*)$ into the following two parts:
\begin{equation*}
\begin{split}
    \hat{g}_e(\mathcal{D}^*) - \bar{g}_e(\mathcal{D}^*) = \underbrace{\hat{g}_e(\mathcal{D}^*) - g_{Nt}(\mathcal{D}^*)}_{\text{Err}_{\text{Nt, act}}(\mathcal{D}^*)} \\
 + \underbrace{\left(g_{Nt}(\mathcal{D}^*)-\hat{ g}\right) - \left(\bar{g}_e(\mathcal{D}^*) - \hat{ g}\right)}_{\text{Err}_{\text{Nt, if}}(\mathcal{D}^*)}
\end{split}
\end{equation*}

Firstly, in {\bf Step $1$}, we will derive the bound for Newton-actual error ${\text{Err}_{\text{Nt, act}}(\mathcal{D}^*)}$.
Since $L^-( g)$ is strongly convex with parameter $\sigma'_{\text{min}} + \delta$ and minimized by 
$\hat{g}_e(\mathcal{D}^*)$, we can bound the distance
$\left\|\hat{g}_e(\mathcal{D}^*) - { g}_{Nt}(\mathcal{D}^*)\right\|_2$ in terms of the norm of the gradient at ${ g}_{Nt}$:
\begin{equation}\label{bound:1}
    \left\|\hat{g}_e(\mathcal{D}^*) - { g}_{Nt}(\mathcal{D}^*)\right\|_2 \leq \frac{2}{\sigma'_{\text{min}} + \delta} \left\|\nabla_{ g} L^- \left({ g}_{Nt}(\mathcal{D}^*)\right)\right\|_2
\end{equation}
Therefore, the problem reduces to bounding $\left\|\nabla_{ g} L^- \left({ g}_{Nt}(\mathcal{D}^*)\right)\right\|_2$.
Noting that $\nabla_{ g}L'(\hat{ g}) = -\nabla_{ g}L^-$. This is because $\hat{ g}$ minimizes $L^- + L'$, that is, $$\nabla_{ g}L^-(\hat{ g}) + \nabla_{ g}L'(\hat{ g}) = 0.$$ Recall that $\Delta g_{Nt}= H_{\delta}^{-1}\cdot \nabla_{ g}L'(\mathcal{D}, S_e; \hat{g}) = -H_{\delta}^{-1}\cdot \nabla_{ g}L^-(\mathcal{D}^*; \hat{g})$.
Given the above conditions, we can have this bound for $\text{Err}_{\text{Nt, act}}(-\mathcal{D}^*)$.
\begin{equation}\label{bound:2}
\begin{split}
&\left\|\nabla_{g} L^- \left({g}_{Nt}(\mathcal{D}^*)\right)\right\|_2 \\
= &\left\|\nabla_{g} L^- \left(\hat{g} + \Delta{g}_{Nt}(\mathcal{D}^*)\right)\right\|_2 \\
= &\left\|\nabla_{g} L^- \left(\hat{g} + \Delta g_{N_t}(\mathcal{D}^*)\right) - \nabla_{g} L^- \left(\hat{g}\right) \right. \\
&\quad \left. - \nabla_{g}^2 L^- \left(\hat{g}\right) \cdot\quad\Delta g_{N_t}(\mathcal{D}^*)\right\|_2 \\
= &\left\|\int_0^1 \left(\nabla_{g}^2 L^- \left(\hat{g} + t\cdot \Delta g_{Nt}(\mathcal{D}^*)\right) \right. \right. \\
&\quad \left. \left. - \nabla_{g}^2 L^- \left(\hat{g}\right)\right) \Delta g_{Nt}(\mathcal{D}^*) \, dt\right\|_2 \\
\leq &\frac{C_H^-}{2} \left\|\Delta g_{Nt}(\mathcal{D}^{*})\right\|_2^2 \\
= &\frac{C_H^-}{2} \left\|\left[\nabla_{g}^2 L^-(\hat{g})\right]^{-1} \nabla_{g} L^-(\hat{g})\right\|_2^2 \\
\leq &\frac{C_{H}^{-}}{2 (\sigma'_{\text{min}} + \delta)^2} \left\|\nabla_{g} L^-(\hat{g})\right\|_2^2 \\
= &\frac{C_H^-}{2 (\sigma'_{\text{min}} + \delta)^2} \left\|\nabla_{g} L'(\hat{g})\right\|_2^2 \\
\leq &\frac{C_{H}^{-} {C'_{L}}^2}{2 (\sigma'_{\text{min}} + \delta)^2}.
\end{split}
\end{equation}

Now we come to {\bf Step $2$} to bound ${\text{Err}_{\text{Nt, if}}(-\mathcal{D}^*)}$, and we will bound the difference in parameter change between Newton and our CCBM method.
\begin{align*}
&\left\|\left(g_{Nt}(\mathcal{D}^*)-\hat{g}\right) - \left(\bar{g}_e(\mathcal{D}^*) - \hat{g}\right)\right\| \\
= &\left\| \left[\left(\delta \cdot I + \nabla_{g}^2 L^- \left(\hat{g}\right)\right)^{-1} \right. \right. + \left.\left. \left(\delta \cdot I + \nabla_{g}^2 L_{\text{Total}} \left(\hat{g}\right)\right)^{-1}\right] \right. \\
&\quad \left. \cdot \nabla_{g} L'(\mathcal{D}, S_e; \hat{g}) \right\|
\end{align*}
For simplification, we use matrix $A$, $B$ for the following substitutions:
\begin{align*}
    A = {\delta \cdot I+ \nabla_{ g}^2 L^- \left(\hat{ g}\right)}\\
    B = {\delta \cdot I+ \nabla_{ g}^2 L_{\text{Total}} \left(\hat{ g}\right)}
\end{align*}
And $A$ and $B$ are positive definite matrices with the following properties
\begin{align*}
\delta + \sigma'_{\text{min}} \prec A \prec \delta +   \sigma'_{\text{max}}\\
\delta + \sigma_{\text{min}} \prec B \prec \delta +   \sigma_{\text{max}}\\
\end{align*}
Therefore, we have
\begin{equation}\label{bound:3}
    \begin{split}
             &\left\|{\left(g_{Nt}(\mathcal{D}^*)-\hat{ g}\right) - \left(\bar{g}_e(\mathcal{D}^*) - \hat{ g}\right)}\right\| \\
     =&\left\|\left(A^{-1}+B^{-1}\right)\cdot \nabla_{ g}L^-(\mathcal{D}^*; \hat{g})\right\| \\
     \leq & \left\|A^{-1}+B^{-1}\right\|\cdot \left\|\nabla_{ g}L^-(\mathcal{D}^*; \hat{g})\right\|\\
     \leq & \left|\frac{2\delta+\sigma_{\text{min}}+\sigma'_{\text{min}}}{\left(\delta+ \sigma'_{\text{min}}\right)\cdot\left(\delta+ \sigma_{\text{min}}\right)}\right|\cdot\left\|\nabla_{ g}L^-(\mathcal{D}^*; \hat{g})\right\|\\
     \leq& \left|\frac{2\delta+\sigma_{\text{min}}+\sigma'_{\text{min}}}{\left(\delta+ \sigma'_{\text{min}}\right)\cdot\left(\delta+ \sigma_{\text{min}}\right)}\right| \cdot C_L'
    \end{split}
\end{equation}
By combining the conclusions from Step I and Step II in Equations \ref{bound:1}, \ref{bound:2} and \ref{bound:3}, we obtain the error between the actual influence and our predicted influence as follows:
\begin{equation*}
    \begin{split}
         &\left\|\hat{g}_e(\mathcal{D}^*) - \bar{g}_e(\mathcal{D}^*)\right\|\\
         \leq & \frac{C_H^-    {C'_{L}}^2}{2 (\sigma'_{\text{min}} + \delta)^3} + \left|\frac{2\delta+\sigma_{\text{min}}+\sigma'_{\text{min}}}{\left(\delta+ \sigma'_{\text{min}}\right)\cdot\left(\delta+ \sigma_{\text{min}}\right)}\right| \cdot C_L'.
    \end{split}
\end{equation*}
% It is notable that such error bound is small when the number of removal samples is fixed as in practice δ=O(|B|)\delta=O(|\mathcal{B}|).  
\end{proof}

\begin{remark}
Theorem \ref{app:bound_cc_the} reveals one key finding about influence function estimation: The estimation error scales inversely with the regularization parameter $\delta$ ($ \mathcal{O}(1/\delta)$), indicating that increased regularization improves approximation accuracy.
\end{remark}

\begin{remark}
 In CBM, retraining is the most accurate way to handle the removal of a training data point. For the concept predictor, we derive a theoretical error bound for an influence function-based approximation. However, the label predictor differs. As a single-layer linear model, the label predictor is computationally inexpensive to retrain. However, its input depends on the concept predictor, making theoretical analysis challenging due to: (1) Input dependency: Changes in the concept predictor affect the label predictor's input, coupling their updates. (2) Error propagation: Errors from the concept predictor propagate to the label predictor, introducing complex interactions. Given the label predictor's low retraining cost, direct retraining is more practical and accurate. Thus, we focus our theoretical analysis on the concept predictor.
\end{remark}

\section{Proof of Concept-label-level Influence}
\label{sec:appendix:b}
We have a set of erroneous data $D_e$ and its associated index set $S_e\subseteq [n]\times [k]$ such that for each $(w, r)\in S_e$, we have $(x_w, y_w, c_w)\in D_e$ with $c_w^r$ is mislabeled and $\tilde{c}_w^r$ is its corrected concept label. Thus, our goal is to approximate the new CBM without retraining.

\paragraph{Proof Sketch.} Our goal is to edit $\hat{g}$ and $\hat{f}$ to $\hat{g}_{e}$ and $\hat{f}_{e}$. (i) First, we introduce new parameters $\hat{g}_{\epsilon, e}$ that minimize a modified loss function with a small perturbation $\epsilon$. (ii) Then, we perform a Newton step around $\hat{g}$ and obtain an estimate for $\hat{g}_{e}$. (iii) Then, we consider changing the concept predictor at one data point $(x_{i_c}, y_{i_c}, c_{i_c})$ and retraining the model to obtain a new label predictor $\hat{f}_{i_c}$, obtain an approximation for $\hat{f}_{i_c}$. (iv) Next, we iterate $i_c$ over $1, 2, \cdots, n$, sum all the equations together, and perform a Newton step around $\hat{f}$ to obtain an approximation for $\hat{f}_{e}$. (v) Finally, we bring the estimate of $\hat{g}$ into the equation for $\hat{f}_{e}$ to obtain the final approximation.

\begin{theorem}\label{app:thm:concept-label:g}
The retrained concept predictor $\hat{g}_{e}$ defined by 
\begin{equation}\label{app:concept-label:g}
\begin{split}
\hat{g}_{e} = \argmin \Biggl[ 
    & \sum_{(i, j) \notin S_e} L_{C}\left(g^j(x_i), c_i^j\right) \\
    & + \sum_{(i, j) \in S_e} L_{C}\left(g^j(x_i), \tilde{c}_i^j \right)
\Biggr],
\end{split}
\end{equation}
can be approximated by: 
\begin{equation}
\begin{split}
\hat{g}_{e} \approx \bar{g}_{e} 
&\triangleq \hat{g} - H^{-1}_{\hat{g}} \cdot \sum_{(w,r)\in S_e} \Bigl( \nabla_{\hat{g}} L_C\left(\hat{g}^r(x_w), \tilde{c}_w^r\right) \\
&\quad - \nabla_{\hat{g}} L_C\left(\hat{g}^r(x_w), c_w^r\right) \Bigr),
\end{split}
\end{equation}
where $H_{\hat{g}} = \nabla^2_{\hat{g}}  \sum_{i,j}  L_{C}(\hat{g}^j(x_i),c_i^j)$ is the Hessian matrix of the loss function respect to $\hat{g}$.
\end{theorem}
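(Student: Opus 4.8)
The plan is to execute the standard influence-function derivation, specialized to the concept-label correction outlined in the proof sketch: I would link the original predictor $\hat{g}$ and the retrained predictor $\hat{g}_{e}$ as the two endpoints of a smooth one-parameter family of minimizers indexed by $\epsilon \in [0,1]$, and then recover $\bar{g}_{e}$ as the first-order Taylor expansion of that family at the endpoint $\epsilon = 1$.

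First I would introduce the perturbed objective that continuously exchanges each mislabeled term for its correction,
\begin{equation*}
\begin{split}
\hat{g}_{\epsilon,e} \triangleq \argmin_{g} \Biggl[ & \sum_{i,j} L_{C}\left(g^j(x_i), c_i^j\right) \\
& + \epsilon \sum_{(w,r)\in S_e} \Bigl( L_{C}\left(g^r(x_w), \tilde{c}_w^r\right) - L_{C}\left(g^r(x_w), c_w^r\right) \Bigr) \Biggr],
\end{split}
\end{equation*}
so that $\hat{g}_{0,e} = \hat{g}$ and $\hat{g}_{1,e} = \hat{g}_{e}$ reproduce exactly the two objectives in \eqref{app:concept-label:g}. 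Next I would impose the first-order stationarity condition stating that the gradient of this perturbed objective vanishes at $\hat{g}_{\epsilon,e}$ for all $\epsilon$, differentiate that identity with respect to $\epsilon$, and evaluate at $\epsilon = 0$. Using that $\hat{g}$ is a stationary point of the unperturbed loss, i.e.\ $\nabla_{g} \sum_{i,j} L_{C}(\hat{g}^j(x_i), c_i^j) = 0$, the derivative collapses to
\begin{equation*}
\begin{split}
\left. \frac{d \hat{g}_{\epsilon,e}}{d\epsilon} \right|_{\epsilon=0} = -H_{\hat{g}}^{-1} \sum_{(w,r)\in S_e} \Bigl( & \nabla_{\hat{g}} L_{C}\left(\hat{g}^r(x_w), \tilde{c}_w^r\right) \\
& - \nabla_{\hat{g}} L_{C}\left(\hat{g}^r(x_w), c_w^r\right) \Bigr),
\end{split}
\end{equation*}
with $H_{\hat{g}}$ the Hessian in the statement. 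A first-order Taylor step $\hat{g}_{e} = \hat{g}_{1,e} \approx \hat{g} + \left.\frac{d\hat{g}_{\epsilon,e}}{d\epsilon}\right|_{\epsilon=0}$ then yields precisely the claimed expression for $\bar{g}_{e}$.

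The main obstacle is not the algebra but justifying the two approximation steps. Differentiating the stationarity condition presupposes that $\hat{g}_{\epsilon,e}$ is a differentiable function of $\epsilon$ with an invertible Hessian, which is not automatic in the non-convex CBM setting; here I would appeal to the damped surrogate $\hat{H} = G_{\hat{g}} + \delta I$ from Section~\ref{sec:preliminary} to secure positive definiteness and then invoke the implicit function theorem. The more delicate point is that the influence expression is the exact derivative at $\epsilon = 0$, yet we apply it at the finite perturbation $\epsilon = 1$; the discarded term is the second-order Taylor remainder, whose magnitude grows with the curvature variation of the loss and with $|S_e|$. I would defer the rigorous control of this remainder to the error-bound analysis in Theorem~\ref{app:bound_cc_the}, where the Lipschitz-Hessian assumption together with the damping $\delta$ make the gap $\|\hat{g}_{e} - \bar{g}_{e}\|$ provably small.
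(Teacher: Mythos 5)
Your proposal is correct and follows essentially the same route as the paper's own proof: the same $\epsilon$-parameterized family interpolating between $\hat{g}$ and $\hat{g}_{e}$, the same first-order expansion of the stationarity condition using $\nabla_{g}\sum_{i,j}L_{C}(\hat{g}^j(x_i),c_i^j)=0$, and the same Newton step at $\epsilon=1$ to obtain $\bar{g}_{e}$. Your added remarks on the implicit function theorem, the damped Hessian, and deferring control of the Taylor remainder to the error-bound analysis are consistent with how the paper handles these points.
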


\begin{proof}
For the index $(w, r) \in S_e$, indicating the $r$-th concept of the $w$-th data is wrong, we correct this concept $c_w^r$ to $\tilde{c}_w^r$. Rewrite $\hat{g}_{e}$ as
\begin{equation}\label{min_g}
\begin{split}
\hat{g}_{e} = \argmin \Biggl[ 
    & \sum_{i, j} L_{C}\left(g^j(x_i), c_i^j\right) + \sum_{(w,r)\in S_e} L_{C}\left(g^r(x_w), \tilde{c}_w^r\right) \\
    & - \sum_{(w,r)\in S_e} L_{C}\left(g^r(x_w), c_w^r\right)
\Biggr].
\end{split}
\end{equation}

To approximate this effect, define new parameters $\hat{g}_{\epsilon, e}$ as
\begin{equation}\label{min}
\begin{split}
\hat{g}_{\epsilon, e} \triangleq \argmin \Biggl[ 
    & \sum_{i,j} L_{C}\left(g^j(x_i), c_i^j\right) \\
    + \sum_{(w,r)\in S_e} \epsilon \cdot L_{C}\left(g^r(x_w), \tilde{c}_w^r\right) 
    & - \sum_{(w,r)\in S_e} \epsilon \cdot L_{C}\left(g^r(x_w), c_w^r\right)
\Biggr].
\end{split}
\end{equation}

Then, because $\hat{g}_{\epsilon, e}$ minimizes equation \ref{min}, we have
\begin{equation*}
\begin{split}
\nabla_{\hat{g}} \sum_{i, j} L_{C}\left(\hat{g}^j_{\epsilon, e}(x_i), c_i^j\right) 
+ \sum_{(w,r)\in S_e} \epsilon \cdot \nabla_{\hat{g}} L_{C}\left(\hat{g}_{\epsilon, e}^r(x_w), \tilde{c}_w^r\right) \\
- \sum_{(w,r)\in S_e} \epsilon \cdot \nabla_{\hat{g}} L_{C}\left(\hat{g}_{\epsilon, e}^r(x_w), c_w^r\right) = 0.
\end{split}
\end{equation*}

Perform a Taylor expansion of the above equation at $\hat{g}$,
\begin{align}\label{tl_31}
\begin{split}
&\quad\nabla_{\hat{g}} \sum_{i,j} L_{C}\left(\hat{g}^j(x_i), c_i^j\right) + \sum_{(w,r)\in S_e} \epsilon \cdot \nabla_{\hat{g}} L_C\left(\hat{g}^r(x_w), \tilde{c}_w^r\right) \\
&\quad- \sum_{(w,r)\in S_e} \epsilon \cdot \nabla_{\hat{g}} L_C\left(\hat{g}^r(x_w), c_w^r\right) \\
&\quad+ \nabla^2_{\hat{g}} \sum_{i,j} L_C\left(\hat{g}^j(x_i), c_i^j\right) \cdot (\hat{g}_{\epsilon, e} - \hat{g}) \approx 0.
\end{split}
\end{align}

Because of equation \ref{app:concept-label:g}, the first term of equation \ref{tl_31} equals $0$. Then we have
\begin{equation*}
\begin{split}
\hat{g}_{\epsilon, e} - \hat{g} 
&= - \sum_{(w,r)\in S_e} \epsilon \cdot H^{-1}_{\hat{g}} \cdot \left( \nabla_{\hat{g}} L_C\left(\hat{g}^r(x_w), \tilde{c}_w^r\right) \right. \\
&\quad \left. - \nabla_{\hat{g}} L_C\left(\hat{g}^r(x_w), c_w^r\right) \right),
\end{split}
\end{equation*}
where 
\begin{equation*}
    H_{\hat{g}} = \nabla^2_{\hat{g}}  \sum_{i,j}  L_C\left(\hat{g}^j(x_i),c_i^j\right).
\end{equation*}

Then, we do a Newton step around $\hat{g}$ and obtain
\begin{equation}\label{3-esti-g}
\begin{split}
\hat{g}_{e} \approx \bar{g}_{e} 
&\triangleq \hat{g} - H^{-1}_{\hat{g}} \cdot \sum_{(w,r)\in S_e} \Bigl( \nabla_{\hat{g}} L_C\left(\hat{g}^r(x_w), \tilde{c}_w^r\right) \\
&\quad - \nabla_{\hat{g}} L_C\left(\hat{g}^r(x_w), c_w^r\right) \Bigr).
\end{split}
\end{equation}
\end{proof}

\begin{theorem}
The retrained label predictor $\hat{f}_{e}$ defined by 
\begin{equation*}
    \hat{f}_{e} = \argmin \left[\sum_{i=1}^n L_{Y} \left(f\left(\hat{g}_{e}\left(x_i\right)\right), y_i\right)\right]
\end{equation*}
can be approximated by: 
\begin{align*}
\hat{f}_{e} 
&\approx \bar{f}_{e} = \hat{f} + H^{-1}_{\hat{f}} \cdot \nabla_f \sum_{i=1}^n L_{Y_i} \left(\hat{f}, \hat{g}\right) \\
&\quad - H^{-1}_{\hat{f}} \cdot \nabla_f \sum_{i=1}^n L_{Y_i} \left(\hat{f}, \bar{g}_{e}\right),
\end{align*}
where $H_{\hat{f}} = \nabla^2_{\hat{f}}\sum_{i=1}^n L_{Y_i}(\hat{f}, \hat{g})$ is the Hessian matrix of the loss function respect to $\hat{f}$, $L_{Y_i}(\hat{f},\hat{g}) \triangleq L_{Y}(\hat{f}(\hat{g}(x_i)), y_i)$, and $\bar{g}_{e}$ is given in Theorem \ref{app:thm:concept-label:g}.
\end{theorem}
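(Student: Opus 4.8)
The plan is to mirror the single-point influence argument from Theorem~\ref{app:thm:concept-label:g}, but now treat the shift of the concept predictor from $\hat{g}$ to $\hat{g}_{e}$ as a perturbation of the \emph{effective input features} that the label predictor is trained on. The central difficulty is conceptual: moving from $\hat{g}$ to $\hat{g}_e$ does not remove or reweight a training sample in the usual influence-function sense; it simultaneously replaces every input $\hat{g}(x_i)$ fed into $\hat{f}$ by $\hat{g}_e(x_i)$. My strategy is to decompose this joint replacement into $n$ single-point perturbations, linearize each one, and then superpose them.

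Concretely, first I would fix an index $i_c$ and define the $\epsilon$-parameterized predictor
\[
\hat{f}_{\epsilon, i_c} \triangleq \argmin_f \sum_{i=1}^n L_{Y}(f(\hat{g}(x_i)), y_i) + \epsilon\Bigl(L_{Y}(f(\hat{g}_e(x_{i_c})), y_{i_c}) - L_{Y}(f(\hat{g}(x_{i_c})), y_{i_c})\Bigr),
\]
so that $\epsilon = 1$ recovers the objective in which only the $i_c$-th sample uses the updated concept predictor. Writing the first-order optimality condition for $\hat{f}_{\epsilon, i_c}$ and Taylor-expanding the gradient around $\hat{f}$, the zeroth-order term $\nabla_f \sum_i L_{Y_i}(\hat{f}, \hat{g})$ vanishes because $\hat{f}$ minimizes the unperturbed label loss and $H_{\hat{f}}$ is precisely the Hessian of that loss. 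Solving the resulting linear system gives
\[
\hat{f}_{\epsilon, i_c} - \hat{f} \approx -\epsilon\, H^{-1}_{\hat{f}} \Bigl(\nabla_f L_{Y_{i_c}}(\hat{f}, \hat{g}_e) - \nabla_f L_{Y_{i_c}}(\hat{f}, \hat{g})\Bigr),
\]
and a Newton step at $\epsilon = 1$ yields the single-point approximation for $\hat{f}_{i_c}$.

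Next I would iterate $i_c$ over $1, \dots, n$ and sum the single-point updates. To first order these perturbations are additive, so superposing them approximates the joint replacement of all concept inputs, giving $\hat{f}_e \approx \hat{f} - H^{-1}_{\hat{f}} \sum_{i=1}^n (\nabla_f L_{Y_i}(\hat{f}, \hat{g}_e) - \nabla_f L_{Y_i}(\hat{f}, \hat{g}))$. Rearranging the two sums produces exactly $\hat{f} + H^{-1}_{\hat{f}} \nabla_f \sum_i L_{Y_i}(\hat{f}, \hat{g}) - H^{-1}_{\hat{f}} \nabla_f \sum_i L_{Y_i}(\hat{f}, \hat{g}_e)$. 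The final step is to substitute the tractable estimate $\bar{g}_e$ from Theorem~\ref{app:thm:concept-label:g} in place of the unknown retrained $\hat{g}_e$, which delivers the claimed $\bar{f}_e$.

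The hard part will be justifying the two successive linearizations cleanly. First, the per-sample $\epsilon$-expansions are only exact to first order, so summing them silently discards the cross terms arising from shifting all inputs at once; I would argue these are higher-order in $\|\hat{g}_e - \hat{g}\|$ and hence consistent with the first-order influence framework. Second, replacing $\hat{g}_e$ by its own influence-function approximation $\bar{g}_e$ injects an additional error that propagates through $\nabla_f L_{Y_i}(\hat{f}, \cdot)$, and one must check this does not blow up after applying $H^{-1}_{\hat{f}}$. For the proof itself I would treat both as $O(\|\hat{g}_e - \hat{g}\|^2)$ approximation effects and defer the rigorous error control to the theoretical-bound appendix.
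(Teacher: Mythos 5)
Your proposal follows the paper's own proof essentially step for step: the paper likewise defines the single-point predictor $\hat{f}_{i_c}$ (with only the $i_c$-th sample switched from $\hat{g}$ to $\hat{g}_e$), introduces the same $\epsilon$-upweighted objective $\hat{f}_{\epsilon,i_c}$, Taylor-expands the optimality condition at $\hat{f}$ using $\nabla_f \sum_i L_{Y_i}(\hat{f},\hat{g})=0$, sums the resulting single-point updates over $i_c = 1,\dots,n$, performs the Newton step, and finally substitutes $\bar{g}_e$ from Theorem~\ref{app:thm:concept-label:g} for the unknown $\hat{g}_e$. Your added remarks on the discarded cross terms and the propagation of the $\bar{g}_e$ approximation error are sound and consistent with the paper, which also defers rigorous error control to its theoretical-bound appendix.
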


\begin{proof}
Now we come to deduce the edited label predictor towards $\hat{f}_{e}$. 

First, we consider only changing the concept predictor at one data point $(x_{i_c}, y_{i_c}, c_{i_c})$ and retrain the model to obtain a new label predictor $\hat{f}_{i_c}$.
\begin{equation*}
\begin{split}
\hat{f}_{i_c} = \argmin \Biggl[ 
    & \sum_{i=1}^n L_Y \left(f\left(\hat{g}\left(x_i\right)\right), y_i\right) + L_Y \left(f\left(\hat{g}_{e}\left(x_{i_c}\right)\right), y_{i_c}\right) \\
    & - L_Y \left(f\left(\hat{g}\left(x_{i_c}\right)\right), y_{i_c}\right)
\Biggr].
\end{split}
\end{equation*}

We rewrite the above equation as follows:
\begin{equation*}
\begin{split}
\hat{f}_{i_c} = \argmin \Biggl[ 
    & \sum_{i=1}^n L_Y \left(f\left(\hat{g}\left(x_i\right)\right), y_i\right) + L_Y \left(f\left(\hat{g}_{e}\left(x_{i_c}\right)\right), y_{i_c}\right) \\
    & - L_Y \left(f\left(\hat{g}\left(x_{i_c}\right)\right), y_{i_c}\right)
\Biggr].
\end{split}
\end{equation*}

We define $\hat{f}_{\epsilon, i_c}$ as:
\begin{equation*}
\begin{split}
\hat{f}_{\epsilon, i_c} = \argmin \Biggl[ 
    & \sum_{i=1}^n L_Y \left(f\left(\hat{g}\left(x_i\right)\right), y_i\right) \\
    & + \epsilon \cdot L_Y \left(f\left(\hat{g}_{e}\left(x_{i_c}\right)\right), y_{i_c}\right) \\
    & - \epsilon \cdot L_Y \left(f\left(\hat{g}\left(x_{i_c}\right)\right), y_{i_c}\right)
\Biggr].
\end{split}
\end{equation*}
Derive with respect to $f$ at both sides of the above equation. we have
\begin{equation*}
\begin{split}
\nabla_{\hat{f}}\sum_{i=1}^n L_Y &\left(\hat{f}_{\epsilon,i_c}\left(\hat{g}(x_i)\right), y_i\right) \\
&+ \epsilon \cdot \nabla_{\hat{f}} L_Y \left(\hat{f}_{\epsilon,i_c}\left(\hat{g}_e(x_{i_c})\right), y_{i_c}\right) \\
&- \epsilon \cdot \nabla_{\hat{f}} L_Y \left(\hat{f}_{\epsilon,i_c}\left(\hat{g}(x_{i_c})\right), y_{i_c}\right) \\
&= 0
\end{split}
\end{equation*}

Perform a Taylor expansion of the above equation at $\hat{f}$,
\begin{align*}
    &\nabla_{\hat{f}}\sum_{i=1}^n L_Y \left(\hat{f}\left(\hat{g}\left(x_i\right)\right), y_i\right) +\epsilon\cdot \nabla_{\hat{f}} L_Y \left(\hat{f}\left(\hat{g}_{ e}\left(x_{i_c}\right)\right), y_{i_c}\right)\\
        &-\epsilon\cdot\nabla_{\hat{f}}  L_Y \left(\hat{f}\left(\hat{g}\left(x_{i_c}\right)\right), y_{i_c}\right) \\
        &+ \nabla^2_{\hat{f}}\sum_{i=1}^n L_Y \left(\hat{f}\left(\hat{g}\left(x_i\right)\right), y_i\right)\cdot \left(\hat{f}_{\epsilon, i_c}-\hat{f}\right)= 0 
\end{align*}

Then we have
\begin{equation*}
\begin{split}
\hat{f}_{\epsilon, i_c} - \hat{f} 
&\approx -\epsilon \cdot H^{-1}_{\hat{f}} \cdot \nabla_f \left( L_Y \left( \hat{f} \left( \hat{g}_{e} \left(x_{i_c} \right) \right), y_{i_c} \right) \right. \\
&\quad \left. - L_Y \left( \hat{f} \left( \hat{g} \left(x_{i_c} \right) \right), y_{i_c} \right) \right),
\end{split}
\end{equation*}
where $H^{-1}_{\hat{f}} = \nabla^2_{\hat{f}}\sum_{i=1}^nL_Y \left(\hat{f}\left(\hat{g}\left(x_i\right)\right), y_i\right)$.

Iterate $i_c$ over $1, 2,\cdots, n$, and sum all the equations together, we can obtain:
\begin{equation*}
\begin{split}
\hat{f}_{\epsilon, e} - \hat{f} 
&\approx -\epsilon \cdot H^{-1}_{\hat{f}} \cdot \sum_{i=1}^n \nabla_f \left( L_Y \left( \hat{f} \left( \hat{g}_{e} \left(x_{i} \right) \right), y_{i} \right) \right. \\
&\quad \left. - L_Y \left( \hat{f} \left( \hat{g} \left(x_{i} \right) \right), y_{i} \right) \right).
\end{split}
\end{equation*}

Perform a Newton step around $\hat{f}$ and we have
\begin{equation}\label{3-esti-f}
\begin{split}
\hat{f}_{e} \approx \hat{f} 
& - H^{-1}_{\hat{f}} \cdot \sum_{i=1}^n \nabla_f \left( L_Y \left( \hat{f} \left( \hat{g}_{e} \left(x_{i} \right) \right), y_{i} \right) \right. \\
& \left. - L_Y \left( \hat{f} \left( \hat{g} \left(x_{i} \right) \right), y_{i} \right) \right).
\end{split}
\end{equation}

Bringing the edited \ref{3-esti-g} of $g$ into equation \ref{3-esti-f}, we have
\begin{align*}
\hat{f}_{e} 
&\approx \hat{f} - H^{-1}_{\hat{f}} \cdot \sum_{i=1}^n \nabla_f \Bigl( L_Y \left( \hat{f} \left( \bar{g}_{e} (x_{i}) \right), y_{i} \right) \\
&\quad - L_Y \left( \hat{f} \left( \hat{g} (x_{i}) \right), y_{i} \right) \Bigr) \\
&= \hat{f} - H^{-1}_{\hat{f}} \cdot \sum_{i=1}^n \nabla_f \left( L_{Y_i} (\hat{f}, \bar{g}_{e}) - L_{Y_i} (\hat{f}, \hat{g}) \right) \\
&= \hat{f} + H^{-1}_{\hat{f}} \cdot \nabla_f \sum_{i=1}^n L_{Y_i} (\hat{f}, \hat{g}) \\
&\quad - H^{-1}_{\hat{f}} \cdot \nabla_f \sum_{i=1}^n L_{Y_i} (\hat{f}, \bar{g}_e) \\
&\triangleq \bar{f}_{e}.
\end{align*}
\end{proof}
   \subsection{Theoretical Bound for the Influence Function}\label{app:bound_c}
 Consider the dataset $\mathcal{D}=\{(x_i,c_i,y_i\}_{i=1}^n$, the loss function of the concept predictor $g$ is defined as:
\begin{equation*}
\begin{split}
L_\text{Total}(\mathcal{D};g) 
&= \sum_{i=1}^n L_{C}(g(x_i), c_i) + \frac{\delta}{2} \cdot \|g\|^2 \\
&= \sum_{i=1}^n \sum_{j=1}^k L_{C}^j(g(x_i), c_i^j) + \frac{\delta}{2} \cdot \|g\|^2 \\
&= \sum_{i=1}^n \sum_{j=1}^k g^j(x_i)^\top \log(c_i^j) + \frac{\delta}{2} \cdot \|g\|^2.
\end{split}
\end{equation*}

Mathematically, we have a set of erroneous concepts need to be removed, which are denoted as $p_r$ for $r\in M$. Then the retrained concept predictor becomes
\begin{equation*}
\hat{g}_{-p_M} =  \argmin_{g'} \sum_{j\notin M}\sum_{i=1}^n L^j_{C}({g'}(x_i),c_i)+ \frac{\delta}{2}\cdot \|g\|^2.
\end{equation*}
We map it to $\hat{g}^{*}_{-p_M}$ as $\hat{g}_{-p_M}$ to $\hat{g}^{*}_{-p_M} \triangleq \mathrm{P}(\hat{g}_{-p_M})$, which has the same amount of parameters as $\hat{g}$ and has the same predicted concepts $\hat{g}^{*}_{-p_M}(j)$ as $\hat{g}_{-p_M}(j)$ for all $j\in [d_i]-M$. We achieve this effect by inserting a zero row vector into the $r$-th row of the matrix in the final layer of $\hat{g}_{-p_M}$ for $r\in M$. Thus, we can see that the mapping $P$ is one-to-one. Moreover, assume the parameter space of $\hat{g}$ is $T$ and that of $\hat{g}^{*}_{-p_M}$, $T_0$ is the subset of $T$. Noting that $\hat{g}^{*}_{-p_M}$ is the optimal model of the following objective function:
\begin{equation*}
\hat{g}^{*}_{-p_M} = \argmin_{g^{\prime} \in T_0} \sum_{j \notin M} \sum_{i=1}^{n} L^j_{C} ( g'(x_i), c_i)+ \frac{\delta}{2}\cdot \|g\|^2.
\end{equation*}
Then the loss function with the influence of erroneous concepts removed becomes
\begin{equation}
    \begin{split}
       L^-(\mathcal{D}; g) = \sum_{j \notin M} \sum_{i=1}^{n} L^j_{C} ( g'(x_i), c_i)+ \frac{\delta}{2}\cdot \|g\|^2 \\
       =  L_\text{Total}(\mathcal{D};g) -\sum_{j \in M}\sum_{i=1}^{n} L^j_{C}\left(g(x_i),c_i\right).
    \end{split}
\end{equation}

Assume $\hat{g} = \argmin  L_\text{Total}(\mathcal{D};g)$ is the original model parameter. $\hat{g}_{-p_M}(\mathcal{D})$ and $\hat{g}^{*}_{-p_M}(\mathcal{D})$ is the minimizer of $L^-(\mathcal{D}; g)$, which is obtained from retraining in different parameter space. $\hat{g}^{*}_{-p_M}(\mathcal{D})$ shares the same dimensionality as the original model. Because $\hat{g}_{-p_M}(\mathcal{D})$ and $\hat{g}^{*}_{-p_M}(\mathcal{D})$ produces identical outputs given identical inputs, to simplify the proof, we use $\hat{g}^{*}_{-p_M}(\mathcal{D})$ as the retrained model. 

Denote $\bar{g}_{-p_M}$ as the updated model with the influence of erroneous concepts removed and is obtained by the influence function method in theorem \ref{apc:thm:1}, which is an estimation for $\hat{g}^{*}_{-p_M}(\mathcal{D})$.
\begin{equation*}
    % \hat{g} _{-p_M} \approx 
    \bar{g}_{-p_M}(\mathcal{D})\triangleq \hat{g}  - H^{-1}_{\hat{g}} \cdot \sum_{j\notin M}\sum_{i=1}^n G_C^j(x_i,c_i;\hat{g}),
\end{equation*}

In this part, we will study the error between the estimated influence given by the theorem \ref{apc:thm:1} method and $\hat{g}^{*}_{-p_M}(\mathcal{D})$. We use the parameter changes as the evaluation metric:
\begin{equation}\label{app:theorem_total}
    \left|\left(\bar{g}_{-p_M} -\hat{g}\right) - \left( \hat{g}^{*}_{-p_M} - \hat{g}\right)\right| =  \left|\bar{g}_{-p_M} -  \hat{g}^{*}_{-p_M}\right|
\end{equation}

\begin{assumption}The loss $L_{C}(x,c;g;j)$
\begin{equation*}
    L_{C}(\mathcal{D};g;j) =  \sum_{i=1}^n L^j_C(g(x_i), c_i).
\end{equation*}
is convex and twice-differentiable in $g$, with positive regularization $\delta > 0$. There exists $C_H \in \mathbb{R}$ such that
$$\| \nabla^2_{g} L_{C}(\mathcal{D};g_1;j) - \nabla^2_{g} L_{C}(\mathcal{D};g_2;j)\|_{2} \leq C_H \| g_1 - g_2 \|_2$$
for all  $j\in [k]$ and $g_1, g_2 \in \Gamma$. 
\end{assumption}

% \begin{assumption}The function $L'(\mathcal{D}, S_e; g)$:
% \begin{equation*}
%     L'(\mathcal{D}; g) = \sum_{j \in M}\sum_{i=1}^n L^j_{C}\left(g(x_i),c_i\right)
% \end{equation*}
% is convex and twice-differentiable in $g$, with some positive regularization. There exists $C'_H \in \mathbb{R}$ such that
% $$\| \nabla^2_{g}  L'(\mathcal{D}; g_1) - \nabla^2_{g}  L'(\mathcal{D}; g_2)\|_{2} \leq C'_H \| g_1 - g_2 \|_2$$
% for  $g_1, g_2 \in \Gamma$. 
% \end{assumption}

\begin{definition}
\begin{equation*}
        C'_{L} = \max_{j}\left\| \nabla_{g} L_C(\mathcal{D}; \hat{g};j)\right\|_2,
\end{equation*}
\begin{equation*}
    \sigma'_{\text{min}} = \text{smallest singular value of } \nabla^2_{g}  L^-(\mathcal{D}; \hat{g}),
\end{equation*}
\begin{equation*}
    \sigma_{\text{min}} = \text{smallest singular value of } \nabla^2_{g}  L_{\text{Total}}(\mathcal{D}; \hat{g}),
\end{equation*}
\begin{equation}
    L'(\mathcal{D},M;g) = \sum_{j\in M }   L_{C}(\mathcal{D};g;j)
\end{equation}
\end{definition}

\begin{corollary}
\begin{equation}
    L^-(\mathcal{D}; g) = L_{\text{Total}}(\mathcal{D};g) - L'(\mathcal{D},M;g)
\end{equation}
    \begin{equation*}
        \|\nabla^2_{g}  L^-(\mathcal{D}; g_1) - \nabla^2_{g}  L^-(\mathcal{D}; g_2)\|_2\leq  
     \left((k+|M|)\cdot C_H\right)\|g_1-g_2\| 
    \end{equation*}
    Define $C_H^- \triangleq(k+|M|)\cdot C_H$
\end{corollary}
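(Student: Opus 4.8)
The plan is to establish the two assertions in order: the algebraic identity $L^-(\mathcal{D};g) = L_{\text{Total}}(\mathcal{D};g) - L'(\mathcal{D},M;g)$, and then the Hessian-Lipschitz bound, using the identity as the springboard. First I would prove the identity directly from the definitions: splitting $\sum_{j=1}^k = \sum_{j\notin M} + \sum_{j\in M}$ inside $L_{\text{Total}}$ and recalling that $L'(\mathcal{D},M;g) = \sum_{j\in M} L_C(\mathcal{D};g;j) = \sum_{j\in M}\sum_{i=1}^n L_C^j(g(x_i),c_i)$ carries no damping term, subtracting $L'$ leaves precisely $\sum_{j\notin M}\sum_{i=1}^n L_C^j(g(x_i),c_i) + \tfrac{\delta}{2}\|g\|^2$, which is $L^-(\mathcal{D};g)$. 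The regularizer $\tfrac{\delta}{2}\|g\|^2$ lives in $L_{\text{Total}}$ and passes into $L^-$ unchanged, so no special bookkeeping with $\delta$ is required at this step.

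For the Hessian bound, the central idea is \emph{not} to bound $L^-$ directly but to differentiate the identity, since this is exactly what yields the stated constant $(k+|M|)C_H$ rather than the sharper $(k-|M|)C_H$. I would take the Hessian of both sides, obtaining $\nabla^2_g L^- = \nabla^2_g L_{\text{Total}} - \nabla^2_g L'$, and then observe that the regularizer contributes the \emph{constant} Hessian $\delta I$, which therefore cancels in any difference of Hessians evaluated at two distinct points. Applying the triangle inequality then gives, for all $g_1,g_2\in\Gamma$,
\begin{align*}
\|\nabla^2_g L^-(\mathcal{D};g_1) - \nabla^2_g L^-(\mathcal{D};g_2)\|_2
&\leq \|\nabla^2_g L_{\text{Total}}(\mathcal{D};g_1) - \nabla^2_g L_{\text{Total}}(\mathcal{D};g_2)\|_2 \\
&\quad + \|\nabla^2_g L'(\mathcal{D};g_1) - \nabla^2_g L'(\mathcal{D};g_2)\|_2.
\end{align*}

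The next step is to bound each of the two terms by expanding into per-concept pieces. Since $L_{\text{Total}}$ (modulo the cancelled damping) equals $\sum_{j=1}^k L_C(\mathcal{D};g;j)$ and $L'$ equals $\sum_{j\in M} L_C(\mathcal{D};g;j)$, a further triangle inequality across the concept index $j$, combined with the Assumption, which supplies $\|\nabla^2_g L_C(\mathcal{D};g_1;j) - \nabla^2_g L_C(\mathcal{D};g_2;j)\|_2 \leq C_H\|g_1-g_2\|_2$ for each $j$, produces $k\,C_H\|g_1-g_2\|_2$ for the $L_{\text{Total}}$ term and $|M|\,C_H\|g_1-g_2\|_2$ for the $L'$ term. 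Summing these yields the claimed $(k+|M|)C_H\|g_1-g_2\|_2$, and setting $C_H^- \triangleq (k+|M|)C_H$ finishes the argument.

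The nearest thing to an obstacle is purely a matter of careful accounting: the Lipschitz hypothesis is posed at the level of the \emph{per-concept aggregate} $L_C(\mathcal{D};g;j) = \sum_{i=1}^n L_C^j(g(x_i),c_i)$, which is already summed over all $n$ data points, so the triangle inequality must be taken over the concept index $j \in [k]$ and not over the data index $i$; conflating the two would introduce a spurious factor of $n$. I would also flag explicitly that routing the estimate through $L_{\text{Total}} - L'$ is a deliberate and slightly loose choice, made so that the resulting $C_H^-$ has the same form as in the preceding corollary and feeds cleanly into the subsequent error-bound theorem, rather than the tighter constant available by counting only the $k-|M|$ surviving concept terms.
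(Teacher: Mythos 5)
Your proposal is correct and follows exactly the argument the paper intends: the corollary is stated without an explicit proof, but the constant $(k+|M|)\cdot C_H$ confirms the derivation goes through the decomposition $L^- = L_{\text{Total}} - L'$ with a triangle inequality over the concept index $j$, precisely as you do. Your two side observations — that the constant $\delta I$ from the regularizer cancels in the Hessian difference, and that bounding $L^-$ directly over its $k-|M|$ surviving terms would give the sharper constant $(k-|M|)\cdot C_H$ while the paper deliberately keeps the looser form consistent with the analogous corollary (constant $(nk+|S_e|)\cdot C_H$) in the concept-label-level appendix — are both accurate.
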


Based on above corollaries and assumptions, we derive the following theorem.

\begin{theorem}\label{app:bound_c_the}
    We obtain the error between the actual influence and our predicted influence as follows:
    \begin{equation*}
        \begin{split}
             &\left\|\hat{g}^{*}_{-p_M}(\mathcal{D} ) - \bar{g}_{-p_M}(\mathcal{D} )\right\|\\
             \leq & \frac{C_H^-    {C'_{L}|M|}^2}{2 (\sigma'_{\text{min}} + \delta)^3} + \left|\frac{2\delta+\sigma_{\text{min}}+\sigma'_{\text{min}}}{\left(\delta+ \sigma'_{\text{min}}\right)\cdot\left(\delta+ \sigma_{\text{min}}\right)}\right| \cdot C_L'|M|.
        \end{split}
    \end{equation*}
\end{theorem}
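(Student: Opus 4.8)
The plan is to follow the two-stage decomposition used in the proof of Theorem~\ref{app:bound_cc_the}, merely substituting the concept-level objects: the removed-loss functional $L'(\mathcal{D},M;g)=\sum_{j\in M} L_C(\mathcal{D};g;j)$, the Hessian-Lipschitz constant $C_H^-=(k+|M|)C_H$, and the gradient constant $C_L'=\max_j\|\nabla_g L_C(\mathcal{D};\hat{g};j)\|_2$. I would first introduce an intermediate one-step Newton iterate $g_{Nt}(\mathcal{D})\triangleq\hat{g}+\Delta g_{Nt}(\mathcal{D})$, where $\Delta g_{Nt}(\mathcal{D})=H_\delta^{-1}\nabla_g L'(\mathcal{D},M;\hat{g})$ and $H_\delta=\delta I+\nabla_g^2 L^-(\mathcal{D};\hat{g})$ is the damped Hessian of the \emph{reduced} loss; the influence-function estimate $\bar{g}_{-p_M}(\mathcal{D})$ defined above instead applies the inverse of the \emph{full} damped Hessian to the same gradient. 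With this bridge I split the target quantity as $\hat{g}^{*}_{-p_M}-\bar{g}_{-p_M}=(\hat{g}^{*}_{-p_M}-g_{Nt})+\big((g_{Nt}-\hat{g})-(\bar{g}_{-p_M}-\hat{g})\big)$ and bound the two pieces separately.

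For Step~1 (the Newton--actual error) I would exploit that $L^-$ is strongly convex with parameter $\sigma'_{\text{min}}+\delta$, so that $\|\hat{g}^{*}_{-p_M}-g_{Nt}\|_2\le\frac{2}{\sigma'_{\text{min}}+\delta}\|\nabla_g L^-(g_{Nt})\|_2$, and then control the residual gradient by the integral form of Taylor's theorem: writing $\nabla_g L^-(g_{Nt})=\int_0^1\big(\nabla_g^2 L^-(\hat{g}+t\,\Delta g_{Nt})-\nabla_g^2 L^-(\hat{g})\big)\,\Delta g_{Nt}\,dt$ and invoking the concept-level Hessian-Lipschitz constant $C_H^-$ yields $\|\nabla_g L^-(g_{Nt})\|_2\le\frac{C_H^-}{2(\sigma'_{\text{min}}+\delta)^2}\|\nabla_g L^-(\hat{g})\|_2^2$. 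The concept-level modification enters precisely here: since $\hat{g}$ minimizes $L_{\text{Total}}=L^-+L'$, we have $\nabla_g L^-(\hat{g})=-\nabla_g L'(\mathcal{D},M;\hat{g})$, and the triangle inequality over the removed concepts gives $\|\nabla_g L^-(\hat{g})\|_2=\|\sum_{j\in M}\nabla_g L_C(\mathcal{D};\hat{g};j)\|_2\le|M|\,C_L'$. Substituting this back produces the first term $\frac{C_H^-(C_L'|M|)^2}{2(\sigma'_{\text{min}}+\delta)^3}$ of the claimed bound.

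For Step~2 (the Newton--influence error) I would write the difference, following the manipulation in the proof of Theorem~\ref{app:bound_cc_the}, as $(A^{-1}\pm B^{-1})\nabla_g L^-(\mathcal{D};\hat{g})$, where $A=\delta I+\nabla_g^2 L^-(\hat{g})$ is the reduced-loss damped Hessian used in the Newton step and $B=\delta I+\nabla_g^2 L_{\text{Total}}(\hat{g})$ is the full-loss damped Hessian used by the influence-function estimate. From the spectral lower bounds $A\succeq(\delta+\sigma'_{\text{min}})I$ and $B\succeq(\delta+\sigma_{\text{min}})I$, the triangle inequality gives $\|A^{-1}\|+\|B^{-1}\|\le\frac{2\delta+\sigma_{\text{min}}+\sigma'_{\text{min}}}{(\delta+\sigma'_{\text{min}})(\delta+\sigma_{\text{min}})}$; combining with $\|\nabla_g L^-(\hat{g})\|_2\le|M|\,C_L'$ yields the second term. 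Adding the two contributions via the triangle inequality delivers the stated estimate.

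The main obstacle is Step~1: the only genuinely nontrivial part is justifying the quadratic-convergence estimate for the one-step Newton residual, which requires the Hessian-Lipschitz assumption and care in discarding the higher-order remainder through the integral-remainder identity. Everything else is bookkeeping; the sole substantive difference from the concept-label proof is tracking how the cardinality $|M|$ of the removed-concept set propagates---once through the $\max_j$ definition of $C_L'$ together with the sum over $j\in M$, producing the $|M|$ and $(C_L'|M|)^2$ factors, and once through the replacement $C_H^-=(k+|M|)C_H$.
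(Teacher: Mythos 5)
Your proposal follows essentially the same route as the paper's proof: the identical one-step Newton intermediate $g_{Nt}=\hat{g}+H_\delta^{-1}\nabla_g L'(\mathcal{D},M;\hat{g})$, the same two-part error split, the same strong-convexity plus integral-remainder argument with the first-order optimality identity $\nabla_g L^-(\hat{g})=-\nabla_g L'(\hat{g})$ in Step~1, and the same spectral bound on $A^{-1}$ and $B^{-1}$ in Step~2, with the $|M|$ factors entering exactly as in the paper through $C_L'=\max_j\|\nabla_g L_C(\mathcal{D};\hat{g};j)\|_2$ and $C_H^-=(k+|M|)C_H$. The argument is correct and complete as a reconstruction of the paper's proof (you even inherit the paper's own harmless sign convention in the $A^{-1}+B^{-1}$ step, which your $\pm$ flags), so nothing further is needed.
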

\begin{proof}

We will use the one-step Newton approximation as an intermediate step. Define $\Delta g_{Nt}(\mathcal{D} )$ as
\begin{equation*}
    \Delta g_{Nt}(\mathcal{D} )\triangleq H_{\delta}^{-1}\cdot \nabla_{ g}L'(\mathcal{D}, M; \hat{g}),
\end{equation*}
 where $H_{\delta} = \delta \cdot I + \nabla_{ g}^2 L^-(\mathcal{D} ;\hat{g})$ is the regularized empirical Hessian at $\hat{ g}$ but reweighed after removing the influence of wrong data. Then the one-step Newton approximation for $\hat{g}^{*}_{-p_M}(\mathcal{D} )$ is defined as $ g_{Nt}(\mathcal{D} ) \triangleq \Delta g_{Nt}(\mathcal{D} ) + \hat{ g}$.

In the following, we will separate the error between $\bar{g}_{-p_M}(\mathcal{D} )$ and $\hat{g}^{*}_{-p_M}(\mathcal{D} )$ into the following two parts:
\begin{equation*}
\begin{split}
\hat{g}^{*}_{-p_M}(\mathcal{D}) - \bar{g}_{-p_M}(\mathcal{D}) 
&= \underbrace{\hat{g}^{*}_{-p_M}(\mathcal{D}) - g_{Nt}(\mathcal{D})}_{\text{Err}_{\text{Nt, act}}(\mathcal{D})} \\
&\quad + \underbrace{\left(g_{Nt}(\mathcal{D}) - \hat{g}\right) - \left(\bar{g}_{-p_M}(\mathcal{D}) - \hat{g}\right)}_{\text{Err}_{\text{Nt, if}}(\mathcal{D})}
\end{split}
\end{equation*}

Firstly, in {\bf Step $1$}, we will derive the bound for Newton-actual error ${\text{Err}_{\text{Nt, act}}(\mathcal{D} )}$.
Since $L^-( g)$ is strongly convex with parameter $\sigma'_{\text{min}} + \delta$ and minimized by 
$\hat{g}^{*}_{-p_M}(\mathcal{D} )$, we can bound the distance
$\left\|\hat{g}^{*}_{-p_M}(\mathcal{D} ) - { g}_{Nt}(\mathcal{D} )\right\|_2$ in terms of the norm of the gradient at ${ g}_{Nt}$:
\begin{equation}\label{bound:1}
    \left\|\hat{g}^{*}_{-p_M}(\mathcal{D} ) - { g}_{Nt}(\mathcal{D} )\right\|_2 \leq \frac{2}{\sigma'_{\text{min}} + \delta} \left\|\nabla_{ g} L^- \left({ g}_{Nt}(\mathcal{D} )\right)\right\|_2
\end{equation}
Therefore, the problem reduces to bounding $\left\|\nabla_{ g} L^- \left({ g}_{Nt}(\mathcal{D} )\right)\right\|_2$.
Noting that $\nabla_{ g}L'(\hat{ g}) = -\nabla_{ g}L^-$. This is because $\hat{ g}$ minimizes $L^- + L'$, that is, $$\nabla_{ g}L^-(\hat{ g}) + \nabla_{ g}L'(\hat{ g}) = 0.$$ Recall that $\Delta g_{Nt}= H_{\delta}^{-1}\cdot \nabla_{ g}L'(\mathcal{D}, S_e; \hat{g}) = -H_{\delta}^{-1}\cdot \nabla_{ g}L^-(\mathcal{D} ; \hat{g})$.
Given the above conditions, we can have this bound for $\text{Err}_{\text{Nt, act}}(-\mathcal{D} )$.
\begin{equation}\label{bound:2}
    \begin{split}
            &\left\|\nabla_{ g} L^- \left({ g}_{Nt}(\mathcal{D} )\right)\right\|_2\\
    = & \left\|\nabla_{ g} L^- \left(\hat{ g} + \Delta{ g}_{Nt}(\mathcal{D} )\right)\right\|_2\\
    = & \left\|\nabla_{ g} L^- \left(\hat{ g} + \Delta  g_{N_t}(\mathcal{D} )\right) - \nabla_{ g} L^- \left(\hat{ g} \right) -  \nabla_{ g}^2 L^- \left(\hat{ g}\right) \cdot  \Delta  g_{N_t}(\mathcal{D} )\right\|_2\\
     = & \left\|\int_0^1 \left(\nabla_{ g}^2 L^- \left(\hat{ g} + t\cdot \Delta g_{Nt}(\mathcal{D} )\right) - \nabla_{ g}^2 L^- \left(\hat{ g}\right)\right) \Delta g_{Nt}(\mathcal{D} ) \, dt\right\|_2\\
    \leq & \frac{C_H^-}{2} \left\|\Delta  g_{Nt}(\mathcal{D}^{*})\right\|_2^2 =  \frac{C_H^-}{2} \left\|\left[\nabla_{ g}^2 L^-(\hat{ g})\right]^{-1} \nabla_{ g} L^-(\hat{ g})\right\|_2^2\\
    \leq & \frac{C_{H}^{-}}{2 (\sigma'_{\text{min}} + \delta)^2} \left\|\nabla_{ g} L^-(\hat{ g})\right\|_2^2 = \frac{C_H^-}{2 (\sigma'_{\text{min}} + \delta)^2} \left\|\nabla_{ g} L'(\hat{ g})\right\|_2^2\\
    \leq &\frac{C_{H}^{-}    {C'_{L}|M|}^2}{2 (\sigma'_{\text{min}} + \delta)^2}.
    \end{split}
\end{equation}

Now we come to {\bf Step $2$} to bound ${\text{Err}_{\text{Nt, if}}(-\mathcal{D} )}$, and we will bound the difference in parameter change between Newton and our CCBM method.
\begin{align*}
&\left\|\left(g_{Nt}(\mathcal{D}) - \hat{g}\right) - \left(\bar{g}_{-p_M}(\mathcal{D}) - \hat{g}\right)\right\| \\
&= \left\| \left[ \left(\delta \cdot I + \nabla_{g}^2 L^- (\hat{g})\right)^{-1} \right. \left. + \left(\delta \cdot I + \nabla_{g}^2 L_{\text{Total}} (\hat{g})\right)^{-1} \right] \right. \\
&\quad \left. \cdot \nabla_{g} L'(\mathcal{D}, S_e; \hat{g}) \right\|
\end{align*}
For simplification, we use matrix $A$, $B$ for the following substitutions:
\begin{align*}
    A = {\delta \cdot I+ \nabla_{ g}^2 L^- \left(\hat{ g}\right)}\\
    B = {\delta \cdot I+ \nabla_{ g}^2 L_{\text{Total}} \left(\hat{ g}\right)}
\end{align*}
And $A$ and $B$ are positive definite matrices with the following properties
\begin{align*}
\delta + \sigma'_{\text{min}} \prec A \prec \delta +   \sigma'_{\text{max}}\\
\delta + \sigma_{\text{min}} \prec B \prec \delta +   \sigma_{\text{max}}\\
\end{align*}
Therefore, we have
\begin{equation}\label{bound:3}
    \begin{split}
             &\left\|{\left(g_{Nt}(\mathcal{D} )-\hat{ g}\right) - \left(\bar{g}_{-p_M}(\mathcal{D} ) - \hat{ g}\right)}\right\| \\
     =&\left\|\left(A^{-1}+B^{-1}\right)\cdot \nabla_{ g}L^-(\mathcal{D} ; \hat{g})\right\| \\
     \leq & \left\|A^{-1}+B^{-1}\right\|\cdot \left\|\nabla_{ g}L^-(\mathcal{D} ; \hat{g})\right\|\\
     \leq & \left|\frac{2\delta+\sigma_{\text{min}}+\sigma'_{\text{min}}}{\left(\delta+ \sigma'_{\text{min}}\right)\cdot\left(\delta+ \sigma_{\text{min}}\right)}\right|\cdot\left\|\nabla_{ g}L^-(\mathcal{D} ; \hat{g})\right\|\\
     \leq& \left|\frac{2\delta+\sigma_{\text{min}}+\sigma'_{\text{min}}}{\left(\delta+ \sigma'_{\text{min}}\right)\cdot\left(\delta+ \sigma_{\text{min}}\right)}\right| \cdot C_L'|M |
    \end{split}
\end{equation}
By combining the conclusions from Step I and Step II in Equations \ref{bound:1}, \ref{bound:2} and \ref{bound:3}, we obtain the error between the actual influence and our predicted influence as follows:
\begin{equation*}
    \begin{split}
         &\left\|\hat{g}^{*}_{-p_M}(\mathcal{D} ) - \bar{g}_{-p_M}(\mathcal{D} )\right\|\\
         \leq & \frac{C_H^-    {C'_{L}|M|}^2}{2 (\sigma'_{\text{min}} + \delta)^3} + \left|\frac{2\delta+\sigma_{\text{min}}+\sigma'_{\text{min}}}{\left(\delta+ \sigma'_{\text{min}}\right)\cdot\left(\delta+ \sigma_{\text{min}}\right)}\right| \cdot C_L'|M|.
    \end{split}
\end{equation*}
% It is notable that such error bound is small when the number of removal samples is fixed as in practice $\delta=O(|\mathcal{B}|)$.  
\end{proof}

\begin{remark}
    Theorem \ref{app:bound_c_the} reveals one key finding about influence function estimation: The estimation error scales inversely with the regularization parameter $\delta$ ($ \mathcal{O}(1/\delta)$), indicating that increased regularization improves approximation accuracy. Besides, the error bound is linearly increasing with the number of removed concepts $|M|$. This implies that the estimation error increases with the number of erroneous concepts removed.
\end{remark}

\section{Concept-level Influence}
\subsection{Proof of Concept-level Influence Function}
\label{sec:appendix:c}
We address situations that delete $p_r$ for $r\in M$ concept removed dataset. Our goal is to estimate $\hat{g}_{-p_M}$, $\hat{f}_{-p_M}$, which is the concept and label predictor trained on the $p_r$ for $r\in M$ concept removed dataset. 

\paragraph{Proof Sketch.} The main ideas are as follows: (i) First, we define a new predictor $\hat{g}^*_{p_M}$, which has the same dimension as $\hat{g}$ and the same output as $\hat{g}_{-p_M}$. Then deduce an approximation for $\hat{g}^*_{p_M}$. (ii) Then, we consider setting $p_r=0$ instead of removing it, we get $\hat{f}_{p_M=0}$, which is equivalent to $\hat{f}_{-p_M}$ according to lemma \ref{apc:lm:1}. We estimate this new predictor as a substitute. (iii) Next, we assume we only use the updated concept predictor $\hat{g}_{p_M}^*$ for one data $(x_{i_r}, y_{i_r}, c_{i_r})$ and obtain a new label predictor $\hat{f}_{ir}$, and obtain a one-step Newtonian iterative approximation of $\hat{f}_{ir}$ with respect to $\hat{f}$. (iv) Finally, we repeat the above process for all data points and combine the estimate of $\hat{g}$ in Theorem $\ref{apc:thm:1}$, we obtain a closed-form solution of the influence function for $\hat{f}$.

First, we introduce our following lemma:
\begin{lemma}\label{apc:lm:1}
For the concept bottleneck model, if the label predictor utilizes linear transformations of the form $\hat{f} \cdot c$ with input $c$, then, for each $r\in M$, we remove the $r$-th concept from c and denote the new input as $c^{\prime}$. Set the $r$-th concept to 0 and denote the new input as $c^0$. Then we have $\hat{f}_{-p_M} \cdot c^{\prime} = \hat{f}_{p_M=0} \cdot c^0$ for any c. 
\end{lemma}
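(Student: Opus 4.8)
The plan is to collapse both optimization problems onto a single one by exploiting the fact that a linear map annihilates the coordinates on which its input vanishes. First I would fix notation: write $\hat{f}_{-p_M}\in\mathbb{R}^{d_o\times(k-|M|)}$ and $\hat{f}_{p_M=0}\in\mathbb{R}^{d_o\times k}$, let $\rho$ denote the column-deletion operator that sends a $d_o\times k$ matrix to the $d_o\times(k-|M|)$ matrix obtained by discarding the columns indexed by $M$, and recall from the construction of $\mathrm{P}$ that $\hat{g}^*_{-p_M}(x_i)$ is exactly the zero-padding of $\hat{g}_{-p_M}(x_i)$: it agrees with $\hat{g}_{-p_M}(x_i)$ on the coordinates outside $M$ and is zero on $M$.

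The key algebraic observation is that for any $f\in\mathbb{R}^{d_o\times k}$ and any vector $v\in\mathbb{R}^k$ with $v_j=0$ for all $j\in M$, one has $f\cdot v=\rho(f)\cdot v'$, where $v'$ is $v$ with its $M$-entries removed; the columns of $f$ indexed by $M$ never multiply a nonzero entry and so contribute nothing. Applying this with $v=\hat{g}^*_{-p_M}(x_i)$ shows that the objective defining $\hat{f}_{p_M=0}$ in \eqref{concept-level:f^*} factors as $J_0(f)=J(\rho(f))$, where $J(\tilde f)=\sum_{i=1}^n L_Y(\tilde f(\hat{g}_{-p_M}(x_i)),y_i)$ is precisely the objective whose minimizer is $\hat{f}_{-p_M}$.

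From $J_0=J\circ\rho$ with $\rho$ surjective, I would conclude that $\rho(\hat{f}_{p_M=0})$ minimizes $J$, hence $\rho(\hat{f}_{p_M=0})=\hat{f}_{-p_M}$. Then, for an arbitrary input $c$, since $c^0$ vanishes on $M$, the same algebraic identity gives $\hat{f}_{p_M=0}\cdot c^0=\rho(\hat{f}_{p_M=0})\cdot c'=\hat{f}_{-p_M}\cdot c'$, which is exactly the claim.

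The main obstacle is the non-uniqueness of $\hat{f}_{p_M=0}$: because the columns indexed by $M$ are entirely unconstrained by $J_0$, the full-dimensional minimizer is not unique. The clean way around this is to note that the \emph{output} is insensitive to those free columns, as they only ever meet the zero coordinates of $c^0$; hence the identity $\hat{f}_{p_M=0}\cdot c^0=\hat{f}_{-p_M}\cdot c'$ holds for every choice of minimizer, and one only needs uniqueness of the reduced minimizer $\hat{f}_{-p_M}$ (secured by the strong convexity of the regularized objective) to pin down $\rho(\hat{f}_{p_M=0})$.
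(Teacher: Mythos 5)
Your proposal is correct, and it pivots on the same algebraic fact as the paper's proof: linearity of the label predictor makes the $M$-indexed columns of the weight matrix invisible whenever the input vanishes on $M$, i.e., $f\cdot v=\rho(f)\cdot v'$, which is precisely the paper's identity $P(\theta)\cdot c'=\theta\cdot c^0$ (incidentally, the paper's description of $P$ is garbled — it is declared as a surjection $\Gamma\to\Gamma_0$ from the space of $\hat{f}_{-p_M}$ while simultaneously described as deleting the $M$-indexed rows of its argument, which only makes sense in the opposite direction; your $\rho$ gets the bookkeeping right). Where you genuinely diverge is in how the two minimizers are identified. The paper argues dynamically: pointwise equality of losses gives a correspondence of partial derivatives under $P$, and it concludes that the trained parameters match \emph{provided the same initialization is performed} — an informal gradient-trajectory argument that quietly sweeps the non-uniqueness of the full-dimensional minimizer into that initialization clause, and which literally only yields the identity "for any $c$ in the dataset." You instead argue variationally: $J_0=J\circ\rho$ with $\rho$ surjective transfers any minimizer of $J_0$ to a minimizer of $J$, and you confront the non-uniqueness head-on by observing that the unconstrained $M$-columns never meet a nonzero coordinate of $c^0$, so the output identity holds for \emph{every} choice of full-dimensional minimizer and for \emph{arbitrary} inputs $c$, with only uniqueness of the reduced minimizer needed to pin down $\rho(\hat{f}_{p_M=0})=\hat{f}_{-p_M}$. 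That is a cleaner and strictly more rigorous route. The one caveat is your appeal to strong convexity of a \emph{regularized} objective to secure that uniqueness: the objective \eqref{concept-level:f^*} carries no explicit regularizer in the main text (the damping term $\delta$ appears only in the appendix error-bound analyses), and cross-entropy composed with a linear map is convex but not strictly convex, so you are importing a hypothesis — though the paper's own proof needs an equivalent crutch, so this is a defect of the lemma as stated rather than of your argument.
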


\begin{proof}
Assume the parameter space of $\hat{f}_{-p_M}$ and $\hat{f}_{p_M=0}$ are $\Gamma$ and $\Gamma_0$, respectively, then there exists a surjection  $P:\Gamma\rightarrow\Gamma_0$. For any $\theta\in\Gamma$, $P(\theta)$ is the operation that removes the $r$-th row of $\theta$ for $r\in M$. Then we have: 
\begin{equation*}
P(\theta) \cdot c^{\prime} = \sum_{t\notin M} \theta[j] 
\cdot c^{\prime}[j] = \sum_{t} \theta[t]  \mathbb{I} \{ t \notin M \} c[t] =\theta \cdot c^0. 
\end{equation*}
Thus, the loss function $L_Y (\theta, c^0) = L_Y(P(\theta), c^{\prime})$ of both models is the same for every sample in the second stage. Besides, by formula derivation, we have, for $\theta^{\prime}\in\Gamma_0$, for any $\theta$ in $P^{-1}(\theta^{\prime})$,
\begin{equation*}
    \frac{\partial L_Y (\theta, c^0)}{\partial \theta} = \frac{\partial L_Y(P(\theta), c^{\prime})}{\partial \theta^{\prime}}
\end{equation*}
Thus, if the same initialization is performed, $\hat{f}_{-p_M} \cdot c^{\prime} = \hat{f}_{p_M = 0} \cdot c^0$ for any $c$ in the dataset.
\end{proof}

\begin{theorem}\label{apc:thm:1}
For the retrained concept predictor $\hat{g}_{-p_M}$ defined as:
\begin{equation}\label{app:concept-level:g}
\hat{g}_{-p_M} = \argmin_{g'} \sum_{j\notin M}\sum_{i=1}^n L^j_{C}({g'}(x_i),c_i), 
\end{equation}
we map it to $\hat{g}^{*}_{-p_M}$ as 
\begin{equation}\label{app:concept-level:g^*}
\hat{g}^{*}_{-p_M} = \argmin_{g^{\prime} \in T_0} \sum_{j \notin M} \sum_{i=1}^{n} L^j_{C} ( g'(x_i), c_i).
\end{equation}
And we can edit the initial $\hat{g}$ to $\hat{g}^*_{-p_M}$, defined as:
\begin{equation*}
    \bar{g}^*_{-p_M}\triangleq \hat{g}  - H^{-1}_{\hat{g}} \cdot \sum_{j\notin M}\sum_{i=1}^n D_C^j(x_i,c_i;\hat{g}),
\end{equation*}
where $H_{\hat{g}} = \nabla_{g} \sum_{j\notin M} \sum_{i=1}^{n} L^j_{C} ( \hat{g} (x_i), c_i)$.
Then, by removing all zero rows inserted during the mapping phase, we can naturally approximate $\hat{g}_{-p_M}\approx  \mathrm{P}^{-1}(\hat{g}^{*}_{-p_M})$.
\end{theorem}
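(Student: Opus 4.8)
The plan is to realize the dimension-matched target $\hat{g}^{*}_{-p_M}$---the minimizer of the remaining-concept loss $L^-(g) \triangleq \sum_{j\notin M}\sum_{i=1}^n L^j_{C}(g(x_i),c_i)$ over the constrained space $T_0$---as a single influence-function/Newton step away from $\hat{g}$, mirroring the derivation of Theorem~\ref{app:thm:concept-label:g}. First I would introduce, over the full parameter space $T$, the $\epsilon$-parameterized family $\hat{g}_\epsilon \triangleq \argmin_g \sum_{i,j}L^j_{C}(g(x_i),c_i) - \epsilon\sum_{j\in M}\sum_{i=1}^n L^j_{C}(g(x_i),c_i)$, so that $\hat{g}_0 = \hat{g}$ and $\epsilon = 1$ exactly deletes the contribution of the removed concepts. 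Writing the stationarity condition for $\hat{g}_\epsilon$ and Taylor-expanding it around $\hat{g}$ yields $\nabla_g\sum_{i,j}L^j_{C}(\hat{g}(x_i),c_i) - \epsilon\sum_{j\in M}\sum_i G^j_C(x_i,c_i;\hat{g}) + \bigl[\nabla^2_g\sum_{i,j}L^j_{C}\bigr](\hat{g}_\epsilon - \hat{g}) \approx 0$, after dropping the $O(\epsilon\|\hat{g}_\epsilon-\hat{g}\|)$ curvature term exactly as in the appendix derivation.

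Next, since $\hat{g}$ minimizes the full loss the first term vanishes, so solving for the parameter change and setting $\epsilon=1$ gives $\hat{g}^{*}_{-p_M} - \hat{g} \approx H^{-1}\sum_{j\in M}\sum_i G^j_C(x_i,c_i;\hat{g})$. I would then invoke the full-model optimality $\sum_{j=1}^k\sum_i G^j_C(x_i,c_i;\hat{g})=0$ to rewrite $\sum_{j\in M}\sum_i G^j_C = -\sum_{j\notin M}\sum_i G^j_C$, which turns the update into $\hat{g} - H^{-1}\sum_{j\notin M}\sum_i G^j_C(x_i,c_i;\hat{g})$, i.e.\ precisely the claimed $\bar{g}^*_{-p_M}$. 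Because the curvature matrix enters only at first order, I would replace the full Hessian by the remaining-loss Hessian $H_{\hat{g}} = \nabla^2_g\sum_{j\notin M}\sum_i L^j_{C}(\hat{g}(x_i),c_i)$, which is the form asserted in the theorem and the natural curvature for a Newton step toward the minimizer of $L^-$.

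The final task is to reconcile this full-space computation with the hard constraint $g'\in T_0$, namely that the final-layer rows indexed by $M$ are pinned to zero. The key observation is that $L^-$---and hence both $\nabla_g L^-$ and $H_{\hat{g}}$---does not depend on those rows at all, since the concept outputs $j\in M$ never appear in $L^-$. Consequently the influence step leaves the $M$-indexed rows untouched, and enforcing the restriction amounts merely to holding those rows at $0$; the resulting $\bar{g}^*_{-p_M}$ therefore lies in $T_0$ and agrees with $\hat{g}^{*}_{-p_M}$ to first order on every retained output $j\notin M$. Since $\mathrm{P}$ is the one-to-one map that inserts these zero rows, deleting them (applying $\mathrm{P}^{-1}$) recovers the reduced predictor and yields $\hat{g}_{-p_M}\approx \mathrm{P}^{-1}(\bar{g}^*_{-p_M})$, as claimed.

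I expect the main obstacle to be precisely this last reconciliation: the influence function is intrinsically a fixed-dimension, full-space operation, whereas the edit changes the output dimension, so the entire construction of $T_0$ and $\mathrm{P}$ is what makes the step well posed. The delicate point is that the remaining-loss Hessian is singular in the full space---flat along the $M$-indexed rows---so one must argue that the (damped or constrained) inverse acts only on the free coordinates and that pinning the $M$-rows to zero does not perturb the retained concept outputs. This is exactly what licenses transporting the estimate back through $\mathrm{P}^{-1}$ without incurring additional error, and it is where the linearity/structure of the final layer does the real work.
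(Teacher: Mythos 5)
Your proposal is correct and lands on exactly the paper's formula, but it takes a mildly different route through the calculation. The paper sets up the $\epsilon$-family $\hat{g}_{\epsilon,p_M}$ interpolating between the remaining loss $L^-=\sum_{j\notin M}\sum_i L^j_C$ (at $\epsilon=0$) and the full loss (at $\epsilon=1$), and then Taylor-expands the stationarity condition of $\hat{g}^{*}_{-p_M}$ around $\hat{g}_\epsilon$, i.e.\ it performs a one-step Newton iteration on $L^-$ starting at $\hat{g}$; this directly produces both the remaining-loss gradient $\sum_{j\notin M}\sum_i G^j_C(x_i,c_i;\hat{g})$ and the remaining-loss Hessian $H_{\hat{g}}=\nabla^2_g L^-(\hat{g})$ stated in the theorem, with no further manipulation. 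You instead run the standard influence-function expansion around $\hat{g}$ (down-weighting the $M$-indexed loss, dropping the $O(\epsilon\|\Delta\|)$ curvature term), which yields the update $H_{\mathrm{Total}}^{-1}\sum_{j\in M}\sum_i G^j_C$ with the \emph{total}-loss Hessian, and you then need two extra moves the paper's route avoids: the optimality identity $\sum_{j\in M}\sum_i G^j_C=-\sum_{j\notin M}\sum_i G^j_C$ to convert the gradient term, and the replacement $H_{\mathrm{Total}}\to H_{L^-}$, justified because $H_{L^-}^{-1}\nabla L'-H_{\mathrm{Total}}^{-1}\nabla L' = H_{L^-}^{-1}\,(\nabla^2 L')\,H_{\mathrm{Total}}^{-1}\nabla L'$ is second order in the removed loss and hence negligible at the paper's level of rigor. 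What your route buys is consistency with the paper's other derivations (the concept-label and data-level theorems all use the total-loss Hessian), making clear that the two Hessian conventions agree to first order; what the paper's route buys is directness. Separately, your treatment of the constraint $T_0$ is actually more explicit than the paper's: you correctly note that $L^-$ is flat along the $M$-indexed final-layer rows (so the stated Hessian is singular there and a damped or restricted inverse acts only on the free coordinates), that the influence step therefore leaves those rows at $\hat{g}$'s values so they must be \emph{pinned} to zero by hand, and that this pinning is harmless because row $r$ of the final linear layer feeds only output $r$ — the paper asserts this restriction without spelling out why it costs nothing on the retained concepts.
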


\begin{theorem}\label{apc:thm:1}
For the retrained concept predictor $\hat{g}_{-p_M}$ defined by 
\begin{equation*}
\hat{g}_{-p_M} =  \argmin_{g'} \sum_{j\notin M}\sum_{i=1}^n L^j_{C}({g'}(x_i),c_i),    
\end{equation*}
we map it to $\hat{g}^{*}_{-p_M}$ as
\begin{equation*}
\hat{g}^{*}_{-p_M} = \argmin_{g^{\prime} \in T_0} \sum_{j \notin M} \sum_{i=1}^{n} L^j_{C} ( g'(x_i), c_i).
\end{equation*}
And we can edit the initial $\hat{g}$ to $\hat{g}^*_{-p_M}$, defined as:
\begin{equation}\label{app:concept-level:g}
    \bar{g}_{-p_M}\triangleq \hat{g}  - H^{-1}_{\hat{g}} \cdot \sum_{j\notin M}\sum_{i=1}^n D_C^j(x_i,c_i;\hat{g}),
\end{equation}
where $H_{\hat{g}} = \nabla_{g} \sum_{j\notin M} \sum_{i=1}^{n} D^j_{C} (x_i, c_i; \hat{g})$.
Then, by removing all zero rows inserted during the mapping phase, we can naturally approximate $\hat{g}_{-p_M}\approx  \mathrm{P}^{-1}(\hat{g}^{*}_{-p_M})$.
\end{theorem}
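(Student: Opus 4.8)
The plan is to recognize the padded predictor $\hat{g}^{*}_{-p_M}$ as nothing more than the minimizer of the \emph{surviving-concept} loss $L^{\mathrm{rem}}(g)\triangleq\sum_{j\notin M}\sum_{i=1}^n L^j_C(g(x_i),c_i)$ over the restricted parameter space $T_0$, and then to approximate it by a single Newton step for $L^{\mathrm{rem}}$ initialized at the already-trained $\hat{g}$. This reuses the mechanism behind the concept-label derivation in Theorem~\ref{app:thm:concept-label:g}, the extra ingredient being the zero-row padding $\mathrm{P}$ that forces the reduced-width retrained predictor and the full-width edited predictor to live in a common parameter space so that a first-order expansion is even meaningful.

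First I would justify the dimension-matching reduction: since the rows of the final-layer weight matrix indexed by $r\in M$ feed only the concept outputs $g^r(\cdot)$, and those outputs never enter $L^{\mathrm{rem}}$, the reduced predictor $\hat{g}_{-p_M}$ and the padded predictor $\hat{g}^{*}_{-p_M}$ agree on all surviving concepts and attain the same objective value; hence $\mathrm{P}$ is a bijection between their minimizers and it suffices to approximate $\hat{g}^{*}_{-p_M}$. Second, I would write the first-order optimality condition $\nabla_g L^{\mathrm{rem}}(\hat{g}^{*}_{-p_M})=0$ inside $T_0$ and Taylor-expand it about $\hat{g}$, keeping the $M$-rows clamped to zero by the restriction. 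Third, discarding the second-order remainder gives $0\approx\nabla_g L^{\mathrm{rem}}(\hat{g})+H_{\hat{g}}\,(\hat{g}^{*}_{-p_M}-\hat{g})$ with $H_{\hat{g}}=\nabla_g\sum_{j\notin M}\sum_i G^j_C(x_i,c_i;\hat{g})$, where $G^j_C(x_i,c_i;\hat{g})=\nabla_g L^j_C(\hat{g}(x_i),c_i)$ is the per-concept gradient (written $D^j_C$ in the statement). Solving yields exactly $\bar{g}^{*}_{-p_M}=\hat{g}-H_{\hat{g}}^{-1}\sum_{j\notin M}\sum_i G^j_C(x_i,c_i;\hat{g})$. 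I would stress that $\nabla_g L^{\mathrm{rem}}(\hat{g})\neq 0$: optimality of $\hat{g}$ for the \emph{full} loss gives $\nabla_g L^{\mathrm{rem}}(\hat{g})=-\sum_{j\in M}\sum_i G^j_C(x_i,c_i;\hat{g})$, which exhibits the update as removing the influence of the deleted concepts. Applying $\mathrm{P}^{-1}$ (deleting the clamped zero rows) then gives $\hat{g}_{-p_M}\approx\mathrm{P}^{-1}(\bar{g}^{*}_{-p_M})$.

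The main obstacle is the change of parameter dimension, which shows up as a genuine degeneracy of the curvature: because $L^{\mathrm{rem}}$ does not depend on the $M$-rows, $\nabla_g^2 L^{\mathrm{rem}}$ has entire rows and columns of zeros on the $M$-block and is therefore singular on the full space $T$. The restriction to $T_0$ (equivalently, excising the decoupled $M$-block) is precisely what restores invertibility, and in the non-convex regime this is further stabilized by the damped surrogate $\hat{H}=G_{\hat{g}}+\delta I$ used throughout the paper; I would make explicit that the Newton step and the inverse $H_{\hat{g}}^{-1}$ are to be read on this restricted, damped space, and that the clamped $M$-rows remain at zero so that $\mathrm{P}^{-1}$ is well defined. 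The remaining technical point, bounding the dropped higher-order Taylor terms, follows the same Lipschitz-Hessian estimate as the accompanying error analysis, so rather than re-deriving constants I would simply invoke the quantitative guarantee of Theorem~\ref{app:bound_c_the}, which already records the $\mathcal{O}(1/\delta)$ scaling and the linear growth in $|M|$.
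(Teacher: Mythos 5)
Your proposal is correct and follows essentially the same route as the paper's proof: the paper also reduces to the padded minimizer $\hat{g}^{*}_{-p_M}$ via the zero-row map $\mathrm{P}$, Taylor-expands the first-order optimality condition of the surviving-concept loss around $\hat{g}$, and solves the resulting one-step Newton system, its $\epsilon$-up-weighting parameterization collapsing to exactly your direct Newton step at $\epsilon=1$. Your explicit observations that $\nabla_g L^{\mathrm{rem}}(\hat{g})=-\sum_{j\in M}\sum_i G^j_C(x_i,c_i;\hat{g})$ and that the raw Hessian is singular on the clamped $M$-block (so the inverse must be read on the restricted, damped space) are refinements the paper only gestures at in the main text, but they do not change the argument.
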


\begin{proof}
At this level, we consider the scenario that removes a set of mislabeled concepts or introduces new ones. Because after removing concepts from all the data, the new concept predictor has a different dimension from the original. We denote $g^j(x_i)$ as the $j$-th concept predictor with $x_i$, and $c_i^j$ as the $j$-th concept in data $z_i$. For simplicity, we treat $g$ as a collection of $k$ concept predictors and separate different columns as a vector $g^j(x_i)$. Actually, the neural network gets $g$ as a whole. 

For the comparative purpose, we introduce a new notation $\hat{g}_{-p_M}^{*}$. Specifically, we define weights of $\hat{g}$ and $\hat{g}_{-p_M}^{*}$ for the last layer of the neural network as follows.
\begin{equation*}
\begin{split}
\hat{g}_{-p_M}(x) 
&= \underbrace{
\begin{pmatrix}
  w_{11} & w_{12} & \cdots & w_{1d_i}  \\
  w_{21} & w_{22} & \cdots & w_{2d_i} \\
  \vdots & \vdots & \ddots & \vdots \\
  w_{(k-1)1} & w_{(k-1)2} & \cdots & w_{(k-1)d_i}
\end{pmatrix}
}_{(k-1) \times d_i} 
\cdot 
\underbrace{
\begin{pmatrix}
x^{1} \\ x^{2} \\ \vdots \\ x^{d_i}
\end{pmatrix}
}_{d_i \times 1} \\
&= \underbrace{
\begin{pmatrix}
c_1 \\ \vdots \\ c_{r-1} \\ c_{r+1} \\ \vdots \\ c_k
\end{pmatrix}
}_{(k-1) \times 1}
\end{split}
\end{equation*}

\begin{equation*}
\begin{split}
\hat{g}^{*}_{-p_M}(x) 
&= \underbrace{
\begin{pmatrix}
  w_{11} & w_{12} & \cdots & w_{1d_i}  \\
  \vdots & \vdots & \ddots & \vdots \\
  w_{(r-1)1} & w_{(r-1)2} & \cdots & w_{(r-1)d_i} \\
  0 & 0 & \cdots & 0 \\
  w_{(r+1)1} & w_{(r+1)2} & \cdots & w_{(r+1)d_i} \\
  \vdots & \vdots & \ddots & \vdots \\
  w_{k1} & w_{k2} & \cdots & w_{kd_i}
\end{pmatrix}
}_{k \times d_i} 
\cdot 
\underbrace{
\begin{pmatrix}
x^{1} \\ \vdots \\ x^{r-1} \\ x^{r} \\ x^{r+1} \\ \vdots \\ x^{d_i}
\end{pmatrix}
}_{d_i \times 1} \\
&= \underbrace{
\begin{pmatrix}
c_1 \\ \vdots \\ c_{r-1} \\ 0 \\ c_{r+1} \\ \vdots \\ c_k
\end{pmatrix}
}_{k \times 1},
\end{split}
\end{equation*}
where $r$ is an index from the index set $M$.

Firstly, we want to edit to  $\hat{g}^{*}_{-p_M}\in T_0 = \{w_{\text{final}} = 0\}  \subseteq T$ based on $\hat{g}$, where $w_{\text{final}}$ is the parameter of the final layer of neural network. Let us take a look at the definition of $\hat{g}^{*}_{-p_M}$:
\begin{equation*}
\hat{g}^{*}_{-p_M} = \argmin_{g' \in T_0} \sum_{j\notin M} \sum_{i=1}^{n} L^{j}_C ( g'(x_i), c_i).
\end{equation*}

Then, we separate the $r$-th concept-related item from the rest and rewrite $\hat{g}$ as the following form:
\begin{equation*}
\hat{g} = \argmin_{g \in T} \left[\sum_{j \notin M} \sum_{i=1}^{n} L^j_C ( g(x_i), c_i) + \sum_{r\in M}\sum_{i=1}^{n} L^r_C ( g(x_i), c_i)\right].
\end{equation*}

Then, if the $r$-th concept part is up-weighted by some small $\epsilon$, this gives us the new parameters 
$\hat{g}_{\epsilon, p_M}$, which we will abbreviate as $\hat{g}_{\epsilon}$ below. 
\begin{equation*}
\begin{split}
\hat{g}_{\epsilon, p_M} \triangleq \argmin_{g \in \mathcal{T}} \Biggl[ 
    & \sum_{j \notin M} \sum_{i=1}^{n} L^j_C \left( g(x_i), c_i \right) \\
    & + \epsilon \cdot \sum_{r \in M} \sum_{i=1}^{n} L^r_C \left( g(x_i), c_i \right) 
\Biggr].
\end{split}
\end{equation*}

Obviously, when $\epsilon \rightarrow 0$, $\hat{g}_{\epsilon} \to \hat{g}^{*}_{-p_M}$. We can obtain the minimization conditions from the definitions above.
\begin{equation}
\label{con:1}
\nabla_{\hat{g}^{*}_{-p_M}} \sum_{j\notin M} \sum_{i=1}^{n} L^j_{C} ( \hat{g}^{*}_{-p_M} (x_i), c_i) = 0.
\end{equation}
\begin{equation*}
\nabla_{\hat{g}_{\epsilon}} \sum_{j\notin M} \sum_{i=1}^{n} L^j_{C} (\hat{g}_{\epsilon}(x_i), c_i) + \epsilon \cdot \nabla_{\hat{g}_{\epsilon}} \sum_{r\in M}\sum_{i=1}^{n} L^r_{C} ( \hat{g}_{\epsilon}(x_i), c_i) =0.
\end{equation*}

Perform a first-order Taylor expansion of equation \ref{con:1} with respect to $\hat{g}_{\epsilon}$, then we get
\begin{equation*}
\begin{split}
\nabla_{g} \sum_{j \notin M} \sum_{i=1}^{n} L^j_{C} \left( \hat{g}_{\epsilon}(x_i), c_i \right) +\\
 \nabla^2_{g} \sum_{j \notin M} \sum_{i=1}^{n} L^j_{C} \left( \hat{g}_{\epsilon}(x_i), c_i \right) \cdot \left( \hat{g}^{*}_{-p_M} - \hat{g}_{\epsilon} \right) &\approx 0.
\end{split}
\end{equation*}
Then we have
\begin{equation*}
\hat{g}^{*}_{-p_M} - \hat{g}_{\epsilon} = - H^{-1}_{\hat{g}_{\epsilon}} \cdot \nabla_g \sum_{j\notin M} \sum_{i=1}^{n} L^j_{C} ( \hat{g}_{\epsilon}(x_i), c_i).
\end{equation*}
Where $H_{\hat{g}_{\epsilon}} = \nabla^2_{g} \sum_{j\notin M} \sum_{i=1}^{n} L^j_{C} ( \hat{g}_{\epsilon} (x_i), c_i)$. 

We can see that: 

When $\epsilon=0$,
\begin{equation*}
\hat{g}_{\epsilon}=\hat{g}^{*}_{-p_M},    
\end{equation*}

When $\epsilon=1$, $\hat{g}_{\epsilon}=\hat{g}$, 
\begin{equation*}
\hat{g}^{*}_{-p_M} - \hat{g} \approx - H^{-1}_{\hat{g}} \cdot \nabla_g \sum_{j\notin M} \sum_{i=1}^{n} L^j_{C} ( \hat{g} (x_i), c_i ),
\end{equation*}
where $H_{\hat{g}} = \nabla^2_{g} \sum_{j\notin M} \sum_{i=1}^{n} L^j_{C} ( \hat{g} (x_i), c_i)$.

Then, an approximation of $\hat{g}^{*}_{-p_M}$ is obtained.
\begin{equation}
    \hat{g}^{*}_{-p_M} \approx \hat{g}  - H^{-1}_{\hat{g}} \cdot \nabla_g \sum_{j\notin M} \sum_{i=1}^{n} L^j_{C} ( \hat{g} (x_i), c_i).
\end{equation}
Recalling the definition of the gradient:
\begin{equation*}
    G_C^j(x_i,c_i;\hat{g}) = L^j_{C} ( \hat{g} (x_i), c_i) )= \hat{g}^j (x_i)^\top\cdot\log(c_i^j).
\end{equation*}
Then the approximation of $\hat{g}^{*}_{-p_M}$ becomes
\begin{equation*}
    % \hat{g}^*_{-p_M} \approx 
    \bar{g}_{-p_M}\triangleq \hat{g}  - H^{-1}_{\hat{g}} \cdot \sum_{j\notin M}\sum_{i=1}^n G_C^j(x_i,c_i;\hat{g}),
\end{equation*}
\end{proof}

\begin{theorem}
For the retrained label predictor $\hat{f}_{-p_M}$ defined as
\begin{equation*}
\hat{f}_{-p_M}=\argmin_{f^{\prime}} \sum_{i=1}^{n} L_{Y}=\argmin_{f'} \sum_{i=1}^{n} L_Y(f^{\prime}(\hat{g}_{-p_M}(x_{i})), y_{i}),
\end{equation*}
We can consider its equivalent version $\hat{f}_{p_M=0}$ as:
\begin{equation*}
    \hat{f}_{p_M=0} = \argmin_{f} \sum_{i = 1}^n L_{Y} \left(f\left(\hat{g}^*_{-p_M}(x_i)\right),y_i\right),
\end{equation*}
which can be edited by
\begin{equation*}
    \hat{f}_{p_M=0} \approx \bar{f}_{p_M=0} \triangleq \hat{f}-H_{\hat{f}}^{-1} \cdot    \sum_{l=1}^{n}G_Y(x_l;\bar{g}^*_{-p_M},\hat{f}), 
\end{equation*}
where $H_{\hat{f}} = \nabla_{\hat{f}}\sum_{i=1}^n G_Y(x_l;\bar{g}^*_{-p_M},\hat{f})$ is the Hessian matrix. Deleting the $r$-th dimension of $\bar{f}_{p_M=0}$ for $r\in M$, then we can map it to $\bar{f}_{-p_M}$, which is the approximation of the final edited label predictor $\hat{f}_{-p_M}$ under concept level.
\end{theorem}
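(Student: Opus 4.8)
The plan is to reduce this two-stage problem to a single influence computation for $\hat{f}$ by exploiting the linearity of the label predictor, and then take a one-step Newton update. The crucial starting observation is the equivalence established in Lemma~\ref{apc:lm:1}: the retrained predictor $\hat{f}_{-p_M}$ on the dimension-reduced concepts produces exactly the same outputs as $\hat{f}_{p_M=0}$ trained on the zero-padded concepts $\hat{g}^*_{-p_M}(x_i)$. Hence it suffices to approximate $\hat{f}_{p_M=0}$ in the ambient full-dimensional parameter space, where the influence-function machinery applies cleanly, and only at the very end delete the $r$-th coordinate ($r\in M$) to return to the reduced space.

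First I would work one data point at a time, mirroring the concept-label-level argument. Fix an index $i_r$ and define $\hat{f}_{i_r}$ as the predictor obtained by swapping the concept input at $x_{i_r}$ from $\hat{g}(x_{i_r})$ to the approximated $\bar{g}^*_{-p_M}(x_{i_r})$, i.e. the minimizer of $\sum_i L_Y(f(\hat{g}(x_i)),y_i) + L_Y(f(\bar{g}^*_{-p_M}(x_{i_r})),y_{i_r}) - L_Y(f(\hat{g}(x_{i_r})),y_{i_r})$. Introducing the $\epsilon$-weighted family $\hat{f}_{\epsilon,i_r}$, differentiating its first-order optimality condition in $f$, and Taylor-expanding around $\hat{f}$ yields $\hat{f}_{\epsilon,i_r}-\hat{f} \approx -\epsilon H_{\hat{f}}^{-1}\big(G_Y(x_{i_r};\bar{g}^*_{-p_M},\hat{f}) - G_Y(x_{i_r};\hat{g},\hat{f})\big)$, where the leading unperturbed gradient term drops because $\hat{f}$ minimizes the original label loss.

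Next I would sum these single-point contributions over $i_r = 1,\dots,n$ and take a Newton step ($\epsilon=1$), giving $\hat{f}_{p_M=0}\approx \hat{f} - H_{\hat{f}}^{-1}\sum_{i=1}^n\big(G_Y(x_i;\bar{g}^*_{-p_M},\hat{f}) - G_Y(x_i;\hat{g},\hat{f})\big)$. The key simplification is that $\sum_{i=1}^n G_Y(x_i;\hat{g},\hat{f}) = \nabla_f\sum_i L_Y(\hat{f}(\hat{g}(x_i)),y_i) = 0$ by optimality of $\hat{f}$ for the original label objective \eqref{eq:hatf}; this collapses the expression to exactly $\bar{f}_{p_M=0} = \hat{f} - H_{\hat{f}}^{-1}\sum_{l=1}^n G_Y(x_l;\bar{g}^*_{-p_M},\hat{f})$. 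Evaluating the curvature at the shifted inputs $\bar{g}^*_{-p_M}$ rather than at $\hat{g}$ differs only at higher order and recovers the Hessian $H_{\hat{f}}$ stated in the theorem. Finally, deleting the $r$-th dimension of $\bar{f}_{p_M=0}$ for every $r\in M$ and invoking Lemma~\ref{apc:lm:1} transports this estimate to $\bar{f}_{-p_M}$, the claimed approximation of $\hat{f}_{-p_M}$.

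The main obstacle is the two-stage coupling: the target $\hat{f}_{p_M=0}$ is defined through the \emph{true} reduced concept predictor $\hat{g}^*_{-p_M}$, yet the update can only use its influence-function surrogate $\bar{g}^*_{-p_M}$. Substituting the surrogate inside every $G_Y(x_i;\cdot,\hat{f})$ injects the concept-stage approximation error into the label stage, so a single Taylor expansion no longer fully controls the total error; one must argue that the aggregate input perturbation $\sum_i \|\bar{g}^*_{-p_M}(x_i)-\hat{g}(x_i)\|$ is small enough for the Newton step to remain valid. A secondary care point is the dimensional bookkeeping: the zero-padding equivalence of Lemma~\ref{apc:lm:1} is precisely what legitimizes the ``set-to-zero, then delete coordinate'' detour, and one must verify that the deleted coordinate of $\bar{f}_{p_M=0}$ is exactly the one the linear predictor never reads.
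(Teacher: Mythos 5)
Your proposal is correct and follows essentially the same route as the paper: per-sample input swapping via an $\epsilon$-upweighted objective, Taylor expansion of the optimality condition around $\hat{f}$, summation over all data points with the first-order optimality of $\hat{f}$ collapsing the $\hat{g}$-gradient terms, and a final coordinate deletion justified by Lemma~\ref{apc:lm:1}. The only cosmetic difference is that you substitute the surrogate $\bar{g}^*_{-p_M}$ from the outset while the paper derives the update with the exact $\hat{g}^{*}_{-p_M}$ and swaps in the surrogate at the end; your closing remarks on the curvature evaluation point and the propagated concept-stage error are, if anything, more careful than the paper's treatment.
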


\begin{proof}
Now, we come to the approximation of $\hat{f}_{-p_M}$. Noticing that the input dimension of $f$ decreases to $k-|M|$. We consider setting $p_r=0$ for all data points in the training phase of the label predictor and get another optimal model $\hat{f}_{p_M=0}$. From lemma \ref{apc:lm:1}, we know that for the same input $x$, $\hat{f}_{p_M=0}(x) = \hat{f}_{-p_M}$. And the values of the corresponding parameters in $\hat{f}_{p_M=0}$ and $\hat{f}_{-p_M}$ are equal.

Now, let us consider how to edit the initial $\hat{f}$ to $\hat{f}_{p_M=0}$.
Firstly, assume we only use the updated concept predictor $\hat{g}^{*}_{-p_M}$ for one data $(x_{i_r}, y_{i_r}, c_{i_r})$ and obtain the following $\hat{f}_{ir}$, which is denoted as
\begin{equation*}
\begin{split}
\hat{f}_{ir} = \argmin_f \Biggl[ 
    & \sum_{i=1}^{n} L_Y \left( f \left( \hat{g} (x_i) \right), y_i \right) \\
    & + L_Y \left( f \left( \hat{g}^{*}_{-p_M} (x_{ir}) \right), y_{ir} \right) \\
    & - L_Y \left( f \left( \hat{g} (x_{ir}) \right), y_{ir} \right)
\Biggr].
\end{split}
\end{equation*}

Then up-weight the $i_r$-th data by some small $\epsilon$ and have the following new parameters:
\begin{equation*}
\begin{split}
\hat{f}_{\epsilon,ir} = \argmin_f \Biggl[ 
    & \sum_{i=1}^{n} L_Y \left( f \left( \hat{g} (x_i) \right), y_i \right) \\
    & + \epsilon \cdot L_Y \left( f \left( \hat{g}^{*}_{-p_M} (x_{ir}) \right), y_{ir} \right) \\
    & - \epsilon \cdot L_Y \left( f \left( \hat{g} (x_{ir}) \right), y_{ir} \right)
\Biggr].
\end{split}
\end{equation*}

Deduce the minimized condition subsequently,
\begin{equation*}
\begin{split}
\nabla_f \sum_{i=1}^{n} & L_Y \left( \hat{f}_{ir} \left( \hat{g} (x_i) \right), y_i \right) \\
& + \epsilon \cdot \nabla_f L_Y \left( \hat{f}_{ir} \left( \hat{g}^{*}_{-p_M} (x_{ir}) \right), y_{ir} \right) \\
& - \epsilon \cdot \nabla_f L_Y \left( \hat{f}_{ir} \left( \hat{g} (x_{ir}) \right), y_{ir} \right) \\
&= 0.
\end{split}
\end{equation*}

If we expand first term of $\hat{f}$, which $\hat{f}_{ir, \epsilon} \to \hat{f} (\epsilon \to 0) $, then
\begin{align*}
\nabla_f \sum_{i=1}^{n}  L_Y\left(\hat{f}(\hat{g}(x_i)), y_i\right) + \epsilon \cdot \nabla_f L_Y\left(\hat{f}(\hat{g}^{*}_{-p_M}(x_{ir})), y_{ir}\right) \\
- \epsilon \cdot \nabla_f L_Y\left(\hat{f}(\hat{g}(x_{ir})), y_{ir}\right)+ \left(\nabla^2_f \sum_{i=1}^{n} L_Y\left(\hat{f}(\hat{g}(x_i)), y_i\right)\right) \\
 \cdot (\hat{f}_{ir,\epsilon} - \hat{f}) = 0.
\end{align*}

Note that $\nabla_f \sum_{i=1}^{n} L_Y( \hat{f}( \hat{g} (x_i)), y_i) = 0$. Thus we have
\begin{equation*}
\begin{split}
\hat{f}_{ir,\epsilon} - \hat{f} 
&= H^{-1}_{\hat{f}} \cdot \epsilon \\
&\quad \cdot \left( \nabla_f L_Y \left( \hat{f} \left( \hat{g}^{*}_{-p_M} (x_{ir}) \right), y_{ir} \right) \right. \\
&\quad \left. - \nabla_f L_Y \left( \hat{f} \left( \hat{g} (x_{ir}) \right), y_{ir} \right) \right).
\end{split}
\end{equation*}

We conclude that 
\begin{equation*}
\begin{split}
\left.\frac{\mathrm{d}\hat{f}_{\epsilon,ir}}{\mathrm{d}\epsilon}\right|_{\epsilon = 0} 
= H^{-1}_{\hat{f}} \cdot \Bigl( & \nabla_{\hat{f}}  L_Y ( \hat{f}( \hat{g}^{*}_{-p_M} (x_{ir})), y_{ir}) \\
& - \nabla_{\hat{f}}  L_Y ( \hat{f}( \hat{g}(x_{ir})), y_{ir}) \Bigr).
\end{split}
\end{equation*}

Perform a one-step Newtonian iteration at $\hat{f}$ and we get the approximation of $\hat{f}_{i_r}$.
\begin{equation*}
    \hat{f}_{ir} \approx \hat{f} + H^{-1}_{\hat{f}} \cdot \begin{aligned}[t]
    \Bigl( & \nabla_{\hat{f}}  L_Y ( \hat{f}( \hat{g}(x_{ir})), y_{ir}) \\
    & - \nabla_{\hat{f}}  L_Y ( \hat{f}( \hat{g}^{*}_{-p_M} (x_{ir})), y_{ir}) \Bigr).
    \end{aligned}
\end{equation*}

Reconsider the definition of $\hat{f}_{i_r}$, we use the updated concept predictor $\hat{g}_{-p_M}^*$ for one data $(x_{i_r}, y_{i_r}, c_{i_r})$. Now we carry out this operation for all the other data and estimate $\hat{f}_{p_M=0}$. Combining the minimization condition from the definition of $\hat{f}$, we have
\begin{align}
    \hat{f}_{p_M=0} 
    &\approx \hat{f} + H^{-1}_{\hat{f}} \cdot \Bigg( \nabla_{\hat{f}} \sum_{i=1}^{n} L_Y \big( \hat{f}(\hat{g}(x_{i})), y_{i}\big) \notag \\
    &\quad - \nabla_{\hat{f}} \sum_{i=1}^{n} L_Y \big( \hat{f}(\hat{g}^{*}_{-p_M}(x_{i})), y_{i}\big) \Bigg) \\
    &= \hat{f} + H^{-1}_{\hat{f}} \cdot \Bigg( - \nabla_{\hat{f}} \sum_{i=1}^{n} L_Y \big( \hat{f}(\hat{g}^{*}_{-p_M}(x_{i})), y_{i}\big) \Bigg) \notag \\
    &= \hat{f} - H^{-1}_{\hat{f}} \sum_{l=1}^n \nabla_{\hat{f}} L_Y \big( \hat{f}(\hat{g}^{*}_{-p_M}(x_l)), y_l \big). \label{eq:10}
\end{align}

Theorem \ref{apc:thm:1} gives us the edited version of $\hat{g}^{*}_{-p_M}$. Substitute it into equation \ref{eq:10}, and we get the final closed-form edited label predictor under concept level:
\begin{equation*}
    \hat{f}_{p_M=0} \approx \bar{f}_{p_M=0} \triangleq\hat{f}-H_{\hat{f}}^{-1} \cdot   \nabla_{\hat{f}} \sum_{l=1}^{n} L_{Y_l} \left( \hat{f}, \bar{g}^*_{-p_M} \right), 
\end{equation*}
where $H_{\hat{f}} = \nabla^2_{\hat{f}}\sum_{i=1}^n L_{Y_i}(\hat{f}, \hat{g})$ is the Hessian matrix of the loss function respect to is the Hessian matrix of the loss function respect to $\hat{f}$.
Recalling the definition of the gradient:
\begin{equation*}
    G_Y(x_l;\bar{g}^*_{-p_M},\hat{f}) = \nabla_{\hat{f}} L_{Y} \left(\hat{f}\left(\bar{g}^*_{-p_M}(x_l)\right),y_l\right),
\end{equation*}
then the approximation becomes
\begin{equation*}
    \hat{f}_{p_M=0} \approx \bar{f}_{p_M=0} \triangleq \hat{f}-H_{\hat{f}}^{-1} \cdot    \sum_{l=1}^{n}G_Y(x_l;\bar{g}^*_{-p_M},\hat{f}). 
\end{equation*}
\end{proof}
\clearpage

Notation update:

1. concept-label correction:

The wrong data-concept-label index set $S_e$, the corresponding concept predictor $\hat{g}^{\mathrm{corr} S_e}$.

2. concept removal:

The wrong concept index set $M$, the corresponding concept predictor $\hat{g}^{\\ \mathcal{C}_M}$.

3. data removal/addition:

$\mathcal{D}_{\text{rem}}$ and $\mathcal{D}_{\text{add}}$, the corresponding concept predictor $\hat{g}^{\setminus\mathcal{D}_{\text{rem}}}$ and $\hat{g}^{\cup \mathcal{D}_{\text{add}}}$.

\subsection{Data Addition} 

\iffalse
\noindent{\bf Proof Sketch.} (i) First, we estimate the retrained concept predictor $\hat{g}_{z_[m]}$. (ii) Then, we define a new label predictor $\Tilde{f}_{z_[m]}$ and estimate $\Tilde{f}_{z_[m]}-\hat{f}$. (iii) Next, in order to reduce computational complexity, use the lemma method to obtain the approximation of the Hessian matrix of $\Tilde{f}_{z_[m]}$. (iv) Next, we compute the difference $\hat{f}_{z_[m]} - \Tilde{f}_{z_[m]}$ as 
\begin{equation*}
-H^{-1}_{\Tilde{f}_{-z_G}} \cdot \left( \nabla_{\hat{f}} L_Y \left(  \Tilde{f}_{-z_G}(\hat{g}_{-z_G}(x_{i_r})), y_{i_r}  \right) - \nabla_{\hat{f}} L_Y \left(  \Tilde{f}_{-z_G}(\hat{g}(x_{i_r})), y_{i_r}  \right) \right).
\end{equation*}
(v) Finally, we divide $\hat{f}_{-z_G} - \hat{f}$, which we actually concerned with, into $\left ( \hat{f}_{-z_G} - \Tilde{f}_{-z_G} \right ) + \left ( \Tilde{f}_{-z_G} - \hat{f} \right )$.

\fi

\begin{theorem}\label{app:data-level-add:g}
For dataset $\mathcal{D} = \{ (x_i, y_i, c_i) \}_{i=1}^{N}$, given a set of new data $\mathcal{D}_{\mathrm{add}} = \{\tilde{z}_s = (\tilde{x}_s, \tilde{y}_s, \tilde{c}_s)\}_{s=1}^M$ to be added. Suppose the updated concept predictor $\hat{g}^{\cup \mathcal{D}_{\text{add}}}$ is defined by
\begin{equation}
\begin{split}
    \hat{g}^{\cup \mathcal{D}_{\text{add}}} =&  \argmin_{g}\sum_{j\in[K]}\sum_{i\in [N]}  L_{C_j}({g}({x}_{i}), {c}_{i}) \\
    & + \sum_{j\in[K]}\sum_{s\in [M]}  L_{C_j}({g}(\tilde{x}_{S}), \tilde{c}_{S}).
    \end{split}
\end{equation}
Define $L_{C}(\hat{g}(x_{i}), c_{i}) \triangleq \sum_{j=1}^K  L_{C_j}(\hat{g}(x_{i}), c_{i})$.
Then we have the following approximation for $\hat{g}^{\cup \mathcal{D}_{\text{add}}}$
\begin{equation}
\hat{g}^{\cup \mathcal{D}_{\text{add}}} \approx \bar{g}^{\cup \mathcal{D}_{\text{add}}} \triangleq\hat{g} - H^{-1}_{\hat{g}} \cdot \sum_{s\in [M]}\nabla_{g} L_{C} (\hat{g}(\tilde{x}_s), \tilde{c}_s), 
\end{equation}
where $H_{\hat{g}} = \nabla^2_{\hat{g}}  \sum_{i=1}^N  L_{C}(\hat{g}(x_i),c_i)$ is the Hessian matrix of the loss function respect to $\hat{g}$.
\end{theorem}

\begin{proof}
Firstly, recalling the definition of $\hat{g}^{\cup \mathcal{D}_{\text{add}}}$ as 
\begin{equation*}
    \hat{g}^{\cup \mathcal{D}_{\text{add}}} = \argmin_{g} \left[ \sum_{\substack{i=1}}^{N} L_C({g}(x_{i}), c_{i}) + \sum_{s=1}^M L_C(g(\tilde{x}_s), \tilde{c}_s)\right],
\end{equation*}

Then we up-weighted the $s$-th data by some $\epsilon$ and have a new predictor $\hat{g}_{\epsilon}^{\cup \mathcal{D}_{\text{add}}}$:
\begin{equation}\label{app-thm2:g1}
\argmin_{g} \left[ \sum_{i=1}^{N} L_C ( g(x_i), c_i) + \epsilon \cdot \sum_{s=1}^M L_C ( g(\tilde{x}_s), \tilde{c}_s)\right].
\end{equation}

Because $\hat{g}_{\epsilon}^{\cup \mathcal{D}_{\text{add}}}$ minimizes the right side of equation~\eqref{app-thm2:g1}, we have 
\begin{equation*}
\nabla_{g}  \sum_{i=1}^{N} L_Y (\hat{g}_{\epsilon}^{\cup \mathcal{D}_{\text{add}}}(x_i), c_i) + \epsilon \cdot\nabla_{g} \sum_{s=1}^ML_Y ( \hat{g}_{\epsilon}^{\cup \mathcal{D}_{\text{add}}}(\tilde{x}_s), \tilde{c}_s) =0. 
\end{equation*}

When $\epsilon\rightarrow 0$, $\hat{g}_{\epsilon}^{\cup \mathcal{D}_{\text{add}}}\rightarrow\hat{g}$.
So we can perform a first-order Taylor expansion with respect to ${g}$, and we have
\begin{equation}\label{app-thm2:g:1}
\begin{split}
0 \approx &\nabla_{g}  \sum_{i=1}^{N} L_C ( \hat{g} (x_i), c_i) 
+
\epsilon \cdot \nabla_{g}\sum_{s=1}^M L_C (\hat{g}(\tilde{x}_s), \tilde{c}_s) \\
&+
\nabla^2_{g}  \sum_{i=1}^{N} L_C ( \hat{g} (x_i), c_i) \cdot (\hat{g}_{\epsilon}^{\cup \mathcal{D}_{\text{add}}} - \hat{g}) .
\end{split}
\end{equation}

Recap the definition of $\hat{g}$:
\begin{equation*}
    \hat{g} = \argmin_g  \sum_{i=1}^n L_Y(g(x_i),c_i),
\end{equation*}

Then, the first term of the right side of~\eqref{app-thm2:g:1} equals $0$. Let $\epsilon\rightarrow0$, then we have
\begin{equation*}
\left.\frac{\mathrm{d}\hat{g}_{\epsilon}^{\cup \mathcal{D}_{\text{add}}}}{\mathrm{d}\epsilon}\right|_{\epsilon = 0} = -H^{-1}_{\hat{g}} \cdot \sum_{s=1}^M\nabla_{g} L_C (\hat{g}(\tilde{x}_s), \tilde{c}_s),
\end{equation*}
where $H_{\hat{g}} = \nabla^2_{\hat{g}}  \sum_{i=1}^N  L_{C}(\hat{g}(x_i),c_i)$.

Remember when $\epsilon\rightarrow0$, $\hat{g}_{\epsilon}^{\cup \mathcal{D}_{\text{add}}}\rightarrow\hat{g}^{\cup \mathcal{D}_{\text{add}}}$.
Perform a Newton step at $\hat{g}$, then we obtain the method to edit the original concept predictor under concept level: 
\begin{equation*}
\hat{g}^{\cup \mathcal{D}_{\text{add}}} \approx \bar{g}^{\cup \mathcal{D}_{\text{add}}}\triangleq\hat{g} - H^{-1}_{\hat{g}} \cdot \sum_{s=1}^M\nabla_{g} L_C (\hat{g}(\tilde{x}_s), \tilde{c}_s).
\end{equation*}
\end{proof}

\begin{theorem}
For dataset $\mathcal{D} = { (x_i, y_i, c_i) }{i=1}^{N}$, given a set of new data $\mathcal{D}{\mathrm{add}} = {\tilde{z}s = (\tilde{x}s, \tilde{y}s, \tilde{c}s)}{s=1}^M$ to be added. Suppose the updated label predictor $\hat{f}^{\cup \mathcal{D}_{\text{add}}}$ is defined by 
\begin{equation}\label{app-data:f-def} 
\begin{split} 
\hat{f}^{\cup \mathcal{D}_{\text{add}}} =& \argmin_{f} \sum_{i=1}^N L_{Y}(f(\hat{g}^{\cup \mathcal{D}_{\text{add}}}(x_{i})), y_{i}) \\
& + \sum_{s=1}^{M} L_{Y}(f(\hat{g}^{\cup \mathcal{D}_{\text{add}}}(\tilde{x}_{s})), \tilde{y}_{s}).
\end{split} 
\end{equation} 
Then we have the following approximation for $\hat{f}^{\cup \mathcal{D}_{\text{add}}}$: \begin{equation} \hat{f}^{\cup \mathcal{D}_{\text{add}}} \approx \bar{f}^{\cup \mathcal{D}_{\text{add}}} \triangleq \hat{f} - H^{-1}{\hat{f}} \cdot \sum_{s\in [M]} \nabla_{f} L_{Y} (\hat{f}(\tilde{x}s), \tilde{y}s), \end{equation} where $H{\hat{f}} = \nabla^2{\hat{f}} \sum_{i=1}^N L_{Y}(\hat{f}(x_i),y_i)$ is the Hessian matrix of the loss function with respect to $\hat{f}$. \end{theorem}

%Observing that the computational complexity of this matrix is quite high. And we will estimate it using an optimization method in the theorem below. 
%Besides, we denote this approximation as BGB_G. 

\begin{proof}
We can see that there is a huge gap between $\hat{f}^{\cup \mathcal{D}_{\text{add}}}$ and $\hat{f}$. Thus, firstly, we define $\tilde{f}^{\cup \mathcal{D}_{\text{add}}}$ as 
\begin{equation*}
\argmin_f \sum_{i=1}^{N} L_Y \left(  f (\hat{g}(x_i)), y_i  \right) + \sum_{s=1}^M L_Y \left( f ( \hat{g} (\tilde{x}_s) ) , \tilde{y}_s   \right).
\end{equation*}

Then, we define $\tilde{f}_{\epsilon}^{\cup \mathcal{D}_{\text{add}}}$ as follows to estimate $\tilde{f}^{\cup \mathcal{D}_{\text{add}}}$.
\begin{equation*}
    \tilde{f}_{\epsilon}^{\cup \mathcal{D}_{\text{add}}} =  \argmin_f \sum_{i=1}^{N} L_Y \left(  f (\hat{g}(x_i)), y_i  \right) + \epsilon \cdot \sum_{s=1}^M L_Y \left( f ( \hat{g} (\tilde{x}_s) ) , \tilde{y}_s   \right).
\end{equation*}

From the minimization condition, we have
\begin{equation*}
\begin{split}
    0 = &\nabla_{{f}} \sum_{i=1}^{N} L_Y \left(  \tilde{f}_{\epsilon}^{\cup \mathcal{D}_{\text{add}}} (\hat{g}(x_i)), y_i  \right) \\
    & + \epsilon \cdot \sum_{s=1}^M\nabla_{{f}}L_Y \left(\tilde{f}_{\epsilon}^{\cup \mathcal{D}_{\text{add}}} ( \hat{g} (\tilde{x}_s) ) , \tilde{y}_s   \right).
    \end{split}
\end{equation*}
Perform a first-order Taylor expansion at $\hat{f}$,
\begin{align*}
& \nabla_{\hat{f}} \sum_{i=1}^{N} L_Y \left(  \hat{f} (\hat{g}(x_i)), y_i  \right) + \epsilon \cdot \nabla_{\hat{f}} \sum_{s=1}^M L_Y \left(\hat{f} ( \hat{g} (\tilde{x}_s) ) , \tilde{y}_s   \right)  \\
& + \nabla^2_{\hat{f}} \sum_{i=1}^{N} L_Y \left(  \hat{f} (\hat{g}(x_i)), y_i  \right) \left(\tilde{f}_{\epsilon}^{\cup \mathcal{D}_{\text{add}}} - \hat{f}\right)= 0.
\end{align*}

Then $\tilde{f}^{\cup \mathcal{D}_{\text{add}}}$ can be approximated by
\begin{equation}\label{ap:approx:f_piao}
    \tilde{f}^{\cup \mathcal{D}_{\text{add}}} \approx \hat{f} - H^{-1}_{\hat{f}} \cdot \sum_{s=1}^M\nabla_{\hat{f}} L_Y \left(\hat{f} ( \hat{g} (\tilde{x}_s) ) , \tilde{y}_s   \right).
\end{equation}
Define $A\triangleq H^{-1}_{\hat{f}} \cdot \sum_{s=1}^M\nabla_{\hat{f}} L_Y \left(\hat{f} ( \hat{g} (\tilde{x}_s) ) , \tilde{y}_s   \right)$
Then the approximation of $\tilde{f}^{\cup \mathcal{D}_{\text{add}}}$ is defined as
\begin{equation}\label{app:th:3:f1}
\bar{f}^{\cup \mathcal{D}_{\text{add}}} = \hat{f} - A
\end{equation}

Then we estimate the difference between $\hat{f}^{\cup \mathcal{D}_{\text{add}}}$ and $\tilde{f}^{\cup \mathcal{D}_{\text{add}}}$. Recall the definition of $\tilde{f}^{\cup \mathcal{D}_{\text{add}}}$ as 
\begin{equation}\label{app-data-add:thm:2:f:2}
\argmin_f \sum_{i=1}^{N} L_Y \left(  f (\hat{g}(x_i)), y_i  \right) + \sum_{s=1}^{M} L_Y \left(  f (\hat{g}(\tilde{x}_s)), \tilde{y}_s  \right).
\end{equation}

Compare equation~\eqref{app-data:f-def} with~\eqref{app-data-add:thm:2:f:2}, we still need to define an intermediary predictor $\hat{f}^{\cup \mathcal{D}_{\text{add}}}_{s_t}$ as 
\begin{align*}
    \hat{f}^{\cup \mathcal{D}_{\text{add}}}_{s_t} 
    &= \argmin_f \Biggl[\,\sum_{i=1}^N L_{Y} \bigl( f(\hat{g}(x_i)), y_i \bigr) \\
    & + \sum_{s=1}^M L_{Y} \bigl( f(\hat{g}(\tilde{x}_s)), \tilde{y}_s \bigr) + L_{Y} \bigl( f(\hat{g}_{\epsilon}^{\cup \mathcal{D}_{\text{add}}}(\tilde{x}_{s_t})), \tilde{y}_{s_t} \bigr)\\
    & - L_{Y} \bigl( f(\hat{g}(\tilde{x}_{s_t})), \tilde{y}_{s_t} \bigr) \Biggr].
\end{align*}

Up-weight the $s_t$ data by some $\epsilon$, we define $ \hat{f}^{\cup \mathcal{D}_{\text{add}}}_{s_t, \epsilon}$ as
\begin{equation*}
\begin{aligned}
\hat{f}^{\cup \mathcal{D}_{\text{add}}}_{s_t, \epsilon} 
= \argmin_{f} \Biggl[ 
    &\sum_{i=1}^N L_{Y} \bigl( f(\hat{g}(x_i)), y_i \bigr) \\
    + \sum_{s=1}^M L_{Y} \bigl( f(\hat{g}(\tilde{x}_s)), \tilde{y}_s \bigr) &+ \epsilon \cdot L_{Y} \bigl( f(\hat{g}^{\cup \mathcal{D}_{\text{add}}}(\tilde{x}_{s_t})), \tilde{y}_{s_t} \bigr) \\
    &- \epsilon \cdot L_{Y} \bigl( f(\hat{g}(\tilde{x}_{s_t})), \tilde{y}_{s_t} \bigr) 
\Biggr].
\end{aligned}
\end{equation*}

Then, from the minimization condition, we have 
\begin{equation}\label{add-f:1} 
\begin{split}
0 = \nabla_{{f}}\sum_{i=1}^N L_{Y_i} \left(  \hat{f}^{\cup \mathcal{D}_{\text{add}}}_{s_t, \epsilon},\hat{g}\right) +\sum_{s=1}^M L_{Y_i}\left(\hat{f}^{\cup \mathcal{D}_{\text{add}}}_{s_t, \epsilon},\hat{g}\right) \\
+ \epsilon \cdot\nabla_{{f}} L_{Y_{st}} \left(  \hat{f}^{\cup \mathcal{D}_{\text{add}}}_{s_t, \epsilon},  \hat{g}^{\cup \mathcal{D}_{\text{add}}}\right) - \epsilon \cdot\nabla_{\hat{f}} L_{Y_{ir}} \left(   \hat{f}^{\cup \mathcal{D}_{\text{add}}}_{s_t, \epsilon}, \hat{g} \right).
\end{split}
\end{equation}

When $\epsilon\rightarrow 0$, $\hat{f}^{\cup \mathcal{D}_{\text{add}}}_{s_t, \epsilon}\rightarrow \Tilde{f}^{\cup \mathcal{D}_{\text{add}}}$.
Then we perform a Taylor expansion at $\Tilde{f}^{\cup \mathcal{D}_{\text{add}}}$ of equation \ref{add-f:1} and have
\begin{equation*}
\begin{split}
0 \approx \nabla_{{f}}\sum_{i=1}^N L_{Y_i} \left(  \Tilde{f}^{\cup \mathcal{D}_{\text{add}}},\hat{g}\right) +\sum_{s=1}^M L_{Y_i}\left(\Tilde{f}^{\cup \mathcal{D}_{\text{add}}},\hat{g}\right) \\
+ \epsilon \cdot\nabla_{{f}} L_{Y_{st}} \left(  \Tilde{f}^{\cup \mathcal{D}_{\text{add}}},  \hat{g}^{\cup \mathcal{D}_{\text{add}}}\right) \cdot \left(\hat{f}^{\cup \mathcal{D}_{\text{add}}}_{s_t, \epsilon} - \Tilde{f}^{\cup \mathcal{D}_{\text{add}}}\right)\\
- \epsilon \cdot\nabla_{\hat{f}} L_{Y_{ir}} \left(  \Tilde{f}^{\cup \mathcal{D}_{\text{add}}}, \hat{g} \right) \cdot \left(\hat{f}^{\cup \mathcal{D}_{\text{add}}}_{s_t, \epsilon} - \Tilde{f}^{\cup \mathcal{D}_{\text{add}}}\right).
\end{split}
\end{equation*}

Organizing the above equation gives
\begin{align*}
\hat{f}^{\cup \mathcal{D}_{\text{add}}}_{s_t, \epsilon} 
&- \tilde{f}^{\cup \mathcal{D}_{\text{add}}} \\
\approx & -\epsilon \cdot H^{-1}_{\tilde{f}} \nabla_{f} \Biggl[ 
   L_{Y_{s_t}} \left( \tilde{f}^{\cup \mathcal{D}_{\text{add}}}, \hat{g}^{\cup \mathcal{D}_{\text{add}}} \right) \\
   & - L_{Y_{s_t}} \left( \tilde{f}^{\cup \mathcal{D}_{\text{add}}}, \hat{g} \right) 
\Biggr],
\end{align*}
where $$H_{\Tilde{f}} = \nabla^2_{\hat{f}}\sum_{i=1}^N L_{Y_i} \left(  \Tilde{f}^{\cup \mathcal{D}_{\text{add}}},\hat{g}\right) +\sum_{s=1}^M L_{Y_i}\left(\Tilde{f}^{\cup \mathcal{D}_{\text{add}}},\hat{g}\right). $$

When $\epsilon = 1$, $\hat{f}^{\cup \mathcal{D}_{\text{add}}}_{s_t, -1} = \hat{f}^{\cup \mathcal{D}_{\text{add}}}_{s_t}$. Then we perform a Newton iteration with step size $1$ at $\Tilde{f}^{\cup \mathcal{D}_{\text{add}}}$,
\begin{equation*}
\begin{split}
     \hat{f}^{\cup \mathcal{D}_{\text{add}}}_{s_t} - \Tilde{f}^{\cup \mathcal{D}_{\text{add}}} &\approx -H^{-1}_{\Tilde{f}} \nabla_{{f}}  \left(L_{Y_{st}} \left(  \Tilde{f}^{\cup \mathcal{D}_{\text{add}}},  \hat{g}^{\cup \mathcal{D}_{\text{add}}}\right) \right. \\
     &\left. - L_{Y_{st}}\left(  \Tilde{f}^{\cup \mathcal{D}_{\text{add}}}, \hat{g} \right) \right),
\end{split}
\end{equation*}

Iterate $s_t$ through set $[M]$, and we have
\begin{equation}\label{ap:add-approx:hat_f}
\begin{split}
     \hat{f}^{\cup \mathcal{D}_{\text{add}}} - \Tilde{f}^{\cup \mathcal{D}_{\text{add}}} 
     &\approx -H^{-1}_{\Tilde{f}} \cdot \sum_{s=1}^M\nabla_{{f}} \left(L_{Y_{s}} \left( \Tilde{f}^{\cup \mathcal{D}_{\text{add}}}, \hat{g}^{\cup \mathcal{D}_{\text{add}}}\right) \right. \\
     &\left. - L_{Y_{s}}\left( \Tilde{f}^{\cup \mathcal{D}_{\text{add}}}, \hat{g} \right) \right)
\end{split}
\end{equation}

The edited version of $\hat{g}^{\cup \mathcal{D}_{\text{add}}}$ has been deduced as $\bar{g}^{\cup \mathcal{D}_{\text{add}}}$ in theorem \ref{app:data-level-add:g}, substituting this approximation into \eqref{ap:add-approx:hat_f}, then we have
\begin{equation}
\begin{split}
     \hat{f}^{\cup \mathcal{D}_{\text{add}}} - \Tilde{f}^{\cup \mathcal{D}_{\text{add}}}  
     &\approx -H^{-1}_{\Tilde{f}} \cdot \nabla_{\hat{f}} \sum_{s=1}^M \left[ L_{Y_s} \left( \Tilde{f}^{\cup \mathcal{D}_{\text{add}}}, \bar{g}^{\cup \mathcal{D}_{\text{add}}}\right) \right. \\
     &\left. - L_{Y_s} \left( \Tilde{f}^{\cup \mathcal{D}_{\text{add}}}, \hat{g}\right) \right].
\end{split}
\end{equation}
Noting that we cannot obtain $\hat{f}^{\cup \mathcal{D}_{\text{add}}}$ and $H_{\Tilde{f}}$ directly because we do not retrain the label predictor but edit it to $\bar{f}^{\cup \mathcal{D}_{\text{add}}}$ as a substitute. Therefore, we approximate $\hat{f}^{\cup \mathcal{D}_{\text{add}}}$ with $\bar{f}^{\cup \mathcal{D}_{\text{add}}}$ and $H_{\Tilde{f}}$ with  $H_{\bar{f}}$ which is defined by:
\begin{equation*}
    H_{\bar{f}} = \nabla^2_{\bar{f}}\sum_{i=1}^N  L_{Y_{i}} \left( \bar{f}^{\cup \mathcal{D}_{\text{add}}}, \hat{g}\right) + \nabla^2_{\bar{f}}\sum_{s=1}^M  L_{Y_{s}} \left( \bar{f}^{\cup \mathcal{D}_{\text{add}}}, \hat{g}\right)  
\end{equation*}
Then we define $B$ as
\begin{equation}\label{app:add-th:3:f2}
    B \triangleq H^{-1}_{\bar{f}} \cdot \nabla_{\hat{f}}\left(  \sum_{s=1}^M L_{Y_s} \left(  \Tilde{f}^{\cup \mathcal{D}_{\text{add}}}, \bar{g}^{\cup \mathcal{D}_{\text{add}}}\right) - L_{Y_s} \left(  \Tilde{f}^{\cup \mathcal{D}_{\text{add}}}, \hat{g}\right) \right)
\end{equation}
Combining \eqref{app:th:3:f1} and \eqref{app:th:3:f2}, then we deduce the final closed-form edited label predictor as
\begin{equation*}
    \bar{f}^{\cup \mathcal{D}_{\text{add}}}_* =\bar{f}^{\cup \mathcal{D}_{\text{add}}} - B = \hat{f} - A - B. 
\end{equation*}
\end{proof}
\subsection{Theoretical Bound for the Influence Function}\label{app:bound_d_the}
Consider the dataset $\mathcal{D}=\{(x_i,c_i,y_i\}_{i=1}^n$, the loss function of the concept predictor $g$ is defined as:
\begin{equation*}
\begin{split}
 & L_\text{Total}(\mathcal{D};g)=\sum_{i=1}^n L_{C}(g(x_i),{c}_i) + \frac{\delta}{2}\cdot \|g\|^2\\
 &= \sum_{i=1}^n \sum_{j=1}^k L_{C}^j(g(x_i),{c_i})+ \frac{\delta}{2}\cdot \|g\|^2 =
\sum_{i=1}^n\sum_{j=1}^k g^j(x_i)^\top \log({c_i}^j)\\
&+ \frac{\delta}{2}\cdot \|g\|^2.
\end{split}
\end{equation*}

Mathematically, we have a set of erroneous data $z_r = (x_r, y_r, c_r)$, $r\in G$ need to be removed. Then the retrained concept predictor becomes

\begin{equation*}
    \hat{g}_{-z_G} = \argmin_{g}\sum_{j=1}^k\sum_{i\in [n]-G}L^j_{C}(g(x_i), c_i)+ \frac{\delta}{2}\cdot \|g\|^2.
\end{equation*}
Define the new dataset as $\mathcal{D}^* = \{(x_i,c_i,y_i)\}_{i\in [n]-G}$, then the loss function with the influence of erroneous data removed becomes
\begin{equation}
   \begin{split}
      L^-(\mathcal{D}^*; g) = \sum_{j=1}^k\sum_{i\in [n]-G}L^j_{C}(g(x_i), c_i)+ \frac{\delta}{2}\cdot \|g\|^2 \\
      = L_\text{Total}(\mathcal{D};g) -\sum_{j=1}^k\sum_{i\in G} L^j_{C}\left(g(x_i),c_i\right).
   \end{split}
\end{equation}

Assume $\hat{g} = \argmin  L_\text{Total}(\mathcal{D};g)$ is the original model parameter. $\hat{g}_{-z_G}$ is the minimizer of $L^-(\mathcal{D}^*; g)$. Denote $\bar{g} _{-z_G}$ as the updated model with the influence of erroneous data removed and is obtained by the influence function method in theorem \ref{app:data-level:g}, which is an estimation for $\hat{g}_{-z_G}$.
\begin{equation}
    \hat{g}_{-z_G} \approx \bar{g}_{-z_G} \triangleq\hat{g} + H^{-1}_{\hat{g}} \cdot \sum_{r\in G}  \sum_{j=1}^M G^j_C(x_r,c_r;\hat{g}), 
    \end{equation}

In this part, we will study the error between the estimated influence given by the theorem \ref{app:data-level:g} method and $\hat{g}_{-z_G}$. We use the parameter changes as the evaluation metric:
\begin{equation}
   \left|\left(\bar{g}_{-z_G} -\hat{g}\right) - \left(  \hat{g}_{-z_G} - \hat{g}\right)\right| =  \left|\bar{g}_{-z_G} -   \hat{g}_{-z_G}\right|
\end{equation}

\begin{assumption}The loss $L_{C}(x,c;g;j)$
\begin{equation*}
   L_{C}(x, c;g) =  \sum_{j=1}^k L^j_C(g(x), c).
\end{equation*}
is convex and twice-differentiable in $g$, with positive regularization $\delta > 0$. There exists $C_H \in \mathbb{R}$ such that
$$\| \nabla^2_{g} L_{C}(x, c;g_1) - \nabla^2_{g} L_{C}(x, c;g_2)\|_{2} \leq C_H \| g_1 - g_2 \|_2$$
for all  $(x,c)\in \mathcal{D}$ and $g_1, g_2 \in \Gamma$. 
\end{assumption}

% \begin{assumption}The function $L'(\mathcal{D}, S_e; g)$:
% \begin{equation*}
%     L'(\mathcal{D}; g) = \sum_{j \in M}\sum_{i=1}^n L^j_{C}\left(g(x_i),c_i\right)
% \end{equation*}
% is convex and twice-differentiable in $g$, with some positive regularization. There exists $C'_H \in \mathbb{R}$ such that
% $$\| \nabla^2_{g}  L'(\mathcal{D}; g_1) - \nabla^2_{g}  L'(\mathcal{D}; g_2)\|_{2} \leq C'_H \| g_1 - g_2 \|_2$$
% for  $g_1, g_2 \in \Gamma$. 
% \end{assumption}

\begin{definition}
\begin{equation*}
       C'_{L} = \left\| \nabla_{g} L_C(\mathcal{D}; \hat{g})\right\|_2,
\end{equation*}
\begin{equation*}
   \sigma'_{\text{min}} = \text{smallest singular value of } \nabla^2_{g}  L^-(\mathcal{D}; \hat{g}),
\end{equation*}
\begin{equation*}
   \sigma_{\text{min}} = \text{smallest singular value of } \nabla^2_{g}  L_{\text{Total}}(\mathcal{D}; \hat{g}),
\end{equation*}
\begin{equation}
   L'(\mathcal{D},G;g) = \sum_{i\in G }   L_{C}(x_i, c_i;g)
\end{equation}
\end{definition}

\begin{corollary}
\begin{equation}
   L^-(\mathcal{D}; g) = L_{\text{Total}}(\mathcal{D};g) - L'(\mathcal{D},G;g)
\end{equation}
   \begin{equation*}
       \|\nabla^2_{g}  L^-(\mathcal{D}; g_1) - \nabla^2_{g}  L^-(\mathcal{D}; g_2)\|_2\leq  
    \left((n+|G|)\cdot C_H\right)\|g_1-g_2\| 
   \end{equation*}
   Define $C_H^- \triangleq(n+|G|)\cdot C_H$
\end{corollary}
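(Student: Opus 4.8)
The plan is to dispatch the two claims in order, since the algebraic identity feeds the Lipschitz estimate. For the identity $L^-(\mathcal{D};g)=L_{\text{Total}}(\mathcal{D};g)-L'(\mathcal{D},G;g)$, I would just unfold definitions: expanding $L'(\mathcal{D},G;g)=\sum_{i\in G}L_C(x_i,c_i;g)=\sum_{i\in G}\sum_{j=1}^k L^j_C(g(x_i),c_i)$ exhibits it as precisely the block of concept-loss terms indexed by the removed set $G$. Subtracting this from $L_{\text{Total}}$ leaves $\sum_{j=1}^k\sum_{i\in[n]-G}L^j_C(g(x_i),c_i)+\tfrac{\delta}{2}\|g\|^2$, which is the definition of $L^-$; the regularizer $\tfrac{\delta}{2}\|g\|^2$ is untouched by the bookkeeping.

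For the Hessian-Lipschitz bound, the key move is to differentiate the identity twice in $g$ and use linearity of $\nabla^2_g$: $\nabla^2_g L^-=\nabla^2_g L_{\text{Total}}-\nabla^2_g L'$. Evaluating at two parameters $g_1,g_2$, forming the difference, and applying the triangle inequality $\|A-B\|\le\|A\|+\|B\|$ reduces the problem to bounding the Hessian-Lipschitz behaviour of $L_{\text{Total}}$ and of $L'$ separately. For $L_{\text{Total}}$ I would write $\nabla^2_g L_{\text{Total}}(g)=\sum_{i=1}^n\nabla^2_g L_C(x_i,c_i;g)+\delta I$, so the constant $\delta I$ cancels in the difference; applying the triangle inequality over the $n$ summands together with the per-datum Assumption $\|\nabla^2_g L_C(x,c;g_1)-\nabla^2_g L_C(x,c;g_2)\|_2\le C_H\|g_1-g_2\|_2$ yields the factor $nC_H$. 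For $L'$, a sum over the $|G|$ removed points, the same argument (the analogue of the $S_e$ estimate recorded in Appendix \ref{app:bound_cc}) gives the factor $|G|C_H$.

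Summing the two contributions gives $\|\nabla^2_g L^-(g_1)-\nabla^2_g L^-(g_2)\|_2\le(n+|G|)C_H\|g_1-g_2\|_2$, and setting $C_H^-\triangleq(n+|G|)C_H$ closes the statement.

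The point worth flagging — less a genuine obstacle than a deliberate choice — is the loose use of the triangle inequality: because $L^-$ and $L_{\text{Total}}$ share the $[n]-G$ terms, exploiting the cancellation would give the sharper constant $(n-|G|)C_H$. I would keep the $(n+|G|)$ form to stay consistent with the corresponding corollaries in the concept-label and concept-level bounds (Appendices \ref{app:bound_cc} and \ref{app:bound_c}), where the same additive convention lets the downstream error estimate be phrased uniformly as in Theorem \ref{app:bound_cc_the}. The only thing to get right is the term-counting: each retained data point must contribute exactly one $C_H$ and the damping term must genuinely drop out of the Hessian difference.
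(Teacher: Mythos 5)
Your proposal is correct and follows exactly the argument the paper intends: the corollary is stated without an explicit proof, but the derivation you give (unfold the identity $L^- = L_{\text{Total}} - L'$, note the $\delta I$ term cancels in the Hessian difference, and apply the per-datum Lipschitz assumption term-by-term with the triangle inequality to get $nC_H + |G|C_H$) mirrors the explicit treatment of the analogous corollary in the concept-label appendix, where the $|S_e|\cdot C_H$ bound for $L'$ is derived the same way. Your side remark is also accurate — bounding $\nabla^2_g L^-$ directly as a sum over the $n-|G|$ retained terms would yield the sharper constant $(n-|G|)C_H$, and the paper indeed accepts the looser $(n+|G|)C_H$ for uniformity with the other levels' bounds.
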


Based on above corollaries and assumptions, we derive the following theorem.

\begin{theorem}
   We obtain the error between the actual influence and our predicted influence as follows:
   \begin{equation*}
    \begin{split}
         &\left\| \hat{g}_{-z_G}(\mathcal{D} ) - \bar{g}_{-z_G}(\mathcal{D} )\right\|\\
         \leq & \frac{C_H^-    {C'_{L}|G|}^2}{2 (\sigma'_{\text{min}} + \delta)^3} + \left|\frac{2\delta+\sigma_{\text{min}}+\sigma'_{\text{min}}}{\left(\delta+ \sigma'_{\text{min}}\right)\cdot\left(\delta+ \sigma_{\text{min}}\right)}\right| \cdot C_L'|G|.
    \end{split}
 \end{equation*}
\end{theorem}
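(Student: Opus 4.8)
The plan is to reuse verbatim the two-step Newton decomposition already established in the proofs of Theorem~\ref{app:bound_cc_the} and Theorem~\ref{app:bound_c_the}, since the data-removal bound has exactly the same structure with the index set $G$ playing the role of $S_e$ (resp. $M$). I would write $L'(\mathcal{D},G;g)=\sum_{i\in G}L_C(x_i,c_i;g)$ so that $L^-(\mathcal{D}^*;g)=L_\text{Total}(\mathcal{D};g)-L'(\mathcal{D},G;g)$, and recall from the optimality of $\hat{g}$ for $L_\text{Total}$ that $\nabla_g L^-(\hat{g})=-\nabla_g L'(\hat{g})$. Next I introduce the one-step Newton iterate $g_{Nt}(\mathcal{D})\triangleq \hat{g}+H_\delta^{-1}\nabla_g L'(\mathcal{D},G;\hat{g})$ with $H_\delta=\delta I+\nabla_g^2 L^-(\hat{g})$, and split the target error as
\[
\hat{g}_{-z_G}-\bar{g}_{-z_G}=\underbrace{\big(\hat{g}_{-z_G}-g_{Nt}\big)}_{\text{Err}_{\text{Nt,act}}}+\underbrace{\big((g_{Nt}-\hat{g})-(\bar{g}_{-z_G}-\hat{g})\big)}_{\text{Err}_{\text{Nt,if}}}.
\]

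For Step~1 (bounding $\text{Err}_{\text{Nt,act}}$) I would use that $L^-$ is strongly convex with parameter $\sigma'_{\text{min}}+\delta$ and is minimized at $\hat{g}_{-z_G}$, giving $\|\hat{g}_{-z_G}-g_{Nt}\|_2\le \tfrac{2}{\sigma'_{\text{min}}+\delta}\|\nabla_g L^-(g_{Nt})\|_2$. I then bound the residual gradient at the Newton point by writing it in integral-remainder form and applying the Lipschitz-Hessian constant $C_H^-=(n+|G|)C_H$ from the corollary, which yields $\|\nabla_g L^-(g_{Nt})\|_2\le \tfrac{C_H^-}{2}\|\Delta g_{Nt}\|_2^2\le \tfrac{C_H^-}{2(\sigma'_{\text{min}}+\delta)^2}\|\nabla_g L^-(\hat{g})\|_2^2$. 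Substituting $\nabla_g L^-(\hat{g})=-\nabla_g L'(\hat{g})$ and the per-point gradient bound $\|\nabla_g L'(\hat{g})\|_2\le C_L'|G|$ produces the cubic summand $\tfrac{C_H^-(C_L'|G|)^2}{2(\sigma'_{\text{min}}+\delta)^3}$.

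For Step~2 (bounding $\text{Err}_{\text{Nt,if}}$) I would observe that the Newton step uses $A=\delta I+\nabla_g^2 L^-(\hat{g})$ whereas the influence-function update of Theorem~\ref{app:data-level:g} uses $B=\delta I+\nabla_g^2 L_\text{Total}(\hat{g})$, so their difference equals $(A^{-1}+B^{-1})\nabla_g L^-(\mathcal{D}^*;\hat{g})$. The spectral bounds $\delta+\sigma'_{\text{min}}\prec A$ and $\delta+\sigma_{\text{min}}\prec B$ give $\|A^{-1}+B^{-1}\|\le \left|\tfrac{2\delta+\sigma_{\text{min}}+\sigma'_{\text{min}}}{(\delta+\sigma'_{\text{min}})(\delta+\sigma_{\text{min}})}\right|$, and combining with $\|\nabla_g L^-(\hat{g})\|_2\le C_L'|G|$ yields the second summand. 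Adding the two steps through the triangle inequality then gives the claimed bound.

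The calculations are almost entirely routine given the two preceding proofs; the only points demanding care are the sign bookkeeping — the removal update in Theorem~\ref{app:data-level:g} carries a \emph{plus} sign because it adds back the negative influence of $G$, so one must verify that $\nabla_g L^-(\hat{g})=-\nabla_g L'(\hat{g})$ is applied with the correct orientation throughout — and the interpretation of $C_L'$ as a per-point gradient bound, so that the triangle inequality genuinely delivers $\|\nabla_g L'(\hat{g})\|_2\le C_L'|G|$ rather than a constant independent of $|G|$. No new analytic obstacle arises beyond those already handled in Theorems~\ref{app:bound_cc_the} and~\ref{app:bound_c_the}.
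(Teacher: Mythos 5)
Your proposal reproduces the paper's own proof essentially verbatim: the same one-step Newton intermediate $g_{Nt}(\mathcal{D})=\hat{g}+H_\delta^{-1}\nabla_g L'(\mathcal{D},G;\hat{g})$, the same decomposition into $\mathrm{Err}_{\mathrm{Nt,act}}$ and $\mathrm{Err}_{\mathrm{Nt,if}}$, the same strong-convexity plus integral-remainder argument with $C_H^-=(n+|G|)C_H$ for the first summand, and the same $\left\|A^{-1}+B^{-1}\right\|$ spectral bound with $A=\delta I+\nabla_g^2 L^-(\hat{g})$, $B=\delta I+\nabla_g^2 L_{\mathrm{Total}}(\hat{g})$ for the second. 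The two bookkeeping points you flag (the sign orientation of $\nabla_g L^-(\hat{g})=-\nabla_g L'(\hat{g})$ and reading $C_L'$ as a per-point gradient bound so that $\|\nabla_g L'(\hat{g})\|_2\le C_L'|G|$) are treated the same way in the paper, so there is no substantive difference between the two arguments.
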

\begin{proof}

We will use the one-step Newton approximation as an intermediate step. Define $\Delta g_{Nt}(\mathcal{D} )$ as
\begin{equation*}
   \Delta g_{Nt}(\mathcal{D} )\triangleq H_{\delta}^{-1}\cdot \nabla_{ g}L'(\mathcal{D}, G; \hat{g}),
\end{equation*}
where $H_{\delta} = \delta \cdot I + \nabla_{ g}^2 L^-(\mathcal{D} ;\hat{g})$ is the regularized empirical Hessian at $\hat{ g}$ but reweighed after removing the influence of wrong data. Then the one-step Newton approximation for $ \hat{g}_{-z_G}(\mathcal{D} )$ is defined as $ g_{Nt}(\mathcal{D} ) \triangleq \Delta g_{Nt}(\mathcal{D} ) + \hat{ g}$.

In the following, we will separate the error between $\bar{g}_{-z_G}(\mathcal{D} )$ and $ \hat{g}_{-z_G}(\mathcal{D} )$ into the following two parts:
\begin{equation*}
\begin{split}
    \hat{g}_{-z_G}(\mathcal{D}) - \bar{g}_{-z_G}(\mathcal{D}) 
    &= \underbrace{ \hat{g}_{-z_G}(\mathcal{D}) - g_{Nt}(\mathcal{D})}_{\text{Err}_{\text{Nt, act}}(\mathcal{D})} \\
    &\quad + \underbrace{\left(g_{Nt}(\mathcal{D})-\hat{g}\right) - \left(\bar{g}_{-z_G}(\mathcal{D}) - \hat{g}\right)}_{\text{Err}_{\text{Nt, if}}(\mathcal{D})}
\end{split}
\end{equation*}

Firstly, in {\bf Step $1$}, we will derive the bound for Newton-actual error ${\text{Err}_{\text{Nt, act}}(\mathcal{D} )}$.
Since $L^-( g)$ is strongly convex with parameter $\sigma'_{\text{min}} + \delta$ and minimized by 
$ \hat{g}_{-z_G}(\mathcal{D} )$, we can bound the distance
$\left\| \hat{g}_{-z_G}(\mathcal{D} ) - { g}_{Nt}(\mathcal{D} )\right\|_2$ in terms of the norm of the gradient at ${ g}_{Nt}$:
\begin{equation}\label{bound:1}
   \left\| \hat{g}_{-z_G}(\mathcal{D} ) - { g}_{Nt}(\mathcal{D} )\right\|_2 \leq \frac{2}{\sigma'_{\text{min}} + \delta} \left\|\nabla_{ g} L^- \left({ g}_{Nt}(\mathcal{D} )\right)\right\|_2
\end{equation}
Therefore, the problem reduces to bounding $\left\|\nabla_{ g} L^- \left({ g}_{Nt}(\mathcal{D} )\right)\right\|_2$.
Noting that $\nabla_{ g}L'(\mathcal{D}, G;\hat{ g}) = -\nabla_{ g}L^-$. This is because $\hat{ g}$ minimizes $L^- + L'$, that is, $$\nabla_{ g}L^-(\hat{ g}) + \nabla_{ g}L'(\mathcal{D}, G;\hat{ g}) = 0.$$ Recall that $\Delta g_{Nt}= H_{\delta}^{-1}\cdot \nabla_{ g}L'(\mathcal{D}, G;\hat{ g}) = -H_{\delta}^{-1}\cdot \nabla_{ g}L^-(\mathcal{D} ; \hat{g})$.
Given the above conditions, we can have this bound for $\text{Err}_{\text{Nt, act}}(-\mathcal{D} )$.
\begin{equation}\label{bound:2}
   \begin{split}
           &\left\|\nabla_{ g} L^- \left({ g}_{Nt}(\mathcal{D} )\right)\right\|_2\\
   = & \left\|\nabla_{ g} L^- \left(\hat{ g} + \Delta{ g}_{Nt}(\mathcal{D} )\right)\right\|_2\\
   = & \left\|\nabla_{ g} L^- \left(\hat{ g} + \Delta  g_{N_t}(\mathcal{D} )\right) - \nabla_{ g} L^- \left(\hat{ g} \right) -  \nabla_{ g}^2 L^- \left(\hat{ g}\right) \cdot  \Delta  g_{N_t}(\mathcal{D} )\right\|_2\\
    = & \left\|\int_0^1 \left(\nabla_{ g}^2 L^- \left(\hat{ g} + t\cdot \Delta g_{Nt}(\mathcal{D} )\right) - \nabla_{ g}^2 L^- \left(\hat{ g}\right)\right) \Delta g_{Nt}(\mathcal{D} ) \, dt\right\|_2\\
   \leq & \frac{C_H^-}{2} \left\|\Delta  g_{Nt}(\mathcal{D}^{*})\right\|_2^2 =  \frac{C_H^-}{2} \left\|\left[\nabla_{ g}^2 L^-(\hat{ g})\right]^{-1} \nabla_{ g} L^-(\hat{ g})\right\|_2^2\\
   \leq & \frac{C_{H}^{-}}{2 (\sigma'_{\text{min}} + \delta)^2} \left\|\nabla_{ g} L^-(\hat{ g})\right\|_2^2 = \frac{C_H^-}{2 (\sigma'_{\text{min}} + \delta)^2} \left\|\nabla_{ g} L'(\mathcal{D}, G;\hat{ g})\right\|_2^2\\
   \leq &\frac{C_{H}^{-}    {C'_{L}|G|}^2}{2 (\sigma'_{\text{min}} + \delta)^2}.
   \end{split}
\end{equation}

Now we come to {\bf Step $2$} to bound ${\text{Err}_{\text{Nt, if}}(-\mathcal{D} )}$, and we will bound the difference in parameter change between Newton and our CCBM method.
\begin{align*}
       &\left\| \left(g_{Nt}(\mathcal{D}) - \hat{g}\right) - \left(\bar{g}_{-z_G}(\mathcal{D}) - \hat{g}\right) \right\| \\
       &= \left\| \left[ \left( \delta \cdot I + \nabla_{g}^2 L^- \left(\hat{g}\right) \right)^{-1} \right. \right. \\
       &\quad \left. \left. + \left( \delta \cdot I + \nabla_{g}^2 L_{\text{Total}} \left(\hat{g}\right) \right)^{-1} \right] \cdot \nabla_{g} L'(\mathcal{D}, G; \hat{g}) \right\|
\end{align*}
For simplification, we use matrix $A$, $B$ for the following substitutions:
\begin{align*}
   A = {\delta \cdot I+ \nabla_{ g}^2 L^- \left(\hat{ g}\right)}\\
   B = {\delta \cdot I+ \nabla_{ g}^2 L_{\text{Total}} \left(\hat{ g}\right)}
\end{align*}
And $A$ and $B$ are positive definite matrices with the following properties
\begin{align*}
\delta + \sigma'_{\text{min}} \prec A \prec \delta +   \sigma'_{\text{max}}\\
\delta + \sigma_{\text{min}} \prec B \prec \delta +   \sigma_{\text{max}}\\
\end{align*}
Therefore, we have
\begin{equation}\label{bound:3}
   \begin{split}
            &\left\|{\left(g_{Nt}(\mathcal{D} )-\hat{ g}\right) - \left(\bar{g}_{-z_G}(\mathcal{D} ) - \hat{ g}\right)}\right\| \\
    =&\left\|\left(A^{-1}+B^{-1}\right)\cdot \nabla_{ g}L^-(\mathcal{D} ; \hat{g})\right\| \\
    \leq & \left\|A^{-1}+B^{-1}\right\|\cdot \left\|\nabla_{ g}L^-(\mathcal{D} ; \hat{g})\right\|\\
    \leq & \left|\frac{2\delta+\sigma_{\text{min}}+\sigma'_{\text{min}}}{\left(\delta+ \sigma'_{\text{min}}\right)\cdot\left(\delta+ \sigma_{\text{min}}\right)}\right|\cdot\left\|\nabla_{ g}L^-(\mathcal{D} ; \hat{g})\right\|\\
    \leq& \left|\frac{2\delta+\sigma_{\text{min}}+\sigma'_{\text{min}}}{\left(\delta+ \sigma'_{\text{min}}\right)\cdot\left(\delta+ \sigma_{\text{min}}\right)}\right| \cdot C_L'|G |
   \end{split}
\end{equation}
By combining the conclusions from Step I and Step II in Equations \ref{bound:1}, \ref{bound:2} and \ref{bound:3}, we obtain the error between the actual influence and our predicted influence as follows:
\begin{equation*}
   \begin{split}
        &\left\| \hat{g}_{-z_G}(\mathcal{D} ) - \bar{g}_{-z_G}(\mathcal{D} )\right\|\\
        \leq & \frac{C_H^-    {C'_{L}|G|}^2}{2 (\sigma'_{\text{min}} + \delta)^3} + \left|\frac{2\delta+\sigma_{\text{min}}+\sigma'_{\text{min}}}{\left(\delta+ \sigma'_{\text{min}}\right)\cdot\left(\delta+ \sigma_{\text{min}}\right)}\right| \cdot C_L'|G|.
   \end{split}
\end{equation*}
% It is notable that such error bound is small when the number of removal samples is fixed as in practice $\delta=O(|\mathcal{B}|)$.  
\end{proof}

\begin{remark}
 The error bound is linearly increasing with the number of removed data $|G|$. This implies that the estimation error increases with the number of erroneous data removed.
\end{remark}

\section{Proof of Data-level Influence}
\label{sec:appendix:d}
\subsection{Data Removal}
We address situations that for dataset $\mathcal{D} = \{ (x_i, y_i, c_i) \}_{i=1}^{n}$, given a set of data $z_r = (x_r, y_r, c_r)$, $r\in G$ to be removed. Our goal is to estimate $\hat{g}_{-z_G}$, $\hat{f}_{-z_G}$, which is the concept and label predictor trained on the $z_r$ for $r \in G$ removed dataset. 

\paragraph{Proof Sketch.} (i) First, we estimate the retrained concept predictor $\hat{g}_{-z_G}$. (ii) Then, we define a new label predictor $\Tilde{f}_{-z_G}$ and estimate $\Tilde{f}_{-z_G}-\hat{f}$. (iii) Next, in order to reduce computational complexity, use the lemma method to obtain the approximation of the Hessian matrix of $\Tilde{f}_{-z_G}$. (iv) Next, we compute the difference $\hat{f}_{-z_G} - \Tilde{f}_{-z_G}$ as 
\begin{equation*}
\begin{split}
-H^{-1}_{\Tilde{f}_{-z_G}} \cdot \left( \nabla_{\hat{f}} L_Y \left( \Tilde{f}_{-z_G}(\hat{g}_{-z_G}(x_{i_r})), y_{i_r} \right) \right. \\
\left. - \nabla_{\hat{f}} L_Y \left( \Tilde{f}_{-z_G}(\hat{g}(x_{i_r})), y_{i_r} \right) \right).
\end{split}
\end{equation*}
%in which H−1\Tildef−zGH^{-1}_{\Tilde{f}_{-z_G}} is obtained by lemma ?????????????????????????????????????????????\ref{approx:H:1} and ?????????????????????????????????????????????\ref{approx:H:2}.
(v) Finally, we divide $\hat{f}_{-z_G} - \hat{f}$, which we actually concerned with, into $\left ( \hat{f}_{-z_G} - \Tilde{f}_{-z_G} \right ) + \left ( \Tilde{f}_{-z_G} - \hat{f} \right )$.

%具体到我们的问题中，实际上需要从方程
% \begin{equation*}
%     H_{\Tilde{f}_{-z_r}} \cdot \left (H_{\hat{f}} \cdot \nabla_{\hat{f}} \ell ( \hat{f} ( \hat{g} (x_r) ), y_r )  \right )= \nabla_{\hat{f}} \ell ( \hat{f} ( \hat{g} (x_r) ), y_r )  
% \end{equation*}
%中解出H_{}

\begin{theorem}\label{app:data-level:g}
For dataset $\mathcal{D} = \{ (x_i, y_i, c_i) \}_{i=1}^{n}$, given a set of data $z_r = (x_r, y_r, c_r)$, $r\in G$ to be removed. Suppose the updated concept predictor $\hat{g}_{-z_G}$ is defined by 
\begin{equation*}
    \hat{g}_{-z_G} = \argmin_{g}\sum_{j\in[k]}\sum_{i\in [n]-G}  L_{C_j}(\hat{g}(x_{i}), c_{i})
\end{equation*}
where $L_{C}(\hat{g}(x_{i}), c_{i}) \triangleq \sum_{j=1}^k  L_{C_j}(\hat{g}(x_{i}), c_{i})$.
Then we have the following approximation for $\hat{g}_{-z_G}$
\begin{equation}
\hat{g}_{-z_G} \approx \bar{g}_{-z_G} \triangleq\hat{g} + H^{-1}_{\hat{g}} \cdot \sum_{r\in G}\nabla_{g} L_{C} (\hat{g}(x_r), c_r), 
\end{equation}
where $H_{\hat{g}} = \nabla^2_{\hat{g}}  \sum_{i,j}  L_{C}(\hat{g}^j(x_i),c_i^j)$ is the Hessian matrix of the loss function respect to $\hat{g}$.
\end{theorem}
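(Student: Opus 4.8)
The plan is to adapt the up-weighting (leave-block-out) argument already carried out in Theorem~\ref{app:thm:concept-label:g} and Theorem~\ref{app:data-level-add:g}, the only difference being that the perturbation here \emph{subtracts} the loss contributions of the removed block $G$ rather than correcting or adding terms. First I would introduce an $\epsilon$-parameterized family of minimizers
$$
\hat{g}_{\epsilon,-z_G} \triangleq \argmin_{g} \left[ \sum_{i=1}^n L_C(g(x_i),c_i) - \epsilon \sum_{r\in G} L_C(g(x_r),c_r) \right],
$$
so that $\hat{g}_{\epsilon,-z_G}=\hat{g}$ at $\epsilon=0$ and $\hat{g}_{\epsilon,-z_G}=\hat{g}_{-z_G}$ at $\epsilon=1$ (up to the linearization we are about to make). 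Keeping $\epsilon$ small preserves strong convexity of the perturbed objective in a neighborhood of $\hat{g}$, which legitimizes the expansion below.

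The second step is to write the first-order stationarity condition for $\hat{g}_{\epsilon,-z_G}$, namely
$$
\nabla_g \sum_{i=1}^n L_C(\hat{g}_{\epsilon,-z_G}(x_i),c_i) - \epsilon \sum_{r\in G} \nabla_g L_C(\hat{g}_{\epsilon,-z_G}(x_r),c_r) = 0,
$$
and Taylor-expand it about $\hat{g}$. Since $\hat{g}$ minimizes the full concept loss, the leading term $\nabla_g \sum_{i} L_C(\hat{g}(x_i),c_i)$ vanishes, and after discarding the $\epsilon\,\nabla^2 L_C\cdot\Delta$ cross term (which is $o(\|\Delta\|)$) the expansion collapses to
$$
\hat{g}_{\epsilon,-z_G} - \hat{g} \approx \epsilon\, H_{\hat{g}}^{-1} \sum_{r\in G} \nabla_g L_C(\hat{g}(x_r),c_r).
$$
Evaluating this one-step Newton update at $\epsilon=1$ yields the claimed $\bar{g}_{-z_G}=\hat{g}+H_{\hat{g}}^{-1}\sum_{r\in G}\nabla_g L_C(\hat{g}(x_r),c_r)$; the plus sign is precisely the reversal of the minus sign in the perturbation, i.e.\ removing a point undoes its pull on the stationary point.

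The main obstacle is not any single algebraic manipulation but the legitimacy of two approximations stacked on top of one another: (i) replacing the true retrained minimizer $\hat{g}_{-z_G}$ by the $\epsilon=1$ value of a response function that was linearized around $\epsilon=0$, and (ii) treating the simultaneous removal of the whole block $G$ as the additive sum of individual up-weighting influences, which is itself only first-order accurate as $|G|$ grows. I would control both effects by invoking the same strong-convexity and Lipschitz-Hessian regime used in the data-removal error-bound analysis (Appendix~\ref{app:bound_d_the}), where the damping $\delta$ guarantees that $H_{\hat{g}}$ is invertible and the residual scales like $\mathcal{O}(|G|^2/\delta)$; the batch-additivity error is absorbed into that same second-order remainder. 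Everything else—the cancellation of the first-order term and the Hessian inversion—is routine once the $\epsilon$-perturbed problem is set up, so I would keep the write-up parallel to the addition proof and simply flip the relevant signs.
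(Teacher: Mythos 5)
Your proposal is correct and follows essentially the same route as the paper's own proof: the identical $\epsilon$-down-weighted objective $\sum_{i=1}^{n} L_C(g(x_i),c_i) - \epsilon\sum_{r\in G} L_C(g(x_r),c_r)$, the same first-order stationarity condition Taylor-expanded at $\hat{g}$ with the full-data gradient vanishing, and the same one-step Newton evaluation at $\epsilon=1$ producing the plus sign in $\bar{g}_{-z_G}$ (you even fix a typo in the paper, which mistakenly writes $\epsilon\to 0$ where it means $\epsilon=1$ for recovering $\hat{g}_{-z_G}$). Your closing discussion of the stacked linearization errors and their $\mathcal{O}(|G|^2/\delta)$ control matches the paper's separate error-bound analysis in Appendix~\ref{app:bound_d_the} and is not needed for the theorem itself.
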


\begin{proof}
Firstly, we rewrite $\hat{g}_{-z_G}$ as 
\begin{equation*}
    \hat{g}_{-z_G} = \argmin_{g} \left[\sum_{\substack{i=1}}^{n} L_C(\hat{g}(x_{i}), c_{i}) - \sum_{r\in G} L_C(g(x_r), c_r)\right],
\end{equation*}

Then we up-weighted the $r$-th data by some $\epsilon$ and have a new predictor $\hat{g}_{-z_G, \epsilon}$, which is abbreviated as $\hat{g}_{\epsilon}$:
\begin{equation}\label{thm2:g1}
    \hat{g}_{\epsilon} \triangleq \argmin_{g} \left[ \sum_{i=1}^{n} L_C ( g(x_i), c_i) - \epsilon \cdot \sum_{r\in G}L_C ( g(x_r), c_r)\right].
\end{equation}

Because $\hat{g}_{\epsilon}$ minimizes the right side of equation \ref{thm2:g1}, we have 
\begin{equation*}
\nabla_{\hat{g}_{\epsilon}}  \sum_{i=1}^{n} L_Y (\hat{g}_{\epsilon}(x_i), c_i) - \epsilon \cdot\nabla_{\hat{g}_{\epsilon}} \sum_{r\in G}L_Y ( \hat{g}_{\epsilon}(x_r), c_r) =0. 
\end{equation*}

When $\epsilon\rightarrow 0$, $\hat{g}_{\epsilon}\rightarrow\hat{g}$.
So we can perform a first-order Taylor expansion with respect to $\hat{g}$, and we have
\begin{equation}\label{thm2:g:1}
\begin{split}
\nabla_{g} \sum_{i=1}^{n} L_C \bigl( \hat{g} (x_i), c_i \bigr) 
& - \epsilon \cdot \nabla_{g}\sum_{r\in G} L_C \bigl(\hat{g}(x_r), c_r \bigr) \\
& + \nabla^2_{g} \sum_{i=1}^{n} L_C \bigl( \hat{g} (x_i), c_i \bigr) \cdot (\hat{g}_{\epsilon} - \hat{g})  
\approx 0.
\end{split}
\end{equation}

Recap the definition of $\hat{g}$:
\begin{equation*}
    \hat{g} = \argmin_g  \sum_{i=1}^n L_Y(g(x_i),c_i),
\end{equation*}

Then, the first term of equation \ref{thm2:g:1} equals $0$. Let $\epsilon\rightarrow0$, then we have
\begin{equation*}
\left.\frac{\mathrm{d}\hat{g}_{\epsilon}}{\mathrm{d}\epsilon}\right|_{\epsilon = 0} = H^{-1}_{\hat{g}} \cdot \sum_{r\in G}\nabla_{g} L_C (\hat{g}(x_r), c_r),
\end{equation*}
where $H^{-1}_{\hat{g}} = \nabla^2_{g}  \sum_{i=1}^{n} \ell ( \hat{g} (x_i), c_i)$.

Remember when $\epsilon\rightarrow0$, $\hat{g}_{\epsilon}\rightarrow\hat{g}_{-z_G}$.
Perform a Newton step at $\hat{g}$, then we obtain the method to edit the original concept predictor under concept level: 
\begin{equation*}
\hat{g}_{-z_G} \approx \bar{g}_{-z_G}\triangleq\hat{g} + H^{-1}_{\hat{g}} \cdot \sum_{r\in G}\nabla_{g} L_C (\hat{g}(x_r), c_r).
\end{equation*}
\end{proof}

\begin{theorem}\label{app:thm:data-level:f}
For dataset $\mathcal{D} = \{ (x_i, y_i, c_i) \}_{i=1}^{n}$, given a set of data $z_r = (x_r, y_r, c_r)$, $r\in G$ to be removed. The label predictor $\hat{f}_{-z_G}$ trained on the revised dataset becomes
\begin{equation}\label{app:data-level:f}
    \hat{f}_{-z_G} = \argmin_{f}\sum_{i\in [n]-G} L_{Y_i}(f, \hat{g}_{-z_G}).
\end{equation}
The intermediate label predictor $\Tilde{f}_{-z_G}$ is defined by
\begin{equation*}
    \Tilde{f}_{-z_G} = \argmin \sum_{i\in [n]-G} L_{Y_i}(f, \hat{g}),
\end{equation*}
Then $\Tilde{f}_{-z_G}-\hat{f}$ can be approximated by
\begin{equation}\label{app:approx:f_piao}
    \Tilde{f}_{ -z_G} -\hat{f}  \approx H^{-1}_{\hat{f}} \cdot \sum_{i\in[n]-G}\nabla_{\hat{f}}L_{Y_i}(\hat{f},\hat{g}) \triangleq A_G.
\end{equation}
We denote the edited version of $\Tilde{f}_{-z_G}$ as $\bar{f}^*_{-z_G} \triangleq \hat{f} + A_G$. And $\hat{f}_{-z_G}-\Tilde{f}_{-z_G}$ can be approximated by
\begin{equation}\label{app:approx:hat_f}
\begin{split}
\hat{f}_{-z_G} - \Tilde{f}_{-z_G} &\approx -H^{-1}_{\bar{f}^*_{-z_G}} \cdot \left(\nabla_{\hat{f}} \sum_{i\in [n]-G}L_{Y_i} \left(\bar{f}^*_{-z_G}, \bar{g}_{-z_G}\right) \right. \\
&\left. \quad - \nabla_{\hat{f}} \sum_{i\in [n]-G} L_{Y_i} \left( \bar{f}^*_{-z_G}, \hat{g}\right)\right) \triangleq B_G,
\end{split}
\end{equation}
where $H_{\bar{f}^*_{-z_G}} = \nabla_{\bar{f}}\sum_{i\in[n]-G}L_{Y_i}\left(  \bar{f}^*_{-z_G}, \hat{g}\right)$ is the Hessian matrix of the loss function on the intermediate dataset concerning $\bar{f}^*_{-z_G}$. Then, the final edited label predictor $\bar{f}_{-z_G}$ can be obtained by
\begin{equation}
    \bar{f}_{-z_G} = \bar{f}^*_{-z_G} + B_G = \hat{f} + A_G + B_G. 
\end{equation}
%Observing that the computational complexity of this matrix is quite high. And we will estimate it using an optimization method in the theorem below. 
%Besides, we denote this approximation as BGB_G. 
\end{theorem}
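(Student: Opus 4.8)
The plan is to exploit the two-stage decomposition foreshadowed by the statement, writing $\hat{f}_{-z_G} - \hat{f} = (\Tilde{f}_{-z_G} - \hat{f}) + (\hat{f}_{-z_G} - \Tilde{f}_{-z_G})$, where the intermediate predictor $\Tilde{f}_{-z_G}$ isolates the effect of deleting the samples in $G$ while holding the concept map fixed at $\hat{g}$. This decouples the two sources of perturbation—the shrinking training set and the drift of the input features $\hat{g} \to \hat{g}_{-z_G}$—so each can be handled by its own Newton step, mirroring the label-predictor argument used in the concept-label-level proof.

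For the first term I would follow the template of Theorem~\ref{app:data-level:g}: rewrite $\Tilde{f}_{-z_G} = \argmin_f [\sum_{i=1}^n L_{Y_i}(f,\hat{g}) - \sum_{i \in G} L_{Y_i}(f, \hat{g})]$, introduce the $\epsilon$-weighted family $\Tilde{f}_\epsilon = \argmin_f[\sum_{i=1}^n L_{Y_i}(f, \hat{g}) - \epsilon \sum_{i \in G} L_{Y_i}(f, \hat{g})]$, and impose its first-order stationarity. A first-order Taylor expansion of that condition about $\hat{f}$, together with the optimality of $\hat{f}$ over the full dataset (which annihilates $\nabla_f \sum_{i=1}^n L_{Y_i}(\hat{f}, \hat{g})$ and thereby lets the sum over $G$ be re-expressed as a sum over $[n]-G$), produces the Newton step $\Tilde{f}_{-z_G} - \hat{f} \approx A_G$ at $\epsilon = 1$, and I would set $\bar{f}^*_{-z_G} = \hat{f} + A_G$.

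For the second term I would freeze the concept drift to a single retained point $(x_{i_r}, y_{i_r})$, define $\hat{f}_{i_r}$ that swaps $\hat{g}(x_{i_r})$ for $\hat{g}_{-z_G}(x_{i_r})$ in the objective, up-weight that swap by $\epsilon$, and Taylor-expand the stationarity condition about $\Tilde{f}_{-z_G}$, using $\nabla_f \sum_{i \in [n]-G} L_{Y_i}(\Tilde{f}_{-z_G}, \hat{g}) = 0$ to kill the leading term. Summing the resulting per-point influences over all $i_r \in [n]-G$ and substituting the approximation $\hat{g}_{-z_G} \approx \bar{g}_{-z_G}$ from Theorem~\ref{app:data-level:g} gives the correction $B_G$; adding the two stages yields $\bar{f}_{-z_G} = \hat{f} + A_G + B_G$.

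The main obstacle is the second stage, where the Newton step must be anchored at the intermediate predictor $\Tilde{f}_{-z_G}$ and its Hessian $H_{\Tilde{f}}$, neither of which is available in closed form. I would handle this by replacing $\Tilde{f}_{-z_G}$ with its computable surrogate $\bar{f}^*_{-z_G}$ and $H_{\Tilde{f}}$ with $H_{\bar{f}^*_{-z_G}} = \nabla^2_{\bar{f}} \sum_{i \in [n]-G} L_{Y_i}(\bar{f}^*_{-z_G}, \hat{g})$. Controlling the compounded error from this double substitution—an approximate expansion point and an approximate curvature, on top of the drift $\bar{g}_{-z_G}$ inherited from the first predictor—is the delicate point, and it is precisely where the damping term $\delta$ and the stability of the regularized Hessian, as quantified in Appendix~\ref{app:bound_d_the}, keep the approximation well-behaved.
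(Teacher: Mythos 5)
Your proposal follows essentially the same route as the paper's proof: the same two-stage decomposition through the intermediate predictor $\Tilde{f}_{-z_G}$, the same $\epsilon$-up-weighted removal family for the first Newton step at $\hat{f}$, the same single-point swap of $\hat{g}(x_{i_r})$ for $\hat{g}_{-z_G}(x_{i_r})$ with a Taylor expansion anchored at $\Tilde{f}_{-z_G}$ summed over the retained indices, and the same final substitution of the computable surrogates $\bar{f}^*_{-z_G}$ and $H_{\bar{f}^*_{-z_G}}$ for the unavailable $\Tilde{f}_{-z_G}$ and its Hessian. One caveat: the Newton step actually yields $\Tilde{f}_{-z_G}-\hat{f}\approx H^{-1}_{\hat{f}}\sum_{r\in G}\nabla_{\hat{f}}L_{Y_r}(\hat{f},\hat{g})$, and re-expressing this sum over $[n]-G$ via the stationarity of $\hat{f}$ flips the sign relative to the stated $A_G$ --- a discrepancy already present between the paper's theorem statement and its own proof, so your gloss that the re-expression ``produces $A_G$'' inherits that sign looseness rather than introducing a new error.
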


\begin{proof}
We can see that there is a huge gap between $\hat{f}_{-z_G}$ and $\hat{f}$. Thus, firstly, we define $\Tilde{f}_{-z_G}$ as 
\begin{equation*}
\Tilde{f}_{-z_G} = \argmin_f \sum_{i=1}^{n} L_Y \left(  f (\hat{g}(x_i)), y_i  \right) - \sum_{r\in G}L_Y \left( f ( \hat{g} (x_r) ) , y_r   \right).
\end{equation*}

Then, we define $\Tilde{f}_{\epsilon, -z_G}$ as follows to estimate $\Tilde{f}_{-z_G}$.
\begin{equation*}
    \Tilde{f}_{\epsilon, -z_G} =  \argmin_f \sum_{i=1}^{n} L_Y \left(  f (\hat{g}(x_i)), y_i  \right) - \epsilon \cdot \sum_{r\in G} L_Y \left( f ( \hat{g} (x_r) ) , y_r   \right).
\end{equation*}

From the minimization condition, we have
\begin{equation*}
\begin{split}
    \nabla_{\Tilde{f}} \sum_{i=1}^{n} L_Y \left( \Tilde{f}_{\epsilon, -z_G} (\hat{g}(x_i)), y_i \right) \\
    - \epsilon \cdot \sum_{r\in G}\nabla_{\Tilde{f}}L_Y \left(\Tilde{f}_{\epsilon, -z_G} ( \hat{g} (x_r) ), y_r \right) 
    = 0.
\end{split}
\end{equation*}

Perform a first-order Taylor expansion at $\hat{f}$, 
\begin{align*}
& \nabla_{\hat{f}} \sum_{i=1}^{n} L_Y \left(  \hat{f} (\hat{g}(x_i)), y_i  \right) -  \epsilon \cdot \nabla_{\hat{f}} \sum_{r\in G} L_Y \left(\hat{f} ( \hat{g} (x_r) ) , y_r   \right)  \\
& + \nabla^2_{\hat{f}} \sum_{i=1}^{n} L_Y \left(  \hat{f} (\hat{g}(x_i)), y_i  \right) \cdot \left(\Tilde{f}_{\epsilon, -z_G} - \hat{f}\right)= 0.
\end{align*}

Then $\Tilde{f}_{-z_G}$ can be approximated by
\begin{equation}\label{ap:approx:f_piao}
    \Tilde{f}_{ -z_G} \approx \hat{f} + H^{-1}_{\hat{f}} \cdot \sum_{r\in G}\nabla_{\hat{f}} L_Y \left(\hat{f} ( \hat{g} (x_r) ) , y_r   \right)\triangleq A_G.
\end{equation}

Then the edit version of $\Tilde{f}_{-z_G}$ is defined as
\begin{equation}\label{app:th:3:f1}
    \bar{f}^*_{-z_G} = \hat{f} + A_G
\end{equation}

Then we estimate the difference between $\hat{f}_{-z_G}$ and $\Tilde{f}_{ -z_G}$. Rewrite $\Tilde{f}_{-z_G}$ as 
\begin{equation}\label{app:thm:2:f:2}
\Tilde{f}_{-z_G} = \argmin_f \sum_{i\in S}^{n} L_Y \left(  f (\hat{g}(x_i)), y_i  \right),
\end{equation}
where $S\triangleq[n]-G$.

Compare equation \ref{app:data-level:f} with \ref{app:thm:2:f:2}, we still need to define an intermediary predictor $\hat{f}_{-z_G, ir}$ as 
\begin{align*}
    \hat{f}_{-z_G, ir} = \argmin_f \left[\sum_{\substack{i\in S \\i\neq i_r}} L_{Y_i} \left(  f, \hat{g}(x_i)\right) + L_{Y_{ir}} \left(  f, \hat{g}_{-z_G}\right)\right]\\
    = \argmin_f \left[\sum_{\substack{i\in S}} L_{Y_i}\left(  f, \hat{g}\right) + L_{Y_{ir}} \left(  f, \hat{g}_{-z_G} \right) -  L_{Y_{ir}} \left(  f, \hat{g}\right) \right].
\end{align*}

Up-weight the $i_r$ data by some $\epsilon$, we define $\hat{f}_{\epsilon,-z_G,i_r}$ as
\begin{equation*}
\begin{split}
\hat{f}_{\epsilon,-z_G,ir} = \arg\min_{f} \Biggl[ & \sum_{i \in S} L_{Y_i}(f,\hat{g}) + \epsilon \cdot L_{Y_{ir}}(f,\hat{g}_{-z_G}) \\
& - \epsilon \cdot L_{Y_{ir}}(f,\hat{g}) \Biggr].
\end{split}
\end{equation*}

We denote $\hat{f}_{\epsilon, -z_G, ir}$ as $\hat{f}_{\epsilon}^*$ in the following proof. Then, from the minimization condition, we have 
\begin{equation}\label{f:1}
\begin{split}
\nabla_{\hat{f}}\sum_{\substack{i\in S}} L_{Y_i} &\left( \hat{f}_{\epsilon}^*,\hat{g}\right) + \epsilon \cdot\nabla_{\hat{f}} L_{Y_{ir}} \left( \hat{f}_{\epsilon}^*, \hat{g}_{-z_G}\right) \\
&- \epsilon \cdot\nabla_{\hat{f}} L_{Y_{ir}} \left( \hat{f}_{\epsilon}^*, \hat{g}(x_{i_r}) \right).
\end{split}
\end{equation}

When $\epsilon\rightarrow 0$, $\hat{f}_{\epsilon}^*\rightarrow \Tilde{f}_{-z_G}$.
Then we perform a Taylor expansion at $\Tilde{f}_{-z_G}$ of equation \ref{f:1} and have
\begin{align*}
\nabla_{\hat{f}}\sum_{\substack{i\in S}} L_{Y_{i}} &\left( \Tilde{f}_{-z_G},\hat{g}\right) 
+ \epsilon \cdot \nabla_{\hat{f}} L_{Y_{ir}} \left( \Tilde{f}_{-z_G}, \hat{g}_{-z_G}\right) \\
&- \epsilon \cdot \nabla_{\hat{f}} L_{Y_{ir}} \left( \Tilde{f}_{-z_G}, \hat{g}\right) \\
&+ \nabla^2_{\hat{f}}\sum_{\substack{i\in S}} L_{Y_{i}} \left( \Tilde{f}_{-z_G}, \hat{g}\right) 
\cdot \left( \hat{f}_{\epsilon}^* - \Tilde{f}_{-z_G} \right) 
\approx 0.
\end{align*}

Organizing the above equation gives
\begin{equation*}
\begin{split}
     \hat{f}_{\epsilon}^* - \Tilde{f}_{-z_G} \approx -\epsilon \cdot H^{-1}_{\Tilde{f}_{-z_G}} \cdot \left( \nabla_{\hat{f}}  L_{Y_{ir}} \left(  \Tilde{f}_{-z_G}, \hat{g}_{-z_G} \right) \right. \\
     \left. - \nabla_{\hat{f}}  L_{Y_{ir}} \left(  \Tilde{f}_{-z_G},\hat{g}\right) \right),
\end{split}
\end{equation*}
where $H_{\Tilde{f}_{-z_G}} = \nabla^2_{\hat{f}}\sum_{\substack{i\in S }}  L_{Y_{i}} \left( \Tilde{f}_{-z_G}, \hat{g}\right)$.

When $\epsilon = 1$, $\hat{f}_{\epsilon}^* = \hat{f}_{-z_G, ir}$. Then we perform a Newton iteration with step size $1$ at $\Tilde{f}_{-z_G}$,
\begin{equation*}
\begin{split}
     \hat{f}_{-z_G, ir} - \Tilde{f}_{-z_G} \approx -H^{-1}_{\Tilde{f}_{-z_G}} \cdot \left( \nabla_{\hat{f}}  L_{Y_{ir}} \left(  \Tilde{f}_{-z_G}, \hat{g}_{-z_G}\right) \right. \\
     \left. - \nabla_{\hat{f}}  L_{Y_{ir}} \left(  \Tilde{f}_{-z_G}, \hat{g}\right) \right)
\end{split}
\end{equation*}
Iterate $i_r$ through set $S$, and we have
\begin{equation}\label{ap:approx:hat_f}
\begin{split}
     \hat{f}_{-z_G} - \Tilde{f}_{-z_G} \approx -H^{-1}_{\Tilde{f}_{-z_G}} \cdot \left( \nabla_{\hat{f}} \sum_{i\in S} L_{Y_{i}} \left(  \Tilde{f}_{-z_G}, \hat{g}_{-z_G}\right) \right. \\
     \left. - \nabla_{\hat{f}} \sum_{i\in S}  L_{Y_i} \left(  \Tilde{f}_{-z_G}, \hat{g}\right) \right)
\end{split}
\end{equation}

The edited version of $\hat{g}_{-z_G}$ has been deduced as $\bar{g}_{-z_G}$ in theorem \ref{app:data-level:g}, substituting this approximation into \eqref{ap:approx:hat_f}, then we have
\begin{equation}
\begin{split}
     \hat{f}_{-z_G} - \Tilde{f}_{-z_G} \approx -H^{-1}_{\Tilde{f}_{-z_G}} \cdot \left( \nabla_{\hat{f}} \sum_{i\in S} L_{Y_i} \left(  \Tilde{f}_{-z_G}, \bar{g}_{-z_G}\right) \right. \\
     \left. - \nabla_{\hat{f}} \sum_{i\in S}  L_{Y_i} \left(  \Tilde{f}_{-z_G}, \hat{g}\right) \right).
\end{split}
\end{equation}
Noting that we cannot obtain $\hat{f}_{-z_G}$ and $H_{\Tilde{f}_{-z_G}}$ directly because we do not retrain the label predictor but edit it to $\bar{f}^*_{-z_G}$ as a substitute. Therefore, we approximate $\hat{f}_{-z_G}$ with $\bar{f}^*_{-z_G}$ and $H_{\Tilde{f}_{-z_G}}$ with  $H_{\bar{f}^*_{-z_G}}$ which is defined by:
\begin{equation*}
    H_{\bar{f}^*_{-z_G}} = \nabla^2_{\hat{f}}\sum_{\substack{i\in S }}  L_{Y_{i}} \left( \bar{f}^*_{-z_G}, \hat{g}\right)
\end{equation*}
Then we define $B_G$ as
\begin{equation}\label{app:th:3:f2}
\begin{split}
    B_G \triangleq -H^{-1}_{\bar{f}^*_{-z_G}} \cdot \left( \nabla_{\hat{f}} \sum_{i\in S} L_{Y_i} \left(  \bar{f}^*_{-z_G}, \bar{g}_{-z_G}\right) \right. \\
    \left. - \nabla_{\hat{f}} \sum_{i\in S}  L_{Y_i} \left(  \bar{f}^*_{-z_G}, \hat{g}\right) \right)
\end{split}
\end{equation}
Combining \eqref{app:th:3:f1} and \eqref{app:th:3:f2}, then we deduce the final closed-form edited label predictor as
\begin{equation*}
    \bar{f}_{-z_G} = \bar{f}^*_{-z_G} + B_G = \hat{f} + A_G + B_G. 
\end{equation*}
\end{proof}

\subsubsection{Data Addition} 

\iffalse
\noindent{\bf Proof Sketch.} (i) First, we estimate the retrained concept predictor $\hat{g}_{z_[m]}$. (ii) Then, we define a new label predictor $\Tilde{f}_{z_[m]}$ and estimate $\Tilde{f}_{z_[m]}-\hat{f}$. (iii) Next, in order to reduce computational complexity, use the lemma method to obtain the approximation of the Hessian matrix of $\Tilde{f}_{z_[m]}$. (iv) Next, we compute the difference $\hat{f}_{z_[m]} - \Tilde{f}_{z_[m]}$ as 
\begin{equation*}
-H^{-1}_{\Tilde{f}_{-z_G}} \cdot \left( \nabla_{\hat{f}} L_Y \left(  \Tilde{f}_{-z_G}(\hat{g}_{-z_G}(x_{i_r})), y_{i_r}  \right) - \nabla_{\hat{f}} L_Y \left(  \Tilde{f}_{-z_G}(\hat{g}(x_{i_r})), y_{i_r}  \right) \right).
\end{equation*}
(v) Finally, we divide $\hat{f}_{-z_G} - \hat{f}$, which we actually concerned with, into $\left ( \hat{f}_{-z_G} - \Tilde{f}_{-z_G} \right ) + \left ( \Tilde{f}_{-z_G} - \hat{f} \right )$.

\fi

\begin{theorem}\label{app:data-level-add:g}
For dataset $\mathcal{D} = \{ (x_i, y_i, c_i) \}_{i=1}^{N}$, given a set of new data $\mathcal{D}_{\mathrm{add}} = \{\tilde{z}_s = (\tilde{x}_s, \tilde{y}_s, \tilde{c}_s)\}_{s=1}^M$ to be added. Suppose the updated concept predictor $\hat{g}^{\cup \mathcal{D}_{\text{add}}}$ is defined by
\begin{equation}
\begin{split}
    \hat{g}^{\cup \mathcal{D}_{\text{add}}} =&  \argmin_{g}\sum_{j\in[K]}\sum_{i\in [N]}  L_{C_j}({g}({x}_{i}), {c}_{i}) \\
    & + \sum_{j\in[K]}\sum_{s\in [M]}  L_{C_j}({g}(\tilde{x}_{S}), \tilde{c}_{S}).
    \end{split}
\end{equation}
Define $L_{C}(\hat{g}(x_{i}), c_{i}) \triangleq \sum_{j=1}^K  L_{C_j}(\hat{g}(x_{i}), c_{i})$.
Then we have the following approximation for $\hat{g}^{\cup \mathcal{D}_{\text{add}}}$
\begin{equation}
\hat{g}^{\cup \mathcal{D}_{\text{add}}} \approx \bar{g}^{\cup \mathcal{D}_{\text{add}}} \triangleq\hat{g} - H^{-1}_{\hat{g}} \cdot \sum_{s\in [M]}\nabla_{g} L_{C} (\hat{g}(\tilde{x}_s), \tilde{c}_s), 
\end{equation}
where $H_{\hat{g}} = \nabla^2_{\hat{g}}  \sum_{i=1}^N  L_{C}(\hat{g}(x_i),c_i)$ is the Hessian matrix of the loss function respect to $\hat{g}$.
\end{theorem}

\begin{proof}
Firstly, recalling the definition of $\hat{g}^{\cup \mathcal{D}_{\text{add}}}$ as 
\begin{equation*}
    \hat{g}^{\cup \mathcal{D}_{\text{add}}} = \argmin_{g} \left[ \sum_{\substack{i=1}}^{N} L_C({g}(x_{i}), c_{i}) + \sum_{s=1}^M L_C(g(\tilde{x}_s), \tilde{c}_s)\right],
\end{equation*}

Then we up-weighted the $s$-th data by some $\epsilon$ and have a new predictor $\hat{g}_{\epsilon}^{\cup \mathcal{D}_{\text{add}}}$:
\begin{equation}\label{app-thm2:g1}
\argmin_{g} \left[ \sum_{i=1}^{N} L_C ( g(x_i), c_i) + \epsilon \cdot \sum_{s=1}^M L_C ( g(\tilde{x}_s), \tilde{c}_s)\right].
\end{equation}

Because $\hat{g}_{\epsilon}^{\cup \mathcal{D}_{\text{add}}}$ minimizes the right side of equation~\eqref{app-thm2:g1}, we have 
\begin{equation*}
\nabla_{g}  \sum_{i=1}^{N} L_Y (\hat{g}_{\epsilon}^{\cup \mathcal{D}_{\text{add}}}(x_i), c_i) + \epsilon \cdot\nabla_{g} \sum_{s=1}^ML_Y ( \hat{g}_{\epsilon}^{\cup \mathcal{D}_{\text{add}}}(\tilde{x}_s), \tilde{c}_s) =0. 
\end{equation*}

When $\epsilon\rightarrow 0$, $\hat{g}_{\epsilon}^{\cup \mathcal{D}_{\text{add}}}\rightarrow\hat{g}$.
So we can perform a first-order Taylor expansion with respect to ${g}$, and we have
\begin{equation}\label{app-thm2:g:1}
\begin{split}
0 \approx &\nabla_{g}  \sum_{i=1}^{N} L_C ( \hat{g} (x_i), c_i) 
+
\epsilon \cdot \nabla_{g}\sum_{s=1}^M L_C (\hat{g}(\tilde{x}_s), \tilde{c}_s) \\
&+
\nabla^2_{g}  \sum_{i=1}^{N} L_C ( \hat{g} (x_i), c_i) \cdot (\hat{g}_{\epsilon}^{\cup \mathcal{D}_{\text{add}}} - \hat{g}) .
\end{split}
\end{equation}

Recap the definition of $\hat{g}$:
\begin{equation*}
    \hat{g} = \argmin_g  \sum_{i=1}^n L_Y(g(x_i),c_i),
\end{equation*}

Then, the first term of the right side of~\eqref{app-thm2:g:1} equals $0$. Let $\epsilon\rightarrow0$, then we have
\begin{equation*}
\left.\frac{\mathrm{d}\hat{g}_{\epsilon}^{\cup \mathcal{D}_{\text{add}}}}{\mathrm{d}\epsilon}\right|_{\epsilon = 0} = -H^{-1}_{\hat{g}} \cdot \sum_{s=1}^M\nabla_{g} L_C (\hat{g}(\tilde{x}_s), \tilde{c}_s),
\end{equation*}
where $H_{\hat{g}} = \nabla^2_{\hat{g}}  \sum_{i=1}^N  L_{C}(\hat{g}(x_i),c_i)$.

Remember when $\epsilon\rightarrow0$, $\hat{g}_{\epsilon}^{\cup \mathcal{D}_{\text{add}}}\rightarrow\hat{g}^{\cup \mathcal{D}_{\text{add}}}$.
Perform a Newton step at $\hat{g}$, then we obtain the method to edit the original concept predictor under concept level: 
\begin{equation*}
\hat{g}^{\cup \mathcal{D}_{\text{add}}} \approx \bar{g}^{\cup \mathcal{D}_{\text{add}}}\triangleq\hat{g} - H^{-1}_{\hat{g}} \cdot \sum_{s=1}^M\nabla_{g} L_C (\hat{g}(\tilde{x}_s), \tilde{c}_s).
\end{equation*}
\end{proof}

\begin{theorem}
For dataset $\mathcal{D} = { (x_i, y_i, c_i) }{i=1}^{N}$, given a set of new data $\mathcal{D}{\mathrm{add}} = {\tilde{z}s = (\tilde{x}s, \tilde{y}s, \tilde{c}s)}{s=1}^M$ to be added. Suppose the updated label predictor $\hat{f}^{\cup \mathcal{D}_{\text{add}}}$ is defined by 
\begin{equation}\label{app-data:f-def} 
\begin{split} 
\hat{f}^{\cup \mathcal{D}_{\text{add}}} =& \argmin_{f} \sum_{i=1}^N L_{Y}(f(\hat{g}^{\cup \mathcal{D}_{\text{add}}}(x_{i})), y_{i}) \\
& + \sum_{s=1}^{M} L_{Y}(f(\hat{g}^{\cup \mathcal{D}_{\text{add}}}(\tilde{x}_{s})), \tilde{y}_{s}).
\end{split} 
\end{equation} 
Then we have the following approximation for $\hat{f}^{\cup \mathcal{D}_{\text{add}}}$: \begin{equation} \hat{f}^{\cup \mathcal{D}_{\text{add}}} \approx \bar{f}^{\cup \mathcal{D}_{\text{add}}} \triangleq \hat{f} - H^{-1}{\hat{f}} \cdot \sum_{s\in [M]} \nabla_{f} L_{Y} (\hat{f}(\tilde{x}s), \tilde{y}s), \end{equation} where $H{\hat{f}} = \nabla^2{\hat{f}} \sum_{i=1}^N L_{Y}(\hat{f}(x_i),y_i)$ is the Hessian matrix of the loss function with respect to $\hat{f}$. \end{theorem}

%Observing that the computational complexity of this matrix is quite high. And we will estimate it using an optimization method in the theorem below. 
%Besides, we denote this approximation as BGB_G. 

\begin{proof}
We can see that there is a huge gap between $\hat{f}^{\cup \mathcal{D}_{\text{add}}}$ and $\hat{f}$. Thus, firstly, we define $\tilde{f}^{\cup \mathcal{D}_{\text{add}}}$ as 
\begin{equation*}
\argmin_f \sum_{i=1}^{N} L_Y \left(  f (\hat{g}(x_i)), y_i  \right) + \sum_{s=1}^M L_Y \left( f ( \hat{g} (\tilde{x}_s) ) , \tilde{y}_s   \right).
\end{equation*}

Then, we define $\tilde{f}_{\epsilon}^{\cup \mathcal{D}_{\text{add}}}$ as follows to estimate $\tilde{f}^{\cup \mathcal{D}_{\text{add}}}$.
\begin{equation*}
\begin{split}
    \tilde{f}_{\epsilon}^{\cup \mathcal{D}_{\text{add}}} =  \argmin_f \sum_{i=1}^{N} L_Y \left(  f (\hat{g}(x_i)), y_i  \right) \\
    + \epsilon \cdot \sum_{s=1}^M L_Y \left( f ( \hat{g} (\tilde{x}_s) ) , \tilde{y}_s   \right).
\end{split}
\end{equation*}

From the minimization condition, we have
\begin{equation*}
\begin{split}
    0 = &\nabla_{{f}} \sum_{i=1}^{N} L_Y \left(  \tilde{f}_{\epsilon}^{\cup \mathcal{D}_{\text{add}}} (\hat{g}(x_i)), y_i  \right) \\
    & + \epsilon \cdot \sum_{s=1}^M\nabla_{{f}}L_Y \left(\tilde{f}_{\epsilon}^{\cup \mathcal{D}_{\text{add}}} ( \hat{g} (\tilde{x}_s) ) , \tilde{y}_s   \right).
    \end{split}
\end{equation*}
Perform a first-order Taylor expansion at $\hat{f}$,
\begin{align*}
& \nabla_{\hat{f}} \sum_{i=1}^{N} L_Y \left(  \hat{f} (\hat{g}(x_i)), y_i  \right) + \epsilon \cdot \nabla_{\hat{f}} \sum_{s=1}^M L_Y \left(\hat{f} ( \hat{g} (\tilde{x}_s) ) , \tilde{y}_s   \right)  \\
& + \nabla^2_{\hat{f}} \sum_{i=1}^{N} L_Y \left(  \hat{f} (\hat{g}(x_i)), y_i  \right) \left(\tilde{f}_{\epsilon}^{\cup \mathcal{D}_{\text{add}}} - \hat{f}\right)= 0.
\end{align*}

Then $\tilde{f}^{\cup \mathcal{D}_{\text{add}}}$ can be approximated by
\begin{equation}\label{ap:approx:f_piao}
    \tilde{f}^{\cup \mathcal{D}_{\text{add}}} \approx \hat{f} - H^{-1}_{\hat{f}} \cdot \sum_{s=1}^M\nabla_{\hat{f}} L_Y \left(\hat{f} ( \hat{g} (\tilde{x}_s) ) , \tilde{y}_s   \right).
\end{equation}
Define $A\triangleq H^{-1}_{\hat{f}} \cdot \sum_{s=1}^M\nabla_{\hat{f}} L_Y \left(\hat{f} ( \hat{g} (\tilde{x}_s) ) , \tilde{y}_s   \right)$
Then the approximation of $\tilde{f}^{\cup \mathcal{D}_{\text{add}}}$ is defined as
\begin{equation}\label{app:th:3:f1}
\bar{f}^{\cup \mathcal{D}_{\text{add}}} = \hat{f} - A
\end{equation}

Then we estimate the difference between $\hat{f}^{\cup \mathcal{D}_{\text{add}}}$ and $\tilde{f}^{\cup \mathcal{D}_{\text{add}}}$. Recall the definition of $\tilde{f}^{\cup \mathcal{D}_{\text{add}}}$ as 
\begin{equation}\label{app-data-add:thm:2:f:2}
\argmin_f \sum_{i=1}^{N} L_Y \left(  f (\hat{g}(x_i)), y_i  \right) + \sum_{s=1}^{M} L_Y \left(  f (\hat{g}(\tilde{x}_s)), \tilde{y}_s  \right).
\end{equation}

Compare equation~\eqref{app-data:f-def} with~\eqref{app-data-add:thm:2:f:2}, we still need to define an intermediary predictor $\hat{f}^{\cup \mathcal{D}_{\text{add}}}_{s_t}$ as 
\begin{align*}
    \hat{f}^{\cup \mathcal{D}_{\text{add}}}_{s_t} 
    = \argmin_f \Biggl[ 
    &\sum_{i=1}^N L_{Y} \left( f(\hat{g}(x_i)), y_i \right) \\
    &+ \sum_{s=1}^M L_{Y} \left( f(\hat{g}(\tilde{x}_s)), \tilde{y}_s \right) \\
    &+ L_{Y} \left( f(\hat{g}_{\epsilon}^{\cup \mathcal{D}_{\text{add}}}(\tilde{x}_{s_t})), \tilde{y}_{s_t} \right) \\
    &- L_{Y} \left( f(\hat{g}(\tilde{x}_{s_t})), \tilde{y}_{s_t} \right)
    \Biggr]
\end{align*}

Up-weight the $s_t$ data by some $\epsilon$, we define $ \hat{f}^{\cup \mathcal{D}_{\text{add}}}_{s_t, \epsilon}$ as
\begin{equation*}
\begin{split}
    \hat{f}^{\cup \mathcal{D}_{\text{add}}}_{s_t, \epsilon} = \argmin_f \Biggl[ 
    & \sum_{i=1}^N L_{Y} \left(f(\hat{g}(x_i)), y_i\right) \\
    & + \sum_{s=1}^M L_{Y} \left(f(\hat{g}(\tilde{x}_s)), \tilde{y}_s\right) \\
    & + \epsilon \cdot L_{Y} \left(f(\hat{g}^{\cup \mathcal{D}_{\text{add}}}(\tilde{x}_{s_t})), \tilde{y}_{s_t}\right) \\
    & - \epsilon \cdot L_{Y} \left(f(\hat{g}(\tilde{x}_{s_t})), \tilde{y}_{s_t}\right) \Biggr].
\end{split}
\end{equation*}

Then, from the minimization condition, we have 
\begin{equation}\label{add-f:1} 
\begin{split}
0 = \nabla_{{f}}\sum_{i=1}^N L_{Y_i} \left(  \hat{f}^{\cup \mathcal{D}_{\text{add}}}_{s_t, \epsilon},\hat{g}\right) +\sum_{s=1}^M L_{Y_i}\left(\hat{f}^{\cup \mathcal{D}_{\text{add}}}_{s_t, \epsilon},\hat{g}\right) \\
+ \epsilon \cdot\nabla_{{f}} L_{Y_{st}} \left(  \hat{f}^{\cup \mathcal{D}_{\text{add}}}_{s_t, \epsilon},  \hat{g}^{\cup \mathcal{D}_{\text{add}}}\right) - \epsilon \cdot\nabla_{\hat{f}} L_{Y_{ir}} \left(   \hat{f}^{\cup \mathcal{D}_{\text{add}}}_{s_t, \epsilon}, \hat{g} \right).
\end{split}
\end{equation}

When $\epsilon\rightarrow 0$, $\hat{f}^{\cup \mathcal{D}_{\text{add}}}_{s_t, \epsilon}\rightarrow \Tilde{f}^{\cup \mathcal{D}_{\text{add}}}$.
Then we perform a Taylor expansion at $\Tilde{f}^{\cup \mathcal{D}_{\text{add}}}$ of equation \ref{add-f:1} and have
\begin{equation*}
\begin{split}
0 \approx \nabla_{{f}}\sum_{i=1}^N L_{Y_i} \left(  \Tilde{f}^{\cup \mathcal{D}_{\text{add}}},\hat{g}\right) +\sum_{s=1}^M L_{Y_i}\left(\Tilde{f}^{\cup \mathcal{D}_{\text{add}}},\hat{g}\right) \\
+ \epsilon \cdot\nabla_{{f}} L_{Y_{st}} \left(  \Tilde{f}^{\cup \mathcal{D}_{\text{add}}},  \hat{g}^{\cup \mathcal{D}_{\text{add}}}\right) \cdot \left(\hat{f}^{\cup \mathcal{D}_{\text{add}}}_{s_t, \epsilon} - \Tilde{f}^{\cup \mathcal{D}_{\text{add}}}\right)\\
- \epsilon \cdot\nabla_{\hat{f}} L_{Y_{ir}} \left(  \Tilde{f}^{\cup \mathcal{D}_{\text{add}}}, \hat{g} \right) \cdot \left(\hat{f}^{\cup \mathcal{D}_{\text{add}}}_{s_t, \epsilon} - \Tilde{f}^{\cup \mathcal{D}_{\text{add}}}\right).
\end{split}
\end{equation*}

Organizing the above equation gives
\begin{equation*}
\begin{split}
    \hat{f}^{\cup \mathcal{D}_{\text{add}}}_{s_t, \epsilon} - \Tilde{f}^{\cup \mathcal{D}_{\text{add}}} 
    & \approx -\epsilon \cdot H^{-1}_{\Tilde{f}} \nabla_{{f}} \left( L_{Y_{st}} \left( \Tilde{f}^{\cup \mathcal{D}_{\text{add}}}, \hat{g}^{\cup \mathcal{D}_{\text{add}}} \right) \right. \\
    & \left. - L_{Y_{st}} \left( \Tilde{f}^{\cup \mathcal{D}_{\text{add}}}, \hat{g} \right) \right),
\end{split}
\end{equation*}
where $$H_{\Tilde{f}} = \nabla^2_{\hat{f}}\sum_{i=1}^N L_{Y_i} \left(  \Tilde{f}^{\cup \mathcal{D}_{\text{add}}},\hat{g}\right) +\sum_{s=1}^M L_{Y_i}\left(\Tilde{f}^{\cup \mathcal{D}_{\text{add}}},\hat{g}\right). $$

When $\epsilon = 1$, $\hat{f}^{\cup \mathcal{D}_{\text{add}}}_{s_t, -1} = \hat{f}^{\cup \mathcal{D}_{\text{add}}}_{s_t}$. Then we perform a Newton iteration with step size $1$ at $\Tilde{f}^{\cup \mathcal{D}_{\text{add}}}$,
\begin{equation*}
\begin{split}
    \hat{f}^{\cup \mathcal{D}_{\text{add}}}_{s_t} - \Tilde{f}^{\cup \mathcal{D}_{\text{add}}} 
    & \approx -H^{-1}_{\Tilde{f}} \nabla_{{f}} \left( L_{Y_{st}} \left( \Tilde{f}^{\cup \mathcal{D}_{\text{add}}}, \hat{g}^{\cup \mathcal{D}_{\text{add}}} \right) \right. \\
    & \left. - L_{Y_{st}} \left( \Tilde{f}^{\cup \mathcal{D}_{\text{add}}}, \hat{g} \right) \right),
\end{split}
\end{equation*}

Iterate $s_t$ through set $[M]$, and we have
\begin{equation}\label{ap:add-approx:hat_f}
\begin{split}
    \hat{f}^{\cup \mathcal{D}_{\text{add}}} - \Tilde{f}^{\cup \mathcal{D}_{\text{add}}} 
    & \approx -H^{-1}_{\Tilde{f}} \cdot \sum_{s=1}^M \nabla_{{f}} \Biggl( L_{Y_{s}} \left( \Tilde{f}^{\cup \mathcal{D}_{\text{add}}}, \hat{g}^{\cup \mathcal{D}_{\text{add}}} \right) \\
    & \quad - L_{Y_{s}} \left( \Tilde{f}^{\cup \mathcal{D}_{\text{add}}}, \hat{g} \right) \Biggr)
\end{split}
\end{equation}

The edited version of $\hat{g}^{\cup \mathcal{D}_{\text{add}}}$ has been deduced as $\bar{g}^{\cup \mathcal{D}_{\text{add}}}$ in theorem \ref{app:data-level-add:g}, substituting this approximation into \eqref{ap:add-approx:hat_f}, then we have
\begin{equation}
\begin{split}
    \hat{f}^{\cup \mathcal{D}_{\text{add}}} - \Tilde{f}^{\cup \mathcal{D}_{\text{add}}}  
    & \approx -H^{-1}_{\Tilde{f}} \cdot \nabla_{\hat{f}} \sum_{s=1}^M \Bigl[ L_{Y_s} \left( \Tilde{f}^{\cup \mathcal{D}_{\text{add}}}, \bar{g}^{\cup \mathcal{D}_{\text{add}}} \right) \\
    & \quad - L_{Y_s} \left( \Tilde{f}^{\cup \mathcal{D}_{\text{add}}}, \hat{g} \right) \Bigr].
\end{split}
\end{equation}
Noting that we cannot obtain $\hat{f}^{\cup \mathcal{D}_{\text{add}}}$ and $H_{\Tilde{f}}$ directly because we do not retrain the label predictor but edit it to $\bar{f}^{\cup \mathcal{D}_{\text{add}}}$ as a substitute. Therefore, we approximate $\hat{f}^{\cup \mathcal{D}_{\text{add}}}$ with $\bar{f}^{\cup \mathcal{D}_{\text{add}}}$ and $H_{\Tilde{f}}$ with  $H_{\bar{f}}$ which is defined by:
\begin{equation*}
    H_{\bar{f}} = \nabla^2_{\bar{f}}\sum_{i=1}^N  L_{Y_{i}} \left( \bar{f}^{\cup \mathcal{D}_{\text{add}}}, \hat{g}\right) + \nabla^2_{\bar{f}}\sum_{s=1}^M  L_{Y_{s}} \left( \bar{f}^{\cup \mathcal{D}_{\text{add}}}, \hat{g}\right)  
\end{equation*}
Then we define $B$ as
\begin{equation}\label{app:add-th:3:f2}
    B \triangleq H^{-1}_{\bar{f}} \cdot \nabla_{\hat{f}}\left(  \sum_{s=1}^M L_{Y_s} \left(  \Tilde{f}^{\cup \mathcal{D}_{\text{add}}}, \bar{g}^{\cup \mathcal{D}_{\text{add}}}\right) - L_{Y_s} \left(  \Tilde{f}^{\cup \mathcal{D}_{\text{add}}}, \hat{g}\right) \right)
\end{equation}
Combining \eqref{app:th:3:f1} and \eqref{app:th:3:f2}, then we deduce the final closed-form edited label predictor as
\begin{equation*}
    \bar{f}^{\cup \mathcal{D}_{\text{add}}}_* =\bar{f}^{\cup \mathcal{D}_{\text{add}}} - B = \hat{f} - A - B. 
\end{equation*}
\end{proof}

\section{Existing Methods}
\label{sec:appendix:e}
In this section, we provide EK-FAC methods for accelerating influence computation.

\subsection{EK-FAC}
In our CBM model, the label predictor is a single linear layer and the cost for Hessian computing is affordable. However, the concept predictor is based on Resnet-18, which has much more parameters. Therefore, we perform EK-FAC for $\hat{g}$.
\begin{equation*}
    \hat{g} = \argmin_g \sum_{j=1}^{k}  L_{C_j} = \argmin_g \sum_{j=1}^k\sum_{i=1}^n \ell(g^j(x_i),c_i^j),
\end{equation*}
we define $H_{\hat{g}} = \nabla^2_{\hat{g}}\sum L_{C_j}$ as the Hessian matrix of the loss function with respect to the parameters.

To this end, consider the $l$-th layer of $\hat{g}$ which takes as input a layer of activations $\{a_{j,t}\}$ where $j\in\{1, 2, \ldots, J\}$ indexes the input map and $t \in \mathcal{T}$ indexes the spatial location which is typically a 2-D grid. And this layer is parameterized by a set of weights $W = \left(w_{i, j, \delta}\right)$ and biases $b = \left(b_{i}\right)$, where $i \in\{1, \ldots, I\}$ indexes the output map, and $\delta \in \Delta$ indexes the spatial offset (from the center of the filter).

The convolution layer computes a set of pre-activations as 
\begin{equation*}
    [S_l]_{i,t} = s_{i, t}=\sum_{\delta \in \Delta} w_{i, j, \delta} a_{j, t+\delta}+b_{i}.
\end{equation*}
Denote the loss derivative with respect to $s_{i,t}$ as 
\begin{equation*}
    \mathcal{D}s_{i, t} = \frac{\partial \sum L_{C_j}}{\partial s_{i, t}},
\end{equation*}
which can be computed during backpropagation.

The activations are actually stored as $A_{l-1}$ of dimension $|\mathcal{T}|\times J$. Similarly, the weights are stored as an $I \times|\Delta| J$ array $W_l$. The straightforward implementation of convolution, though highly parallel in theory, suffers from poor memory access patterns. Instead, efficient implementations typically leverage what is known as the expansion operator $\llbracket \cdot \rrbracket$. For instance, 
$\llbracket A_{l-1} \rrbracket$ is a $|\mathcal{T}|\times J|\Delta|$ matrix, defined as
\begin{equation*}
    \llbracket {{A}}_{l-1} \rrbracket_{t, j|\Delta|+\delta}=\left[{{A}}_{l-1}\right]_{(t+\delta), j}=a_{j, t+\delta},
\end{equation*}

In order to fold the bias into the weights, we need to add a homogeneous coordinate (i.e. a column of all $1$’s) to the expanded activations $\llbracket A_{l-1} \rrbracket$ and denote this as $\llbracket A_{l-1} \rrbracket_{\mathrm{H}}$. Concatenating the bias vector to the weights matrix, then we have $\theta_l = (b_l, W_l)$.

Then, the approximation for $H_{\hat{g}}$ is given as:
\begin{align*}
G^{(l)}(\hat{g})  =& \mathbb{E}\left[\mathcal{D} w_{i, j, \delta} \mathcal{D} w_{i^{\prime}, j^{\prime}, \delta^{\prime}}\right]\\
=&\mathbb{E}\left[\left(\sum_{t \in \mathcal{T}} a_{j, t+\delta} \mathcal{D} s_{i, t}\right)\left(\sum_{t^{\prime} \in \mathcal{T}} a_{j^{\prime}, t^{\prime}+\delta^{\prime}} \mathcal{D} s_{i^{\prime}, t^{\prime}}\right)\right]\\
\approx & \mathbb{E}\left[\llbracket {A}_{l-1} \rrbracket_{\mathrm{H}}^{\top} \llbracket {A}_{l-1} \rrbracket_{\mathrm{H}}\right] \\
& \otimes \frac{1}{|\mathcal{T}|} \mathbb{E}\left[\mathcal{D} {S}_{l}^{\top} \mathcal{D} {S}_{l}\right] \triangleq \Omega_{l-1}\otimes \Gamma_{l}.
\end{align*}
Estimate the expectation using the mean of the training set, 
\begin{align*}
G^{(l)}(\hat{g}) \approx & \frac{1}{n} \sum_{i=1}^{n}\left(\llbracket {A}^i_{l-1} \rrbracket_{\mathrm{H}}^{\top} \llbracket {A}^i_{l-1} \rrbracket_{\mathrm{H}}\right) \\ &\otimes\frac{1}{n} \sum_{i=1}^{n}\left(\frac{1}{|\mathcal{T}|}\mathcal{D} {{S}^i_{l}}^{\top} \mathcal{D} {S}^i_{l}\right)\triangleq \hat{\Omega}_{l-1}\otimes \hat{\Gamma_{l}}
\end{align*}
Furthermore, if the factors $\hat{\Omega}_{l-1}$ and $\hat{\Gamma}_{l}$ have eigen decomposition $Q_{\Omega} {\Lambda}_{{\Omega}} {Q}_{{\Omega}}^{\top}$ and ${Q}_{\Gamma}{\Lambda}_{\Gamma} {Q}_{\Gamma}^{\top}$, respectively, then the eigen decomposition of $\hat{\Omega}_{l-1}\otimes \hat{\Gamma_{l}}$ can be written as:  
$$ 
\begin{aligned}  
\hat{\Omega}_{l-1}\otimes \hat{\Gamma}_{l} & ={Q}_{  {\Omega}} {\Lambda}_{  {\Omega}} {Q}_{  {\Omega}}^{\top} \otimes {Q}_{  {\Gamma}} {\Lambda}_{  {\Gamma}} {Q}_{  {\Gamma}}^{\top} \\  
& =\left({Q}_{  {\Omega}} \otimes {Q}_{   {\Gamma}}\right)\left(  {\Lambda}_{   {\Omega}} \otimes  {\Lambda}_{   {\Gamma}}\right)\left( {Q}_{   {\Omega}} \otimes  {Q}_{   {\Gamma}}\right)^{\top}  
\end{aligned}  
$$
Since subsequent inverse operations are required and the current approximation for $G^{(l)}(\hat{g})$  is PSD, we actually use a damped version as 
\begin{align}\label{ape:ekfac:1}
    {\hat{G^{l}}(\hat{g})}^{-1} =& \left(G_{l}\left(\hat{g}\right)+\lambda_{l} I_{d_{l}}\right)^{-1} \\
    =& \left(Q_{\Omega_{l-1}} \otimes Q_{\Gamma_{l}}\right)\left(\Lambda_{\Omega_{l-1}} \otimes \Lambda_{\Gamma_{l}}+\lambda_{l} I_{d_{l}}\right)^{-1}\left(Q_{\Omega_{l-1}} \otimes Q_{\Gamma_{l}}\right)^{\mathrm{T}}
\end{align}
Besides, \cite{george2018fast} proposed a new method that corrects the error in equation \ref{ape:ekfac:1} which sets the $i$-th diagonal element of $\Lambda_{\Omega_{l-1}} \otimes \Lambda_{\Gamma_{l}}$ as
\begin{equation*}
    \Lambda_{ii}^{*} = n^{-1} \sum_{j =1}^n \left( \left( Q_{\Omega_{l-1}}\otimes Q_{\Gamma_{l}}\right)\nabla_{\theta_l}\ell_j\right)^2_i.
\end{equation*}

\section{More Visualization Results and Explanation}
\label{sec:appendix:exp}

\noindent{\bf Visualization.} Since CBM is an explainable model, we aim to evaluate the interpretability of our CCBM (compared to the retraining). We will present some visualization results for the concept-level edit. Figure~\ref{fig:vertical_images} presents the top 10 most influential concepts and their corresponding predicted concept labels obtained by our CCBM and the retrain method after randomly deleting concepts for the CUB dataset. (Detailed explanation can be found in Appendix \ref{app:explain}.)
Our CCBM can provide explanations for which concepts are crucial and how they assist the prediction.
Specifically, among the top 10 most important concepts in the ground truth (retraining), CCBM can accurately recognize 9 within them. 
For instance, we correctly identify "has\_upperparts\_color::orange", "has\_upper\_tail\_color::red", and "has\_breast\_color::black" as some of the most important concepts when predicting categories. Additional visualization results under data level and concept-label level on OAI and CUB datasets are included in Appendix \ref{app:visual}.
\begin{figure}[ht]
    \centering
    \begin{tabular}{cc}
\includegraphics[width=0.49\linewidth]{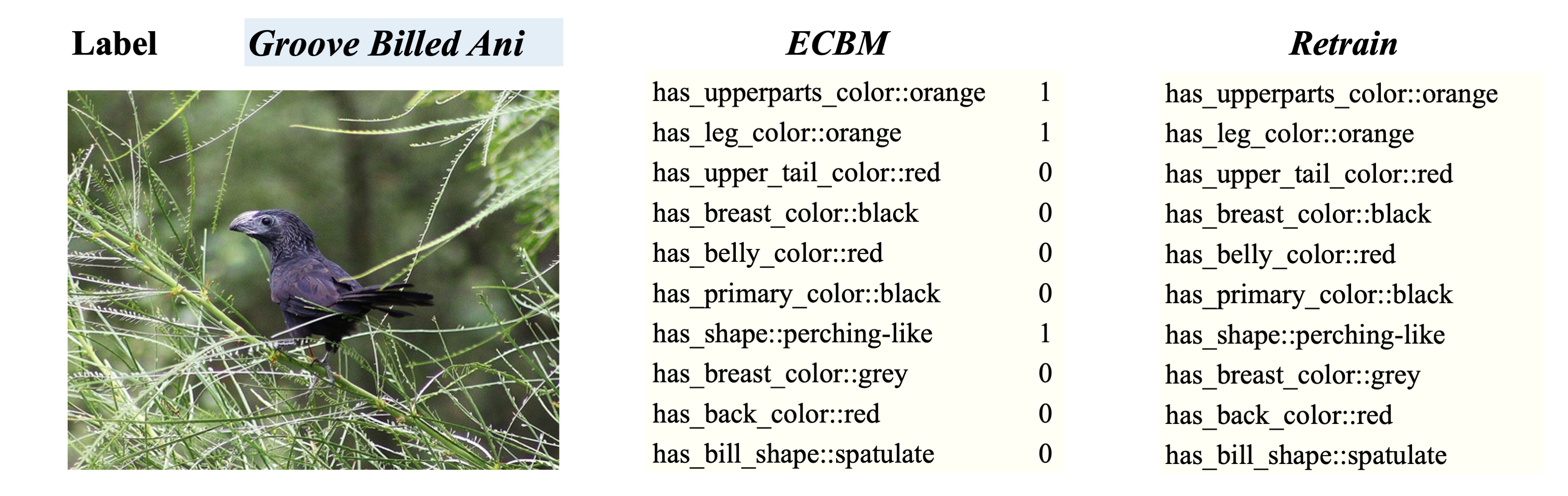} 
\includegraphics[width=0.49\linewidth]{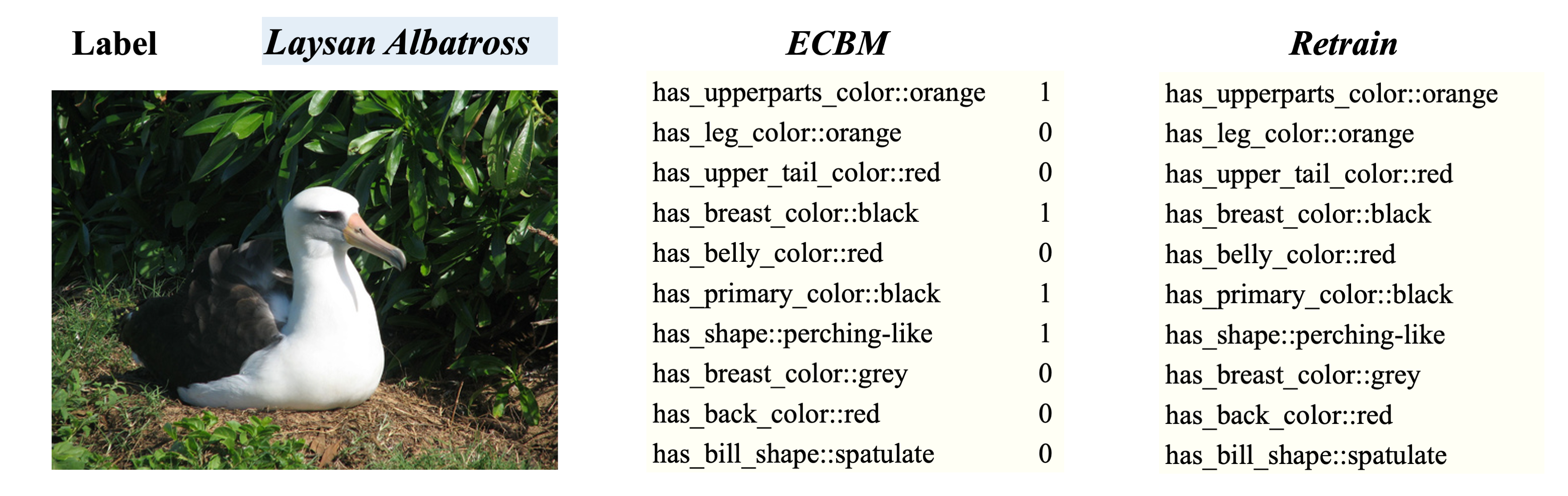}
    \end{tabular}
    \caption{Visualization of the Top 10 Most Influential Concepts for CBM(Identified by CCBM or Retrain) Highlighted on an Extracted Image.}
    \vspace{-8pt}
    % \caption{Visualization of the top-10 most influential concepts and their predicted concept labels}
    \label{fig:vertical_images}
\end{figure}

% \vspace{-7pt}
% \begin{figure}[ht]
%     \centering
%     \begin{tabular}{c}
%         \includegraphics[width=1\linewidth]{images/visual-1.png} \\
%         \includegraphics[width=0.9\linewidth]{images/visual-2.png}
%     \end{tabular}
%     \caption{test}\label{fig:vertical_images}
% \vspace{-12pt}
% \end{figure}

\subsection{Explanation for Visualization Results}\label{app:explain}

At the concept level, we remove each concept one at a time, retrain the CBM, and subsequently evaluate the model performance. We rank the concepts in descending order based on the model performance loss. Concepts that, when removed, cause significant changes in model performance are considered influential concepts. The top 10 concepts are shown in the retrain column as illustrated in Figure~\ref{fig:vertical_images}. In contrast, we use our CCBM method instead of the retrain method, as outlined in Algorithm~\ref{alg:5}, and the top 10 concepts are shown in the CCBM column of Figure~\ref{fig:vertical_images}.

To help readers connect the top 10 influential concepts with the input image, we provide visualizations of the data and list the concept labels corresponding to the top 10 influential concepts, which are shown in Figure~\ref{fig:vertical_images},\ref{fig:vertical_images1}, \ref{fig:vertical_images2}.

For the other two levels and for additional datasets, we also conduct a similar procedure, and the corresponding visualization results are presented in Figure~\ref{fig:vertical_images3}, \ref{fig:vertical_images4}, \ref{fig:vertical_images5}, \ref{fig:vertical_images6}, and \ref{fig:vertical_images7}.

\subsection{Visualization Results}\label{app:visual}
We provide our additional visualization results in Figure \ref{fig:vertical_images1}, \ref{fig:vertical_images2}, \ref{fig:vertical_images3}, \ref{fig:vertical_images4}, \ref{fig:vertical_images5}, \ref{fig:vertical_images6}, and \ref{fig:vertical_images7}.

\begin{figure}[ht]
    \centering
    \begin{tabular}{cc}
        \includegraphics[width=0.95\linewidth]{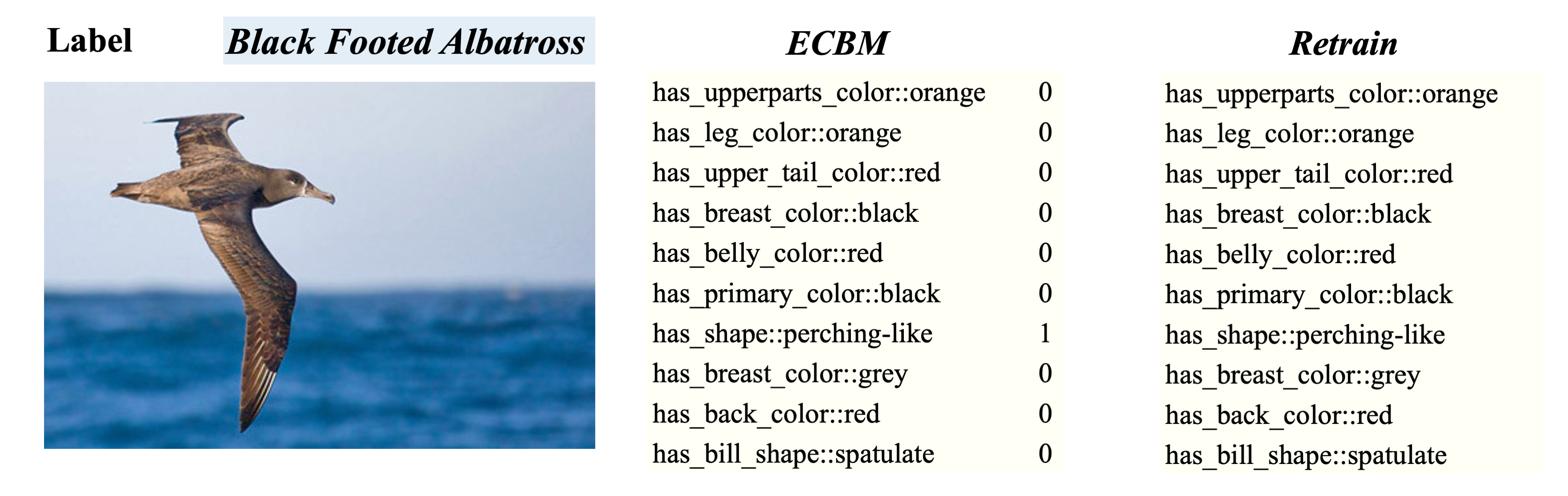}\\
        \includegraphics[width=0.95\linewidth]{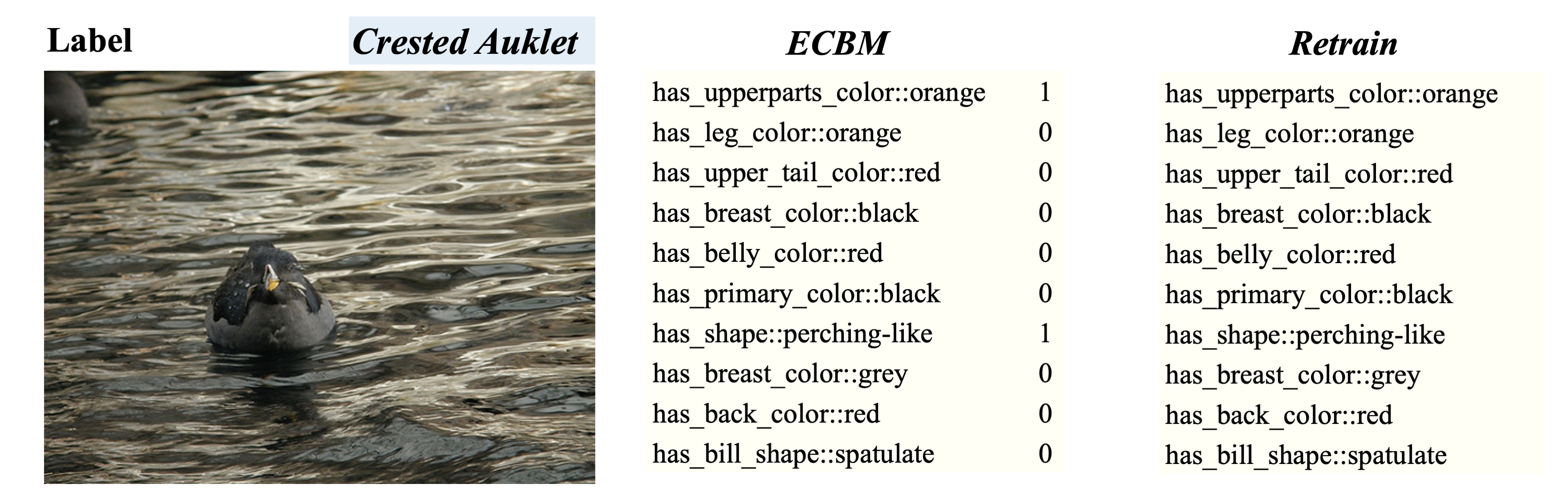}\\
        \includegraphics[width=0.95\linewidth]{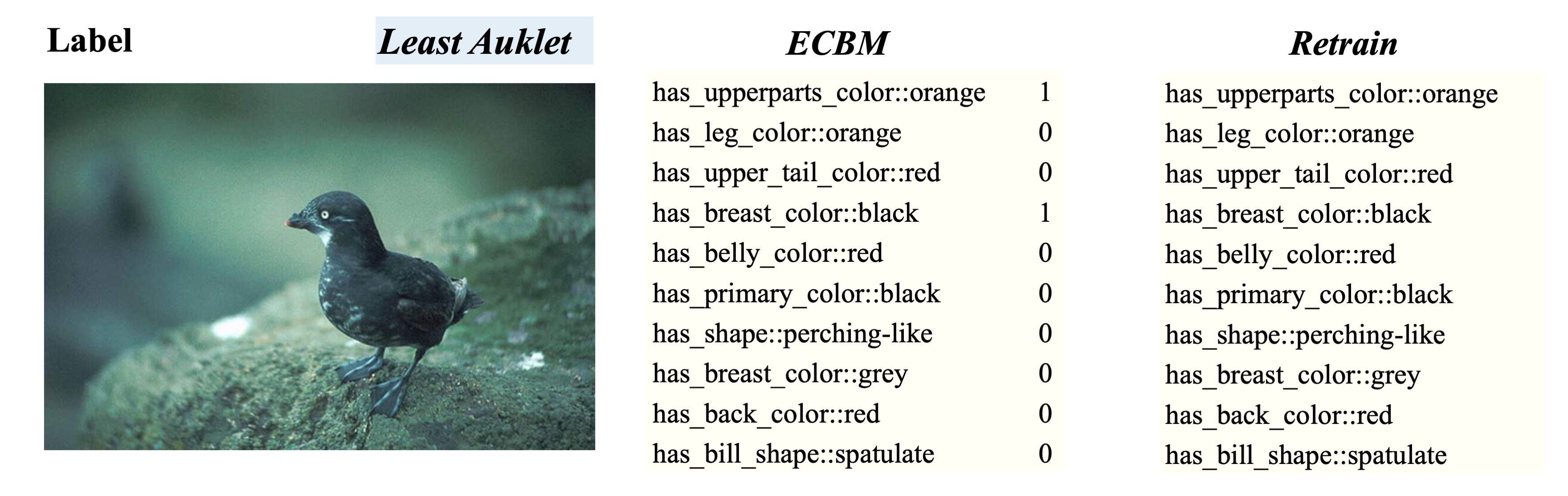}\\
        \includegraphics[width=0.95\linewidth]{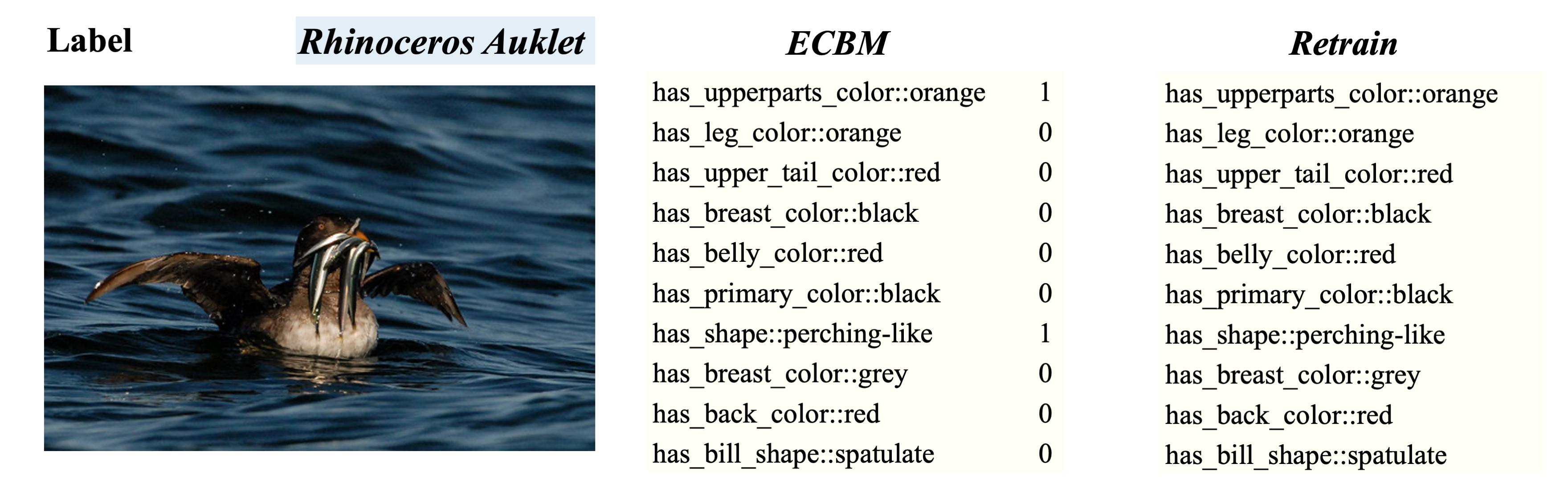}
    \end{tabular}
    \caption{Visualization of the top-10 most influential concepts for different classes in CUB.}
    \label{fig:vertical_images1}
\end{figure}
\begin{figure}[ht]
    \centering
    \begin{tabular}{cc}
        \includegraphics[width=0.95\linewidth]{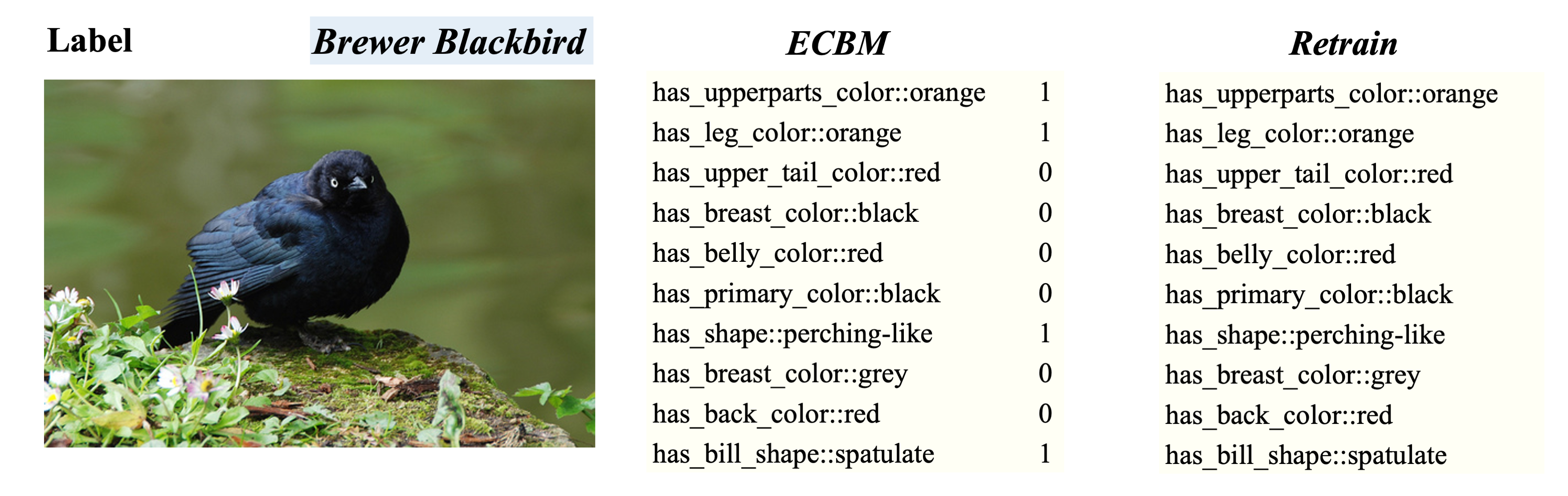}\\
        \includegraphics[width=0.95\linewidth]{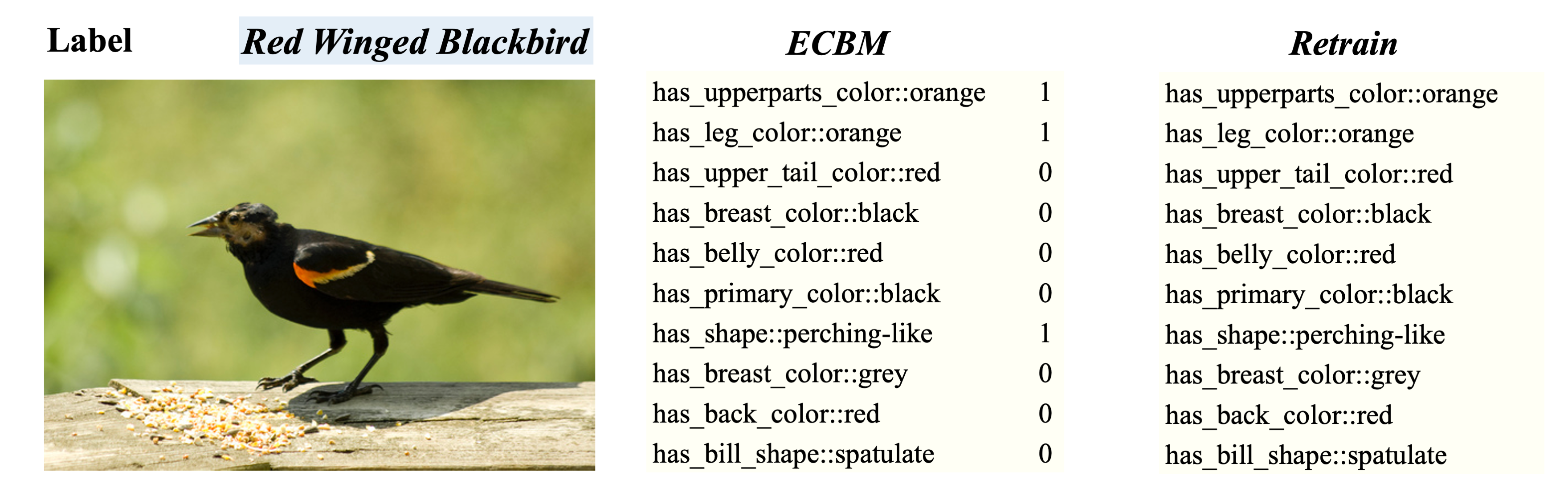}\\
        \includegraphics[width=0.95\linewidth]{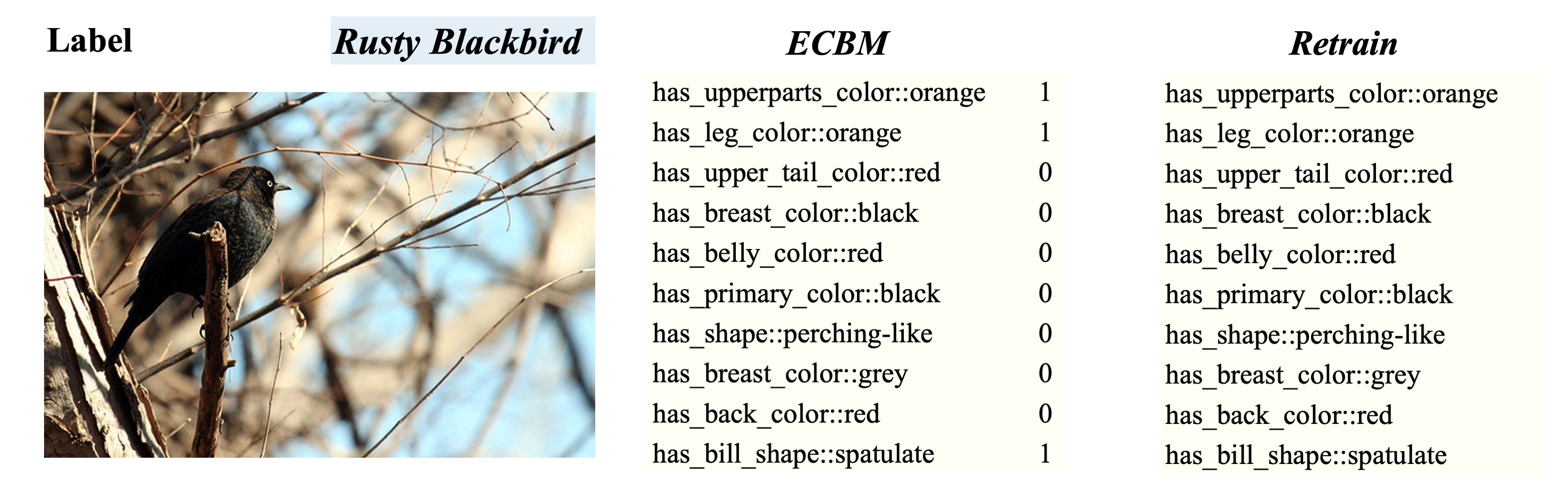}\\
        \includegraphics[width=0.95\linewidth]{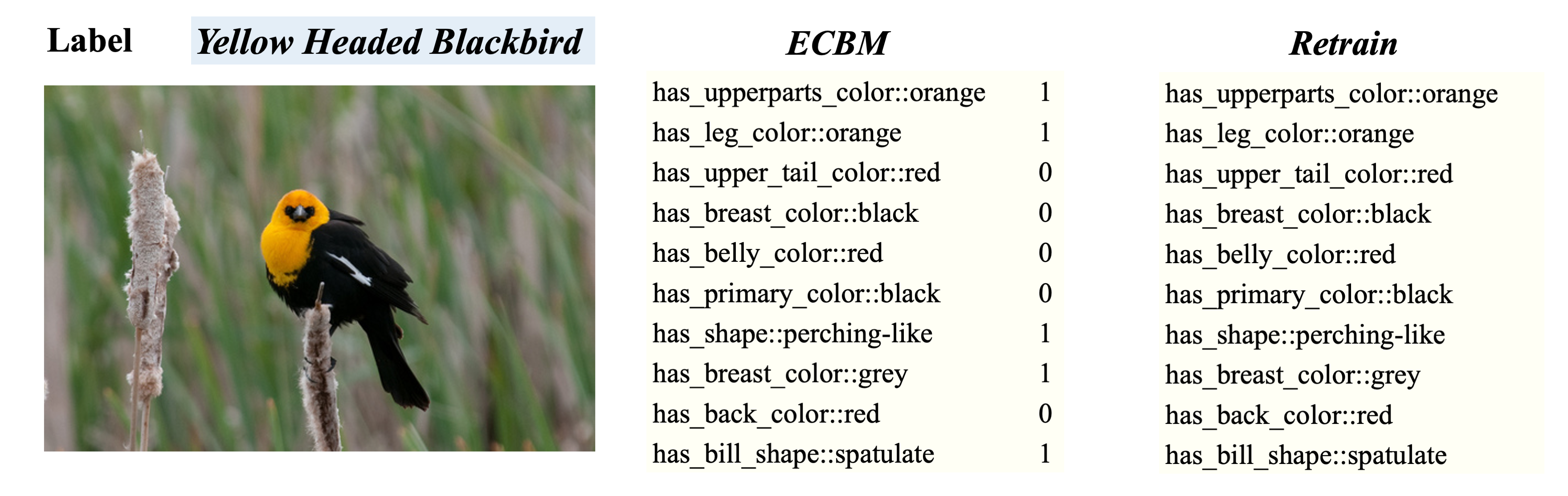}
    \end{tabular}
    \caption{Visualization of the top-10 most influential concepts for different classes in CUB.}
    \label{fig:vertical_images2}
\end{figure}
\begin{figure}[ht]
    \centering
    \begin{tabular}{cc}
        \includegraphics[width=0.95\linewidth]{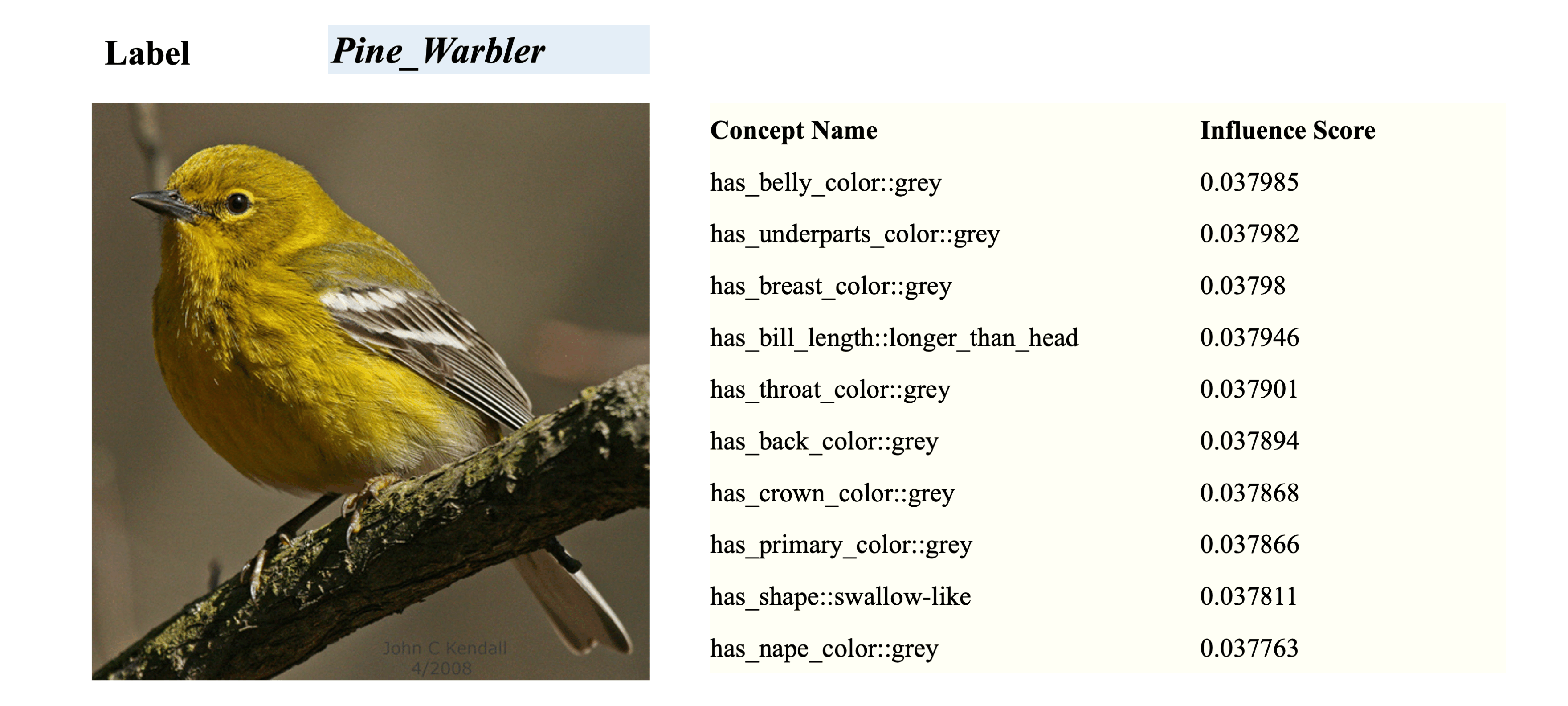}\\
        \includegraphics[width=0.95\linewidth]{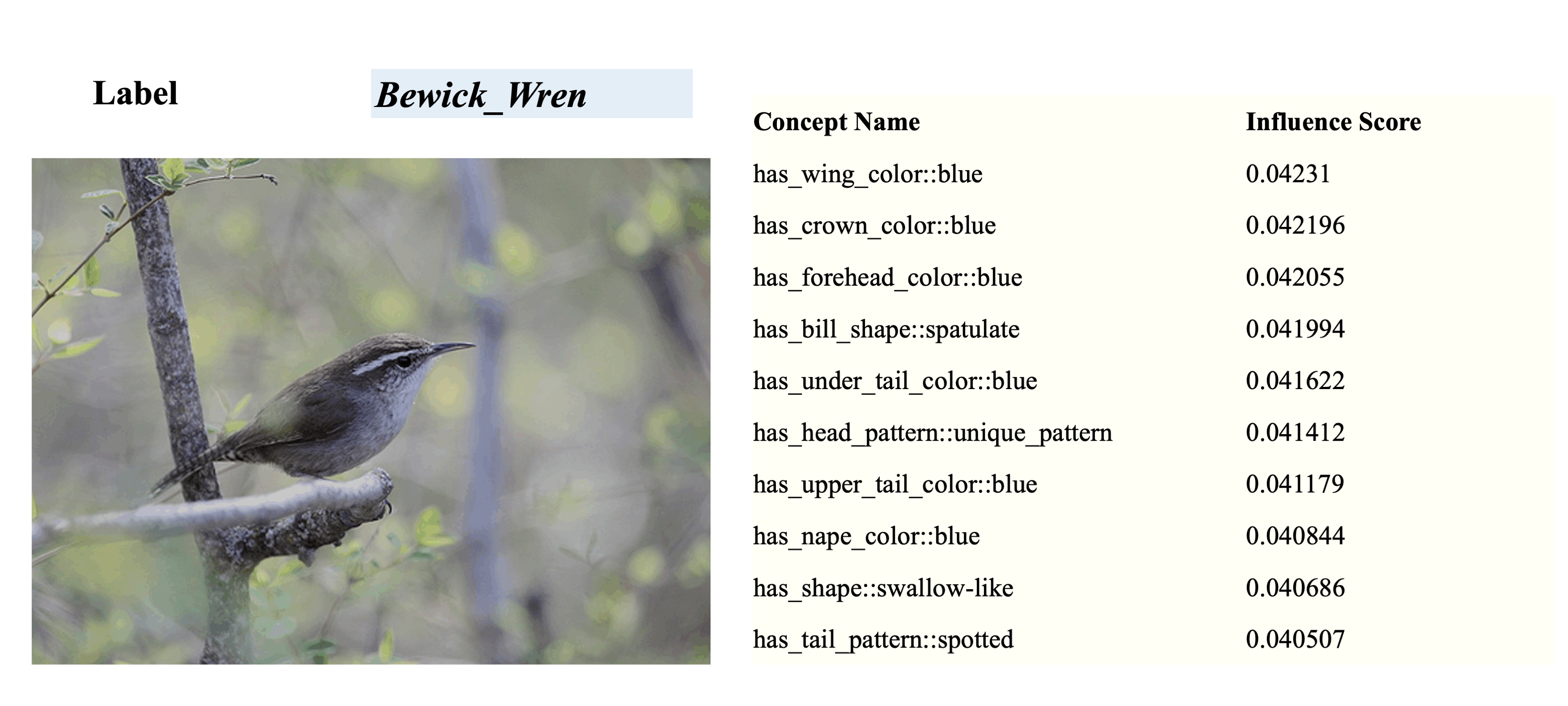}\\
        \includegraphics[width=0.95\linewidth]{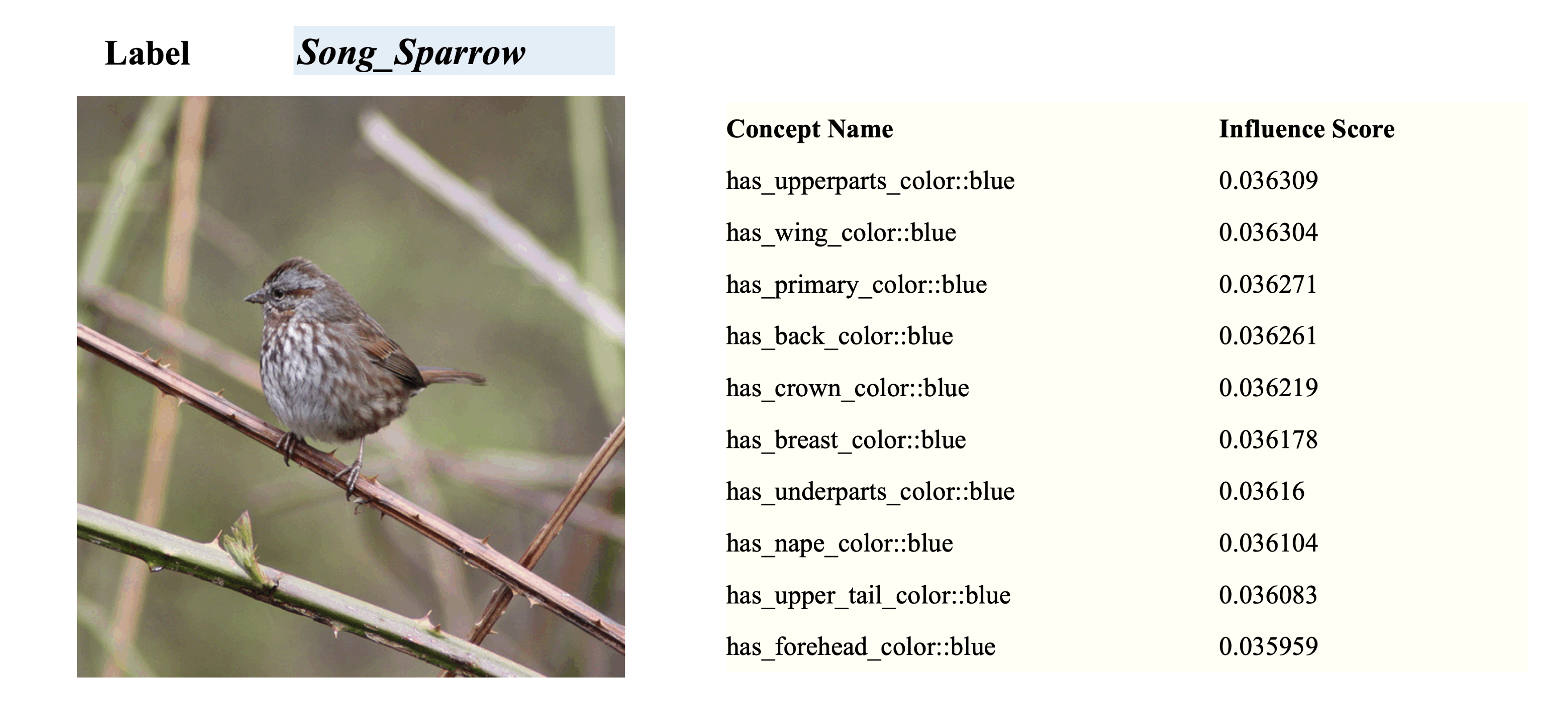}
    \end{tabular}
    \caption{Visualization of the most influential concept label related to different data in CUB.}
    \label{fig:vertical_images3}
\end{figure}
\begin{figure}[ht]
    \centering
    \begin{tabular}{cc}
        \includegraphics[width=0.95\linewidth]{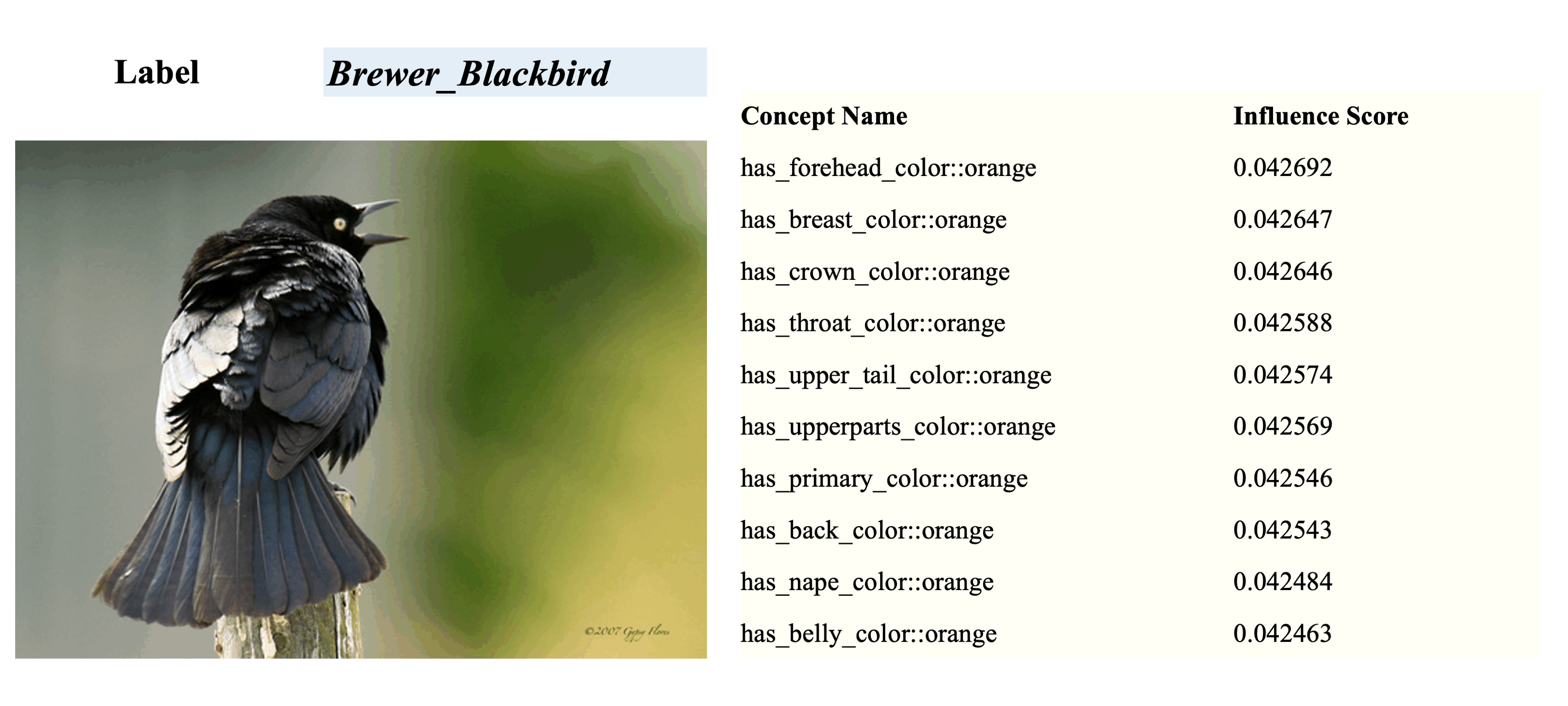}\\
        \includegraphics[width=0.95\linewidth]{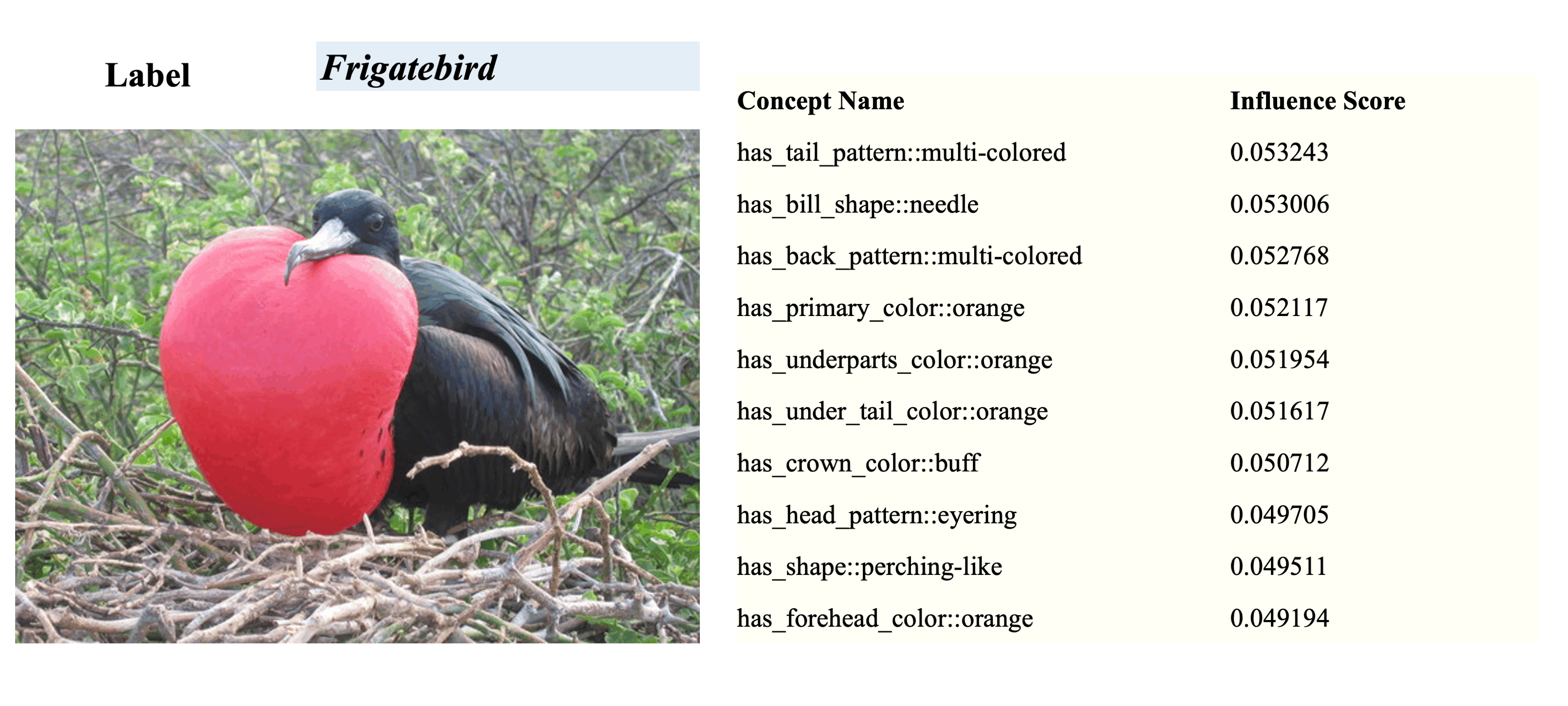}\\
        \includegraphics[width=0.95\linewidth]{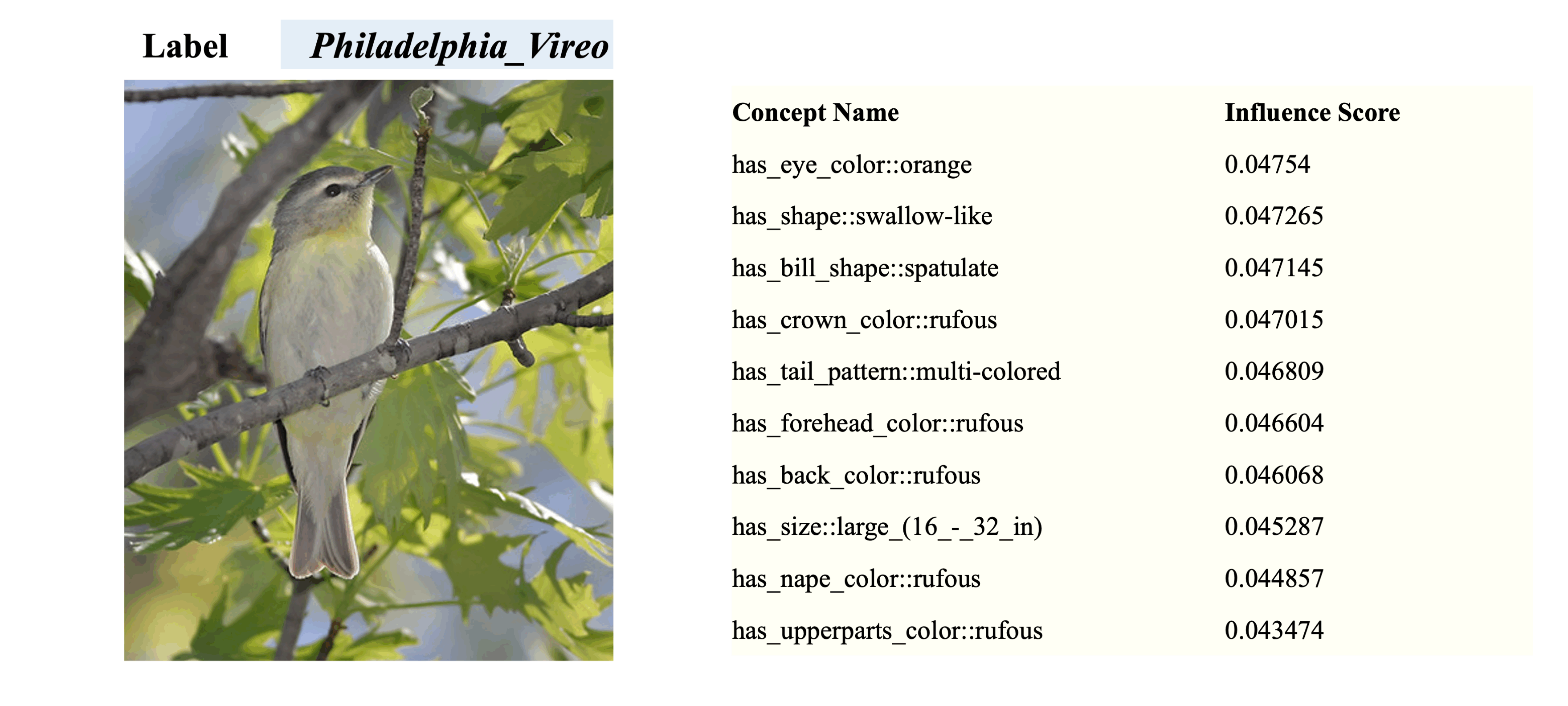}
    \end{tabular}
    \caption{Visualization of the most influential concept label related to different data in CUB.}
    \label{fig:vertical_images4}
\end{figure}
\begin{figure}[ht]
    \centering
    \begin{tabular}{cc}
        \includegraphics[width=0.80\linewidth]{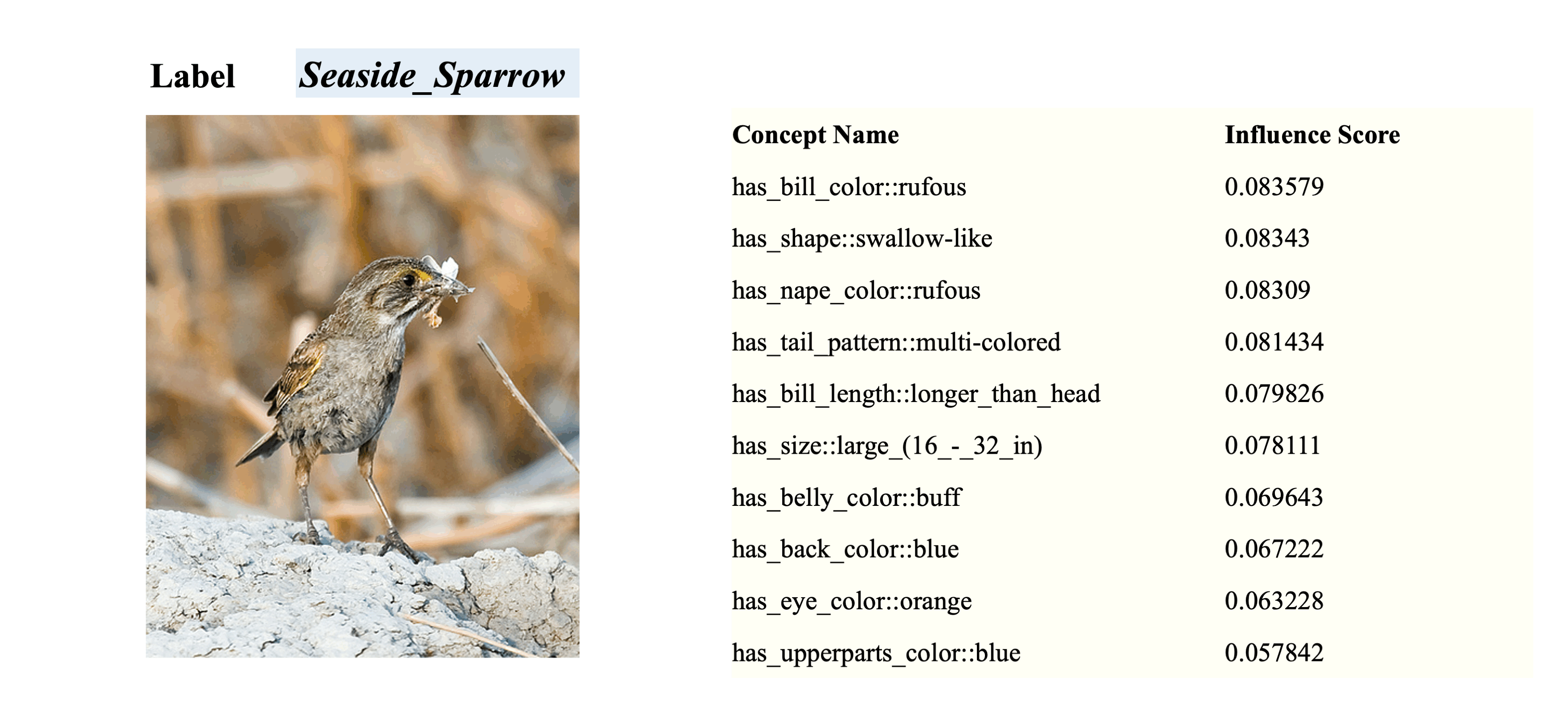}\\
        \includegraphics[width=0.80\linewidth]{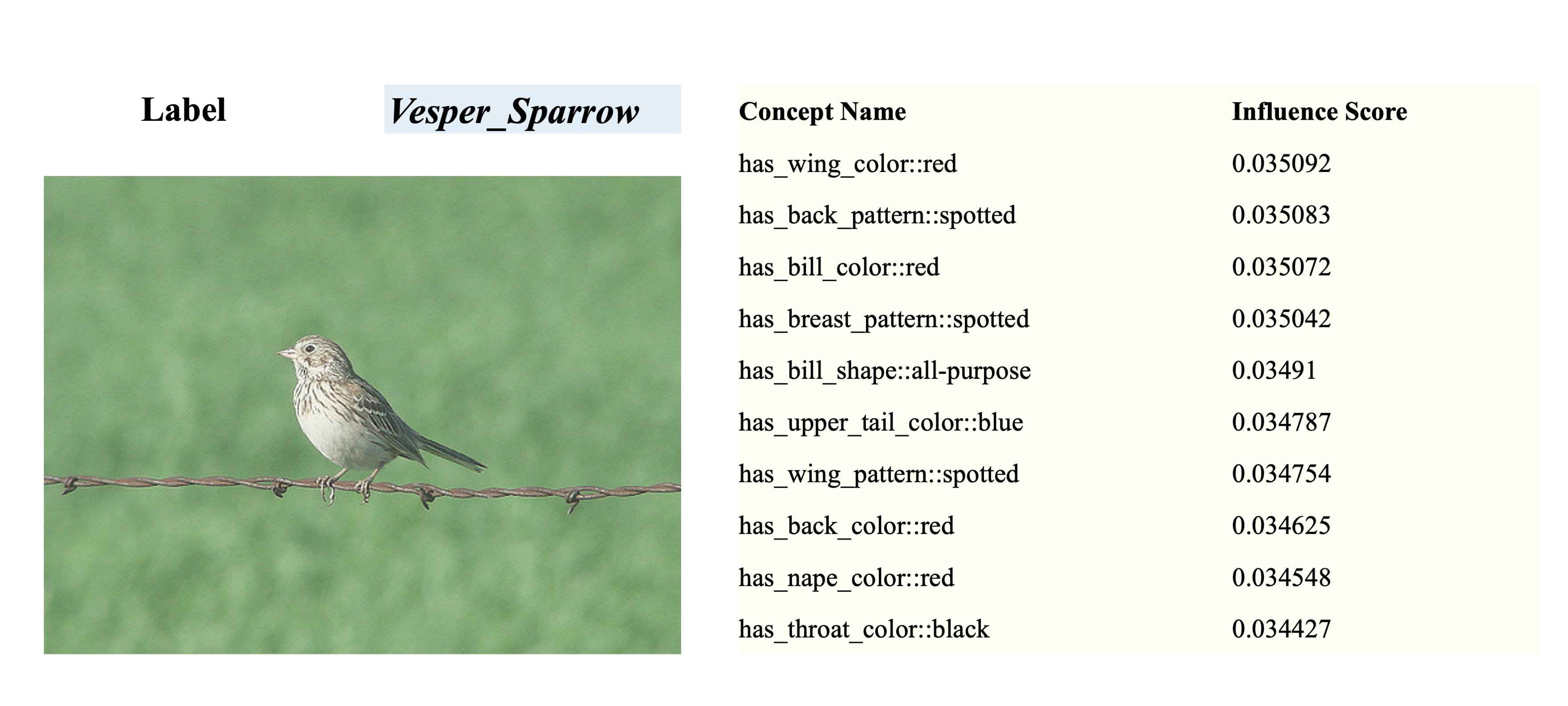}
    \end{tabular}
    \caption{Visualization of the most influential concept label related to different data in CUB.}
    \label{fig:vertical_images5}
\end{figure}
\begin{figure}[ht]
    \centering
    \begin{tabular}{cc}
        \includegraphics[width=0.80\linewidth]{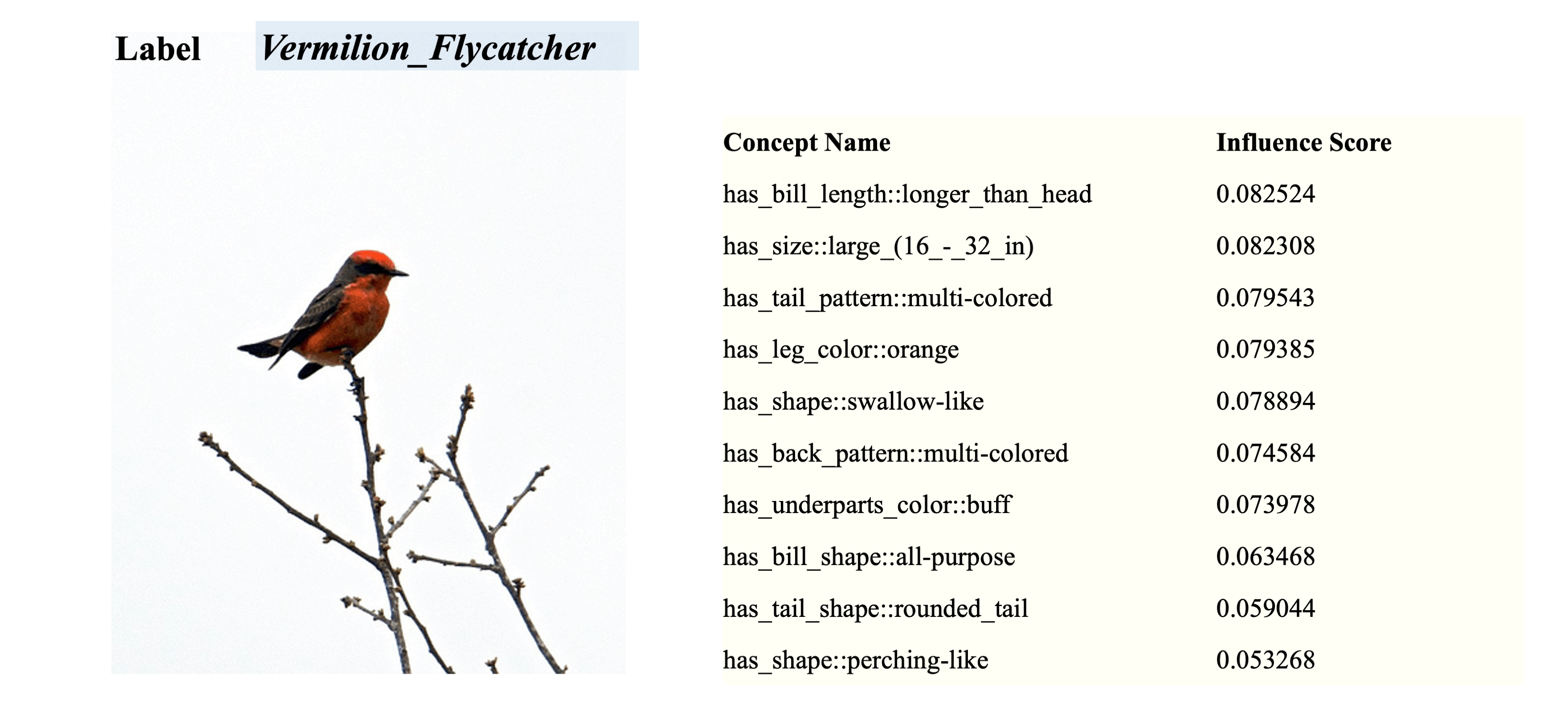}\\
        \includegraphics[width=0.80\linewidth]{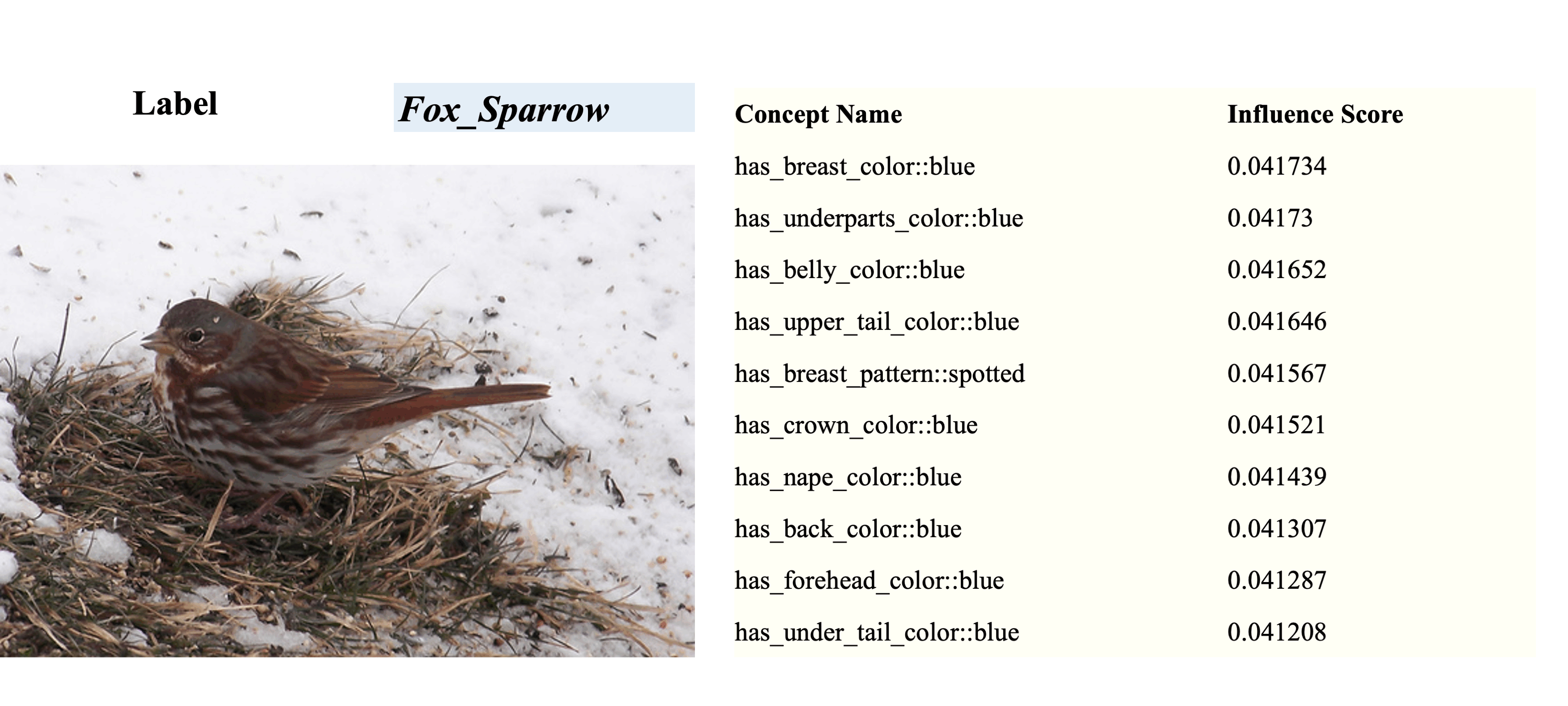}
    \end{tabular}
    \caption{Visualization of the most influential concept label related to different data in CUB.}
    \label{fig:vertical_images6}
\end{figure}
\begin{figure}[ht]
    \centering
    \begin{tabular}{cc}
        \includegraphics[width=0.95\linewidth]{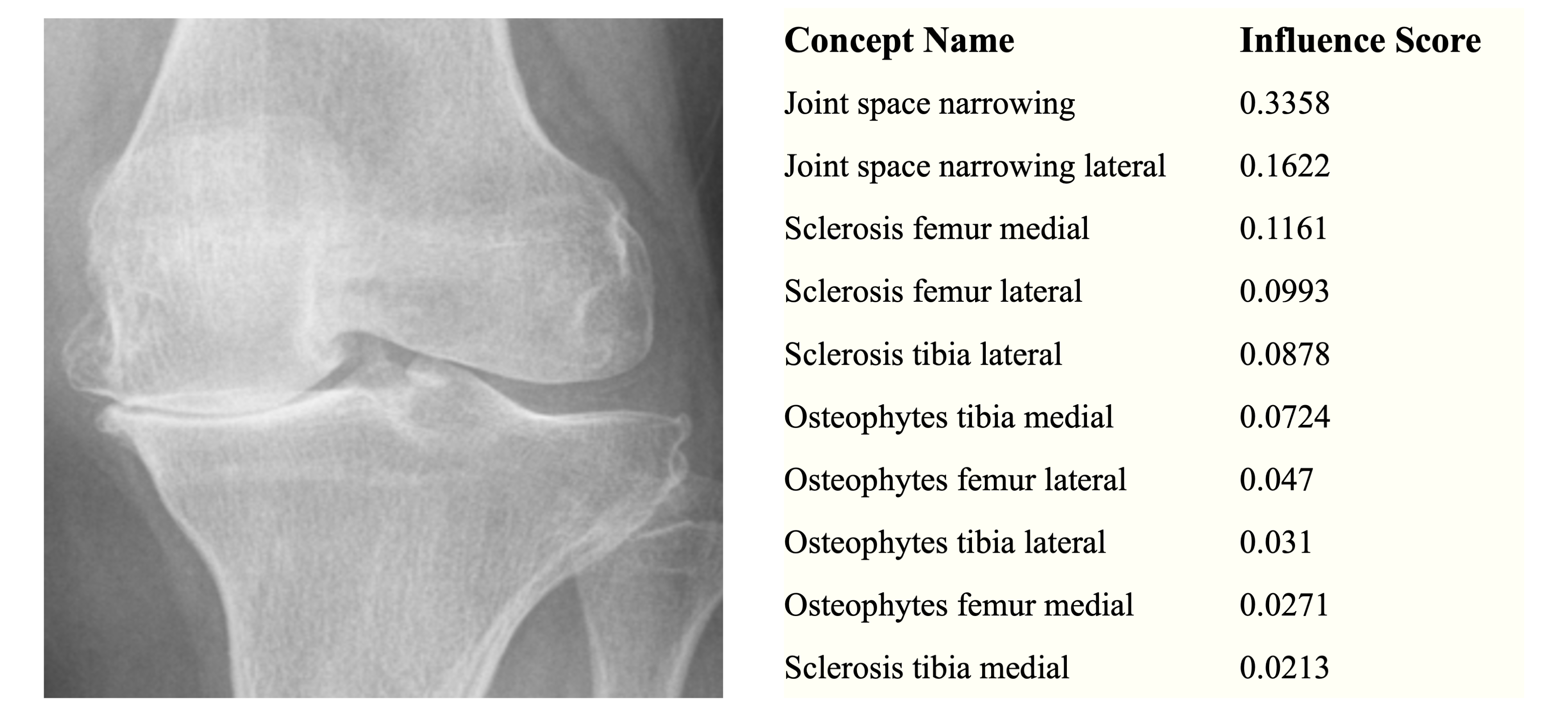}\\
        \includegraphics[width=0.95\linewidth]{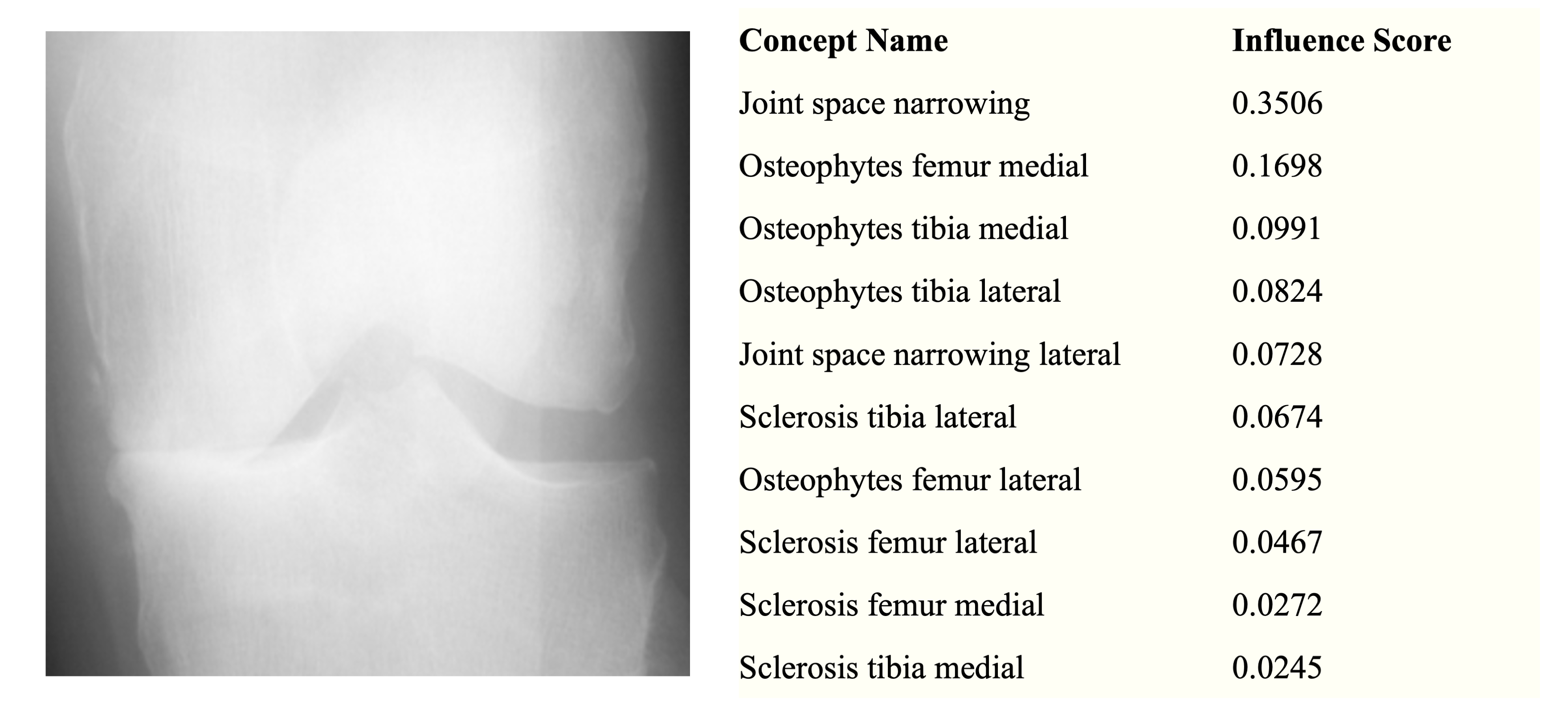}\\
        \includegraphics[width=0.95\linewidth]{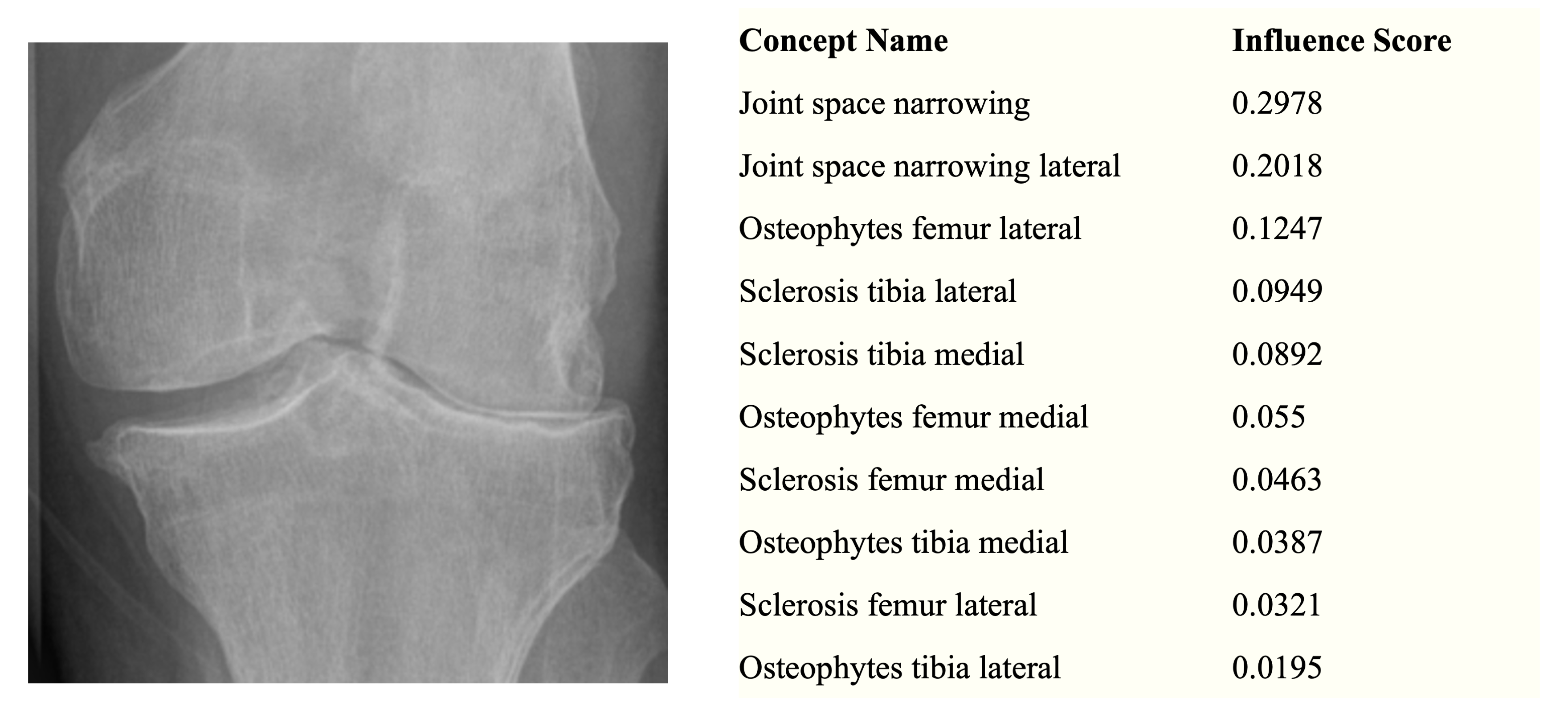}
    \end{tabular}
    \caption{Visualization of the most influential concept label related to different data in OAI.}
    \label{fig:vertical_images7}
\end{figure}

\section{More Related Work}
\label{app:sec:more_related}
\noindent {\bf Influence Function.} The influence function, initially a staple in robust statistics \cite{cook2000detection,cook1980characterizations}, has seen extensive adoption within machine learning since \cite{koh2017understanding} introduced it to the field. Its versatility spans various applications, including detecting mislabeled data, interpreting models, addressing model bias, and facilitating machine unlearning tasks. Notable works in machine unlearning encompass unlearning features and labels \cite{warnecke2021machine}, minimax unlearning \cite{liu2024certified}, forgetting a subset of image data for training deep neural networks \cite{golatkar2020eternal,golatkar2021mixed}, graph unlearning involving nodes, edges, and features. Recent advancements, such as the LiSSA method \cite{agarwal2017second,kwon2023datainf} and kNN-based techniques \cite{guo2021fastif}, have been proposed to enhance computational efficiency. Besides, various studies have applied influence functions to interpret models across different domains, including natural language processing \cite{han2020explaining} and image classification \cite{basu2021influence}, while also addressing biases in classification models \cite{wang2019repairing}, word embeddings \cite{brunet2019understanding}, and finetuned models \cite{chen2020multi}. Despite numerous studies on influence functions, we are the first to utilize them to construct the editable CBM. Moreover, compared to traditional neural networks, CBMs are more complicated in their influence function. Because we only need to change the predicted output in the traditional influence function. While in CBMs, we should first remove the true concept, then we need to approximate the predicted concept in order to approximate the output. Bridging the gap between the true and predicted concepts poses a significant theoretical challenge in our proof. 

\noindent {\bf Model Unlearning.} 
Model unlearning has gained significant attention in recent years, with various methods~\cite{bourtoule2021machine,brophy2021machine,cao2015towards, chen2022recommendation,chen2022graph} proposed to efficiently remove the influence of certain data from trained machine learning models. Existing approaches can be broadly categorized into exact and approximate unlearning methods. Exact unlearning methods aim to replicate the results of retraining by selectively updating only a portion of the dataset, thereby avoiding the computational expense of retraining on the entire dataset~\cite{sekhari2021remember, chowdhury2024towards}. Approximate unlearning methods, on the other hand, seek to adjust model parameters to approximately satisfy the optimality condition of the objective function on the remaining data~\cite{golatkar2020eternal,guo2019certified,izzo2021approximate}. These methods are further divided into three subcategories: (1) Newton step-based updates that leverage Hessian-related terms [22, 26, 31, 34, 40, 43, 49], often incorporating Gaussian noise to mitigate residual data influence. To reduce computational costs, some works approximate the Hessian using the Fisher information matrix~\cite{golatkar2020eternal} or small Hessian blocks~\cite{mehta2022deep}. (2) Neural tangent kernel (NTK)-based unlearning approximates training as a linear process, either by treating it as a single linear change~\cite{golatkar2020forgetting}. (3) SGD path tracking methods, such as DeltaGrad~\cite{wu2020deltagrad} and unrollSGD~\cite{thudi2022unrolling}, reverse the optimization trajectory of stochastic gradient descent during training. 
Despite their advancements, these methods fail to handle the special architecture of CBMs. Moreover, given the high cost of obtaining data, we sometimes prefer to correct the data rather than remove it, which model unlearning is unable to achieve.

\section{Limitations and Broader Impacts}
\label{sec:limitation_impact}

\subsection{Limitations}
While CCBMs demonstrate significant efficiency and utility, it is important to acknowledge certain limitations inherent to the methodology. 
First, \textbf{Approximation Error in Non-Convex Settings}: Our approach relies on influence functions, which are based on a second-order Taylor expansion of the loss function. This assumes that the loss landscape is locally convex and quadratic. In deep neural networks, where the loss surface is highly non-convex, this approximation may deviate from the exact retraining solution, particularly when the magnitude of the edit (e.g., the number of removed or added samples) is large. Although our sequential editing experiments demonstrate robustness, extremely aggressive modifications might still require periodic full retraining to reset the accumulation error.
Second, \textbf{Computational Overhead of Hessian}: Despite using EK-FAC to accelerate computation, calculating and inverting the Hessian matrix (or Fisher Information Matrix) still incurs a memory and computational cost that scales with the dimension of the parameters. For extremely large-scale foundation models with billions of parameters, further optimization or sparse approximations may be necessary to make CCBMs fully real-time.

\subsection{Broader Impacts}
Our work extends beyond merely correcting model errors; it provides a foundational framework for the \textbf{lifecycle management} of interpretable AI systems.

\noindent\textbf{Enabling Continuous Adaptation via Data Addition.}
Static models are ill-suited for dynamic environments. A critical contribution of CCBM is its support for \textbf{Data Addition}, which empowers models to perform \textit{incremental learning} efficiently. In domains like healthcare or finance, data arrives in streams. CCBMs allow deployed models to assimilate new cases (e.g., a new variant of a disease or a novel financial fraud pattern) continuously without the downtime and energy consumption of retraining. This transforms CBMs from static artifacts into evolving systems that grow alongside human knowledge.

\noindent\textbf{Facilitating Privacy and Compliance.}
With the enforcement of regulations such as General Data Protection Regulation (GDPR) and California Consumer Privacy Act (CCPA), the "Right to be Forgotten" has become a legal imperative. CCBMs provide a mathematically grounded mechanism for \textbf{Data Removal}, allowing organizations to rigorously unlearn sensitive user data or cleanse poisoned samples. Our RMIA experiments confirm that this unlearning is effective in mitigating privacy risks, fostering trust in AI deployment.

\noindent\textbf{Promoting Sustainable "Green AI".}
Retraining modern deep learning models is resource-intensive and has a significant carbon footprint. By replacing retraining with efficient closed-form updates, CCBMs drastically reduce energy consumption (e.g., reducing runtime from hours to minutes as shown in our experiments). This aligns with the goals of Green AI, making high-performance interpretable models more environmentally sustainable.

\noindent\textbf{Empowering Human-AI Collaboration.}
Finally, CCBMs serve as a bridge for interactive Human-AI alignment. By allowing experts (e.g., doctors) to intervene at the \textit{concept-label} or \textit{concept-level}, the model allows users to correct reasoning flaws directly. This "editable" nature ensures that the model remains aligned with expert consensus, significantly enhancing its reliability and adoption in critical decision-making processes.

\end{document}